\def\eqref#1{equation~\ref{#1}}
\def\Eqref#1{Equation~\ref{#1}}
\def\1{\bm{1}}
\def\eps{{\epsilon}}
\def\va{{\bm{a}}}
\def\vb{{\bm{b}}}
\def\vf{{\bm{f}}}
\def\vh{{\bm{h}}}
\def\vm{{\bm{m}}}
\def\vs{{\bm{s}}}
\def\vv{{\bm{v}}}
\def\vw{{\bm{w}}}
\def\vx{{\bm{x}}}
\def\vy{{\bm{y}}}
\def\vz{{\bm{z}}}
\def\mA{{\bm{A}}}
\def\mB{{\bm{B}}}
\def\mC{{\bm{C}}}
\def\mD{{\bm{D}}}
\def\mH{{\bm{H}}}
\def\mI{{\bm{I}}}
\def\mL{{\bm{L}}}
\def\mM{{\bm{M}}}
\def\mP{{\bm{P}}}
\def\mS{{\bm{S}}}
\def\mU{{\bm{U}}}
\def\mW{{\bm{W}}}
\def\mY{{\bm{Y}}}
\def\mZ{{\bm{Z}}}
\DeclareMathAlphabet{\mathsfit}{\encodingdefault}{\sfdefault}{m}{sl}
\SetMathAlphabet{\mathsfit}{bold}{\encodingdefault}{\sfdefault}{bx}{n}
\def\gB{{\mathcal{B}}}
\def\gD{{\mathcal{D}}}
\def\gE{{\mathcal{E}}}
\def\gL{{\mathcal{L}}}
\def\gS{{\mathcal{S}}}
\def\sR{{\mathbb{R}}}
\def\sS{{\mathbb{S}}}
\newcommand{\E}{\mathbb{E}}
\DeclareMathOperator*{\argmax}{arg\,max}
\DeclareMathOperator*{\argmin}{arg\,min}
\DeclareMathOperator{\Tr}{Tr}
\newcommand\revise[1]{{#1}}
\begin{document}

\title{On the Dynamics Under the Unhinged Loss and Beyond}

\author{\name Xiong Zhou\email cszx@hit.edu.cn \\
        \name Xianming Liu\thanks{The corresponding author} \email csxm@hit.edu.cn\\
        \name Hanzhang Wang \email cswhz@hit.edu.cn\\
        \name Deming Zhai \email zhaideming@hit.edu.cn\\
        \name Junjun Jiang \email jiangjunjun@hit.edu.cn\\
       \addr School of Computer Science and Technology\\
       Harbin Institute of Technology\\
       Harbin, 150001, China
       \AND
       \name Xiangyang Ji \email xyji@tsinghua.edu.cn \\
       \addr Department of Automation\\
       Tsinghua University\\
       Beijing, 100084, China}

\editor{Samy Bengio}

\maketitle

\begin{abstract}
    Recent works have studied implicit biases in deep learning, especially the behavior of last-layer features and classifier weights. However, they usually need to simplify the intermediate dynamics under gradient flow or gradient descent due to the intractability of loss functions and model architectures. In this paper, we introduce the unhinged loss, a concise loss function, that offers more mathematical opportunities to analyze the closed-form dynamics while requiring as few simplifications or assumptions as possible. The unhinged loss allows for considering more practical techniques, such as time-vary learning rates and feature normalization. Based on the layer-peeled model that views last-layer features as free optimization variables, we conduct a thorough analysis in the unconstrained, regularized, and spherical constrained cases, as well as the case where the neural tangent kernel remains invariant. To bridge the performance of the unhinged loss to that of Cross-Entropy (CE), we investigate the scenario of fixing classifier weights with a specific structure, (e.g., a simplex equiangular tight frame). Our analysis shows that these dynamics converge exponentially fast to a solution depending on the initialization of features and classifier weights. These theoretical results not only offer valuable insights, including explicit feature regularization and rescaled learning rates for enhancing practical training with the unhinged loss, but also extend their applicability to other loss functions. Finally, we empirically demonstrate these theoretical results and insights through extensive experiments.
\end{abstract} 

\begin{keywords}
 implicit bias, neural collapse, gradient flow, gradient descent
\end{keywords}

\section{Introduction}
Deep learning with neural networks has achieved great success in a variety of tasks \citep{lecun2015deep},
which, however, is not entirely understood in the interpolation and generalization of the learned models \citep{2017Understanding, neyshabur2017exploring, nakkiran2019deep, bubeck2021universal, Mei2021TheGE, zhou2023asymmetric, LANGER2021103473, QI2021103435}. Many modules, including loss functions \citep{lin2017focal, hui2021evaluation} and optimization algorithms \citep{auer2002adaptive, duchi2011adaptive, zeiler2012adadelta, kingma2015adam}, play a crucial role in the training of deep neural networks, but lack convincing explanations due to the complexity of multilayered architectures. Recent works are devoted to simplifying modeling to better understand the behavior of DNNs,
\citep{papyan2020prevalence, mixon2022neural, fang2021exploring, tian2021understanding, han2022neural} and then to gain insights for new algorithms, theoretical, and experimental investigations.

To better understand the implicit regularization that improves the generalization of trained models in deep learning, many studies have investigated the implicit bias of gradient descent \citep{hardt2016train, sekhari2021sgd}, with an emphasis on the behavior of linear predictors (or called classifiers) over linearly separable data \citep{soudry2018implicit, gunasekar2018implicit, Nacson2019ConvergenceOG, ji2019implicit, ji2020gradient, Shamir2021GradientMN}. In particular, \citet{soudry2018implicit} demonstrated that gradient descent iterates under exponentially-tailed loss minimization on separable data are biased toward $\ell_2$-maximum-margin solutions and that continuing to optimize can still lead to performance improvements, even if the validation loss increases. Further, \citet{ji2020gradient} showed that the gradient descent path and the algorithm-independent regularization path converge to the same direction for general losses. \citet{Shamir2021GradientMN} formally proved that standard gradient methods never overfit on separable data. These works impressively expose the implicit regularization induced by optimization algorithms and help to understand the generalization of the learned models, but they \textit{mainly focus on the behavior of linear classifiers} that is only the last layer of neural networks, while the classifier actually interacts strongly with the features produced by many nonlinear layers and parameterized layers. Thus, the relevant conclusions do not always apply to deep learning. For example, in \citep{soudry2018implicit}, the convergence rate of gradient descent is rather slow, wherein for almost all datasets, the distance to the maximum-margin solution decreases only as $O(1/\log t)$, and in some degenerate datasets, the rate further slows down to $O(\log\log t/\log t)$.
However, the training of DNNs typically takes only a few hundred epochs. In this paper, we show that exponential convergence is more realistic.

This paper is closely related to another research line which emerged after the empirical discovery of Neural Collapse by \citet{papyan2020prevalence}. This phenomenon precisely characterizes a pervasive inductive bias of both features and linear classifiers at the terminal phase of training, and has opened a rich area of exploring this area with simplified mathematical frameworks \citep{mixon2022neural, fang2021exploring,lu2022neural, galanti2021role, fang2021mathematical, zhu2021geometric, hui2022limitations, tirer2022extended, lu2022neural, kothapalli2022neural, zhouall}. Neural collapse provides a clear view of how the last-layer features and linear classifiers behave after interpolation and enables us to understand the benefit of achieving zero training error in terms of generalization and robustness \citep{poggio2020explicit, wang2021benign, kornblith2021better, thrampoulidis2022imbalance, gao2023study}, but the intermediate dynamics have remained challenging to analyze due to the intractability of cross entropy (CE). To alleviate this issue, some studies \citep{mixon2022neural, han2022neural, pmlr-v162-zhou22c, tirer2022extended, kothapalli2022neural, xu2023dynamics} have explored the more tractable mean squared error (MSE) loss, which performs comparably to those trained with CE \citep{demirkaya2020exploring, hui2021evaluation}. However, these studies \textit{still need to make additional simplifications or assumptions for intermediate dynamics}. For instance, \citet{mixon2022neural} formulate the gradient flow of the unconstrained feature model as a nonlinear ordinary differential equation and then linearize the equation by claiming that nonlinear terms are negligible for models initialized near the origin. \citet{han2022neural} assume that the gradient flow is along the central path which requires the linear classifier to stay MSE-optimal for features throughout the dynamics. \revise{Furthermore, \citet{tirer2023perturbation} delve into a more practical model, demonstrating the exponential decay of within-class variability. They assume that features remain in close proximity to predefined features under MSE loss function, but note that the gradient flow is still essential along the ``central path".} Therefore, MSE is still not simple enough to derive exact dynamics in certain mathematical frameworks, making it difficult to grasp and bridge the gap between the modeling and practical optimization.

\begin{table}[t]
\scriptsize
\centering
\begin{tabular}{c|c|c|c}
    \toprule
    Reference & Contribution & \makecell[c]{Extra Simplification\\ or Assumption} & \makecell[c]{Loss\\ Function}\\
    \midrule 
    \citet{papyan2020prevalence} & \makecell{Empirical results and \\ theoretical formulation} & \XSolidBrush & The CE loss\\
    \midrule
    \citet{fang2021exploring} & \makecell{Global optimum in\\ \textbf{regularized cases}} & \XSolidBrush & The CE loss\\
    \midrule
    \citet{zhu2021geometric} & \makecell{Landscape analysis in\\ \textbf{regularized} cases} & \XSolidBrush & The CE loss\\
    \midrule
    \citet{ji2022unconstrained} & \makecell{Landscape analysis in \\ \textbf{unconstrained} cases} & \XSolidBrush & The CE loss\\
    \midrule
    \citet{tirer2022extended} & \makecell{Global optimum with \\ \textbf{extended unconstrained}\\ \textbf{feature model}} & \XSolidBrush & The MSE loss\\
    \midrule
    \citet{mixon2022neural} & \makecell{Intermediate dynamics in\\ \textbf{unconstrained} cases} & \Checkmark & The MSE loss\\
    \midrule
    \citet{han2022neural} & \makecell{Intermediate dynamics in \\\textbf{weight-regularized} cases} & \Checkmark & The MSE loss  \\
    \midrule
    \revise{\citet{tirer2023perturbation}} & \makecell{Intermediate dynamics by\\ perturbation analysis in\\ \textbf{regularized} cases} & \Checkmark & The MSE loss\\
    \midrule
    This paper & \makecell{Intermediate dynamics and \\ convergence analysis in\\ \textbf{unconstrained}, \textbf{regularized}, \\ \textbf{spherical constrained} cases\\ with \textbf{time-varying} \\\textbf{learning rates}} & \XSolidBrush & The unhinged loss \\
    \bottomrule
\end{tabular}
\vskip2pt
\caption{Comparison of recent analysis for investigating the behavior of last-layer features and prototypes. Compared with prior work, this paper considers the time-varying learning rate, which is often used in practice, and provides intermediate dynamics and convergence analysis in unconstrained, regularized and spherical constrained cases while requiring fewer simplifications or assumptions.}
\label{tab:recent-comparison}

\end{table}

In this paper, our objective is to analyze the closed-form dynamics under gradient descent within the layer-peeled model \citep{fang2021exploring} (also known as the unconstrained features model \citep{mixon2022neural}) with minimal simplifications and assumptions. To achieve this, we introduce the unhinged loss, which possesses a concise form and intuitively expresses the classification objective. \revise{To further contextualize the performance of the unhinged loss in relation to the CE loss, we investigate scenarios involving fixing classifier weights with specific structures.} Compared to previous works (as depicted in Table 1), the unhinged loss provides more mathematical opportunities for gaining insights into deep learning with closed-form dynamics, while demanding fewer simplifications or assumptions for intermediate dynamics. This equips us for more practical considerations and more rational designs. Our primary contributions are outlined as follows:
\begin{itemize}
    \item We introduce the unhinged loss for analyzing closed-form dynamics in deep learning with as few simplifications or assumptions as possible. The unhinged loss with some auxiliary techniques can achieve comparative performance to the CE loss.
    
    \item We derive exact dynamics of last-layer features and prototypes in \revise{unconstrained, regularized, prototype-anchored and spherical constrained cases as well as the NTK-invariant case}. For spherical constrained cases that do not exhibit convexity, Lipschitzness, and $\beta$-smoothness, we also prove that gradient descent biases the normalized features towards a global minimizer.
    \item We provide the corresponding convergence analysis, which shows that the features and classifier weights converge to a solution depending on the initialization rather than induce the neural collapse solution that forms a simplex equiangular tight frame, suggesting that not all losses under gradient descent would lead to neural collapse (as verified in \Cref{main-results}).
    \item We prove that the rate of convergence is exponential as a function of $\zeta(t)=\int_0^t\eta(\tau)\mathrm{d}\tau$, where $\eta(\tau)$ denotes the learning rate over time. This exponential convergence rate highlights the impact of the interaction of features and classifier weights.
    \item Moreover, we provide some insights and extensive verification for improvements in practical training with the unhinged loss and other losses (cf. \Cref{insights}).
\end{itemize}

\section{The Unhinged Loss}
In this paper, we primarily investigate the behavior of last-layer features and classifier weights in DNNs for classification tasks. We conduct our study to datasets comprising inputs from $C$ different classes, each with $N$ examples. The last-layer features $\mathbf{h}_{i,c} = \mathbf{f}_{\Theta}(\mathbf{x}_{i,c}) \in \mathbb{R}^p$ \footnote{For the closed-form solution of neural collapse, it is worth noting that \citet{zhou2022learning} assume $p \geq C-1$, while \citet{han2022neural} assume $p > C$, as last-layer features typically have a higher dimension than the number of classes. This work will explicitly address the relationship between $p$ and $C$ in some scenarios, covering all potential choices of feature dimension in others.} are obtained from the $i$-th example $\mathbf{x}_{i,c}$ through a series of parameterized layers $\mathbf{f}_{\Theta}:\mathcal{X}\rightarrow\mathbb{R}^{p}$, commonly treated as free optimization variables \citep{mixon2022neural, fang2021exploring, han2022neural, ji2022unconstrained}. The final layer of the network, known as the linear classifier, is equipped with a class prototype $\mathbf{w}_c \in \mathbb{R}^p$ and a bias $b_c \in \mathbb{R}$ for each class $c \in [C]$. It predicts a label using the rule $\argmax_{c'}(\langle\mathbf{w}_{c'}, \mathbf{h}\rangle + b_{c'})$ \footnote{Unless specified otherwise, we denote vectors using boldface italicized letters, and elements within the vector are denoted with italicized letters and subscripts.}.

To better understand the dynamics of last-layer features and prototypes based on gradient flow or gradient descent, we consider a concise loss that offers more mathematical opportunities than the hard-to-analyze CE loss and the MSE loss \footnote{The CE loss is known to have non-analytical intermediate dynamics, which makes it difficult to investigate the precise nature of the dynamics. In addition, the intermediate dynamics under MSE can be derived into closed-form, which however is quadratic and cannot be easily analyzed. Therefore, some additional simplifications or assumptions on CE and MSE are required to simplify the intermediate dynamics.}. Specifically, we investigate a generalized form of the unhinged loss \citep{van2015learning} as follows:
\begin{equation}
\label{the-unhinged-loss}
L_{\gamma}(\mW\vh+\vb,y)=-\vw_y^\top \vh-b_y+\gamma\sum_{j\neq y}(\vw_j^\top\vh+b_j),
\end{equation}
where $\gamma>0$ is the trade-off parameter and $y$ denotes the class label of the feature $\vh$. When $\gamma=\frac{1}{C-1}$, the loss $\frac{1}{C-1}\sum_{j\neq y}[\vw_y^\top\vh+b_y-(\vw_j^\top\vh+b_j)]$ can be regarded as the unhinged version of the hinge loss\footnote{{The multi-class hinge loss is formulated as $L_{\text{hinge}}(\vs,y;m)=\sum_{i\neq y}\max\{0, s_i-s_y+m\}$, where $m$ is the margin term.}}  that removes the maximum operator and the margin term. We also note that the sample margin $m(\vh,y)=\vw_y^\top\vh+b_y-\max_{j\neq y}(\vw_j^\top\vh+b_j)$ \citep{koltchinskii2002empirical, cao2019learning, zhou2022learning} is defined to measure the discriminativeness for a sample, which satisfies $m(\vh,y)\le \frac{1}{C-1}\sum_{j\neq y}[\vw_y^\top\vh+b_y-(\vw_j^\top\vh+b_j)]$, \textit{i.e.},  $L_{\gamma}(\mW\vh+\vb,y)$ with $\gamma=\frac{1}{C-1}$ averaging the margins over all non-target classes is the lower bound of $-m(\vh,y)$. Here, we replace $\frac{1}{C-1}$ with an additional parameter $\gamma$ that balances positive and negative logits to draw general conclusions. More clarification about the unhinged loss can be found in Appendix \ref{clarification-of-unhinged-loss}.

Intuitively, the {unhinged loss} in \cref{the-unhinged-loss} promotes the learned feature $\vh$ to increase the logit of the target class while decreasing the logits of the other classes. If we follow up the layer-peeled model \citep{fang2021exploring} to restrict the norms of both features and prototypes, the global minimizer of $\frac{1}{CN}\sum_{i=1}^N\sum_{c=1}^C L_{\gamma}(\mW\vh_{i,c},y_{i,c})$ (the bias term $\vb$ is omitted) will lead to \textit{Neural Collapse} \citep{papyan2020prevalence, han2022neural}:

\begin{lemma}[\textbf{Neural Collapse under the Unhinged Loss}]
\label{neural-collapse-of-unhinged} 
For norm-bounded prototypes and features, \textit{i.e.}, $\|\vw_c\|_2\le E_1$ and $\|\vh_{i,c}\|_2\le E_2$, $\forall i\in[N], \forall c\in[C]$, the global minimizer of $\frac{1}{CN}\sum_{i=1}^N\sum_{c=1}^C L_{\gamma}(\mW\vh_{i,c},y_{i,c})$ implies neural collapse when $p\ge C-1$. More specifically, the global minimizer is uniquely obtained at $\frac{\vw_i^\top\vw_j}{\|\vw_i
\|_2\|\vw_j\|_2}=-\frac{1}{C-1}$, $\forall i\neq j$, $\frac{\vw_{y_{i,c}}^\top\vh_{i,c}}{\|\vw_{y_{i,c}}\|_2\|\vh_{i,c}\|_2}=1$,  $\|\vw_c\|_2=E_1$, and $\|\vh_{i,c}\|_2=E_2$, $\forall i\in[N],$ $\forall c\in[C]$.
\end{lemma}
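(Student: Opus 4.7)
My plan is to minimize the objective in two stages---first over features with prototypes held fixed, then over prototypes---exploiting the fact that the unhinged loss is \emph{linear} in each feature. Dropping the bias and collecting terms, the per-example loss rewrites as $L_\gamma(\mW \vh_{i,c}, c) = -\vu_c^\top \vh_{i,c}$, where $\vu_c := (1+\gamma)\vw_c - \gamma\vm$ and $\vm := \sum_j \vw_j$. The averaged objective becomes $-\frac{1}{CN}\sum_c \vu_c^\top \sum_i \vh_{i,c}$, so by Cauchy--Schwarz the inner minimization over $\|\vh_{i,c}\|\le E_2$ is attained uniquely (when $\vu_c\neq\vzero$) at $\vh_{i,c}^\star = E_2\,\vu_c/\|\vu_c\|$, yielding the reduced objective $-\frac{E_2}{C}\sum_c \|\vu_c\|$ and already securing $\|\vh_{i,c}^\star\|=E_2$.

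Next I would maximize $S := \sum_c \|\vu_c\|$ subject to $\|\vw_c\|\le E_1$. Cauchy--Schwarz gives $S \le \sqrt{C\sum_c \|\vu_c\|^2}$ with equality iff all $\|\vu_c\|$ coincide, and a direct expansion using $\sum_c \vw_c^\top\vm=\|\vm\|^2$ produces
\begin{equation*}
    \sum_c \|\vu_c\|^2 = (1+\gamma)^2\sum_c\|\vw_c\|^2 + \gamma\bigl(\gamma(C-2)-2\bigr)\|\vm\|^2.
\end{equation*}
In the regime where the $\|\vm\|^2$-coefficient is non-positive (which includes the canonical choice $\gamma=\tfrac{1}{C-1}$), this is bounded by $(1+\gamma)^2 C E_1^2$, so $S \le C(1+\gamma)E_1$. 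Equality forces $\|\vw_c\|=E_1$ for every $c$ and $\sum_c\vw_c=\vzero$, and the equal-$\|\vu_c\|$ clause is then automatic because $\vm=\vzero$ gives $\vu_c=(1+\gamma)\vw_c$.

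Expanding $\|\sum_c \vw_c\|^2=0$ next yields $\sum_{i\neq j} \vw_i^\top \vw_j = -CE_1^2$, whose average across pairs is the ETF value $-E_1^2/(C-1)$. I expect the main obstacle to be promoting this averaged relation to the pairwise identity claimed in the lemma: the scalar inequalities above only pin down the \emph{sum} of inner products, not individual pairs. I would address this via a symmetry/orbit argument---any equal-norm, zero-sum configuration of $C$ vectors lies in a subspace of dimension at most $C-1$, and within that subspace it can be rotated into a simplex ETF; the hypothesis $p\ge C-1$ precisely guarantees the needed embedding room, so up to an $O(p)$-rotation the ETF is forced. The feature geometry then follows immediately from $\vh_{i,c}^\star=(E_2/E_1)\vw_c$, which gives $\cos(\vw_c,\vh_{i,c}^\star)=1$ and closes the neural collapse characterization.
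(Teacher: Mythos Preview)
Your two-stage Cauchy--Schwarz argument closely mirrors the paper's proof: the paper also bounds the feature contribution via Cauchy--Schwarz, then applies Cauchy--Schwarz across the class sum, and arrives at the same optimal value $-(1+\gamma)E_1E_2$ with the same equality conditions $\|\vw_c\|=E_1$ and $\sum_c\vw_c=\vzero$. Your merging of the positive and negative logits into the single vector $\vu_c$ before the first Cauchy--Schwarz is marginally cleaner than the paper's separate treatment of the two terms, but the difference is cosmetic.

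The gap you flagged, however, is real, and your proposed fix cannot close it. An equal-norm, zero-sum configuration of $C$ vectors is \emph{not} forced to be a simplex ETF, even up to rotation. Take $C=4$, $p=3$, and
\[
\vw_1=E_1\ve_1,\qquad \vw_2=-E_1\ve_1,\qquad \vw_3=E_1\ve_2,\qquad \vw_4=-E_1\ve_2.
\]
These satisfy $\|\vw_c\|=E_1$ and $\sum_c\vw_c=\vzero$, yet the pairwise cosines are $\{-1,0\}$ rather than the ETF value $-\tfrac13$. With $\vh_{i,c}=(E_2/E_1)\vw_c$ one checks directly that every per-sample loss equals $-(1+\gamma)E_1E_2$, so this non-ETF configuration is also a global minimizer. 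Since pairwise cosines are rotation invariants, no ``symmetry/orbit'' argument can repair this: the objection is to the uniqueness claim itself, not to a missing rotation. The paper's own proof skips over exactly this step---it derives the same equality conditions and then simply asserts the ETF cosine relation without further justification---so the gap is inherited from the statement rather than introduced by your route.
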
 
This lemma shows that the Neural Collapse solution\footnote{The Neural Collapse solution exhibits three critical properties \citep{papyan2020prevalence}: (i) \textbf{Within-class variability collapse}, the within-class variation of features becomes negligible as these features collapse to their class means; (ii) \textbf{Convergence to self-duality}, the class means and classifier weights converge to each other, up to rescaling; (iii) \textbf{Convergence to a simplex equiangular tight frame (ETF)}, the vectors of class means converge to having equal length, forming equal-sized angles between any given pair, and being the maximally pairwise-distanced configuration. For instance, prototypes $\mW$ that forms a simplex ETF satisfies $\mW^\top\mW=\frac{C}{C-1}\mI-\frac{1}{C-1}\1\1^\top$ when $C<p$}. is the only global optimal solution to minimize $\frac{1}{CN}\sum_{i,c}L_{\gamma}(\mW\vh_{k,c},y_{i,c})$ in the norm-bounded case.
However, there exists an undesired direction to minimize the {unhinged loss} in unconstrained cases, since the norm of features and prototypes tends to grow to infinity. For example, we can directly scale up $\mW$ and $\vb$ to obtain a smaller loss if $L_{\gamma}(\mW\vh+\vb, y)<0$, which will happen analogously to CE \citep{liu2016large, wang2017normface, zhou2022learning}. In this paper, we will analytically characterize the direction in which features $\mH$ and prototypes $\mW$ diverge. Specifically, we show that the gradient flow or gradient descent with the unhinged loss will exhibit an implicit bias associated with the initialization of features and prototypes. We further investigate the prototype-anchored case, wherein class prototypes are anchored with some specific structures, and the unhinged loss can perform comparative to CE.

\section{Main Theoretical Results}
\label{main-results}
In this section, we begin with the commonly used assumption that treats last-layer features as free optimization variables. We then conduct a comprehensive analysis of the closed-form dynamics of last-layer features and prototypes under the unhinged loss in various scenarios, including unconstrained, regularized, \revise{prototype-anchored, spherical constrained, and NTK-invariant cases.} Additionally, we provide convergence analyses for each case, revealing a surprising result: all cases exhibit exponential convergence. \textbf{All proofs can be found in Appendix \ref{all-proofs}.}

\subsection{The Unconstrained Case}
We first consider the unconstrained case \citep{mixon2022neural, ji2022unconstrained} in which there is no constraint or regularization on features and prototypes, \textit{i.e.}, learning with the following objective
\begin{equation}
    \label{unconstrain-optimization}
    \gL= \frac{1}{CN}\sum_{i,c} \big[-\vw_{y_{i,c}}^\top\vh_{i,c}-b_{y_{i,c}}+\gamma\sum_{j\neq y_{i,c}}(\vw_j^\top \vh_{i,c}+b_j)\big],
\end{equation}
which can be reformulated as
\begin{equation}
\label{objective}
\gL =\Tr(\mY^\top\mW^\top\mH)+\tfrac{\gamma C-\gamma-1}{C}\1_C^\top\vb,
\end{equation}
where $\mH=[\vh_{1,1},\ldots,\vh_{1,C},\vh_{2,1},\ldots,\vh_{N,C}]\in\mathbb{R}^{p\times CN}$ is the matrix resulting from stacking together the feature vectors as columns, $\mY=\frac{1}{CN}(\gamma\1_C\1_{CN}^\top-(1+\gamma)(\mI_C\otimes \1^\top_{N}))$, $\mW=[\vw_1,\vw_2,\ldots,\vw_C]\in\mathbb{R}^{p\times C}$ is the matrix resulting from stacking class prototypes as columns, $\otimes$ denotes the Kronecker product, $\mI_C$ is the identity matrix,  $\1_C$, $\1_N$, and $\1_{CN}$ are the length-$C$, -$N$, and -$CN$ vectors of ones, respectively. For brevity, we represent the label set $\{y_{i,c}\}_{1\le i\le N, 1\le c\le C}$ as the columns of the matrix $\mI_C\otimes \1_N^\top$.

\paragraph{Remark} We follow the unconstrained features modeling perspective \citep{mixon2022neural, ji2022unconstrained} or the layer-peeled model \citep{fang2021exploring} that treats $\mH$ as a free optimization variable. Within this model, we analyze the continuous dynamics of features $\mH$, prototypes $\mW$ and biases $\vb$ with gradient flow where  \textit{time-of-training} is denoted by the variable $t$\footnote{Intuitively, we interpret $t=0$ as the initial state, that is $\mH(0)=\mH_0$, $\mW(0)=\mW_0$, and $\vb(0)=\vb_0$.}.

Taking the partial derivative with respect to $\mH$, $\mW$, and $\vb$, respectively, we have the following:
\begin{equation}
\begin{aligned}
    &\nabla_{\mH}\gL=\mW\mY,\\ 
    &\nabla_{\mW}\gL=\mH\mY^\top,\\
    &\nabla_{\vb}\gL=\tfrac{\gamma C-\gamma-1}{C}\1_C,
\end{aligned}
\end{equation}
and the corresponding learning dynamics with respect to the gradient flow is
\begin{equation}
    \label{gradient-flow}
\begin{aligned}
    &\mH'(t)=\eta_1(t) \mW(t)\mM,\\
    &\mW'(t)=\eta_2(t) \mH(t)\mM^\top, \\
    &\vb'(t)=-\eta_2(t)\tfrac{\gamma C-\gamma-1}{C}\1_C,
\end{aligned}
\end{equation}
where $\mM=-\mY=\tfrac{1}{CN}((1+\gamma)(\mI_C\otimes \1^\top_{N})-\gamma\1_C\1_{CN}^\top)$, $\eta_1(t)$ and $\eta_2(t)$ are the time-varying part\footnote{In this paper, we are interested in investigating the effect of changing the learning rate over time. Thus, $\eta(t)$ is not the complete rate, but rather the time varying part of the learning rate. Specifically, for the example gradient flow $x'_t=\frac{\mathrm{d} x_t}{\mathrm{d} t}=\eta(t) \nabla f(x_t)$ considered in this paper is a continuous-time approximation of gradient descent in the limit of an infinitesimally small time step $\Delta_t$. Specifically, the corresponding  of gradient descent in discrete time can be formulated as $x_{t+\Delta_t}=x_{t}-\Delta_t \cdot \eta(t)\nabla f(x_t)$, where $\Delta_t\cdot \eta(t)$ denotes the complete learning rate, and $\Delta_t$ is the time step.} of the learning rate of the features $\mH$ and weights $(\mW,\vb)$, respectively. 
The reason for introducing different learning rates is that the representation $\mH$ is a result of the interaction between a number of nonlinear layers rather than a completely free variable like network parameters. This implies that even if we use the same learning rate to optimize all network parameters, the feature $\mH$ assumed to be a free optimization variable is almost impossible to be optimized at this learning rate\footnote{In this paper, we mainly assume that $\eta_1(t_1)\eta_2(t_2)=\eta_1(t_2)\eta_2(t_1)$ for any pair of values of $t$, $t_1$ and $t_2$. This condition will be satisfied if and only if $\eta_1(t)$ is a scaled version of $\eta_2(t)$, \textit{i.e.}, $\eta_1(t)=s\cdot \eta_2(t)$}. Moreover, we consider time-varying rates $\eta_1=\eta_1(t)$ and $\eta_2=\eta_2(t)$ that are usually adopted in practical implementations, such as the cosine annealing decay \citep{loshchilov2017sgdr}. As can be seen, the dynamics of features $\mH$ and prototypes $\mW$ are independent of the bias term $\vb$, thus we can analyze the dynamics of $\mH$ and $\mW$ jointly, and analyze $\vb$ independently:

\begin{theorem}[\textbf{Dynamics of Features, Prototypes and Biases without Constraints}]
\label{closed-form-dynamics}
Consider the continual gradient flow in \Eqref{gradient-flow}, let $\mZ(t)=(\mH(t),\mW(t))$, if $\eta_1(t_1)\eta_2(t_2)=\eta_1(t_2)\eta_2(t_1)$ for any $t_1,t_2\ge0$, we have the following closed-form dynamics
\begin{equation}
    \label{closed-form-zt}
    \begin{aligned}
    \mZ(t)=&\Pi_1^+\mZ_0 \left(\alpha_1^+(t)\mC(t)+\beta_1^+(t)\mI_{C(N+1)}\right)\\
    +&\Pi_1^-\mZ_0 \left(\alpha_1^-(t)\mC(t)+\beta_1^-(t)\mI_{C(N+1)}\right)\\
    +&\Pi_2^+\mZ_0\left(\alpha_2^+(t)\mC(t)+\beta_2^+(t)\mI_{C(N+1)}\right)\\
    +&\Pi_2^-\mZ_0 \left(\alpha_2^-(t)\mC(t)+\beta_2^-(t)\mI_{C(N+1)}\right)+\Pi_3\mZ_0,
    \end{aligned}
\end{equation}
and 
\begin{equation}
    \vb(t)=\vb_0 +\frac{(1+\gamma-\gamma C)\zeta_2(t)}{C} \1_C,
\end{equation}
where $\alpha_1^\eps$, $\alpha_2^\eps$, $\beta_1^\eps$ and $\beta_2^\eps$ for $\eps\in\{\pm\}$ are the scalars that only depend on $C$, $N$, $\gamma$, $\eta_1(0)$ and $\eta_2(0)$ (where the detailed forms of these scalars can be seen in the appendix), $\mZ_0=(\mH_0,\mW_0)$, $\mC(t)=\left(\begin{smallmatrix}
    \zeta_1(t)\mI_{CN} & 0\\
    0 & \zeta_2(t)\mI_C\\
    \end{smallmatrix}\right)$, $\zeta_1(t)=\int_0^t\eta_1(\tau)\mathrm{d}\tau$, $\zeta_2(t)=\int_0^t\eta_2(\tau)\mathrm{d}\tau$, $\Pi_1^{\eps}$, $\Pi_2^{\eps}$ and $\Pi_3$ for $\eps\in\{\pm\}$ are orthogonal projection operators onto the following respective eigenspaces:
\begin{equation}
    \begin{aligned}
        & \gE_1^\eps:=\{(\mH,\mW):\mH=\eps\cdot\tfrac{1}{\sqrt{N}}(\mW\otimes \1_N^\top),\mW\1_C=0\},\\
        & \gE_2^\eps:=\{(\mH,\mW):\mH=\eps\cdot \tfrac{1}{\sqrt{N}}\vh\1^\top_{CN},\mW=\vh\1_C ^\top,\vh\in\mathbb{R}^p\},\\
        & \gE_3:=\{(\mH,\mW): \mH(\mI_C\otimes\1_N)=0,\mW=0\}.
    \end{aligned}
\end{equation}
\end{theorem}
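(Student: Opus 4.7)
The plan is to handle the bias by direct integration, then combine $(\mH,\mW)$ into a single matrix and turn the proportional learning rates into a constant ratio, solve the resulting autonomous linear matrix ODE by matrix exponentiation, and finally read off the stated decomposition by diagonalising the coefficient matrix. Since $\vb'(t)=-\eta_2(t)\tfrac{\gamma C-\gamma-1}{C}\1_C$ is uncoupled from $\mH$ and $\mW$, direct integration from $0$ to $t$ immediately yields $\vb(t)=\vb_0+\tfrac{1+\gamma-\gamma C}{C}\zeta_2(t)\1_C$. For the coupled part I stack $\mZ=[\mH,\mW]\in\R^{p\times C(N+1)}$; the hypothesis $\eta_1(t_1)\eta_2(t_2)=\eta_1(t_2)\eta_2(t_1)$ forces $s:=\eta_1(t)/\eta_2(t)\equiv\eta_1(0)/\eta_2(0)$ to be constant in $t$, so \eqref{gradient-flow} becomes $\mZ'(t)=\eta_2(t)\,\mZ(t)\,\mA$ with $\mA=\bigl(\begin{smallmatrix}0 & \mM^\top\\ s\mM & 0\end{smallmatrix}\bigr)$, and reparameterising by $\tau=\zeta_2(t)$ renders the ODE autonomous, giving $\mZ(t)=\mZ_0\exp(\zeta_2(t)\mA)$.

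To evaluate this matrix exponential I would spectrally decompose $\mA$. Because $\mA^2=\mathrm{diag}(s\mM^\top\mM,\,s\mM\mM^\top)$, a short computation yields $\mM\mM^\top=\tfrac{(1+\gamma)^2}{C^2N}\mI_C+\tfrac{\gamma((C-2)\gamma-2)}{C^2N}\1_C\1_C^\top$, with eigenvalue $\lambda_1=(1+\gamma)^2/(C^2N)$ of multiplicity $C-1$ on $\1_C^\perp$ and $\lambda_2=(1-(C-1)\gamma)^2/(C^2N)$ of multiplicity $1$ along $\1_C$. Hence $\mA$ has eigenvalues $\pm k_i\sqrt{s}$ with $k_i=\sqrt{\lambda_i}$, plus a $(CN-C)$-dimensional kernel corresponding to rows of $\mH$ in $\ker\mM$ with the $\mW$-block zero --- exactly the subspace $\gE_3$. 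For each direction $\mu\in\1_C^\perp$ (respectively $\mu=\1_C/\sqrt{C}$), the two-dimensional plane spanned by $(\mu\otimes\1_N^\top/\sqrt{N},\,0)$ and $(0,\,\mu)$ is $\mA$-invariant, and on it the full matrix ODE collapses to the scalar pair $\alpha'=sk_i\beta,\ \beta'=k_i\alpha$, whose solution is the explicit combination $\alpha(\tau)=\alpha_0\cosh(k_i\sqrt{s}\tau)+\sqrt{s}\beta_0\sinh(k_i\sqrt{s}\tau)$, $\beta(\tau)=\beta_0\cosh(k_i\sqrt{s}\tau)+(\alpha_0/\sqrt{s})\sinh(k_i\sqrt{s}\tau)$.

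It then remains to match this per-sector evolution to the template $\Pi_i^\eps\mZ_0\,(\alpha_i^\eps(t)\mC(t)+\beta_i^\eps(t)\mI_{C(N+1)})$. On each plane I would write the initial point $(\alpha_0,\beta_0)$ as the sum of its orthogonal projections onto the diagonal $\{\alpha=\beta\}$ (the intersection of the plane with $\gE_i^+$) and the anti-diagonal $\{\alpha=-\beta\}$ (the intersection with $\gE_i^-$), and then solve a $2\times 2$ linear system to recover the scalars $\alpha_i^\eps(t)$, $\beta_i^\eps(t)$ in closed form of the type $\pm\sinh(k_i\sqrt{s}\zeta_2)/(\sqrt{s}\zeta_2)$ and $\cosh(k_i\sqrt{s}\zeta_2)$. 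The main obstacle is that for $s\neq 1$ the subspaces $\gE_i^\eps$ themselves are \emph{not} eigenspaces of $\mA$ --- the true right-eigenvectors carry an extra $\sqrt{s}$ factor between their $\mH$- and $\mW$-entries --- so the naive eigenprojection does not directly give the correct decomposition. What rescues the statement is that right-multiplication by $\mC(t)=\mathrm{diag}(\zeta_1\mI_{CN},\zeta_2\mI_C)$ rescales the $\mH$- and $\mW$-blocks independently, and because $\zeta_1=s\zeta_2$, the two-parameter family $\alpha\mC+\beta\mI$ is just rich enough to absorb that $\sqrt{s}$ asymmetry on every two-dimensional invariant sector; once this observation is in place, the coefficient matching becomes a routine linear computation and the five-term decomposition of the statement drops out.
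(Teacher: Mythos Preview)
Your proposal is correct and arrives at the same answer, but the route differs from the paper's in one key organisational choice that is worth noting. You absorb the ratio $s=\eta_1/\eta_2$ into the coefficient matrix, writing $\mZ'=\eta_2(t)\mZ\mA$ with $\mA=\bigl(\begin{smallmatrix}0&\mM^\top\\ s\mM&0\end{smallmatrix}\bigr)$, and then diagonalise $\mA$. As you correctly observe, for $s\neq 1$ the stated subspaces $\gE_i^\eps$ are \emph{not} eigenspaces of $\mA$ (the true right-eigenvectors carry an extra $\sqrt{s}$ between the $\mH$- and $\mW$-blocks), so you must do a coefficient-matching step at the end to recover the $\alpha_i^\eps\mC(t)+\beta_i^\eps\mI$ form. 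This works, and your sketch of the $2\times 2$ linear system is sound.

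The paper instead keeps the two learning rates separate, writing the exponent as $\mL(t)=\mB\mC(t)$ with the \emph{symmetric} $\mB=\bigl(\begin{smallmatrix}0&\mM^\top\\\mM&0\end{smallmatrix}\bigr)$, and then uses the identity $(\mB\mC(t))^2=\zeta_1(t)\zeta_2(t)\,\mB^2$ to split the exponential series into even and odd parts:
\[
\exp(\mL(t))=\sum_{k\ge 0}\frac{(\zeta_1\zeta_2)^k\mB^{2k}}{(2k)!}+\Bigl(\sum_{k\ge 0}\frac{(\zeta_1\zeta_2)^k\mB^{2k+1}}{(2k+1)!}\Bigr)\mC(t).
\]
The point is that $\gE_i^\eps$ \emph{are} genuine eigenspaces of $\mB$ (with eigenvalues $\eps(1+\gamma)/(C\sqrt N)$, $\eps(1+\gamma-\gamma C)/(C\sqrt N)$, and $0$), so projecting $\mZ_0$ onto them immediately gives the five-term decomposition, with the $\mC(t)$ factor already sitting where it belongs from the odd half of the series and the $\cosh$/$\sinh$ coefficients coming directly from the scalar sums. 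This sidesteps entirely the ``obstacle'' you identify: no re-projection or coefficient matching is needed. Your approach buys a slightly more direct ODE (autonomous after a single time change), while the paper's buys a cleaner spectral picture in which the projections of the theorem statement are the natural ones from the start.
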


\begin{figure}[!t]
    \centering
   \subfigure[Train Loss]{
    \label{fig:unconstrained-loss}
    \includegraphics[scale=0.8]{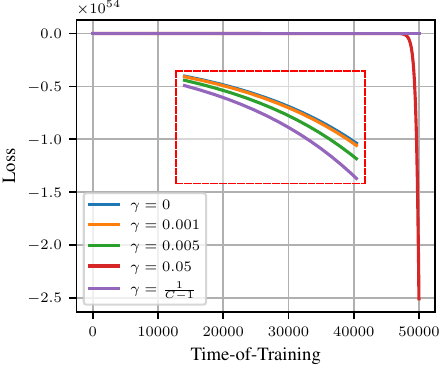}
   }
   \subfigure[Train Accuracy]{
    \label{fig:unconstrained-accuracy}
    \includegraphics[scale=0.8]{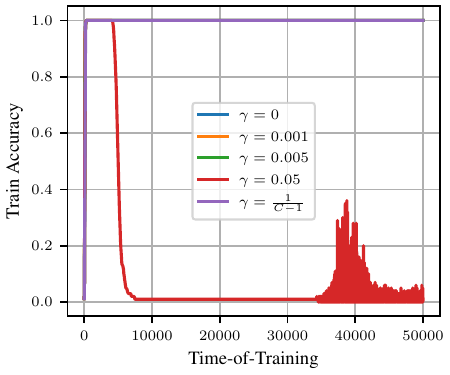}
   }
   \subfigure[$\|\hat{\mZ}(t)-\hat{\overline{\mZ}}\|_2$]{
    \label{fig:unconstrained-error-z}
    \includegraphics[scale=0.8]{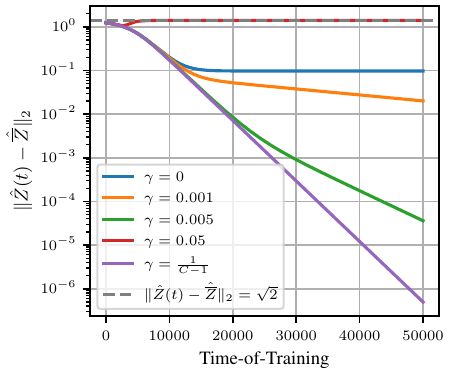}
   }
   \subfigure[$\|\mZ(t)\|_2$]{
    \label{fig:unconstrained-norm-z}
    \includegraphics[scale=0.8]{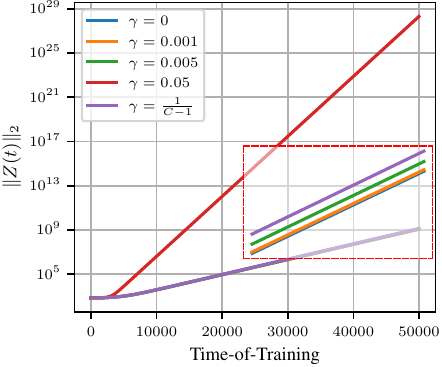}
   }
   \vskip-5pt
   \caption{Verification of the behavior of gradient descent iterates in \Cref{gradient-flow} with $\gamma\in\{0, 0.05, 0.005, 0.001, \frac{1}{C-1}\}$, where we set $p=512$, $C=100$, $N=10$, $\eta_1(t)=\eta_2(t)=0.1$ (\textit{i.e.}, $s=\frac{\eta_1(0)}{\eta_2(0)}=1$, thus $\overline{\mZ}=\Pi_1^+\mZ_0$ according to Corollary \ref{convergence-unconstrained-case}), and then randomly initialize $\mH_0$ and $\mW_0$.   {The red box in the figures represents the zoomed-in view of the last 2,000 epochs.} (a) The training loss. (b) The training accuracy with the prediction rule $\argmax_c \vw_c^\top\vh$. As expected, the features align to their corresponding prototypes when $\gamma<\frac{2}{C-2}$. (c) The distance between $\hat{\mZ}(t)=\mZ(t)/\|\mZ(t)\|_2$ and $\hat{\overline{\mZ}}=\overline{\mZ}/\|\overline{\mZ}\|_2$. As expected in \cref{convergence-rate-unconstrained-case}, the convergence rate is exponential when $0<\gamma<\frac{2}{C-2}$, and will be fastest if $\gamma=\frac{1}{C-1}$. (d) The norm of $\mZ(t)$, which increases exponentially. As can be noticed, $\hat{\mZ}(t)$ does not converge to $\hat{\overline{\mZ}}$ but tend to be orthogonal to $\hat{\overline{\mZ}}$ when $\gamma=0.05 >\frac{2}{C-2}$, that is, $\lim_{t\rightarrow\infty}\|\hat{\mZ}(t)-\hat{\overline{\mZ}}\|_2=\sqrt{2}$. Moreover, the exponential curve of $\|\boldsymbol Z(t)\|_2$ will be $O(e^{3.95 \times \sqrt{10}\times 10^{-4}t})$ when $\gamma=0.05$, which is faster than those curves of the rate roughly around $O(e^{\sqrt{10}\times 10^{-4}t})$ when $\gamma=0,0.001,0.005$ and $\frac{1}{C-1}=\frac{1}{99}$. This key distinction elucidates why the curves overlap when $\gamma=0, 0.001, 0.005$, and $\frac{1}{C-1}$.}
   \label{unconstrained-illustration}
   \vskip-5pt
\end{figure}

\paragraph{Remark} Note that $\gE^\eps_1$, $\gE^\eps_2$ and $\gE_3$ are orthogonal to each other,  $\gE^+_1$ (or $\gE^-_1$) denotes the subspace where all features are in the same (or opposite) direction of their corresponding prototypes while the mean of prototypes is zero, $\gE^\eps_2$ denotes the subspace where all features and prototypes collapse respectively into two scaled versions of the same unit vector, and $\gE_3$ denotes the subspace where the mean of all features from the same class is zero with all prototypes being zero. For classification tasks, we expect the features align to their corresponding prototypes with a cosine similarity of 1, \textit{i.e.}, the solution in $\gE=\{(\mH,\mW):\mH=k \mW\otimes\1_N^\top, \mW\in\sR^{p\times C}, k\in\sR^+\}$ that implies two manifestations of Neural Collapse: \textit{within-class variability collapse} and \textit{convergence to self-duality} \citep{papyan2020prevalence}. In the following, we prove that the normalized dynamics $\frac{\mZ(t)}{\|\mZ(t)\|}$ under the unhinged loss will converge to a solution in $\gE$:

\begin{corollary}[\textbf{Convergence in the Unconstrained Case}]
\label{convergence-unconstrained-case}
Under the conditions and notation of Theorem \ref{closed-form-dynamics}, let $s=\frac{\eta_1(0)}{\eta_2(0)}$. If $0<\gamma < \frac{2}{C-2}$ (where $C>2$) or $C=2$, and $\lim_{t\rightarrow\infty}\zeta_1(t)=\infty$, the gradient flow (as in \cref{gradient-flow}) will behave as:
\begin{equation}
    \label{dynamics-of-Z(t)}
    e^{-\frac{(1+\gamma)\sqrt{\zeta_1(t)\zeta_2(t)}}{C\sqrt{N}}}\mZ(t)=\overline{\mZ} +\bm{\Delta}(t),
\end{equation}
where $\overline{\mZ}=\left(\tfrac{1+\sqrt{s}}{2}\mH_1^++\tfrac{1-\sqrt{s}}{2}\mH_1^-, \tfrac{1+\sqrt{s}}{2\sqrt{s}}\mW_1^+-\tfrac{1-\sqrt{s}}{2\sqrt{s}}\mW_1^-\right)$,  $(\mH_1^+,\mW_1^+)=\Pi_1^+\mZ_0$, $(\mH_1^-,\mW_1^-)=\Pi_1^-\mZ_0$, and the residual term $\bm{\Delta}(t)$ decreases as $\|\bm{\Delta}(t)\|=O\left(e^{\frac{\sqrt{\zeta_1(t)\zeta_2(t)}}{C\sqrt{N}}\cdot\max\{-\gamma C,(C-2)\gamma-2\}}\right)$, and so the normalized $\mZ(t)$ converges to $\frac{\overline{\mZ}}{\|\overline{\mZ}\|}$ in
\begin{equation}
    \label{convergence-rate-unconstrained-case}
    \left\|\frac{\mZ(t)}{\|\mZ(t)\|}-\frac{\overline{\mZ}}{\|\overline{\mZ}\|}\right\|=O\left(e^{\frac{\sqrt{\zeta_1(t)\zeta_2(t)}}{C\sqrt{N}}\cdot\max\{-\gamma C,(C-2)\gamma-2\}}\right),
\end{equation}
which indicates $\lim_{t\rightarrow\infty}\frac{\mZ(t)}{\|\mZ(t)\|}=\frac{\overline{\mZ}}{\|\overline{\mZ}\|}\in\gE$.
Moreover, if $\gamma\neq \tfrac{1}{C-1}$, $\lim_{t\rightarrow \infty}\frac{\max_i b_i(t)}{\min_i b_i(t)}=1$.
\end{corollary}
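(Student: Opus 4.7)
The plan is to extract the dominant exponential mode of $\mZ(t)$ from the closed form given by \Cref{closed-form-dynamics} and then verify that, after rescaling by $e^{-(1+\gamma)\sqrt{\zeta_1(t)\zeta_2(t)}/(C\sqrt{N})}$, everything outside a fixed vector in $\gE_1^+\oplus\gE_1^-$ decays. A direct computation gives $\mM\mM^\top=\tfrac{1}{C^2N}\bigl[(1+\gamma)^2\mI_C+(\gamma^2(C-2)-2\gamma)\1_C\1_C^\top\bigr]$, whose singular values are $\sigma_1=\tfrac{1+\gamma}{C\sqrt{N}}$ on $\1_C^\perp$ and $\sigma_2=\tfrac{|1-\gamma(C-1)|}{C\sqrt{N}}$ on $\mathrm{span}(\1_C)$. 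Since the coupled scalar ODE $x'=\eta_1\sigma y$, $y'=\eta_2\sigma x$ (with $\eta_1=s\eta_2$) integrates, under the time change $\tau=\zeta_2(t)$, to combinations of $e^{\pm\sigma\sqrt{\zeta_1\zeta_2}}$, the exponential rates on the five invariant subspaces are $\pm\sigma_1\sqrt{\zeta_1\zeta_2}$ on $\gE_1^\pm$, $\pm\sigma_2\sqrt{\zeta_1\zeta_2}$ on $\gE_2^\pm$, and $0$ on $\gE_3$. The hypothesis $0<\gamma<\tfrac{2}{C-2}$ (or $C=2$) is precisely what makes $\sigma_1>\sigma_2$, so the $e^{+\sigma_1\sqrt{\zeta_1\zeta_2}}$ mode on $\gE_1^+\oplus\gE_1^-$ strictly dominates.

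To pin down the leading coefficient I pick orthonormal representatives $\tfrac{1}{\sqrt{2}}(\pm u,v)$ of $\gE_1^\pm$, with $u=\tfrac{1}{\sqrt{N}}(v\otimes\1_N^\top)$ and $\|v\|=1$, and let $x_0=\langle\mH_0,u\rangle$, $y_0=\langle\mW_0,v\rangle$. The two-dimensional sub-dynamics integrates to $x(t)=x_0\cosh(\Lambda)+\sqrt{s}\,y_0\sinh(\Lambda)$ and $y(t)=y_0\cosh(\Lambda)+\tfrac{x_0}{\sqrt{s}}\sinh(\Lambda)$ with $\Lambda=\sigma_1\sqrt{\zeta_1\zeta_2}$; after multiplying by $e^{-\Lambda}$ and substituting $a=\tfrac{x_0+y_0}{2}$, $b=\tfrac{x_0-y_0}{2}$, which are exactly the scalar coordinates of $\mH_1^\pm,\mW_1^\pm$ (up to signs), the leading coefficients become $\tfrac{1\pm\sqrt{s}}{2}$ for the feature part and $\pm\tfrac{1\pm\sqrt{s}}{2\sqrt{s}}$ for the prototype part, matching the definition of $\overline{\mZ}$. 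Using $\mH_1^\pm=\pm\tfrac{1}{\sqrt{N}}(\mW_1^\pm\otimes\1_N^\top)$ one checks $\overline{\mH}=\tfrac{\sqrt{s}}{\sqrt{N}}(\overline{\mW}\otimes\1_N^\top)$ with $\tfrac{\sqrt{s}}{\sqrt{N}}>0$, so $\overline{\mZ}\in\gE$. The residual collects the leftover modes: after rescaling, the decaying half of $\gE_1$ contributes $O(e^{-2\Lambda})$, the $\gE_2^\pm$ pieces contribute at most $O(e^{(\sigma_2-\sigma_1)\sqrt{\zeta_1\zeta_2}})$, and the $\gE_3$ piece contributes $O(e^{-\Lambda})$. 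The slowest of these is $\sigma_2-\sigma_1$, and splitting on whether $\gamma\le\tfrac{1}{C-1}$ or $\gamma>\tfrac{1}{C-1}$ gives $C\sqrt{N}(\sigma_1-\sigma_2)=\gamma C$ or $2-\gamma(C-2)$, producing the stated $\tfrac{\sqrt{\zeta_1\zeta_2}}{C\sqrt{N}}\max\{-\gamma C,(C-2)\gamma-2\}$ exponent.

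The normalized convergence \cref{convergence-rate-unconstrained-case} then follows from writing $\tfrac{\mZ}{\|\mZ\|}-\tfrac{\overline{\mZ}}{\|\overline{\mZ}\|}=\tfrac{1}{\|\mZ\|}\bigl(\mZ-\tfrac{\|\mZ\|}{\|\overline{\mZ}\|}\overline{\mZ}\bigr)$, noting $\|\mZ(t)\|=e^{\Lambda}\|\overline{\mZ}\|(1+o(1))$, and bounding the two residual pieces by the same exponential rate just derived. The bias claim is immediate from the closed form $\vb(t)=\vb_0+\tfrac{(1+\gamma-\gamma C)\zeta_2(t)}{C}\1_C$: when $\gamma\ne\tfrac{1}{C-1}$ the coefficient $1-\gamma(C-1)$ is nonzero and $\zeta_2(t)=\zeta_1(t)/s\to\infty$, so every $b_i(t)$ has the same dominant linear term and $\max_i b_i(t)/\min_i b_i(t)\to 1$. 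The main obstacle is the $\sqrt{s}$-mixing in step two: the decomposition $\gE_1^+\oplus\gE_1^-$ diagonalises only the symmetric case $s=1$, so for $s\ne 1$ one has to integrate the coupled $2\times 2$ ODE explicitly and translate back to the $(\mH_1^\pm,\mW_1^\pm)$ basis to produce the asymmetric $(1\pm\sqrt{s})$ weights; everything else reduces to bookkeeping on the scalars $\alpha_i^\epsilon,\beta_i^\epsilon$ already delivered by \Cref{closed-form-dynamics}.
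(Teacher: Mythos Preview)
Your proposal is correct and follows essentially the same approach as the paper: identify the dominant exponential mode on $\gE_1^+\oplus\gE_1^-$, rescale by $e^{-\sigma_1\sqrt{\zeta_1\zeta_2}}$, and bound the residual from the subdominant modes. The only cosmetic difference is that you recompute the singular values of $\mM$ and re-integrate the coupled $2\times 2$ ODE from scratch, whereas the paper simply substitutes the explicit scalars $\alpha_i^\eps(t)\zeta_j(t)+\beta_i^\eps(t)$ already provided in the proof of \Cref{closed-form-dynamics}; the resulting $(1\pm\sqrt{s})$ coefficients, the residual exponent $\max\{-\gamma C,(C-2)\gamma-2\}$, the verification that $\overline{\mZ}\in\gE$, and the bias argument all coincide.
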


The corollary above shows that even without any mandatory constraints, the gradient flow under the unhinged loss will converge to a solution $\frac{\overline{\mZ}}{\|\overline{\mZ}\|}$ that belongs to $\gE$ and can be determined by the initialization of $\mZ_0$ and the ratio $s$. It is worth noting, however, that this solution does not conform to the geometric structure of Neural Collapse. Neural collapse typically entails the formation of a simplex equiangular tight frame, as exemplified in the case of CE \citep{papyan2020prevalence}, MSE \citep{han2022neural}, margin-based losses \citep{zhou2022learning},  and other losses \citep{zhouall}. This result suggests that \textit{not all losses will lead to Neural Collapse solutions}. 

In addition, the rate of convergence is exponential as a function of the cumulative learning rates, \textit{i.e.}, $O\left(e^{\frac{\sqrt{\zeta_1(t)\zeta_2(t)}}{C\sqrt{N}}\cdot\max\{-\gamma C,(C-2)\gamma-2\}}\right)$, which indicates that the convergence of updating both features and prototypes by gradient descent is much faster than $O(1/\log t)$ that only updates prototypes (linear predictors) on linearly separable data \citep{soudry2018implicit}. In a sense, this convergence rate may help explain why training deep neural networks usually takes only several hundred or thousand epochs. Moreover, if $\gamma= \tfrac{1}{C-1}$, we can obtain the fastest convergence of \cref{convergence-rate-unconstrained-case}, that is, $\left\|\frac{\mZ(t)}{\|\mZ(t)\|}-\frac{\overline{\mZ}}{\|\overline{\mZ}\|}\right\|=O\left(e^{-\frac{\sqrt{\zeta_1(t)\zeta_2(t)}}{(C-1)\sqrt{N}}}\right)$. As shown in \cref{unconstrained-illustration}, if we set $\eta_1(t)=\eta$ as a constant learning rate, then $\zeta_1(t)=\eta t\rightarrow\infty$ as $t\rightarrow \infty$, and the gradient flow in \cref{gradient-flow} exhibits an exponential convergence rate of $\frac{\mZ(t)}{\|\mZ(t)\|}$ to $\frac{\overline{\mZ}}{\|\overline{\mZ}\|}$. However, this will lead to an exponential increase in the norm of features and prototypes, with the rate $e^{\frac{(1+\gamma)\sqrt{\zeta_1(t)\zeta_2(t)}}{C\sqrt{N}}}$. Such growth is almost unbearable for the practical training of DNNs. Therefore, in what follows, we consider to limit the excessive growth of these norms.

\subsection{The Regularized Case}
In this subsection, we focus on a regularized optimization problem that introduces $\ell_2$ regularization (also known as ``weight decay'') on features, prototypes, and biases. This regularization term helps to prevent the norms of these variables from growing too large during training. The optimization problem is defined as follows:
\begin{equation}
    \label{regularized-objective}
    \min_{\mW,\mH,\vb}\Tr(\mY^\top \mW^\top\mH)+\tfrac{\gamma C-\gamma-1}{C}\1_C^\top\vb + \tfrac{\lambda_1}{2}\|\mH\|_F^2+\tfrac{\lambda_2}{2}\|\mW\|_F^2+\tfrac{\lambda_2}{2}\|\vb\|_2^2,
\end{equation}
where $\|\cdot\|_F$ denotes the Frobenius norm, $\lambda_1$ and $\lambda_2$ are trade-off parameters\footnote{To draw general conclusions, we consider a different regularization strength for each component.} that control the regularization strength.
 
Taking the partial derivative with respect to $\mH$, $\mW$, and $\vb$, we have
\begin{equation}
    \notag
    \begin{aligned}
    &\nabla_{\mH}\gL=\mW\mY+\lambda_1\mH,\\ 
    &\nabla_{\mW}\gL=\mH\mY^\top+\lambda_2\mW,\\
    &\nabla_{\vb}\gL=\tfrac{\gamma C-\gamma-1}{C}\1_C+\lambda_2\vb,
    \end{aligned}
\end{equation}
and the corresponding learning dynamics following the gradient flow with time-vary learning rates, $\ell_2$ regularization can be formulated as
\begin{equation}
    \label{regularized-gradient-flow}
    \begin{aligned}
    &\mH'(t)=\eta_1(t) \mW(t)\mM-\lambda_1\eta_1(t)\mH(t),\\
    &\mW'(t)=\eta_2(t)\mH(t)\mM^\top-\lambda_2\eta_2(t)\mW(t), \\
    &\vb'(t)=-\eta_2(t)\tfrac{\gamma C-\gamma-1}{C}\1_C-\lambda_2\eta_2(t)\vb(t).
    \end{aligned}
\end{equation}

The dynamics of this regularized gradient flow can be proved as follows

\begin{theorem}[\textbf{Dynamics of Features, Prototypes, and Biases under Weight Decay}]
\label{closed-form-dynamics-l2}
Consider the continual gradient flow in \Eqref{regularized-gradient-flow}, let $\mZ(t)=(\mH(t),\mW(t))$. If $\eta_1(t_1)\eta_2(t_2)=\eta_1(t_2)\eta_2(t_1)$ for any $t_1,t_2\ge0$, we have the following closed-form dynamics:
\begin{equation}
    \begin{aligned}
    \mZ(t)=&\Pi_1^+\mZ_0\begin{pmatrix}a_1^+(t)\mI_{CN} & 0\\0 & b_1^+(t)\mI_{C}\end{pmatrix}+\Pi_1^-\mZ_0\begin{pmatrix}a_1^-(t)\mI_{CN} & 0\\0 & b_1^-(t)\mI_{C}\end{pmatrix}\\
    &+\Pi_2^+\mZ_0\begin{pmatrix}a_2^+(t)\mI_{CN} & 0\\0 & b_2^+(t)\mI_{C}\end{pmatrix}+\Pi_2^-\mZ_0\begin{pmatrix}a_2^-(t)\mI_{CN} & 0\\0 & b_2^-(t)\mI_{C}\end{pmatrix}\\
    &+\Pi_3\mZ_0\begin{pmatrix}a_3(t)\mI_{CN} & 0\\0 & b_3(t)\mI_{C}\end{pmatrix},
    \end{aligned}
\end{equation}
and 
\begin{equation}
\vb(t)=\phi(t)\left(\vb_0+\tfrac{1+\gamma-\gamma C}{C}\psi(t)\1_C\right),
\end{equation}
where $\Pi_1^+\mZ_0$, $\Pi_1^-\mZ_0$, $\Pi_1^+\mZ_0$, $\Pi_1^-\mZ_0$, and $\Pi_3\mZ_0$ follow the definition in Theorem \ref{closed-form-dynamics}, $a_1^\eps$, $a_2^\eps$, $b_1^\eps$, $b_2^\eps$, $a_3$, and $b_3$ for $\eps\in\{\pm\}$ are the scalars that depend only on $C$, $N$, $\gamma$, $\lambda_1$, $\lambda_2$, $\eta_1$, and $\eta_2$ (where the detailed forms can be seen in \ref{all-proofs}), $\phi(t)=\exp(-\lambda_2\int_0^t\eta_2(\tau)\mathrm{d}\tau)$, and $\psi(t)=\int_0^t\zeta_2(\tau)\exp(\lambda_2\int_0^\tau\eta_2(s)\mathrm{d}s)\mathrm{d}\tau$.
\end{theorem}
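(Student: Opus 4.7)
The plan is to treat the $\vb$-equation and the $(\mH,\mW)$-system separately. The $\vb$-equation decouples completely from $\mH$ and $\mW$ and is a first-order linear ODE in the vector $\vb$ with coefficients $-\lambda_2\eta_2(t)$ acting uniformly on each coordinate. Multiplying by the integrating factor $1/\phi(t)=\exp(\lambda_2\int_0^t\eta_2(s)\,\mathrm{d}s)$ and integrating once produces $\vb(t)=\phi(t)(\vb_0+\tfrac{1+\gamma-\gamma C}{C}\psi(t)\1_C)$ for the stated $\psi$.

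For the coupled $(\mH,\mW)$-flow I would exploit linearity and the orthogonal decomposition already used in Theorem~\ref{closed-form-dynamics}: write $\mZ_0=\sum_i \Pi_i\mZ_0$ according to the mutually orthogonal eigenspaces $\gE_1^\pm,\gE_2^\pm,\gE_3$, solve the dynamics starting from each projected component, and sum the five solutions. Denoting $\Pi_i\mZ_0=(\mH^{(i)},\mW^{(i)})$, the key algebraic identity, proved by direct Kronecker-product computation using $(\mI_C\otimes\1_N^\top)\1_{CN}=N\1_C$, $\1_C^\top(\mI_C\otimes\1_N^\top)=\1_{CN}^\top$, together with $\mW^{(i)}\1_C=0$ or $\mH^{(i)}(\mI_C\otimes\1_N)=0$ where appropriate, is
\begin{equation*}
\mW^{(i)}\mM=\alpha_i\mH^{(i)},\qquad \mH^{(i)}\mM^\top=\alpha_i\mW^{(i)},
\end{equation*}
with $\alpha_i=\pm\tfrac{1+\gamma}{C\sqrt N}$ on $\gE_1^\pm$, $\alpha_i=\pm\tfrac{1+\gamma-\gamma C}{C\sqrt N}$ on $\gE_2^\pm$, and both products vanishing on $\gE_3$ (where additionally $\mW^{(3)}=0$).

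This identity, combined with the component-wise scaling action of the weight-decay terms $-\lambda_1\eta_1\mH$ and $-\lambda_2\eta_2\mW$, shows that the two-parameter family $\{(a\mH^{(i)},b\mW^{(i)}):a,b\in\mathbb R\}$ is invariant under the regularized flow, and inside it the flow collapses to a pair of coupled scalar ODEs
\begin{equation*}
a_i'(t)=\eta_1(t)\bigl(\alpha_i b_i(t)-\lambda_1 a_i(t)\bigr),\qquad b_i'(t)=\eta_2(t)\bigl(\alpha_i a_i(t)-\lambda_2 b_i(t)\bigr),
\end{equation*}
with $a_i(0)=b_i(0)=1$ (with $b_3\equiv 0$ forced by $\mW^{(3)}=0$). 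The hypothesis $\eta_1(t_1)\eta_2(t_2)=\eta_1(t_2)\eta_2(t_1)$ forces $\eta_1=s\eta_2$ for a constant $s>0$, and under the time reparametrization $\tau=\zeta_2(t)$ each pair becomes a constant-coefficient $2\times2$ linear ODE with matrix
\begin{equation*}
A_i=\begin{pmatrix}-s\lambda_1 & s\alpha_i\\ \alpha_i & -\lambda_2\end{pmatrix},
\end{equation*}
diagonalizable with eigenvalues $\mu^{(i)}_\pm=\tfrac12\bigl(-(s\lambda_1+\lambda_2)\pm\sqrt{(s\lambda_1-\lambda_2)^2+4s\alpha_i^2}\bigr)$. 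Matching the initial data gives each $(a_i(t),b_i(t))$ as an explicit linear combination of $\exp(\mu_\pm^{(i)}\zeta_2(t))$, which one then slots into the block-diagonal form of the theorem.

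The main obstacle is bookkeeping rather than substance: carefully verifying the five eigenspace identities (with the opposite signs of $\alpha_i$ on $\gE_1^\pm$ vs.\ $\gE_2^\pm$ and the degenerate $\gE_3$ case producing different eigenvectors but identical quadratic forms), and then assembling the five scalar pairs $(a_i,b_i)$ into the advertised block representation without sign or indexing errors. The structural point that makes the argument clean is that weight decay acts diagonally on $\mH$ and on $\mW$ separately and therefore preserves exactly the same invariant two-parameter families as the unregularized operator, so the entire computation reduces to a shift of the $2\times2$ coefficient matrix relative to Theorem~\ref{closed-form-dynamics}.
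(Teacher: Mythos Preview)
Your proposal is correct and follows essentially the same strategy as the paper: decompose $\mZ_0$ along the five orthogonal eigenspaces $\gE_1^\pm,\gE_2^\pm,\gE_3$, use the identities $\mW^{(i)}\mM=\alpha_i\mH^{(i)}$ and $\mH^{(i)}\mM^\top=\alpha_i\mW^{(i)}$ to reduce each component to a $2\times2$ linear system in scalars $(a_i,b_i)$, and diagonalize. The only cosmetic difference is that the paper first verifies the commutator $[\mA_\lambda(t_1),\mA_\lambda(t_2)]=0$ and writes $\mZ(t)=\mZ_0\exp(\mB_\lambda\mC(t))$ via the Magnus formula, then computes the action of this exponential on each eigenspace by an induction on matrix powers, whereas you bypass the exponential formalism by arguing directly that the two-parameter family $\{(a\mH^{(i)},b\mW^{(i)})\}$ is flow-invariant and then reparametrize time by $\tau=\zeta_2(t)$; both routes land on the identical constant-coefficient $2\times2$ matrix and the same eigenvalue formula.
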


The convergence under the regularized case can also be derived as:

\begin{corollary}[\textbf{Convergence in the $\ell_2$ Regularized Case}]
\label{convergence-of-dynamics-l2}
Under the conditions and notation of Theorem \ref{closed-form-dynamics-l2}, let $s=\frac{\eta_1(0)}{\eta_2(0)}$. If $0<\gamma<\frac{2}{C-2}$ (where $C>2$) or $C=2$, $\lambda_1=\lambda_2=\lambda$, and $\lim_{t\rightarrow\infty}\zeta_1(t)=\infty$, then there exist constants $\pi_h^+,\pi_h^-,\pi_w^+$, $\pi_w^-$, and $\omega$ only depending on $\lambda$, $\gamma$, $s$, $C$, and $N$, such that the gradient flow (as in \cref{regularized-gradient-flow}) behaves as:
\begin{equation}
\label{convergence-of-Z-l2}
\left\|\frac{\mZ(t)}{\|\mZ(t)\|}-\frac{\mZ_\pi}{\|\mZ_\pi\|}\right\|=O\left(e^{-\omega\zeta_2(t)}\right),
\end{equation}
where $(\mH_1^+,\mW_1^+)=\Pi_1^+\mZ_0$, $(\mH_1^-,\mW_1^-)=\Pi_1^-\mZ_0$, and $\mZ_\pi=(\pi_h^+\mH_1^++\pi_h^-\mH_1^-,\pi_w^+\mW_1^++\pi_w^-\mW_1^-)$.

Furthermore, we have the following convergence results for $\mZ(t)$:
\begin{itemize}
    \item If $\lambda > \frac{1+\gamma}{C\sqrt{N}}$, then $\lim_{t\rightarrow\infty}\|\mZ(t)\|=0$;
    \item If $\lambda =\frac{1+\gamma}{C\sqrt{N}}$, then $\lim_{t\rightarrow\infty}\mZ(t)=\left(\mH_1^+ + \frac{1-s}{1+s}\mH_1^-, \mW_1^+ - \frac{1-s}{1+s}\mW_1^-\right)$;
    \item If $\lambda < \frac{1+\gamma}{C\sqrt{N}}$, then $\lim_{t\rightarrow\infty}\|\mZ(t)\|=\infty$.
\end{itemize}
\end{corollary}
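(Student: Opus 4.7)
The plan is to leverage the closed-form representation from Theorem~\ref{closed-form-dynamics-l2}, which writes $\mZ(t)$ as a sum of five contributions, one per orthogonal projection $\Pi_1^\pm,\Pi_2^\pm,\Pi_3$, whose scalar multipliers are elementary functions of $\zeta_1(t)$ and $\zeta_2(t)$. Under the two hypotheses $\lambda_1=\lambda_2=\lambda$ and $\eta_1=s\eta_2$ (so $\zeta_1(t)=s\zeta_2(t)$), every multiplier becomes an elementary function of the single variable $\zeta_2(t)$. The corollary then reduces to (i) identifying the dominant exponent, (ii) exhibiting a uniform spectral gap $\omega>0$ to the next exponent, and (iii) reading off the limiting direction from the dominant eigenvector.

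I would restrict the regularized flow (\ref{regularized-gradient-flow}) to each pair $\gE_1^+\oplus\gE_1^-$ and $\gE_2^+\oplus\gE_2^-$, coordinatizing each block by scalars $(u,v)$ attached respectively to its $\mH$- and $\mW$-directions. Using $\mW\1_C=0$ on $\gE_1$ and the single-direction structure $\mH=\pm\tfrac{1}{\sqrt{N}}\vh\1_{CN}^\top,\mW=\vh\1_C^\top$ on $\gE_2$, the restricted flow becomes, after rescaling time by $\tau=\zeta_2(t)$, the $2\times 2$ linear ODE $\tfrac{\mathrm{d}}{\mathrm{d}\tau}\binom{u}{v}=A_i\binom{u}{v}$ with
\begin{equation*}
A_i=\begin{pmatrix}-s\lambda & s\rho_i\\ \rho_i & -\lambda\end{pmatrix},\quad \rho_1=\tfrac{1+\gamma}{C\sqrt{N}},\ \rho_2=\tfrac{1-(C-1)\gamma}{C\sqrt{N}},
\end{equation*}
while $\gE_3$ evolves at rate $-\lambda s$ in $\tau$. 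The eigenvalues $x_\pm^{(i)}=\bigl(-(s+1)\lambda\pm\sqrt{(s-1)^2\lambda^2+4s\rho_i^2}\bigr)/2$ satisfy $x_+^{(1)}>x_+^{(2)}$ and $x_+^{(1)}>-\lambda s$ whenever $\rho_1^2>\rho_2^2$, which is exactly the hypothesis $0<\gamma<\tfrac{2}{C-2}$ (or $C=2$); hence $\omega=\min\bigl\{x_+^{(1)}-x_+^{(2)},\,x_+^{(1)}+\lambda s\bigr\}$ is a strictly positive constant depending only on $\lambda,\gamma,s,C,N$.

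Factoring $e^{x_+^{(1)}\zeta_2(t)}$ out of $\mZ(t)$ then gives (\ref{convergence-of-Z-l2}): the projections onto $\gE_1^-,\gE_2^\pm,\gE_3$ are each suppressed by at least $e^{-\omega\zeta_2(t)}$, so $\mZ(t)/\|\mZ(t)\|$ converges exponentially fast to the direction of $A_1$'s top eigenvector lifted back into $\gE_1$. Expressing this limit in the $(\mH_1^\pm,\mW_1^\pm)$ basis via $\mH_1^\pm=\pm\mW_1^\pm\otimes\1_N^\top/\sqrt{N}$ produces $\mZ_\pi=(\pi_h^+\mH_1^++\pi_h^-\mH_1^-,\pi_w^+\mW_1^++\pi_w^-\mW_1^-)$ with $\pi_h^\pm,\pi_w^\pm$ readable off the eigenvector. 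For the norm trichotomy, the key algebraic fact is $x_+^{(1)}=0\iff\lambda=\tfrac{1+\gamma}{C\sqrt{N}}$: squaring the radical makes the $(s-1)^2\lambda^2$ terms cancel, leaving $\lambda^2=\rho_1^2$, so the threshold is $s$-free as claimed. When $\lambda>\rho_1$ every eigenvalue is strictly negative, forcing $\|\mZ(t)\|\to 0$; when $\lambda<\rho_1$, $x_+^{(1)}>0$ drives $\|\mZ(t)\|\to\infty$; and at $\lambda=\rho_1$ the top eigenvector of $A_1$ simplifies to $(1,1)^\top$ independently of $s$, so the surviving mode is the $(1,1)$-projection of the initial $\gE_1$-data, which via $\mW_{d,0}=\mW_1^+-\mW_1^-$, $\mW_{s,0}=\mW_1^++\mW_1^-$ simplifies to $(\mH_1^++\tfrac{1-s}{1+s}\mH_1^-,\mW_1^+-\tfrac{1-s}{1+s}\mW_1^-)$.

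The main obstacle is that away from the borderline, $A_1$ is non-symmetric whenever $s\neq 1$ (its diagonal is $(-s\lambda,-\lambda)$ rather than a pure $-\lambda I$ shift), so its top eigenvector depends nontrivially on both $s$ and $\lambda$ and the coefficients $\pi_h^\pm,\pi_w^\pm$ are more intricate than the clean $\tfrac{1\pm\sqrt{s}}{2}$ weights of Corollary~\ref{convergence-unconstrained-case}; one must track the $\sqrt{s}$ factors carefully through the $(\mW_d,\mW_s)\leftrightarrow(\mW_1^+,\mW_1^-)$ change of basis. The redeeming feature is that at the borderline $\lambda=\rho_1$ the $s$-dependence drops out of the dominant eigenvector itself, so only a short linear-algebra calculation is left to produce the $\tfrac{1-s}{1+s}$ weight in item two of the trichotomy.
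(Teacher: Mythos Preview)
Your approach is essentially the paper's: both reduce the regularized flow to a $2\times 2$ linear system per block (the paper via $\exp\mS(\sigma,\lambda,\lambda,\zeta_1,\zeta_2)$ for each $\sigma\in\{\pm\rho_1,\pm\rho_2,0\}$, you via $A_i$ in the $(\mH\text{-part},\mW\text{-part})$ coordinates), identify $x_+^{(1)}$ as the top exponent under $\rho_1^2>\rho_2^2$, and read the limiting direction from the dominant eigenvector; the trichotomy and the $\tfrac{1-s}{1+s}$ borderline computation match the paper's exactly.

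One wording slip to fix: you say ``the projections onto $\gE_1^-,\gE_2^\pm,\gE_3$ are each suppressed,'' but $\gE_1^-$ is \emph{not} suppressed---since $x_\pm^{(1)}$ depend only on $\rho_1^2$, the $\gE_1^-$ block carries the same top rate $x_+^{(1)}$ as $\gE_1^+$, which is precisely why $\pi_h^-,\pi_w^-$ are nonzero in $\mZ_\pi$. What is actually suppressed is the $x_-^{(1)}$-eigenmode inside $\gE_1^+\oplus\gE_1^-$, and that subspace coincides with $\gE_1^-$ only when $s=1$ and $\lambda=0$; your own change-of-basis paragraph and borderline calculation already reflect the correct picture, so this is a labeling error rather than a gap.
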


As can be seen, the features and classifier weights under the unhinged loss converge to the solution depending on the initialization with the form $\mZ_\pi(t)=(\pi_h^+ \mH_1^+ +\pi_h^- \mH_1^-, \pi_w^+ \mW_1^+ +\pi_w^- \mW_1^-)$, where $\mH_1^+=\frac{1}{\sqrt{N}}(\mW^+\otimes \1_N^\top)$ and $\mH_1^-=-\frac{1}{\sqrt{N}}(\mW^-\otimes \1_N^\top)$. Thus, the solution implies the property of within-class variability collapse. To full encompass all properties of Neural Collapse (when the class number and the feature dimensionality $p$ satisfies $C\le p-1$), it is essential for $\pi_w^+ \mW_1^+ +\pi_w^- \mW_1^-$ to form a Simplex ETF, and for $\frac{\pi_h^+}{\pi_w^+}=-\frac{\pi_h^-}{\pi_w^-}$ to hold, inducing the property of convergence to self-duality (i.e., the class means and linear classifiers converge to each other, up to rescaling). For example, when $\lambda=\frac{1+\gamma}{C\sqrt{N}}$, Corollary demonstrates that the solution satisfies the property of convergence to self-duality.

Moreover, the results in Corollary \ref{convergence-of-dynamics-l2} suggest that adding an appropriate weight decay on both features and prototypes can avoid impractical effects, since the norm of $\mZ(t)$ shrinking to 0 or diverging toward infinity will significantly affect the training of DNNs. Several recent works \citep{zhu2021geometric, pmlr-v162-zhou22c} described that the features are implicitly penalized, but this implicit penalization may be fragile when using the unhinged loss (as depicted in \cref{fig:collapse-of-unhinged}). Consequently, \revise{we emphasize the importance of \textit{adding explicit regularization to features, rather than relying solely on the implicit penalization attached by other components like batch normalization and weight decay}}. Explicit feature regularization also plays a role in mitigating minority collapse \citep{fang2021exploring}. Firstly, minority collapse leads to features of the minority classes approaching $0$, since the minimization of the objective pays too much emphasis on enlarging the feature norms of the majority classes. In this context, explicit feature regularization can effectively restrain the excessive growth of feature norms of the majority classes. Secondly, explicit feature regularization in some sense intuitively reduces the energy of features in the optimization program \citep{fang2021exploring}, thereby shrinking the feasible domain of features. This further mitigates the imbalance of feature norms between majority and minority classes. In a nutshell, we need to limit the growth of feature norms.

\subsection{The Prototype-anchored Case}
We further consider the prototype-anchored case in which the class prototypes $\mW$ are fixed\footnote{That is, $\mW$ is not updated, which can be done by simply setting $W.\text{require\_grad} = False$ in PyTorch} into some specific structures (\textit{e.g.}, a simplex ETF) during the training process and the features $\mH$ are with $\ell_2$ regularization. Then, the dynamics of $\mH$ in \cref{regularized-objective} will be formulated as the first-order non-homogeneous linear difference equation:
\begin{equation}
    \label{eq:pal}
    \mH'(t)=\eta(t)\mW\mM-\lambda \eta(t)\mH(t),
\end{equation}
and the solution to the linear difference equation can be easily derived as follows

\begin{theorem}
\label{PAL-dynamics}
Consider the continual gradient flow (\cref{eq:pal}) in which the prototypes $\mW$ are fixed, we have the closed-form dynamics:
\begin{equation}
    \mH(t)=e^{-\lambda\int_0^t\eta(\tau)\mathrm{d}\tau}\mH(0)+\frac{1-e^{-\lambda\int_0^t\eta(\tau)\mathrm{d}\tau}}{\lambda} \mW\mM,
\end{equation}
which further indicates that $\left\|\mH(t)-\frac{1}{\lambda}\mW\mM\right\|=O\left(e^{-\lambda\int_0^t\eta(\tau)\mathrm{d}\tau}\right)$ when $\lim_{t\rightarrow \infty}e^{-\lambda\int_0^t\eta(\tau)\mathrm{d}\tau}=0$.
\end{theorem}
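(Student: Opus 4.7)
The equation in \cref{eq:pal} is a first-order linear matrix ODE with time-varying coefficient, so the plan is to use the integrating factor method, followed by a direct computation to verify the exponential convergence claim. Rewriting the ODE as $\mH'(t) + \lambda\eta(t)\mH(t) = \eta(t)\mW\mM$, the natural integrating factor is $\mu(t) = \exp\bigl(\lambda\int_0^t \eta(\tau)\mathrm{d}\tau\bigr)$, since its derivative is $\mu'(t) = \lambda\eta(t)\mu(t)$. I would first observe that this choice works columnwise/entrywise, so the matrix-valued nature of $\mH$ introduces no real complication: the equation decouples into $p \times CN$ scalar ODEs that share the same scalar coefficient $\lambda\eta(t)$ and differ only in the constant forcing entry coming from $\mW\mM$.

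Next, I would multiply through by $\mu(t)$ to obtain $\frac{\mathrm{d}}{\mathrm{d}t}\bigl(\mu(t)\mH(t)\bigr) = \eta(t)\mu(t)\mW\mM$, and then integrate from $0$ to $t$. The key simplification is that $\eta(\tau)\mu(\tau) = \tfrac{1}{\lambda}\mu'(\tau)$, so the integral $\int_0^t \eta(\tau)\mu(\tau)\mathrm{d}\tau$ telescopes to $\tfrac{1}{\lambda}(\mu(t)-1)$. Solving for $\mH(t)$ and dividing by $\mu(t)$ yields
\begin{equation}
\mH(t) = e^{-\lambda\int_0^t\eta(\tau)\mathrm{d}\tau}\mH(0) + \tfrac{1-e^{-\lambda\int_0^t\eta(\tau)\mathrm{d}\tau}}{\lambda}\mW\mM,
\end{equation}
which is exactly the desired closed form.

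For the convergence bound, I would subtract the putative limit $\tfrac{1}{\lambda}\mW\mM$ from both sides and notice that the forcing contributions combine cleanly into $\mH(t) - \tfrac{1}{\lambda}\mW\mM = e^{-\lambda\int_0^t\eta(\tau)\mathrm{d}\tau}\bigl(\mH(0) - \tfrac{1}{\lambda}\mW\mM\bigr)$. Taking any consistent matrix norm gives $\|\mH(t) - \tfrac{1}{\lambda}\mW\mM\| = e^{-\lambda\int_0^t\eta(\tau)\mathrm{d}\tau}\|\mH(0) - \tfrac{1}{\lambda}\mW\mM\|$, which is $O\bigl(e^{-\lambda\int_0^t\eta(\tau)\mathrm{d}\tau}\bigr)$ under the hypothesis that the exponential vanishes as $t\to\infty$.

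There is no real obstacle in this argument; the only point that deserves a brief justification is that the integrating factor method applies unchanged in the matrix setting because $\mW\mM$ is constant in $t$ and the scalar factor $\eta(t)$ commutes with every matrix. If I wanted to be fully rigorous about the change-of-variable step used in $\int_0^t\eta(\tau)\mu(\tau)\mathrm{d}\tau$, I would simply verify by direct differentiation that $\tfrac{1}{\lambda}(\mu(t)-1)$ is an antiderivative of $\eta(t)\mu(t)$, thereby avoiding any measurability assumption on $\eta(\tau)$ beyond continuity, which is implicit in the gradient-flow setup.
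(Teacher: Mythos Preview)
Your proposal is correct and follows essentially the same approach as the paper: both solve the first-order linear ODE via the integrating factor $\mu(t)=\exp\bigl(\lambda\int_0^t\eta(\tau)\mathrm{d}\tau\bigr)$, use the identity $\eta(\tau)\mu(\tau)=\tfrac{1}{\lambda}\mu'(\tau)$ to evaluate the forcing integral as $\tfrac{1}{\lambda}(\mu(t)-1)$, and then subtract $\tfrac{1}{\lambda}\mW\mM$ to read off the exponential decay of the norm. Your write-up is in fact slightly more explicit than the paper's, which simply invokes the standard solution formula for a non-homogeneous linear ODE and then performs the same simplification.
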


When the time-varying learning rate satisfies that $\lim_{t\rightarrow \infty}e^{-\lambda\int_0^t\eta(\tau)\mathrm{d}\tau}=0$, then $\mH(t)$ converges to $\frac{1}{\lambda}\mW\mM$. That is, the unhinged loss in the prototype-anchored case coincides with a feature alignment task such that each feature $\vh_{i,c}$ in class $c$ aligns to $\frac{1}{\lambda CN}(\vw_c -\gamma\sum_{j\neq c}\vw_j)$. It's worth noting that when $\lambda=0$, the behavior of $\mH(t)$ can be described as $\mH(t)=\mH(0)+\int_0^t\eta(\tau) \mathrm{d}\tau \mW\mM$, which is predominantly influenced by the term $\mW\mM$ when the integral $\int_0^t\eta(\tau) d\tau$ becomes significantly large.

As depicted in \cref{classification-results}, prototype-anchored learning demonstrates its effectiveness in mitigating training instability by transforming the classification problem into a feature alignment task. In this context, the unhinged loss with PAL yields results that are not only comparable but, in some cases, even superior to those achieved with the Cross-Entropy (CE) loss. This underscores the practicality of the unhinged loss as a valuable loss function for training classification models.

\subsection{The Spherical Constrained Case}
\label{sec:spherical-case}
We consider another constrained case in which features are restricted on the $p$-sphere $\sS^{p-1}=\{\vx:\|\vx\|_2=1,\vx\in\sR^p\}$ by explicitly performing $\ell_2$ normalization\footnote{$\ell_2$ normalization can also prevent arithmetic overflow or underflow occurring in the training of DNNs.}, and we fix the prototypes\footnote{The relevant studies are still few and often require some strict assumptions since the learning dynamics is very complicated when $\vw$ participates the optimization process with both feature and prototypes normalization. In this paper, we are going to try a more concise theoretical analysis with fixed prototypes.} to satisfy $\mW\1_C=0$\footnote{This aims to simplify \cref{the-unhinged-loss} as the objective of feature alignment, that is, $L_{\gamma}(\mW\hat{\vh},y)=-\vw_y^\top\hat{\vh}+\gamma\sum_{j\neq y}\vw_j^\top\hat{\vh}=\frac{(1+\gamma)\|\vw\|_2}{2}(\|\hat{\vh}-\hat{\vw}_y\|_2^2-2)$, and the global minimum will be obtained at $\hat{\vh}=\hat{\vw}_y$.}, then the optimization problem in \cref{unconstrain-optimization} can be reformulated as
\begin{equation}
    \label{spherical-objective}
    \min_{\mH}-\frac{1+\gamma}{CN}\Tr((\mI_C\otimes \1_N)\mW^\top\hat{\mH}),
\end{equation}
where $\hat{\mH}=(\hat{\vh}_{1,1},...,\hat{\vh}_{c,N})$ and $\hat{\vh}=\frac{\vh}{\|\vh\|_2}$ denotes the $\ell_2$-normalized vector. 

Take the partial derivative with respect to $\mH$, then the discrete dynamical system based on gradient descent is formulated as
\begin{equation}
    \label{normalized-discrete-dynamics}
    \mH(t+1)=\mH(t)+\tfrac{(1+\gamma)\eta(t)}{CN}\left(\tfrac{\partial \hat{\mH}}{\partial\mH}\big|_{\mH=\mH(t)}\right)^\top\mW(\mI_C\otimes \1_N^\top),
\end{equation}
where $(\frac{\partial \hat{\mH}}{\partial \mH})^\top$ is a vector-wise operator, and for any vector $\vh_{i,c}$ in $\mH'\in\sR^{p\times CN}$, we have $(\frac{\partial \hat{\mH}}{\partial \mH})^\top \vh_{i,c}=\left(\frac{1}{\|\vh_{i,c}\|_2}(\mI_p-\hat{\vh}_{i,c}\hat{\vh}_{i,c}^\top)\vh_{i,c}'\right)$.

Despite the fact that the optimization objective in \cref{spherical-objective} does not show convexity, Lipschitzness, and $\beta$-smoothness on $\mH$ due to the $\ell_2$ normalization operator, we prove that the normalized features which obey the gradient descent iterates in \cref{normalized-discrete-dynamics} can still converge to their corresponding normalized prototypes, \textit{i.e.}, achieve the global minimum of \cref{spherical-objective}:
\begin{theorem}[\textbf{Convergence in the Spherical Constrained Case}]
\label{convergence-of-spherical-case}
Considering the discrete dynamics in \cref{normalized-discrete-dynamics}, if  $\hat{\vw}_c^\top\hat{\vh}_{i,c}(0)>-1$ for any $i\in[N]$ and $c\in[C]$, the learning rate $\eta(t)$ satisfies that $\frac{\eta(t)}{\|\vh_{i,c}(t)\|_2}$ is non-increasing,  $\frac{\eta(0)(1+\gamma)}{CN\|\vh_{i,c}(0)\|_2}\le \frac{1}{\|\vw_c\|_2}$,  $\lim_{t\rightarrow\infty}\frac{\eta(t+1)}{\eta(t)}=1$, and there exists a constant $\varepsilon>0$, \textit{s.t.}, $\eta(t)>\varepsilon$, then we have 
\begin{equation}
    \lim_{t\rightarrow\infty}\left\|\hat{\mH}(t)-\hat{\mW}(\mI_C\otimes \1_N^\top)\right\|=0,
\end{equation} 
and further if $\lim_{t\rightarrow\infty}\|\mH(t)\|<\infty$, then there exists a constant $\mu>0$, such that the error above shows exponential decrease:
\begin{equation}
     \left\|\hat{\mH}(t)-\hat{\mW}(\mI_C\otimes \1_N^\top)\right\|= O(e^{-\mu t}).
\end{equation}
Moreover, if $\hat{\vw}_c^\top\hat{\vh}_{i,c}(0)=-1$, then $\vh_{i,c}(t)=\vh_{i,c}(0)$.
\end{theorem}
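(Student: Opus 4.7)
The plan is to reduce the coupled discrete dynamics in \cref{normalized-discrete-dynamics} to a one-parameter family of two-dimensional recursions, one per sample, and then track the angle between each feature and its target prototype. Since the update for $\vh_{i,c}(t)$ depends only on $\vh_{i,c}(t)$ and on the fixed $\vw_c$, the dynamics decouple across $(i,c)$, so it suffices to study a single pair $(\vh_t,\vw)$ evolving by $\vh_{t+1}=\vh_t+\alpha_t(\mI_p-\hat{\vh}_t\hat{\vh}_t^\top)\vw$ with $\alpha_t=(1+\gamma)\eta(t)/(CN\|\vh_t\|_2)$. The update direction lives in $\mathrm{span}(\hat{\vh}_t,\vw)$, so induction keeps $\vh_t\in\mathrm{span}(\vh_0,\vw)$ for every $t$, and I would parameterize $\vh_t=r_t\cos\theta_t\,\hat{\vw}+r_t\sin\theta_t\,\ve$ with $\ve$ a fixed unit vector orthogonal to $\hat{\vw}$ inside this plane.

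Expanding the update in this basis produces the two key scalar recurrences
\begin{equation*}
r_{t+1}\cos\theta_{t+1}=r_t\cos\theta_t+\beta_t\sin^2\theta_t,\qquad r_{t+1}^2=r_t^2+\beta_t^2\sin^2\theta_t,
\end{equation*}
where $\beta_t:=\alpha_t\|\vw\|_2$. The initial stepsize condition is exactly $\beta_0\le 1$, and the non-increasing assumption on $\eta(t)/\|\vh_t\|_2$ keeps $\beta_t$ non-increasing, so $\beta_t\le 1$ throughout. Setting $P_t:=r_t\cos\theta_t$, the first recurrence gives $P_{t+1}-P_t=\beta_t\sin^2\theta_t\ge 0$, so $P_t$ is non-decreasing; telescoping together with the second recurrence yields $r_T^2-r_0^2\le\beta_0(P_T-P_0)\le\beta_0(r_T+r_0)$, which forces $r_t$ to stay uniformly bounded by some $R<\infty$ (consistent with the second-part hypothesis $\lim_t\|\mH(t)\|_2<\infty$). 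Combined with $\eta(t)>\varepsilon$, this gives $\beta_t\ge\beta_{\min}>0$.

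For the first assertion, the telescope $\sum_t\beta_t\sin^2\theta_t=P_\infty-P_0<\infty$ together with $\beta_t\ge\beta_{\min}$ force $\sin\theta_t\to 0$, so $\cos\theta_t\to\pm 1$; the limit $-1$ is ruled out by the monotonicity of $P_t$, which would otherwise require $P_\infty=-r_\infty\le -r_0<r_0\cos\theta_0=P_0$. Hence $\cos\theta_t\to 1$, giving $\|\hat{\mH}(t)-\hat{\mW}(\mI_C\otimes\1_N^\top)\|\to 0$. The antipodal case $\hat{\vw}_c^\top\hat{\vh}_{i,c}(0)=-1$ is immediate: then $\hat{\vh}_{i,c}(0)\hat{\vh}_{i,c}(0)^\top\vw_c=\vw_c$, so $(\mI_p-\hat{\vh}_{i,c}(0)\hat{\vh}_{i,c}(0)^\top)\vw_c=0$, the update vanishes at every step, and $\vh_{i,c}(t)=\vh_{i,c}(0)$ by induction.

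For the exponential rate, squaring and simplifying the two recurrences produces the closed-form ratio
\begin{equation*}
\frac{\sin^2\theta_{t+1}}{\sin^2\theta_t}=\frac{(r_t-\beta_t\cos\theta_t)^2}{r_t^2+\beta_t^2\sin^2\theta_t}.
\end{equation*}
Since $\cos\theta_t\to 1$, there is a finite $T_0$ past which $\cos\theta_t$ stays bounded away from $-1$; with $r_t\in[r_0,R]$ and $\beta_t\in[\beta_{\min},\beta_0]$ under the bounded-norm hypothesis, this ratio then admits a uniform bound $1-\mu<1$ for some $\mu>0$ depending only on $r_0,R,\beta_{\min},\beta_0$, yielding $\sin^2\theta_t\lesssim(1-\mu)^{t-T_0}$; translating through $\|\hat{\vh}_t-\hat{\vw}\|_2^2=2(1-\cos\theta_t)\lesssim\sin^2\theta_t$ gives the stated $O(e^{-\mu t})$ rate. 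The main obstacle is producing this uniform contraction factor across the entire trajectory: the ratio above is close to $1$ either when $\cos\theta_t$ is near $-1$ or when $r_t$ is small relative to $\beta_t$, so one must first argue that only finitely many steps are spent in such a ``bad'' regime (using the first-part convergence) and then carefully quantify the contraction on the remaining ``good'' part of the trajectory using the trapped intervals for $r_t$ and $\beta_t$.
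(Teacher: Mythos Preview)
Your proposal is correct and takes a genuinely different route from the paper's. The paper's proof (via its Lemma on a single $(\vh,\vw)$ pair) tracks the pair $(\alpha_t,\beta_t)$ with $\alpha_t=\eta(t)\|\vw\|_2/\|\vh(t)\|_2^2$ and $\beta_t=\hat\vw^\top\hat\vh(t)$, derives closed-form recursions for both, and proves $\beta_t\to 1$ through a product identity $\tfrac{1-\beta_{t+1}^2}{\alpha_{t+1}}=\tfrac{1-\beta_t^2}{\alpha_t}\cdot\tfrac{(1-\alpha_t\beta_t)^2}{\xi_t}$ combined with a case split on the sign of $\beta_0$ (the negative case handled by contradiction). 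Your planar polar-coordinate argument replaces this with the monotone projection $P_t=r_t\cos\theta_t$ and a summability argument: $\sum_t\beta_t\sin^2\theta_t<\infty$ with $\beta_t$ bounded below forces $\sin\theta_t\to 0$, and the sign of the limit is settled directly by monotonicity of $P_t$. This is more elementary and avoids the case analysis entirely.

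Your telescoping step $r_T^2-r_0^2\le\beta_0(P_T-P_0)\le\beta_0(r_T+r_0)$ is actually stronger than what the paper proves: it gives a uniform a priori bound $r_t\le r_0+\beta_0$ from the hypotheses alone, so the extra assumption $\lim_t\|\mH(t)\|<\infty$ in the exponential-rate statement is redundant in your framework (and, incidentally, you never use the hypothesis $\lim_t\eta(t+1)/\eta(t)=1$, which the paper needs only to control $\alpha_t$ in its recursion). You might flag this explicitly rather than writing ``consistent with the second-part hypothesis.'' For the exponential rate itself, your contraction ratio $(r_t-\beta_t\cos\theta_t)^2/(r_t^2+\beta_t^2\sin^2\theta_t)$ is exactly the paper's factor $(1-\tfrac{(1+\gamma)\eta(t)\|\vw_c\|_2}{CN\|\vh(t)\|_2^2}\cos\theta_t)^2$ divided by something $\ge 1$, so the two endgames are essentially identical; the ``bad regime'' caveat you raise is handled the same way in both proofs, by waiting until $\cos\theta_t\ge\delta>0$ and absorbing the finite prefix into the constant.
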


\paragraph{Remark} \revise{As shown in the theorem above, we prove that gradient descent exerts a bias that steers the normalized features toward the global minimizer $\hat{\mW}(\mI_C\otimes\1_N^\top)$, achieving exponential convergence under certain favorable conditions. This global solution shows two properties of Neural Collapse: within-class variability collapse and convergence to self-duality. If $\hat{W}$ also forms a simplex ETF, the global solution will exhibit all properties of Neural Collapse. Additionally}, we establish that if the inner product between $\hat{\vw}_c$ and $\hat{\vh}_{i,c}(0)$ (i.e., the cosine similarity between $\vw_c$ and $\vh_{i,c}(0)$) is $-1$, then $\vh_{i,c}(t)=\vh_{i,c}(0)$, as the gradient induced by $\ell_2$ normalization will be 0. More details can be found in the proof of Theorem  \ref{convergence-of-spherical-case}. Therefore, we just analyze the case where the inner product $\hat{\vw}_c$ and $\hat{\vh}_{i,c}(0)$ is strictly greater than $-1$.

\subsection{The NTK-Invariant Case}
We further perform some analysis in the case where the neural tangent kernel $\nabla_{\bm{\Theta}}\mH^\top\nabla_{\bm{\Theta}}\mH$ \citep{jacot2018neural, chizat2019lazy, yang2020feature, yang2021tensor} is assumed to be invariant during training. We prove that the unhinged loss has more potential to derive exact dynamics. Specifically, consider the last-layer feature $\vh_{i,c}=\vf_{\bm{\Theta}}(\vx_{i,c})$ extracted from the example $\vx_{i,c}$. When DNNs are trained using gradient descent to minimize the composition $\mathcal{L}\circ \boldsymbol{W}\circ \boldsymbol f_{\boldsymbol \Theta}$, we have

\begin{equation}
\begin{aligned}
\boldsymbol{\Theta}(t+1)-\boldsymbol{\Theta}(t)&=-\eta \nabla_{\boldsymbol{\Theta}}\mathcal{L}\big|_{\bm{\Theta}=\boldsymbol{\Theta}(t)},\\
\boldsymbol{W}(t+1)-\boldsymbol{W}(t)&=-\eta\nabla_{\boldsymbol{W}}\mathcal{L}\big|_{\bm{W}=\boldsymbol{W}(t)}.
\end{aligned}
\end{equation}
According to the first-order Taylor expansion, we obtain
\begin{equation}
\boldsymbol H(t+1)-\boldsymbol H(t)\approx \nabla_{\boldsymbol \Theta} \boldsymbol H^\top [\boldsymbol{\Theta}(t+1)-\boldsymbol{\Theta}(t)].
\end{equation}
For the unhinged loss, we can obtain the following closed-form dynamics
\begin{theorem}
\label{dynamics-under-NTK}
\revise{Assume that the neural tangent kernel $\nabla_{\boldsymbol \Theta}\boldsymbol H ^\top\nabla_{\boldsymbol \Theta}\boldsymbol H$ remains invariant during iterations.} Let $\vz(t)$ denote the row-first vectorization of $\begin{pmatrix}\boldsymbol{H}(t) & 0\\0 & \boldsymbol{W}(t)\end{pmatrix}$, $\boldsymbol B=\begin{pmatrix}0 & \boldsymbol M^\top\\\boldsymbol M & 0\end{pmatrix}$, and $\boldsymbol A=\begin{pmatrix}0 & \nabla_{\boldsymbol \Theta}\boldsymbol H ^\top\nabla_{\boldsymbol \Theta}\boldsymbol H\\\mathbf I_C & 0\end{pmatrix}$. Considering the eigendecomposition $\boldsymbol A=\boldsymbol U_{\boldsymbol A}\boldsymbol \Lambda_{\boldsymbol A} \boldsymbol U_{\boldsymbol A}^{-1}$ and $\boldsymbol B=\boldsymbol U_{\boldsymbol B}\boldsymbol \Lambda_{\boldsymbol B} \boldsymbol U_{\boldsymbol B}^{-1}$, we have
\begin{equation}
\mC\boldsymbol z(t)=\exp[(\boldsymbol \Lambda_{\boldsymbol A}\otimes \boldsymbol \Lambda_{\boldsymbol B})t]\mC\boldsymbol z(0),
\end{equation}
where $\mC=\boldsymbol U_{\boldsymbol B}^{-1}\boldsymbol U_{\boldsymbol A}^{-1}\otimes \mathbf I$.
\end{theorem}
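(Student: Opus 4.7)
The plan is to convert the coupled discrete updates of $\boldsymbol{H}$ and $\boldsymbol{W}$ under the NTK-invariance hypothesis into a continuous-time linear ODE on the vectorized state $\boldsymbol{z}(t)$, exhibit a Kronecker-product factorization of the governing operator as $\boldsymbol{A}\otimes \boldsymbol{B}$, and then diagonalize the two Kronecker factors independently. \textbf{Step 1 (linearize the joint updates).} Starting from the closed-form gradients $\nabla_{\boldsymbol{H}}\mathcal{L}=\boldsymbol{W}\boldsymbol{Y}=-\boldsymbol{W}\boldsymbol{M}$ and $\nabla_{\boldsymbol{W}}\mathcal{L}=\boldsymbol{H}\boldsymbol{Y}^\top=-\boldsymbol{H}\boldsymbol{M}^\top$ derived in the unconstrained case, combine the chain rule $\nabla_{\boldsymbol{\Theta}}\mathcal{L}=\nabla_{\boldsymbol{\Theta}}\boldsymbol{H}\,\nabla_{\boldsymbol{H}}\mathcal{L}$ with the first-order Taylor identity $\boldsymbol{H}(t+1)-\boldsymbol{H}(t)\approx \nabla_{\boldsymbol{\Theta}}\boldsymbol{H}^\top[\boldsymbol{\Theta}(t+1)-\boldsymbol{\Theta}(t)]$ stated in the excerpt to obtain the continuous-time limit $\dot{\boldsymbol{H}}(t)=\nabla_{\boldsymbol{\Theta}}\boldsymbol{H}^\top\nabla_{\boldsymbol{\Theta}}\boldsymbol{H}\cdot\boldsymbol{W}(t)\boldsymbol{M}$ together with $\dot{\boldsymbol{W}}(t)=\boldsymbol{H}(t)\boldsymbol{M}^\top$, absorbing the step-size into the time variable. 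The NTK-invariance assumption fixes $\nabla_{\boldsymbol{\Theta}}\boldsymbol{H}^\top\nabla_{\boldsymbol{\Theta}}\boldsymbol{H}$ throughout training, so the system is genuinely linear.

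\textbf{Step 2 (vectorize into a Kronecker form).} Using the row-first vectorization of the padded block-diagonal matrix $\bigl(\begin{smallmatrix}\boldsymbol{H} & 0\\ 0 & \boldsymbol{W}\end{smallmatrix}\bigr)$ and the standard identity $\mathrm{vec}(PQR)=(R^\top\otimes P)\mathrm{vec}(Q)$, the two coupled ODEs translate into a single block-coupled system in $\boldsymbol{z}(t)$. The anti-diagonal block pattern in $\boldsymbol{A}$ (identity on one block, NTK on the other, which encodes the two different "speeds" of $\boldsymbol{H}$ and $\boldsymbol{W}$) and the anti-diagonal block pattern in $\boldsymbol{B}$ (with $\boldsymbol{M}$ and $\boldsymbol{M}^\top$ reflecting left versus right multiplication by the label matrix) then fit together to give exactly $\dot{\boldsymbol{z}}(t)=(\boldsymbol{A}\otimes \boldsymbol{B})\boldsymbol{z}(t)$. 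The zero blocks in the padded state absorb the dimensional mismatch between $\boldsymbol{H}\in\mathbb{R}^{p\times CN}$ and $\boldsymbol{W}\in\mathbb{R}^{p\times C}$.

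\textbf{Step 3 (diagonalize and exponentiate).} With the eigendecompositions $\boldsymbol{A}=\boldsymbol{U}_{\boldsymbol{A}}\boldsymbol{\Lambda}_{\boldsymbol{A}}\boldsymbol{U}_{\boldsymbol{A}}^{-1}$ and $\boldsymbol{B}=\boldsymbol{U}_{\boldsymbol{B}}\boldsymbol{\Lambda}_{\boldsymbol{B}}\boldsymbol{U}_{\boldsymbol{B}}^{-1}$, invoke the Kronecker identity $\boldsymbol{A}\otimes \boldsymbol{B}=(\boldsymbol{U}_{\boldsymbol{A}}\otimes \boldsymbol{U}_{\boldsymbol{B}})(\boldsymbol{\Lambda}_{\boldsymbol{A}}\otimes \boldsymbol{\Lambda}_{\boldsymbol{B}})(\boldsymbol{U}_{\boldsymbol{A}}^{-1}\otimes \boldsymbol{U}_{\boldsymbol{B}}^{-1})$. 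Because the similarity factor is a product of commuting spectral projectors, the matrix exponential splits as
\begin{equation}
\exp[(\boldsymbol{A}\otimes \boldsymbol{B})t]=(\boldsymbol{U}_{\boldsymbol{A}}\otimes \boldsymbol{U}_{\boldsymbol{B}})\exp[(\boldsymbol{\Lambda}_{\boldsymbol{A}}\otimes \boldsymbol{\Lambda}_{\boldsymbol{B}})t](\boldsymbol{U}_{\boldsymbol{A}}^{-1}\otimes \boldsymbol{U}_{\boldsymbol{B}}^{-1}).
\end{equation}
Applying $\boldsymbol{C}=\boldsymbol{U}_{\boldsymbol{B}}^{-1}\boldsymbol{U}_{\boldsymbol{A}}^{-1}\otimes \mathbf{I}$ on the left of $\boldsymbol{z}(t)=\exp[(\boldsymbol{A}\otimes \boldsymbol{B})t]\boldsymbol{z}(0)$ then absorbs the left similarity factor and yields the stated identity $\boldsymbol{C}\boldsymbol{z}(t)=\exp[(\boldsymbol{\Lambda}_{\boldsymbol{A}}\otimes \boldsymbol{\Lambda}_{\boldsymbol{B}})t]\boldsymbol{C}\boldsymbol{z}(0)$.

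\textbf{Main obstacle.} The principal difficulty is in Step 2: verifying that the particular padded, row-first vectorization forces the two a priori distinct coupling patterns (the NTK acting on $\dot{\boldsymbol{H}}$ only, versus the $\boldsymbol{M}/\boldsymbol{M}^\top$ acting on \emph{opposite} sides of $\boldsymbol{W}$ and $\boldsymbol{H}$) to line up inside a single clean Kronecker factorization $\boldsymbol{A}\otimes \boldsymbol{B}$. Getting the ordering of the Kronecker factors consistent with the stated form of $\boldsymbol{C}$ is the bookkeeping heart of the argument; once that alignment is in place, Step 3 is a one-line application of standard spectral calculus. Diagonalizability of $\boldsymbol{A}$ and $\boldsymbol{B}$ is mild: their off-diagonal block structure gives spectra of the form $\pm\sqrt{\lambda_i(\nabla_{\boldsymbol{\Theta}}\boldsymbol{H}^\top\nabla_{\boldsymbol{\Theta}}\boldsymbol{H})}$ and $\pm\sigma_i(\boldsymbol{M})$ respectively, and is assured under the usual non-degeneracy of the NTK and of $\boldsymbol{M}$.
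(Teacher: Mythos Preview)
Your overall strategy (Steps~1--3) is the same as the paper's: derive the linear ODE for $\boldsymbol H$ and $\boldsymbol W$ under the NTK-invariance assumption, vectorize the block-diagonal state to obtain $\dot{\boldsymbol z}(t)=(\boldsymbol A\otimes\boldsymbol B)\,\boldsymbol z(t)$ (the paper first writes $\boldsymbol A\otimes\boldsymbol B^\top$ and then uses $\boldsymbol B^\top=\boldsymbol B$), and finally diagonalize the Kronecker product via the eigendecompositions of $\boldsymbol A$ and $\boldsymbol B$.

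The one genuine gap is the last sentence of your Step~3. The factorization you invoke,
\[
\boldsymbol A\otimes\boldsymbol B=(\boldsymbol U_{\boldsymbol A}\otimes\boldsymbol U_{\boldsymbol B})(\boldsymbol\Lambda_{\boldsymbol A}\otimes\boldsymbol\Lambda_{\boldsymbol B})(\boldsymbol U_{\boldsymbol A}^{-1}\otimes\boldsymbol U_{\boldsymbol B}^{-1}),
\]
is correct, but it produces the change of basis $\boldsymbol U_{\boldsymbol A}^{-1}\otimes\boldsymbol U_{\boldsymbol B}^{-1}$, not the stated $\boldsymbol C=\boldsymbol U_{\boldsymbol B}^{-1}\boldsymbol U_{\boldsymbol A}^{-1}\otimes\mathbf I$. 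Left-multiplying by $\boldsymbol C$ does \emph{not} ``absorb the left similarity factor'': $(\boldsymbol U_{\boldsymbol B}^{-1}\boldsymbol U_{\boldsymbol A}^{-1}\otimes\mathbf I)(\boldsymbol U_{\boldsymbol A}\otimes\boldsymbol U_{\boldsymbol B})=\boldsymbol U_{\boldsymbol B}^{-1}\otimes\boldsymbol U_{\boldsymbol B}\neq\mathbf I$, so the claimed identity does not follow from what you wrote. The paper proceeds differently at this point: it factors in two stages, first pulling out $(\boldsymbol U_{\boldsymbol A}\otimes\mathbf I)$ from $\boldsymbol A\otimes\boldsymbol B$, then diagonalizing the resulting block-diagonal $\boldsymbol\Lambda_{\boldsymbol A}\otimes\boldsymbol B$ by $\boldsymbol U_{\boldsymbol B}$ in each block, and records the composite outer factor in the product form $\boldsymbol U_{\boldsymbol A}\boldsymbol U_{\boldsymbol B}\otimes\mathbf I$, whose inverse is exactly the stated $\boldsymbol C$. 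To match the statement you should either follow that two-stage route verbatim, or keep your one-shot factorization but explicitly reconcile $\boldsymbol U_{\boldsymbol A}^{-1}\otimes\boldsymbol U_{\boldsymbol B}^{-1}$ with the form of $\boldsymbol C$ given in the theorem; as written, your final step does not land on the stated conclusion.
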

Though Theorem \ref{dynamics-under-NTK} does not imply properties of Neural Collapse, it reveals a specific dynamics under the unhinged loss, which exhibits an exponential convergence rate, contingent upon the assumption that the neural tangent kernel remains constant. To demonstrate the mathematical simplicity of the unhinged loss, we consider the dynamics of the MSE loss $\mathcal L=\frac{1}{CN}\Vert\boldsymbol W^\top \boldsymbol H+\boldsymbol b \mathbf 1_{CN}^\top -\mathbf I\otimes \mathbf 1_N^\top \Vert_F^2$ for comparison, and similarly derive that
\begin{equation}
    \begin{aligned}
        \boldsymbol H'(t)&=\nabla_{\boldsymbol \Theta} \boldsymbol H^\top \nabla_{\boldsymbol \Theta} \boldsymbol H \boldsymbol W(t)\boldsymbol C(t),\quad
        \boldsymbol W'(t) &= \boldsymbol H(t) \boldsymbol C(t)^\top,\quad\boldsymbol{b}'(t)&=\boldsymbol C(t) \mathbf 1_{CN},
    \end{aligned}
\end{equation}
where $\boldsymbol C(t)=\frac{2\eta }{CN}(\boldsymbol W(t)^\top \boldsymbol H(t) + \boldsymbol b(t)\mathbf 1_{CN}^\top - \mathbf I_{C}\otimes \mathbf 1_N^\top)$. As can be seen, the above gradient flow is nonlinear and then intractable to analyze the exact dynamics though $\nabla_{\boldsymbol \Theta} \boldsymbol H^\top \nabla_{\boldsymbol \Theta} \boldsymbol H$ stays constant.

Through the above analysis, it is clear to see that the unhinged loss offers more mathematical opportunities than the MSE loss to analyze the closed-form dynamics under the assumption of an invariant neural tangent kernel.

\section{{Insights and Experiments}}
\label{insights}
In this section, we provide some insights into better training DNNs according to the conclusions in \cref{main-results}. We then corroborate our theoretical results and insights with extensive experiments. More specifically, in Section \ref{the-unhinged-loss-with-pal}, we propose to use prototype-anchored learning (PAL) as a means of resolving the instability issues that arise during training with the unhinged loss. In Section \ref{explicit-regularization}, we conduct experiments to highlight the benefits of explicit feature regularization on imbalanced learning and out-of-distribution (OOD) detection. In these experiments, we employ the unhinged loss with PAL and the CE loss as training objective. In Section \ref{rescaled-lr-for-the-spherical}, we propose the rescaling learning rates (RLR) with feature norms for the spherical case to address the problem of slow convergence resulting from feature normalization, which may also have implications for improving other methods of performing feature normalization. \textbf{More details and results can be found in Appendix \ref{appendix-experiments}.}

\subsection{The Unhinged Loss with Prototype-Anchored Learning}
\label{the-unhinged-loss-with-pal}
Since directly using the unhinged loss will lead to volatile effects, which is mainly reflected in the rapid increase of feature norms and the imbalance between class prototypes when training DNNs with the stochastic gradient method, as shown in \cref{fig:collapse-of-unhinged}. Inspired by recent works \citep{pmlr-v162-zhou22f, kasarla2022maximum, yang2022inducing} that use the Neural Collapse structure as an inductive bias (also called prototyping-anchored learning, PAL), we fix prototypes $\mW$ as a simplex ETF during training, \textit{i.e.}, $\mW^\top\mW=\frac{C}{C-1}\mI-\frac{1}{C-1}\1\1^\top$\footnote{These prototypes $\mW$ can be obtained using one of two methods: (i) By minimizing the objective $\sum_{i=1}^C\log \frac{\exp\left(s\hat{\vm}_i^\top\hat{\vh}_i\right)}{\sum_{j=1}^C\exp\left(s\hat{\vm}_j^\top\hat{\vh}_i\right)}$ and setting $\mW=\hat{\mM}$, where $s$ is a scale factor \citep{zhou2022learning}; or (ii) By deriving them from the standard simplex ETF $\mM=\sqrt{\frac{C}{C-1}}(\mI_C-\frac{1}{C}\1_C\1_C^\top)$, \textit{i.e.}, $\mW=s\mU\mM\in\sR^{p\times C}$, where $\mU\in\sR^{p\times C}$ ($p\ge C$) is a partial orthogonal matrix \citep{papyan2020prevalence}.}.

\paragraph{Experimental Results} We conduct experiments on widely-used classification datasets including CIFAR-10, CIFAR-100, and ImageNet-100. To mitigate training instability under the unhinged loss, we employ the Prototype-Anchored Learning (PAL) and Feature-Normalized PAL (FNPAL). As depicted in \cref{classification-results}, the results obtained by the unhinged loss with PAL and FNPAL variants demonstrate comparable or even better performance in comparison to CE. This substantiates the feasibility of utilizing the unhinged loss as a practical training objective for standard classification tasks.

\begin{figure}[t]
    \centering
    \subfigure[CIFAR-10]{
    \includegraphics[scale=0.55]{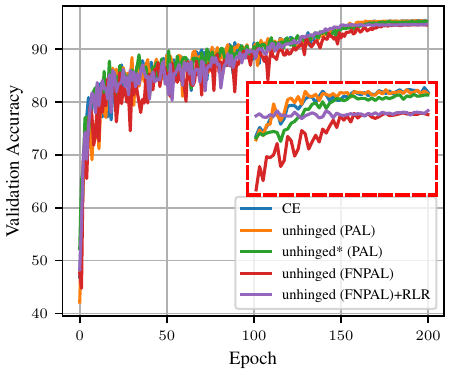}
    \label{fig:cifar-10-accuracy}
   }
   \subfigure[CIFAR-100]{
    \includegraphics[scale=0.55]{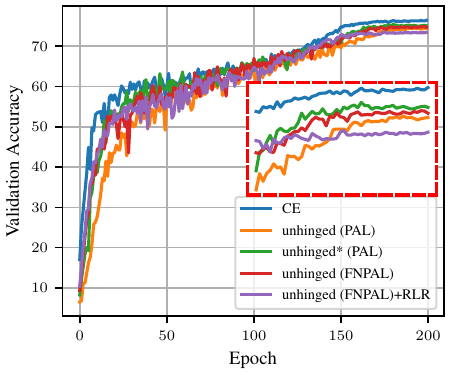}
    \label{fig:cifar-100-accuracy}
   }
   \subfigure[ImageNet-100]{
    \includegraphics[scale=0.55]{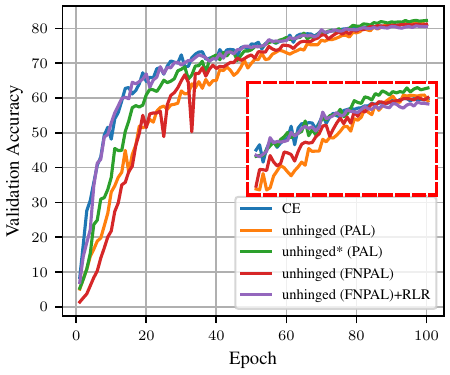}
    \label{fig:imagenet-100-accuracy}
   }
    \vskip-5pt
     \caption{Validation accuracy of different loss functions on CIFAR-10, CIFAR-100, and ImageNet, where $*$ denotes training with explicit feature regularization. PAL and FNPAL denote the model trained with prototype-anchored learning (PAL) and feature-normalized and prototype-anchored learning (FNPAL) \citep{pmlr-v162-zhou22f}. The curve in the red box represents the zoomed-in curve of the last 50 epochs. As can be seen, DNNs trained with the unhinged loss can achieve comparative or even better performance than those of CE.}
    \label{classification-results}
    \vskip-5pt
\end{figure}

\begin{table}[htbp]
    \centering
    \scriptsize
    \caption{Validation accuracies on long-tailed CIFAR-10/-100 with CE and different explicit feature regularization under Cross-Entropy (CE) and the unhinged loss (UL) with PAL. The imbalance ratio $\rho=\frac{\max_i n_i}{\min_i n_i}$ is the ratio between sample sizes of the most frequent and least frequent classes, and $\rho=1$ denotes the original CIFAR-10/-100. $\lambda=0$ denotes the model training with CE. All values are percentages. \textbf{Bold} numbers indicate the results that are better than the baseline vanilla CE or UL. The best results are \underline{underlined}.}
    \label{tab:lt-cifar}
    \begin{tabular}{c|c|c|c|c|c|c|c|c|c|c}
    \toprule
     \textbf{Dataset} & \multicolumn{5}{c|}{Long-tailed CIFAR-10} & \multicolumn{5}{c}{Long-tailed CIFAR-100}\\
     \midrule
     \textbf{Imbalance Ratio} &  100 & 50 & 20 & 10 & 1 & 100 & 50 & 20 & 10 & 1 \\
     \midrule
     Vanilla CE & 67.81 & 72.93 & 83.97 & 88.37 & 95.28 & 33.37 & 39.40 & 42.96 & 56.38 & 75.42\\
     CE ($\lambda=5e-6$) & \textbf{67.84} & 72.85 & 83.17 & \textbf{89.06} & 95.27 & \textbf{36.00} & \textbf{41.92} & \textbf{50.75} & \textbf{60.13} & \textbf{76.48} \\
     CE ($\lambda=1e-5$) & 67.74 & \textbf{76.14} & \textbf{84.17 }& \underline{\textbf{89.19}} & 95.23  & \underline{\textbf{36.61}} & \textbf{42.36} & \textbf{49.21} & \textbf{58.91} & \underline{\textbf{77.34}} \\
     CE ($\lambda=5e-5$) & \underline{\textbf{69.74}} & \underline{\textbf{77.29}} & \underline{\textbf{84.92}} & \textbf{88.64} & \underline{\textbf{95.39}} & \textbf{34.88} & \underline{\textbf{42.74}} & \underline{\textbf{54.72}} & \underline{\textbf{60.84}} & \textbf{76.19} \\
     \midrule
     Vanilla UL & 72.60 & 77.20 & 85.46 & 89.29 & 95.23 & 39.20 & 45.81 & 54.66 & 59.92 & 75.37 \\
     UL ($\lambda=1e-7$) & 71.85 & \textbf{78.80} & \textbf{85.76} & \textbf{89.68} & \textbf{95.33} & \textbf{41.39} & \textbf{46.13} & \textbf{55.25} & \textbf{61.15} & 75.18  \\
     UL ($\lambda=5e-7$) & \textbf{73.47} & \underline{\textbf{79.54}} & \textbf{86.28} & \textbf{89.36} & 95.14 & \textbf{41.85} & \textbf{47.58} & 54.00 & \textbf{61.31} & \textbf{75.51}  \\
     UL ($\lambda=1e-6$) & \underline{\textbf{73.71}} & \textbf{79.40} & \textbf{86.12} & \textbf{89.43} & \textbf{95.50} & \textbf{40.85} & \textbf{47.07} & \textbf{55.71} & \textbf{61.47} & \textbf{75.84}\\
     UL ($\lambda=5e-6$) & 71.13 & \textbf{78.90} & 85.33 & 89.20 & 95.10 & \textbf{41.87} & \textbf{45.83} & \textbf{55.92} & \textbf{60.87} & \textbf{77.20}\\
     \bottomrule
    \end{tabular}
\end{table}

\subsection{Explicit Feature Regularization}
\label{explicit-regularization}
In this paper, we directly consider explicit feature regularization to avoid excessive growth of feature norms, \textit{i.e.}, adding the regularization term $\lambda \sum_{\vx\in\gB}\|\vf_{\boldsymbol{\Theta}}(\vx)\|_2^2$ in the objective. Explicit regularization on features can significantly remedy over-confidence and even improve generalization. Moreover, adding explicit feature regularization can speed up the convergence of $\hat{\mH}(t)$ to $\hat{\mW\mM}$ according to Theorem \ref{PAL-dynamics}, as verified in \cref{pal-toy}.

\paragraph{Experimental Results} To validate the role of explicit feature regularization, we conduct experiments on two tasks: (i) Long-tailed recognition on benchmarks CIFAR-10-LT and CIFAR-100-LT with artificially created long-tailed settings; (ii) Out-of-distribution (OOD) detection between SVHN and CIFAR-10/-100. For long-tailed classification, we follow the controllable class imbalance strategy in \citep{cao2019learning} by reducing the number of training examples per class and keeping the validation set unchanged. As shown in Table \ref{tab:lt-cifar}, explicit feature regularization effectively improves the performance on long-tailed classification in most cases, even for normal classification. For ODD detection, we train ResNet-18 and ResNet-34 on in-distribution datasets CIFAR-10 and CIFAR-100, respectively, and then use SVHN as the OOD dataset to evaluate the performance.

\begin{figure}[t]
    \centering
    \subfigure[FPR95: 43.04]{
    \includegraphics[scale=0.42]{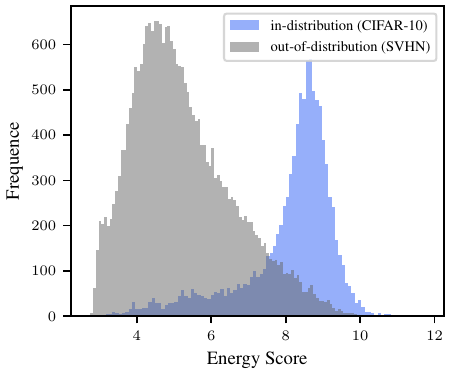}
    \label{fig:ood-cifar-10-0.0-energy}
    }
    \subfigure[FPR95: 27.87]{
    \includegraphics[scale=0.42]{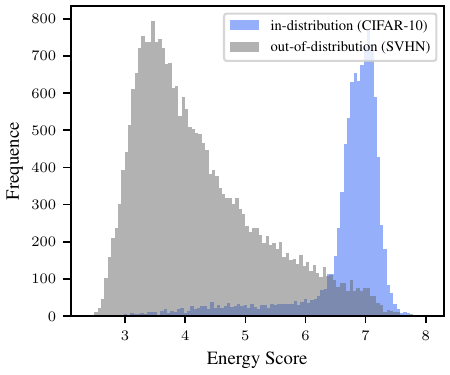}
    \label{fig:ood-cifar-10-1e-5-energy}
    }
    \subfigure[FPR95: 89.84]{
    \includegraphics[scale=0.42]{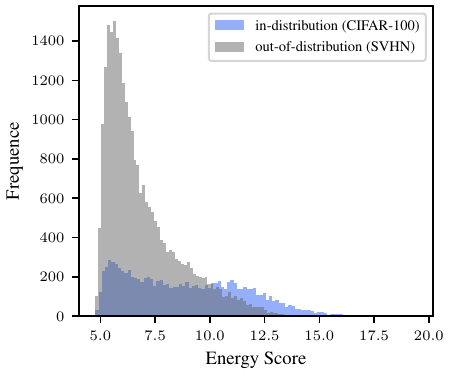}
    \label{fig:ood-cifar-100-0.0-energy}
    }
    \subfigure[FPR95: 81.41]{
    \includegraphics[scale=0.42]{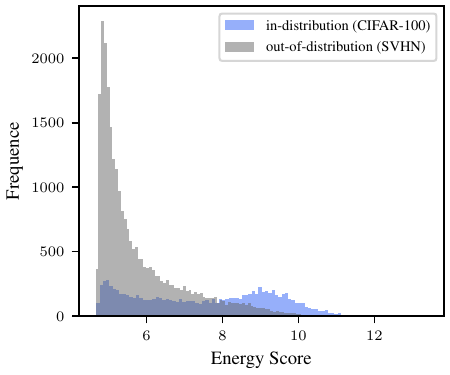}
    \label{fig:ood-cifar-100-1e-5-energy}
    }
    \subfigure[FPR95: 52.10]{
    \includegraphics[scale=0.42]{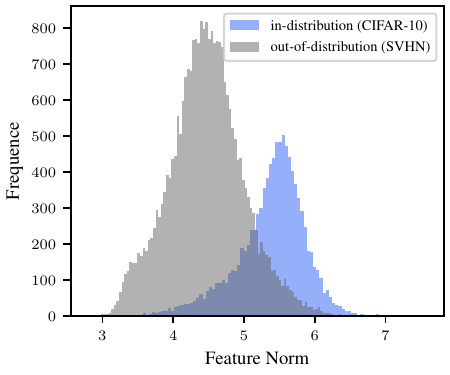}
    \label{fig:ood-cifar-10-0.0-norm}
    }
    \subfigure[FPR95: 24.94]{
    \includegraphics[scale=0.42]{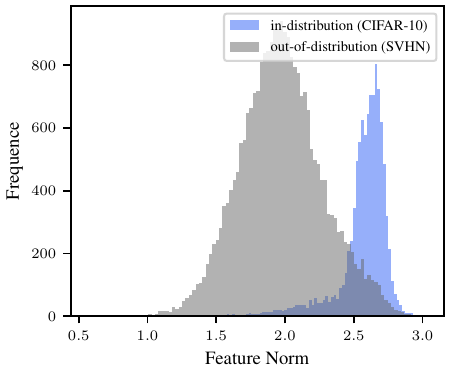}
    \label{fig:ood-cifar-10-1e-5-norm}
    }
    \subfigure[FPR95: 95.54]{
    \includegraphics[scale=0.42]{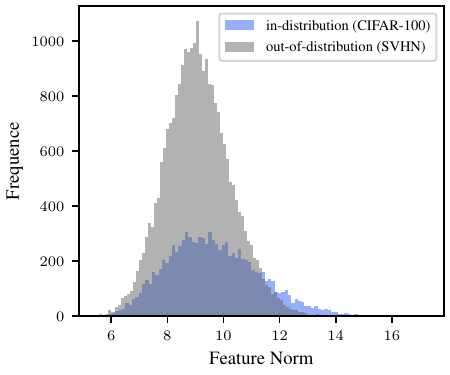}
    \label{fig:ood-cifar-100-0.0-norm}
    }
    \subfigure[FPR95: 87.83]{
    \includegraphics[scale=0.42]{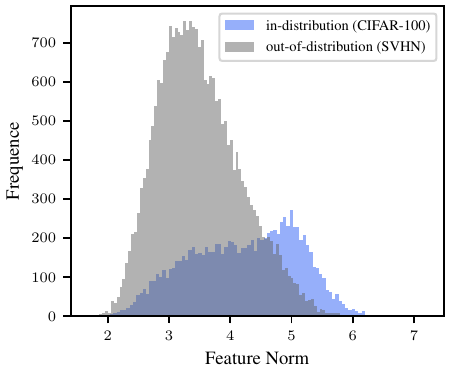}
    \label{fig:ood-cifar-100-1e-5-norm}
    }
    \vskip-5pt
    \caption{Distribution of energy scores (a-d) \citep{liu2020energy}  and feature norms (e-h) from classification models trained without (a \& c \& e \& g) or with (b \& d \& f \& h) explicit feature regularization (EFR) ($\lambda=1e-5$). (a \& b \& e \& f) and (c \& d \& g \& h) are from ResNet-18 \citep{he2016deep} trained on CIFAR-10 and from ResNet-34 trained on CIFAR-100, respectively. As can be seen, EFR can improve the performance of OOD detection by alleviating the over-confidence of OOD samples and making the energy scores of ID samples more concentrated. More intuitively, comparing (f) to (e) and (h) to (g), EFR effectively limits the growth of feature norms and significantly improves the distinction between ID samples and OOD samples in feature norm.}
   \label{fig:cifar10-ood-energy-score}
   \vskip-5pt
\end{figure}

\begin{table}[htbp]
    \scriptsize
    \centering
    \caption{OOD detection performance using softmax-based \citep{hendrycks2016baseline}, energy-based \citep{liu2020energy}, and feature norm-based approaches while model training with explicit feature regularization (EFL) (where $\lambda=1e-5$ for CE and $\lambda=5e-7$ for the unhinged loss (UL)). We use ResNet-18 and ResNet-34 to train on the in-distribution datasets CIFAR-10 and CIFAR-100, respectively. We then use SVHN \citep{Netzer2011reading} as the OOD dataset to evaluate the performance of OOD detection. All values are percentages. $\uparrow$ indicates large values are better, and $\downarrow$ indicates smaller values are better. The best results are \underline{underlined}.}
    \label{tab:ood-cifar-unhinged}
    \begin{tabular}{c|c|c|c|c}
    \toprule
    \textbf{Dataset} $\gD_{\text{in}}^{\text{test}}$ & \textbf{Method} & \textbf{FPR95} $\downarrow$  & \textbf{AUROC} $\uparrow$ & \textbf{AUPR} $\uparrow$ \\
    \midrule
    
    \multirow{5}*{CIFAR-10} & \multicolumn{4}{c}{Softmax-based / Energy-based / Feature Norm-based}\\
    ~ & CE & 52.09 / 43.04 / 52.10 & 91.67 / 91.94 / 89.54 & 84.11 / 82.80 / 77.06 \\
    ~ & CE + EFL & \textbf{37.39} / \textbf{27.87} / \underline{\textbf{24.94}} & \textbf{93.90} / {\textbf{94.60}} / \textbf{94.17} & {\textbf{85.48}} / \textbf{85.34} / \textbf{83.15} \\
    ~ & UL & 53.29  / 53.33 / 52.34 & 87.72 / 87.73 / 89.45 & 74.86 / 74.83 / 79.05 \\
    ~ & UL+EFL & \textbf{27.27}  / \textbf{27.24} / {\textbf{26.62}} & \textbf{94.27} / \textbf{94.46} / \underline{\textbf{95.12}} & \textbf{86.51} / \textbf{86.60} / \underline{\textbf{88.46}} \\
    \midrule
     \multirow{5}*{CIFAR-100} & \multicolumn{4}{c}{Softmax-based / Energy-based / Feature Norm-based}\\
    ~ & CE    & 87.75 / 89.84 / 95.54 & 71.01 / 71.94 / 59.54 & 55.42 / 56.69 / 43.21 \\
    ~ & CE+EFL & \textbf{81.48} / \textbf{81.41} / \textbf{87.83} & \textbf{77.02} / \underline{\textbf{78.03}} / \textbf{73.91} & \textbf{62.92} / \underline{\textbf{63.66}} / \textbf{58.81} \\
    ~ & UL & 80.43  / 76.86 / 76.23 & 66.97 / 74.83 / 76.19 & 37.83 / 45.07 / 47.51 \\
    ~ & UL+EFL & \textbf{75.74}  / \textbf{74.64} / \underline{\textbf{73.59}} & \textbf{75.21} / \textbf{75.36} / \textbf{76.80} & \textbf{45.16} / \textbf{45.27} / \textbf{48.34} \\
    \bottomrule
    \end{tabular}
\end{table}

\subsection{Rescaled Learning Rate for the Spherical Case}
\label{rescaled-lr-for-the-spherical}
In this subsection, we propose a strategy of rescaling the learning rate for the spherical case in which the features are performed $\ell_2$ normalization. 

As shown in Theorem \ref{convergence-of-spherical-case} and its proof, the convergence behavior in the spherical constrained case is characterized by the term $\frac{\eta(t)}{\|\vh_{i,c}(t)\|_2}$, when $\|\vh_{i,c}\|_2$ is monotonically increasing and thus leads to a slower convergence rate. This result underscores the influence of the feature norm on the gradient dynamics in spherical case.  Also, we observe that the training plots in the spherical case exhibit a slower rate in comparison to the CE loss in Figure 2. To bridge the performance of the unhinged loss to that of CE, we propose a straightforward approach---rescaling the learning rates based on feature norms, that is, scaling up the learning rate $\eta(t)$ with the feature norm $\|\vh_{i,c}\|_2$ for each example\footnote{We can also implement this strategy by rescaling the loss, but the multiplicative feature norm stops gradients.}. Though we change the learning rates, we still guarantee the convergence analysis in Theorem \ref{convergence-of-spherical-case} if $\eta(t)$ is non-increasing and satisfies $\frac{\eta(0)(1+\gamma)}{CN}\le \frac{1}{\|\vw_c\|_2}$.

\paragraph{Experimental Results} To confirm the efficacy of the rescaled learning rate in accelerating convergence for the spherical case, we experiment with the unhinged loss (FNPAL). As illustrated in Figure \ref{classification-results}, the utilization of the unhinged loss (FNPAL) with the rescaled learning rate (RLR) clearly demonstrates accelerated convergence in comparison to its counterpart without RLR.

\section{Conclusion}
In this paper, we introduced the unhinged loss as a surrogate to analyze the behavior of last-layer features and prototypes. Due to the conciseness of the unhinged loss, we derived exact dynamics under gradient descent in various scenarios, including unconstrained, regularized, and spherical constrained cases, as well as the case with an invariant neural tangent kernel. Furthermore, we demonstrated that these dynamics converge exponentially to a specific solution depending on the initialization. Inspired by these results, we proposed additional insights for improvements, such as using prototype-anchored learning with the unhinged loss to bridge its performance to that of cross-entropy loss, employing explicit feature regularization to address over-confidence, and implementing a rescaled learning rate to accelerate convergence in the spherical case. Finally, we validated these theoretical results and insights through extensive experiments, covering numerical analysis, visual classification, imbalanced learning, and out-of-distribution detection. We anticipate that the unhinged loss will serve as a valuable tool for the community to gain a deeper understanding of the behavior of deep neural networks, extending beyond the scope of this paper.

\section*{Acknowledgements}
This work was supported in part by National Natural Science Foundation of China under Grants 92270116 and 62071155, and in part by the Fundamental Research Funds for the Central Universities (Grant No.HIT.OCEF.2023043 and HIT.DZJJ.2023075). Xiong Zhou thanks the support from MEGVII Technology and Qiyuan Lab.

\newpage

\newpage

\begin{center}
    \Large \bf Appendix for ``On the Dynamics under the Unhinged Loss and Beyond''
\end{center}

\appendix

\section{Clarification for the Unhinged Loss}
\label{clarification-of-unhinged-loss}
In this section, we provide some clarification for the unhinged loss as follows.
\begin{itemize}
    \item \textbf{About the contribution.} As stated in the body of this paper, the unhinged loss serves as an alternative object of study, which has the advantage of being linear in the outputs of the model. Thus, it can provide more opportunities to analyze more concrete behaviors and explore the shortcomings of the existing modeling. The main contribution of this work is to glimpse into deep learning through a concise form under the unhinged loss, which enjoys the great advantage to simplify theoretical research very much. Moreover, it is not just a tool for theoretical analysis. To make the unhinged loss practical, we further provide several insights theoretically and empirically, including using prototype-anchored learning with the unhinged loss (Sec. \ref{the-unhinged-loss-with-pal}), applying explicit regularization on features (Sec. \ref{explicit-regularization}), rescaling the learning rates for the spherical constrained case (Sec. \ref{rescaled-lr-for-the-spherical}), and other potential insights (Sec. \ref{potential-insights}).
    \item \textbf{About the criterion of correct classification.} The value of the unhinged loss in \cref{the-unhinged-loss} cannot be directly used as the criterion to judge whether a sample is classified correctly since the unhinged loss can be arbitrary negative in unconstrained cases, since the gradient of the unhinged loss if $\frac{\partial L_{\gamma}}{\partial \vh}=-(\vw_y-\gamma\sum_{i\neq y}\vw_i)$ cannot converge to zero unless $\vw_y-\gamma\sum_{i\neq y}\vw_i=0$ for all $y$. This is also the main reason why the unhinged loss cannot be used directly for practical training. To overcome this drawback, we turn to limit the rapid growth of norms. For instance, in the spherical constrained case, the features are constrained to the unit sphere, and class prototypes are fixed as a simplex equiangular tight frame (ETF). Moreover, under this case, the unhinged loss would be lower bounded by $-(1+\gamma)$, and thus can be used in practical training (as shown in \cref{classification-results}). We can also derive that $L_{\gamma}<-(1+\gamma)\sqrt{\frac{C-2}{2(C-1}}\Rightarrow \vw_y^\top \vh>\max_{i\neq y}\vw_i^\top\vh$ when these prototypes are fixed as a simplex ETF, so the sample can be correctly classified if the value of the unhinged loss is less than $(1+\gamma)\sqrt{\frac{C-2}{2(C-1}}$.
    \item{\textbf{About other losses.}}
    
    1) For the CE and multi-binary CE loss, we have the following inequality.
    $$
    \begin{aligned}
    &L_{\gamma}(\mW\vh+\vb,y)\\
    \le&\min\bigg\{\log (1+\exp(-\vw_y^\top \vh-b_y))+\gamma \sum_{j\neq y}\log (1+\exp(\vw_j^\top \vh+b_j)),\\
    &(\gamma(C-1)-1)(\vw_y^\top+b_y) \vh-\gamma(C-1)\log \frac{\exp(\vw_y^\top \vh+b_y)}{\sum_{i=1}^C\exp(\vw_i^\top \vh+b_i)}\bigg\}.
    \end{aligned}
    $$
        
    2) For the Wasserstein loss in \citet{wgan}, we can find that the empirical form of the unhinged loss $\frac{1}{N}\sum_{i=1}^N(\vw_y^\top\vh+b_y-\gamma\sum_{i\neq y}(\vw_i^\top\vh+b_i))$ resembles the Wasserstein loss $\E_{x\sim P_r}[f_w(x)]-\E_{x\sim P_\theta}[f_w(x)]$ if we regard positive logits $\vw_y^\top\vh+b_y$ as scores from $P_r$ and negative logits $\vw_i^\top\vh+b_i$ ($i\neq y$) as scores from $P_{\theta}$. The main difference is that in the unhinged loss the two logits $\vw_y^\top\vh+b_y$ and $\vw_i^\top\vh+b_i$ ($i\neq y$) are computed from $\vh$ (\textit{i.e.}, the same input), rather than being sampled separably from two distributions as done in Wasserstein loss. Moreover, the unhinged loss does not constrain the values of these scores, in contrast the Wasserstein loss requires $f_w$ to be Lipschitz continuous.
\end{itemize}

\section{Proofs for Lemmas, Theorems, Propositions and Corollaries}
\label{all-proofs}

\subsection{Proof of Lemma \ref{neural-collapse-of-unhinged}}

\textbf{Lemma \ref{neural-collapse-of-unhinged} (Neural Collapse of The Unhinged Loss).}
\textit{
For norm-bounded prototypes and features, \textit{i.e.}, $\|\vw_c\|_2\le E_1$ and $\|\vh_{i,c}\|_2\le E_2$, $\forall i\in[N], \forall c\in[C]$, the global minimizer of $\frac{1}{CN}\sum_{i=1}^N\sum_{c=1}^C L_{\gamma}(\mW\vh_{i,c},y_{i,c})$ implies neural collapse when $p\ge C-1$. More specifically, the global minimizer is uniquely obtained at $\frac{\vw_i^\top\vw_j}{\|\vw_i
\|_2\|\vw_j\|_2}=-\frac{1}{C-1}$, $\forall i\neq j$, $\frac{\vw_{y_{i,c}}^\top\vh_{i,c}}{\|\vw_{y_{i,c}}\|_2\|\vh_{i,c}\|_2}=1$,  $\|\vw_c\|_2=E_1$, and $\|\vh_{i,c}\|_2=E_2$, $\forall i\in[N],$ $\forall c\in[C]$.
}
\begin{proof}
The proof is based on lower bounding the objective $\frac{1}{CN}\sum_{i=1}^N\sum_{c=1}^C L_{\gamma}(\mW\vh_{i,c},y_{i,c})$ by a sequence of inequalities that holds if and only if the solution forms Neural Collapse \citep{papyan2020prevalence}. Let $\hat \vw=\frac{1}{C}\sum_{c=1}^C\vw_c$, according to the definition of $L_{SM}$, we have
\begin{equation}
    \notag
    \begin{aligned}
    &\frac{1}{CN}\sum_{i=1}^N\sum_{c=1}^C L_{SM}(\mW\vh_{i,c},y_{i,c})\\
    =& \frac{1}{CN}\sum_{i=1}^N\sum_{c=1}^C(-\vw_{y_{i,c}}^\top\vh_{i,c}+\gamma\sum_{j\neq y_{i,c}}\vw_j^\top\vh_{i,c})\\
    \ge& \frac{1}{CN}\sum_{i=1}^N\sum_{c=1}^C(-E_1E_2+\gamma(C\hat{\vw}-\vw_{y_{i,c}})^\top\vh_{i,c})\\
    \ge& -\frac{\gamma E_2}{CN}\sum_{i=1}^N\sum_{c=1}^C\|C\hat{\vw}-\vw_{y_{i,c}}\|_2-E_1 E_2\\
    \ge& -\gamma E_2\sqrt{\frac{1}{CN}\sum_{i=1}^N\sum_{c=1}^C\|C\hat{\vw}-\vw_{y_{i,c}}\|_2^2}-E_1 E_2\\
    =&-\gamma E_2\sqrt{\frac{1}{C}\sum_{c=1}^C \|\vw_{c}\|_2^2-C^2\|\hat{\vw}\|_2^2}-E_1E_2\\
    \ge& -(1+\gamma) E_1 E_2
    \end{aligned},
\end{equation}
where  the first and second inequalities are based on the facts that $\vw_{y_{i,c}}^\top\vh_{i,c}\le E_1E_2$ and $(C\hat{\vw}-\vw_{y_{i,c}})^\top\vh_{i,c}\ge -E_2\|C\hat{\vw}-\vw_{y_{i,c}}\|_2$, respectively. In the third equality, we used the Cauchy-Schwarz inequality, and the last inequality we use the facts that $\|\vw_c\|_2\le E_1$ and $\|\hat{\vw}\|_2\ge 0$.

According the above derivation, the equality holds if and only if $\forall i\in [N]$, $\forall c\in [C]$, $\vw_{y_{i,c}}^\top\vh_{i,c}= E_1E_2$, $(C\hat{\vw}-\vw_{y_{i,c}})^\top\vh_{i,c}= -E_2\|C\hat{\vw}-\vw_{y_{i,c}}\|_2$, $\|C\hat{\vw}-\vw_{c}\|_2=\|C\hat{\vw}-\vw_{C}\|_2$, $\|\vw_c\|_2= E_1$, and $\|\hat{\vw}\|_2= 0$. These equations can be simplified as $\frac{\vw_i^\top\vw_j}{\|\vw_i
|_2\|\vw_j\|_2}=-\frac{1}{C-1}$, $\forall i\neq j$, $\frac{\vw_{y_{i,c}}^\top\vh_{i,c}}{\|\vw_{y_{i,c}}\|_2\|\vh_{i,c}\|_2}=1$,  $\|\vw_c\|_2=E_1$, and $\|\vh_{i,c}\|_2=E_2$, $\forall i\in[N],$ $\forall c\in[C]$, which also implies neural collapse.
\end{proof}

\subsection{Proof of Theorem \ref{closed-form-dynamics}}
In this section, we will provide the proof of Theorem \ref{closed-form-dynamics}. Our analysis will actually rely on the eigenvalues and eigenspaces of five subspaces $\gE_1^+$, $\gE_1^-$, $\gE_2^+$, $\gE_2^-$ and $\gE_3$ in Theorem \ref{closed-form-dynamics}. Their concrete projection operator can be found in \cref{projection-of-subspaces}. In the following, we show that these five subspaces are orthogonal:
\begin{lemma}
\label{five-orthogonal-subspaces}
The following five subspaces are orthogonal to each other and satisfy $\sR^{p\times C(N+1)}=\gE_1^+\oplus\gE_1^-\oplus\gE_2^+\oplus\gE_2^-\oplus \gE_3$:
\begin{equation}
   \begin{aligned}
        & \gE_1^\eps:=\{(\mH,\mW):\mH=\eps\cdot\tfrac{1}{\sqrt{N}}(\mW\otimes \1_N^\top),\mW\1_C=0, \mW\in\sR^{p\times C}\},\\
        & \gE_2^\eps:=\{(\mH,\mW):\mH=\eps\cdot \tfrac{1}{\sqrt{N}}\vh\1^\top_{CN},\mW=\vh\1_C ^\top,\vh\in\mathbb{R}^p\},\\
        & \gE_3:=\{(\mH,\mW): \mH(\mI_C\otimes\1_N)=0,\mW=0, \mH\in\sR^{p\times CN}\}.
    \end{aligned}
\end{equation}
where $\eps\in\{\pm 1\}$, and $k\neq 0$.
\begin{proof}
For $(\mH_1,\mW_1)=(\frac{1}{\sqrt{N}}(\mW_1\otimes \1_N^\top), \mW_1)\in\gE_1^+$ and $(\mH_2,\mW_2)=(-\frac{1}{\sqrt{N}}(\mW_2\otimes \1_N^\top), \mW_2)\in\gE_1^-$, we have
\begin{equation}
    \notag
    \mH_1\mH_2^\top + \mW_1\mW_2^\top=-\frac{1}{N}(\mW_1\otimes \1_N^\top)(\mW_2^\top\otimes \1_N) + \mW_1\mW_2^\top=0.
\end{equation}

For $(\mH_1,\mW_1)=(\frac{1}{\sqrt{N}}\vh_1\1_{CN}^\top, \vh_1\1_C^\top)\in\gE_2^+$ and $(\mH_2,\mW_2)=(-\frac{1}{\sqrt{N}}\vh_2\1_{CN}^\top, \vh_2\1_C^\top)\in\gE_2^-$, we have
\begin{equation}
    \notag
    \mH_1\mH_2^\top + \mW_1\mW_2^\top=-\frac{1}{N}\vh_1\1_{CN}^\top \1_{CN}\vh_2^\top + \vh_1\1_{C}^\top \1_{C}\vh_2^\top=0.
\end{equation}

For $(\mH_1,\mW_1)=(\eps_1\frac{1}{\sqrt{N}}(\mW_1\otimes \1_N^\top), \mW_1)\in\gE_1^\eps$ and $(\mH_2,\mW_2)=(\eps_2\frac{1}{\sqrt{N}}\vh_2\1_{CN}^\top, \vh_2\1_C^\top)\in\gE_2^\eps$, since $\mW_1\1_C=0$ and $(\mW_1\otimes\1_N^\top)\1_{CN}=N\mW_1\1_C=0$, we have
\begin{equation}
    \notag
    \mH_1\mH_2^\top + \mW_1\mW_2^\top=\frac{\eps_1\eps_2}{N}(\mW_1\otimes\1_N^\top)\1_{CN}\vh_2^\top+\mW_1\1_C\vh_2^\top =0.
\end{equation}

For $(\mH_1,\mW_1)=(\eps\frac{1}{\sqrt{N}}(\mW_1\otimes \1_N^\top), \mW_1)\in\gE_1^\eps$ and $(\mH_2,\mW_2)=(\mH_2,0)\in\gE_3$, we have
\begin{equation}
    \notag
    \mH_1\mH_2^\top + \mW_1\mW_2^\top=\frac{\eps}{\sqrt{N}}(\mW_1\otimes \1_N^\top)\mH_2^\top=\frac{\eps}{\sqrt{N}}\mW_1(\mH_2(\mI_C\otimes\1_N))^\top = 0.
\end{equation}

For $(\mH_1,\mW_1)=(\eps\frac{1}{\sqrt{N}}\vh_1\1_{CN}^\top,\vh_1\1_C^\top)\in\gE_2^\eps$ and $(\mH_2,\mW_2)=(\mH_2,0)\in\gE_3$, since $\mH_2(\mI_C\otimes\1_N)=0$, we have
\begin{equation}
    \notag
    \mH_1\mH_2^\top + \mW_1\mW_2^\top=\frac{\eps}{\sqrt{N}}\vh_1\1_{CN}^\top\mH_2^\top=\frac{\eps}{\sqrt{N}}\vh_1(\mH_2(\mI_C\otimes\1_N)
    \1_C^\top)^\top = 0
\end{equation}

To sum up, we prove that the five subspaces $\gE_1^+,\gE_1^-,\gE_2^+,\gE_2^-,\gE_3$ are orthogonal to each other. Moreover, we have
$$
\mathop{\text{dim}} \gE_1^+=\mathop{\text{dim}} \gE_1^-=p(C-1), \quad \mathop{\text{dim}}\gE_2^+= \mathop{\text{dim}}\gE_2^-=p,\quad \mathop{\text{dim}}\gE_3=pC(N-1).
$$
Since these dimensions sum to $pC(N+1)=\mathop{\text{dim}}(\mathbb{R}^{p\times CN}\oplus\mathbb{R}^{p\times C})$, then $\sR^{p\times C(N+1)}=\gE_1^+\oplus\gE_1^-\oplus\gE_2^+\oplus\gE_2^-\oplus \gE_3$.
\end{proof}

\end{lemma}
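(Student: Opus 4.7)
The plan is to endow $\sR^{p\times CN}\times\sR^{p\times C}$ with the Frobenius inner product $\langle(\mH_1,\mW_1),(\mH_2,\mW_2)\rangle:=\Tr(\mH_1^\top\mH_2)+\Tr(\mW_1^\top\mW_2)$, verify pairwise orthogonality of the five subspaces by direct computation on representatives written in their free parameters, and close the decomposition by a dimension count. The only algebraic tools needed are the Kronecker mixed-product identity $(\mA\otimes\mB)(\mC\otimes\mD)=(\mA\mC)\otimes(\mB\mD)$, the trace factorization $\Tr(\mA\otimes\mB)=\Tr(\mA)\Tr(\mB)$, the scalar $\1_N^\top\1_N=N$, the decomposition $\1_{CN}=\1_C\otimes\1_N$, and the defining constraints $\mW\1_C=0$ (for $\gE_1^\eps$) and $\mH(\mI_C\otimes\1_N)=0$ (for $\gE_3$).

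I would group the ten cross-pairings into three types. For $\gE_1^{+}$ vs.\ $\gE_1^{-}$ and $\gE_2^{+}$ vs.\ $\gE_2^{-}$, substituting representatives and collapsing the $\mH$ block with the mixed-product identity gives exactly $-\Tr(\mW_1^\top\mW_2)$ (respectively $-C\Tr(\vh_1^\top\vh_2)$), which cancels the $\mW$ block. For the cross pairs $\gE_1^{\eps_1}$ vs.\ $\gE_2^{\eps_2}$, both the $\mH$ and $\mW$ contributions are proportional to the scalar $\vh_2^\top\mW_1\1_C$, which vanishes by $\mW_1\1_C=0$. For pairs involving $\gE_3$, the $\mW$ block is trivially zero, and the $\mH$ block rewrites (after pushing $\1_{CN}=\1_C\otimes\1_N$ through the Kronecker product) as a trace in which $\mH_2(\mI_C\otimes\1_N)$ appears as a factor, hence vanishes.

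For the direct-sum step I would count dimensions: $\dim\gE_1^{\pm}=p(C-1)$ because the map $\mW\mapsto(\eps N^{-1/2}(\mW\otimes\1_N^\top),\mW)$ is injective and $\mW\1_C=0$ cuts $p$ linear conditions from $\sR^{p\times C}$; $\dim\gE_2^{\pm}=p$ by the injective parameterization $\vh\mapsto(\eps N^{-1/2}\vh\1_{CN}^\top,\vh\1_C^\top)$; and $\dim\gE_3=pC(N-1)$ since $\mH(\mI_C\otimes\1_N)=0$ cuts out $pC$ independent linear conditions from $\sR^{p\times CN}$. Summing yields $2p(C-1)+2p+pC(N-1)=pC(N+1)=\dim(\sR^{p\times CN}\times\sR^{p\times C})$, so mutual orthogonality forces $\sR^{p\times C(N+1)}=\gE_1^+\oplus\gE_1^-\oplus\gE_2^+\oplus\gE_2^-\oplus\gE_3$.

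The main obstacle is purely bookkeeping: there is no conceptual difficulty beyond keeping track of the sign patterns and Kronecker identities across all $\binom{5}{2}=10$ pairings. The only mild care-point is verifying the injectivity of the parameterizations (trivial for $\gE_2^\eps$ and for $\gE_1^\eps$ by recovering $\mW$ from the second component) so that the claimed dimension formulas are exact rather than upper bounds.
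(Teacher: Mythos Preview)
Your proposal is correct and follows essentially the same approach as the paper: direct verification of pairwise orthogonality on parameterized representatives using the Kronecker mixed-product identity and the constraints $\mW\1_C=0$, $\mH(\mI_C\otimes\1_N)=0$, followed by the identical dimension count $2p(C-1)+2p+pC(N-1)=pC(N+1)$. The only cosmetic difference is that the paper shows the stronger matrix identity $\mH_1\mH_2^\top+\mW_1\mW_2^\top=0$ for each pairing, whereas you work with its trace (the Frobenius inner product), which is all that orthogonality requires.
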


\noindent\textbf{Theorem \ref{closed-form-dynamics} (Dynamics of Features, Class Prototypes and Biases without Constraints).}
\textit{
Consider the continual gradient flow in \Eqref{gradient-flow}, let $\mZ(t)=(\mH(t),\mW(t))$, if $\eta_1(t_1)\eta_2(t_2)=\eta_1(t_2)\eta_2(t_1)$ for any $t_1,t_2\ge0$, we have the following closed-form dynamics
\begin{equation}
    \begin{aligned}
    \mZ(t)=&\Pi_1^+\mZ_0 \left(\alpha_1^+(t)\mC(t)+\beta_1^+(t)\mI_{C(N+1)}\right)\\
    +&\Pi_1^-\mZ_0 \left(\alpha_1^-(t)\mC(t)+\beta_1^-(t)\mI_{C(N+1)}\right)\\
    +&\Pi_2^+\mZ_0\left(\alpha_2^+(t)\mC(t)+\beta_2^+(t)\mI_{C(N+1)}\right)\\
    +&\Pi_2^-\mZ_0 \left(\alpha_2^-(t)\mC(t)+\beta_2^-(t)\mI_{C(N+1)}\right)+\Pi_3\mZ_0,
    \end{aligned}
\end{equation}
and 
\begin{equation}
    \vb(t)=\vb_0 +\frac{(1+\gamma-\gamma C)\zeta_2(t)}{C} \1_C,
\end{equation}
where $\alpha_1^\eps$, $\alpha_2^\eps$, $\beta_1^\eps$ and $\beta_2^\eps$ for $\eps\in\{\pm\}$ are the scalars that only depend on $C$, $N$, $\gamma$, $\eta_1$ and $\eta_2$ (where the detailed forms of these scalars can be seen in the appendix), $\mZ_0=(\mH_0,\mW_0)$, $\mC(t)=\left(\begin{smallmatrix}
    \zeta_1(t)\mI_{CN} & 0\\
    0 & \zeta_2(t)\mI_C\\
    \end{smallmatrix}\right)$, $\zeta_1(t)=\int_0^t\eta_1(\tau)\mathrm{d}\tau$, $\zeta_2(t)=\int_0^t\eta_2(\tau)\mathrm{d}\tau$, $\Pi_1^{\eps}$, $\Pi_2^{\eps}$ and $\Pi_3$ for $\eps\in\{\pm\}$ are orthogonal projection operators onto the following respective eigenspaces:
\begin{equation}
    \begin{aligned}
        & \gE_1^\eps:=\{(\mH,\mW):\mH=\eps\cdot\tfrac{1}{\sqrt{N}}(\mW\otimes \1_N^\top),\mW\1_C=0\},\\
        & \gE_2^\eps:=\{(\mH,\mW):\mH=\eps\cdot \tfrac{1}{\sqrt{N}}\vh\1^\top_{CN},\mW=\vh\1_C ^\top,\vh\in\mathbb{R}^p\},\\
        & \gE_3:=\{(\mH,\mW): \mH(\mI_C\otimes\1_N)=0,\mW=0\}.
    \end{aligned}
\end{equation}
}

\begin{proof}
\label{proof-of-closed-form-dynamics}
Writing $\mZ(t)=(\mH(t),\mW(t))$, then the unsolved portion of the system is given by
\begin{equation}
    \mZ'(t)=\mZ(t)\begin{pmatrix}0 & \mM^\top\\\mM & 0\\\end{pmatrix}\begin{pmatrix}
    \eta_1(t)\mI_{CN} & 0\\
    0 & \eta_2(t)\mI_C\\
    \end{pmatrix},
\end{equation}
where $\mM=\frac{1}{CN}((1+\gamma)(\mI_C\otimes \1^\top_{N})-\gamma\1_C\1_{CN}^\top)$.

Let $\mA(t)=\begin{pmatrix}0 & \mM^\top\\\mM & 0\\\end{pmatrix} \begin{pmatrix}\eta_1(t)\mI_{CN} & 0\\0 & \eta_2(t)\mI_C\\\end{pmatrix}=\begin{pmatrix}0 & \eta_2(t)\mM^\top\\\eta_1(t)\mM & 0\\\end{pmatrix}$, then the equation above can be reformulated as the initial-value problem associated with the linear ordinary differential equation:
\begin{equation}
    \mZ'(t)=\mZ(t)\mA(t),\quad \mZ(0)=\mZ_0.
\end{equation}

For any $t_1,t_2$, we have the matrix commutator of $\mA(t_1)$ and $\mA(t_2)$
$$
    \begin{aligned}
    &[\mA(t_1),\mA(t_2)]\\
    =&\mA(t_1)\mA(t_2)-\mA(t_2)\mA(t_1)\\
    =&\begin{pmatrix}0 & \eta_2(t_1)\mM^\top\\\eta_1(t_1)\mM & 0\end{pmatrix}\begin{pmatrix}0 & \eta_2(t_2)\mM^\top\\\eta_1(t_2)\mM & 0\end{pmatrix}-\mA(t_2)\mA(t_1)\\
    =&\begin{pmatrix}\eta_2(t_1)\eta_1(t_2)\mM^\top\mM & 0\\ 0 & \eta_1(t_1)\eta_2(t_2)\mM\mM^\top\end{pmatrix}-\mA(t_2)\mA(t_1)\\
    =&\begin{pmatrix}(\eta_2(t_1)\eta_1(t_2)-\eta_2(t_2)\eta_1(t_1))\mM^\top\mM & 0\\ 0 & (\eta_1(t_1)\eta_2(t_2)-\eta_2(t_1)\eta_1(t_2))\mM\mM^\top\end{pmatrix}\\
    =&0
    \end{aligned}
$$
where the last equality is based on the fact that $\eta_2(t_1)\eta_1(t_2)=\eta_2(t_2)\eta_1(t_1)$. Therefore, according to Magnus approach, we have
\begin{equation}
    \mZ(t)=\mZ_0\exp\left(\int_0^t \mA(\tau)\mathrm{d}\tau\right)=\mZ_0\exp\begin{pmatrix}
    0 & \int_0^t\eta_2(\tau) \mathrm{d}\tau\mM^\top\\
    \int_0^t\eta_1(\tau) \mathrm{d}\tau\mM & 0\\
    \end{pmatrix}.
\end{equation}
Let $\zeta_1(t)=\int_0^t\eta_1(\tau)\mathrm{d}\tau$, $\zeta_2(t)=\int_0^t\eta_2(\tau)\mathrm{d}\tau$, $\mB=\begin{pmatrix}
    0 & \mM^\top\\
    \mM & 0\\
    \end{pmatrix}$, $\mC(t)=\begin{pmatrix}
    \zeta_1(t)\mI_{CN} & 0\\
    0 & \zeta_2(t)\mI_C\\
    \end{pmatrix}$, and $\mL(t)=\mB\mC(t)$,  we have
\begin{equation}
    \mZ(t)=\mZ_0\exp(\mL(t))=\mZ_0\sum_{k=0}^\infty \frac{(\mL(t))^k}{k!}.
\end{equation}
Moreover, we have
\begin{equation}
    \begin{aligned}
    (\mL(t))^2&=\begin{pmatrix}
    0 & \zeta_2(t)\mM^\top\\
    \zeta_1(t)\mM & 0\\
    \end{pmatrix}\begin{pmatrix}
    0 & \zeta_2(t)\mM^\top\\
    \zeta_1(t)\mM & 0\\
    \end{pmatrix}\\
    &=\begin{pmatrix}
    \zeta_1(t)\zeta_2(t)\mM^\top\mM & 0\\
    0 & \zeta_1(t)\zeta_2(t)\mM\mM^\top\\
    \end{pmatrix}\\
    &=\zeta_1(t)\zeta_2(t)\mB^2,
    \end{aligned}
\end{equation}
thus we obtain
\begin{equation}
    \begin{aligned}
    \mZ(t)=&\mZ_0\left(\sum_{k=0}^\infty \frac{(\mL(t))^{2k+1}}{(2k+1)!}+\sum_{k=0}^\infty \frac{(\mL(t))^{2k}}{(2k)!}\right)\\
    =&\mZ_0\left(\sum_{k=0}^\infty \frac{(\zeta_1(t)\zeta_2(t))^k\mB^{2k+1}\mC(t)}{(2k+1)!}+\sum_{k=0}^\infty \frac{(\zeta_1(t)\zeta_2(t))^k\mB^{2k}}{(2k)!}\right)\\
    =&\mZ_0\sum_{k=0}^\infty \frac{(\zeta_1(t)\zeta_2(t))^k\mB^{2k+1}\mC(t)}{(2k+1)!}+\mZ_0\sum_{k=0}^\infty \frac{(\zeta_1(t)\zeta_2(t))^k\mB^{2k}}{(2k)!}
    \end{aligned}
\end{equation}
Looking at the above equation, we just need to analyze the eigenspaces and eigenvalues of $\mB$.

Considering the following five subspaces:
\begin{equation}
    \begin{aligned}
        & \gE_1^\eps:=\{(\mH,\mW):\mH=\eps\cdot\tfrac{1}{\sqrt{N}}(\mW\otimes \1_N^\top),\mW\1_C=0\},\\
        & \gE_2^\eps:=\{(\mH,\mW):\mH=\eps\cdot \tfrac{1}{\sqrt{N}}\vh\1^\top_{CN},\mW=\vh\1_C ^\top,\vh\in\mathbb{R}^p\},\\
        & \gE_3:=\{(\mH,\mW): \mH(\mI_C\otimes\1_N)=0,\mW=0\}.
    \end{aligned}
\end{equation}
where $\eps\in\{\pm\}$. According to Lemma \ref{five-orthogonal-subspaces}, these five subspaces are orthogonal to each other and satisfy $\sR^{p\times C(N+1)}=\gE_1^+\oplus\gE_1^-\oplus\gE_2^+\oplus\gE_2^-\oplus \gE_3$. 

In the following, we will prove that $\gE_{1}^\eps$, $\gE_{2}^\eps$ and $\gE_3$ are five eigenspaces of $\mB$. More specifically, each nonzero member of each claimed eigenspace is an eigenvector, and the claimed eigenspaces have distinct eigenvalues.

Note that for $(\mH,\mW)\in \sR^{p\times CN}\oplus\sR^{p\times C}$, we have $(\mH,\mW)\mB=(\mW\mM^\top,\mH\mW)$.

For $(\mH,\mW)\in\gE_1^\eps$, we have $\mH=\frac{\eps}{\sqrt{N}}\mW\otimes \1_N^\top$ and  $\mW\1_C=0$, thus
\begin{equation}
\notag
    \begin{aligned}
        \mW\mM&=\tfrac{1}{CN}\mW((1+\gamma)(\mI_C\otimes \1^\top_{N})-\gamma\1_C\1_{CN}^\top)\\
        &=\tfrac{(1+\gamma)}{CN}\mW\otimes \1_N^\top\\
        &=\tfrac{\eps(1+\gamma)}{C\sqrt{N}} \mH,\\
        \mH\mM^\top&=\tfrac{1}{CN}[\eps\cdot\tfrac{1}{\sqrt{N}}(\mW\otimes \1_N^\top)][(1+\gamma)(\mI_C\otimes\1^\top_{N})-\gamma\1_C\1_{CN}^\top]^\top\\
        &=\tfrac{\eps}{CN\sqrt{N}}[(\mW\otimes \1_N^\top)((1+\gamma)(\mI_C\otimes \1_N)-\gamma(\1_C\otimes \1_N)\1_C^\top)]\\
        &=\tfrac{\eps(1+\gamma)}{C\sqrt{N}} \mW,
\end{aligned}
\end{equation}
\textit{i.e.}, $(\mH,\mW)$ is an eigenvector of $\mB$ with eigenvalue $\tfrac{\eps(1+\gamma)}{C\sqrt{N}}$.

For $(\mH,\mW)\in\gE_2^\eps$, we have $\mH=\frac{\eps}{\sqrt{N}}\vh\1_{CN}^\top$ and $\mW=\vh\1_C^\top$, thus
\begin{equation}
\notag
\begin{aligned}
\mW\mM&=\tfrac{1}{CN}\vh\1_C^\top ((1+\gamma)(\mI_C\otimes\1^\top _{N})-\gamma\1_C\1_{CN}^\top )\\
&=\tfrac{1}{CN}\vh((1+\gamma)\1_C(\mI_C \otimes \1_N^\top )-\gamma C\1_{CN}^\top )\\
&=\tfrac{(1+\gamma-\gamma C)}{CN}\vh \1_{CN}^\top \\
&=\tfrac{\eps(1+\gamma-\gamma C)}{C\sqrt{N}} \mH,\\
\mH\mM^\top &=\tfrac{1}{CN\sqrt{N}}(\eps\cdot \vh \1_{CN}^\top )((1+\gamma)(\mI_C\otimes\1^\top _{N})-\gamma\1_C\1_{CN}^\top)^\top \\
&=\tfrac{\eps}{CN}\vh((1+\gamma)\1_{CN}^\top(\mI_C\otimes \1_N)-\gamma\1_{CN\sqrt{N}}^\top\1_{CN}\1_C^\top)\\
&=\tfrac{\eps}{CN\sqrt{N}}\vh((1+\gamma)N\1_{C}^\top-\gamma CN\1_C^\top) \\
&=\tfrac{\eps(1+\gamma-\gamma C)}{C\sqrt{N}}\vh\1_C ^\top \\
&=\tfrac{\eps(1+\gamma-\gamma C)}{C\sqrt{N}}\mW,
\end{aligned}
\end{equation}
\textit{i.e.}, $(\mH,\mW)$ is an eigenvector of $\mB$ with eigenvalue $\tfrac{\eps(1+\gamma-\gamma C)}{C\sqrt{N}}$.

For $(\mH,\mW)\in\gE_3$, we have $\mH(\mI_C\otimes\1_N)=0$ and $\mW=0$, thus
\begin{equation}
\notag
\begin{aligned}
\mW\mM&=\tfrac{1}{CN}\cdot0((1+\gamma)(\mI_C\otimes\1^\top _{N})-\gamma\1_C\1_{CN}^\top)=0,\\
\mH\mM^\top &=\tfrac{1}{CN}\mH((1+\gamma)(\mI_C\otimes\1^\top_{N})-\gamma\1_C\1_{CN}^\top )^\top \\
&=\tfrac{1}{CN}\mH((1+\gamma)(\mI_C\otimes\1_{N})-\gamma\1_{CN}\1_C^\top ) \\
&=-\tfrac{\gamma}{CN}\mH (\mI_C\otimes\1_{N})\1_C\1_C^\top \\
&=0
\end{aligned}
\end{equation}
\textit{i.e.}, $(\mH,\mW)$ is an eigenvector of $\mB$ with eigenvalue $0$.

Overall, letting $\Pi_i^\eps$ denote orthogonal projection onto $\gE_i^\eps$, we have the spectral decomposition
\begin{equation}
    \mB=\tfrac{1}{C\sqrt{N}}\left[(1+\gamma)(\Pi_1^+-\Pi_1^-)+(1+\gamma-\gamma C)(\Pi_2^+-\Pi_2^-)\right].
\end{equation}
We then provide the concrete formulation of $\mZ(t)=\mZ_0\exp(\mL(t))$ by the orthogonal projection of $\mZ_0$ onto each eigenspace of $\mB$, \textit{i.e.},
$$
\mZ_0=\Pi_1^+\mZ_0+\Pi_1^-\mZ_0+\Pi_2^+\mZ_0+\Pi_2^-\mZ_0+\Pi_3\mZ_0.
$$

\paragraph{Decomposition along $\Pi_1^\eps \mZ_0$.}
First, $\Pi_1^\eps \mZ_0 \mB=\frac{\eps(1+\gamma)}{C\sqrt{N}}\Pi_1^\eps\mZ_0$, so $ \Pi_1^\eps \mZ_0 \mB^k = \left(\frac{\eps(1+\gamma)}{C\sqrt{N}}\right)^k\Pi_1^\eps\mZ_0$ for $k\ge0$, then
\begin{equation}
    \begin{aligned}
    &\Pi_1^\eps\mZ_0\sum_{k=0}^\infty \frac{(\zeta_1(t)\zeta_2(t))^k\mB^{2k+1}\mC(t)}{(2k+1)!}\\
    =&\Pi_1^\eps\mZ_0\sum_{k=0}^\infty \frac{(\zeta_1(t)\zeta_2(t))^k\left(\frac{\eps(1+\gamma)}{C\sqrt{N}}\right)^{2k+1}\mC(t)}{(2k+1)!}\\
    =&\frac{\Pi_1^\eps\mZ_0 \mC(t)}{\sqrt{\zeta_1(t)\zeta_2(t)}}\sum_{k=0}^\infty \frac{\left(\frac{\eps(1+\gamma)\sqrt{\zeta_1(t)\zeta_2(t)}}{C\sqrt{N}}\right)^{2k+1}}{(2k+1)!}\\
    =&\frac{\Pi_1^\eps\mZ_0 \mC(t)}{2\sqrt{\zeta_1(t)\zeta_2(t)}}\left(e^{\frac{\eps(1+\gamma)\sqrt{\zeta_1(t)\zeta_2(t)}}{C\sqrt{N}}}-e^{-\frac{\eps(1+\gamma)\sqrt{\zeta_1(t)\zeta_2(t)}}{C\sqrt{N}}}\right),
    \end{aligned}
\end{equation}
and
\begin{equation}
    \begin{aligned}
        & \Pi_1^\eps\mZ_0\sum_{k=0}^\infty \frac{(\zeta_1(t)\zeta_2(t))^k\mB^{2k}}{(2k)!}\\
        =&\Pi_1^\eps\mZ_0\sum_{k=0}^\infty \frac{(\zeta_1(t)\zeta_2(t))^k\left(\frac{\eps(1+\gamma)}{C\sqrt{N}}\right)^{2k}}{(2k)!}\\
        =&\frac{\Pi_1^\eps\mZ_0}{2}\left(e^{\frac{\eps(1+\gamma)\sqrt{\zeta_1(t)\zeta_2(t)}}{C\sqrt{N}}}+e^{-\frac{\eps(1+\gamma)\sqrt{\zeta_1(t)\zeta_2(t)}}{C\sqrt{N}}}\right),
    \end{aligned}
\end{equation}
which is based on the facts that $\frac{e^x-e^{-x}}{2}=\sum_{k=0}^\infty \frac{x^{2k+1}}{(2k+1)!}$ and $\frac{e^x+e^{-x}}{2}=\sum_{k=0}^\infty \frac{x^{2k}}{(2k)!}$. Thus we have
\begin{equation}
    \Pi_1^\eps\mZ_0 \exp(\mL(t))=\Pi_1^\eps\mZ_0(\alpha_1^\eps\mC(t)+\beta_1^\eps\mI_{C(N+1)}),
\end{equation}
with
\begin{equation}
\begin{aligned}
&\alpha_1^\eps(t)=\frac{\exp\left(\frac{\eps(1+\gamma)\sqrt{\zeta_1(t)\zeta_2(t)}}{C\sqrt{N}}\right)-\exp\left(-\frac{\eps(1+\gamma)\sqrt{\zeta_1(t)\zeta_2(t)}}{C\sqrt{N}}\right)}{2\sqrt{\zeta_1(t)\zeta_2(t)}},\\ &\beta_1^\eps(t)=\frac{\exp\left(\frac{\eps(1+\gamma)\sqrt{\zeta_1(t)\zeta_2(t)}}{C\sqrt{N}}\right)+\exp\left(-\frac{\eps(1+\gamma)\sqrt{\zeta_1(t)\zeta_2(t)}}{C\sqrt{N}}\right)}{2}.
\end{aligned}
\end{equation}

\paragraph{Decomposition along $\Pi_2^\eps \mZ_0$.} Similarly, for $\Pi_2^\eps\mZ_0 \mB=\frac{\eps(1+\gamma-\gamma C)}{C\sqrt{N}}\Pi_2^\eps$, we have
$$
    \Pi_2^\eps\mZ_0\sum_{k=0}^\infty \frac{(\zeta_1(t)\zeta_2(t))^k\mB^{2k+1}\mC(t)}{(2k+1)!}=\frac{\Pi_2^\eps\mZ_0 \mC(t)}{2\sqrt{\zeta_1(t)\zeta_2(t)}}\left(e^{\frac{\eps(1+\gamma-\gamma C)\sqrt{\zeta_1(t)\zeta_2(t)}}{C\sqrt{N}}}-e^{-\frac{\eps(1+\gamma-\gamma C)\sqrt{\zeta_1(t)\zeta_2(t)}}{C\sqrt{N}}}\right),
$$
and 
$$
\Pi_2^\eps\mZ_0\sum_{k=0}^\infty \frac{(\zeta_1(t)\zeta_2(t))^k\mB^{2k}}{(2k)!}=\frac{\Pi_2^\eps\mZ_0}{2}\left(e^{\frac{\eps(1+\gamma-\gamma C)\sqrt{\zeta_1(t)\zeta_2(t)}}{C\sqrt{N}}}+e^{-\frac{\eps(1+\gamma-\gamma C)\sqrt{\zeta_1(t)\zeta_2(t)}}{C\sqrt{N}}}\right).
$$
Thus we have
\begin{equation}
    \Pi_2^\eps\mZ_0 \exp(\mL(t))=\Pi_2^\eps\mZ_0(\alpha_2^\eps\mC(t)+\beta_2^\eps\mI_{C(N+1)}),
\end{equation}
with 
\begin{equation}
\begin{aligned}
&\alpha_2^\eps(t)=\frac{\exp\left(\frac{\eps(1+\gamma-\gamma C)\sqrt{\zeta_1(t)\zeta_2(t)}}{C\sqrt{N}}\right)-\exp\left(-\frac{\eps(1+\gamma-\gamma C)\sqrt{\zeta_1(t)\zeta_2(t)}}{C\sqrt{N}}\right)}{2\sqrt{\zeta_1(t)\zeta_2(t)}},\\ 
&\beta_2^\eps(t)=\frac{\exp\left(\frac{\eps(1+\gamma-\gamma C)\sqrt{\zeta_1(t)\zeta_2(t)}}{C\sqrt{N}}\right)+\exp\left(-\frac{\eps(1+\gamma-\gamma C)\sqrt{\zeta_1(t)\zeta_2(t)}}{C\sqrt{N}}\right)}{2}.
\end{aligned}
\end{equation}

\paragraph{Decomposition along $\Pi_3 \mZ_0$.} Since each vector in $\gE_3$ is a eigenvector of $\mB$ with eigenvalue 0, then we have
\begin{equation}
    \Pi_3\mZ_0\left(\sum_{k=0}^\infty \frac{(\zeta_1(t)\zeta_2(t))^k\mB^{2k+1}\mC(t)}{(2k+1)!}+\sum_{k=0}^\infty \frac{(\zeta_1(t)\zeta_2(t))^k\mB^{2k}}{(2k)!}\right)=\Pi_3\mZ_0
\end{equation}

Note that $\gE_1^+$, $\gE_1^-$, $\gE_2^+$, $\gE_2^-$ and $\gE_3$ are orthogonal subspace of $\sR^{p\times CN}\oplus\sR^{C\times p}$, thus 
\begin{equation}
    \begin{aligned}
    \mZ(t)=&\mZ_0\sum_{k=0}^\infty \frac{(\zeta_1(t)\zeta_2(t))^k\mB^{2k+1}\mC(t)}{(2k+1)!}+\mZ_0\sum_{k=0}^\infty \frac{(\zeta_1(t)\zeta_2(t))^k\mB^{2k}}{(2k)!}\\
    =&(\Pi_1^+\mZ_0+\Pi_1^-\mZ_0+\Pi_2^+\mZ_0+\Pi_2^-\mZ_0+\Pi_3\mZ_0)\exp(\mL(t))\\
    =&\Pi_1^+\mZ_0 \left(\alpha_1^+(t)\mC(t)+\beta_1^+(t)\mI_{C(N+1)}\right)+\Pi_1^-\mZ_0 \left(\alpha_1^-(t)\mC(t)+\beta_1^-(t)\mI_{C(N+1)}\right)+\Pi_3\mZ_0\\
    &+\Pi_2^+\mZ_0\left(\alpha_2^+(t)\mC(t)+\beta_2^+(t)\mI_{C(N+1)}\right)+\Pi_2^-\mZ_0 \left(\alpha_2^-(t)\mC(t)+\beta_2^-(t)\mI_{C(N+1)}\right)
    \end{aligned}
\end{equation}

Moreover, since $\vb'(t)=-\eta_2(t)\tfrac{\gamma C-\gamma-1}{C}\1_C$, we obtain
\begin{equation}
     \vb(t)=\vb(0)+\int_0^t-\eta_2(\tau)\frac{\gamma C-\gamma-1}{C} \mathrm{d}\tau \1_C=\vb_0+\frac{(1+\gamma-\gamma C)\zeta_2(t)}{C} \1_C,
\end{equation}
with $\zeta_2(t)=\int_0^t\eta_2(\tau)\mathrm{d}\tau$.
\end{proof}

\subsection{Proof of Corollary \ref{convergence-unconstrained-case}}
\textbf{Corollary \ref{convergence-unconstrained-case}}
\textit{
Under the conditions and notation of Theorem \ref{closed-form-dynamics}, let $s=\frac{\eta_1(0)}{\eta_2(0)}$, if $0<\gamma < \frac{2}{C-2}$ (where $C>2$) or $C=2$, and $\lim_{t\rightarrow\infty}\zeta_1(t)=\infty$, then the gradient flow (as in \cref{gradient-flow}) will behave as:
\begin{equation}
    e^{-\frac{(1+\gamma)\sqrt{\zeta_1(t)\zeta_2(t)}}{C\sqrt{N}}}\mZ(t)=\overline{\mZ} +\bm{\Delta}(t),
\end{equation}
where $\overline{\mZ}=\left(\tfrac{1+\sqrt{s}}{2}\mH_1^++\tfrac{1-\sqrt{s}}{2}\mH_1^-, \tfrac{1+\sqrt{s}}{2\sqrt{s}}\mW_1^+-\tfrac{1-\sqrt{s}}{2\sqrt{s}}\mW_1^-\right)$,  $(\mH_1^+,\mW_1^+)=\Pi_1^+\mZ_0$, $(\mH_1^-,\mW_1^-)=\Pi_1^-\mZ_0$, and the residual term $\bm{\Delta}(t)$ decreases as $\|\bm{\Delta}(t)\|=O\left(e^{\frac{\sqrt{\zeta_1(t)\zeta_2(t)}}{C\sqrt{N}}\cdot\max\{-\gamma C,(C-2)\gamma-2\}}\right)$, and so the normalized $\mZ(t)$ converges to $\frac{\overline{\mZ}}{\|\overline{\mZ}\|}$ in
\begin{equation}
    \left\|\frac{\mZ(t)}{\|\mZ(t)\|}-\frac{\overline{\mZ}}{\|\overline{\mZ}\|}\right\|=O\left(e^{\frac{\sqrt{\zeta_1(t)\zeta_2(t)}}{C\sqrt{N}}\cdot\max\{-\gamma C,(C-2)\gamma-2\}}\right),
\end{equation}
which further indicates $\lim_{t\rightarrow\infty}\frac{\mZ(t)}{\|\mZ(t)\|}\in\gE$.
Moreover, if $\gamma\neq \tfrac{1}{C-1}$, then $\lim_{t\rightarrow \infty}\frac{\max_i b_i(t)}{\min_i b_i(t)}=1$.
}
\begin{proof}
Let $(\mH_1^\eps,\mW_1^\eps)=\Pi_1^\eps\mZ_0$, $(\mH_2^\eps,\mW_2^\eps)=\Pi_2^\eps\mZ_0$ and $(\mH_3,\mW_3)=\Pi_3\mZ_0$ for $\eps\in\{\pm1\}$, according to Theorem \ref{closed-form-dynamics}, we have
\begin{equation}
    \label{decomposition-H-W}
    \begin{aligned}
    &\mH(t)=\sum_{\substack{i\in\{1,2\}\\\eps\in\{\pm\}}} (\alpha_i^\eps(t)\zeta_1(t) + \beta_i^\eps(t))\mH_i^\eps + \mH_3,\\
    &\mW(t)=\sum_{\substack{i\in\{1,2\}\\\eps\in\{\pm\}}} (\alpha_i^\eps(t)\zeta_2(t) + \beta_i^\eps(t))\mW_i^\eps + \mW_3.
    \end{aligned}
\end{equation}
Since $\eta_1(t_1)\eta_2(t_2)=\eta_1(t_2)\eta_2(t_1)$, then $\eta_1(t)=\frac{\eta_1(0)}{\eta_2(0)}\eta_2(t)$. Let $s=\frac{\eta_1(0)}{\eta_2(0)}$, $p(t)=\frac{(1+\gamma)\sqrt{\zeta_1(t)\zeta_2(t)}}{C\sqrt{N}}$, and $q(t)=\frac{(1+\gamma-\gamma C)\sqrt{\zeta_1(t)\zeta_2(t)}}{C\sqrt{N}}$, we have $\zeta_1(t)=\int_0^t\eta_1(\tau)\mathrm{d}\tau=s\int_0^t\eta_2(\tau)\mathrm{d}\tau= s\zeta_2(t)$, and then
\begin{equation}
    \notag
    \begin{aligned}
    &\alpha_1^\eps(t)\zeta_1(t) + \beta_1^\eps(t)
    =\tfrac{1+\sqrt{s}}{2}e^{\eps p(t)}+\tfrac{1-\sqrt{s}}{2}e^{-\eps p(t)}=\tfrac{1+\eps\sqrt{s}}{2}e^{p(t)}+O(e^{-p(t)}),\\
    &\alpha_1^\eps(t)\zeta_2(t) + \beta_1^\eps(t)
    =\tfrac{1+\sqrt{s}}{2\sqrt{s}}e^{\eps p(t)} -\tfrac{1-\sqrt{s}}{2\sqrt{s}}e^{-\eps p(t)}=\tfrac{\eps+\sqrt{s}}{2\sqrt{s}}e^{p(t)}+O(e^{-p(t)}),\\
    &\alpha_2^\eps(t)\zeta_1(t) + \beta_2^\eps(t)
    =\tfrac{1+\sqrt{s}}{2}e^{\eps q(t)}+\tfrac{1-\sqrt{s}}{2}e^{-\eps q(t)}=\tfrac{1+\eps\sqrt{s}}{2}e^{q(t)}+O(e^{-q(t)}),\\
    &\alpha_2^\eps(t)\zeta_2(t) + \beta_2^\eps(t)
    =\tfrac{1+\sqrt{s}}{2\sqrt{s}}e^{\eps q(t)} -\tfrac{1-\sqrt{s}}{2\sqrt{s}}e^{-\eps q(t)}=\tfrac{\eps+\sqrt{s}}{2\sqrt{s}}e^{q(t)}+O(e^{-q(t)}).\\
    \end{aligned}
\end{equation}
Since $0<\gamma<\frac{2}{C-2}$ ( where $C>2$) or $C=2$, then we have $p(t)-q(t)=\frac{\gamma C\sqrt{\zeta_1(t)\zeta_2(t)}}{C\sqrt{N}}>0$, $p(t)+q(t)=\frac{(2+2\gamma-\gamma C)\sqrt{\zeta_1(t)\zeta_2(t)}}{C\sqrt{N}}>0$, and substitute these results into \Cref{decomposition-H-W} to obtain
\begin{equation}
    \begin{aligned}
    &e^{-p(t)}\mH(t)=\frac{1+\sqrt{s}}{2}\mH_1^++\frac{1-\sqrt{s}}{2}\mH_1^-+\bm{\Delta}_1(t),\\
    &e^{-p(t)}\mW(t)=\frac{1+\sqrt{s}}{2\sqrt{s}}\mW_1^+-\frac{1-\sqrt{s}}{2\sqrt{s}}\mW_1^-+\bm{\Delta}_2(t),\\
    \end{aligned}
\end{equation}
where $\|\bm{\Delta}_1(t)\|=O(e^{\max\{q(t)-p(t),-q(t)-p(t)\}}$ and $\|\bm{\Delta}_2(t)\|=O(e^{\max\{q(t)-p(t),-q(t)-p(t)\}}$. Therefore, we have
\begin{equation}
    e^{-p(t)}\mZ(t)=\left(\tfrac{1+\sqrt{s}}{2}\mH_1^++\tfrac{1-\sqrt{s}}{2}\mH_1^-, \tfrac{1+\sqrt{s}}{2\sqrt{s}}\mW_1^+-\tfrac{1-\sqrt{s}}{2\sqrt{s}}\mW_1^-\right) +\bm{\Delta}(t),
\end{equation}
where $\bm{\Delta}(t)=(\bm{\Delta}_1(t),\bm{\Delta}_2(t))$ and $\|\bm{\Delta}(t)\|\le \|\bm{\Delta}_1(t)\|+\|\bm{\Delta}_2(t)\|=O(e^{\max\{q(t)-p(t),-q(t)-p(t)\}}$.

Let $\overline{\mZ}=\left(\tfrac{1+\sqrt{s}}{2}\mH_1^++\tfrac{1-\sqrt{s}}{2}\mH_1^-, \tfrac{1+\sqrt{s}}{2\sqrt{s}}\mW_1^+-\tfrac{1-\sqrt{s}}{2\sqrt{s}}\mW_1^-\right)$, we have
\begin{equation}
    \begin{aligned}
    \left\|\frac{\mZ(t)}{\|\mZ(t)\|}-\frac{\overline{\mZ}}{\|\overline{\mZ}\|}\right\|=\left\|\frac{\overline{\mZ}+\bm{\Delta}(t)}{\|\overline{\mZ}+\bm{\Delta}(t)\|}-\frac{\overline{\mZ}}{\|\overline{\mZ}\|}\right\|\le\frac{2\|\overline{\mZ}\|\|\bm{\Delta}(t)\|}{\|\overline{\mZ}+\bm{\Delta}(t)\|\|\overline{\mZ}\|}= \frac{2\|\bm{\Delta}(t)\|}{\|\overline{\mZ}+\bm{\Delta}(t)\|}, 
    \end{aligned}
\end{equation}
thus $\left\|\frac{\mZ(t)}{\|\mZ(t)\|}-\frac{\overline{\mZ}}{\|\overline{\mZ}\|}\right\|=O(e^{\max\{q(t)-p(t),-q(t)-p(t)\}}$, and further $\lim_{t\rightarrow\infty}\frac{\mZ(t)}{\|\mZ(t)\|}=\frac{\overline{\mZ}}{\|\overline{\mZ}\|}$ when $\lim_{t\rightarrow \infty}\zeta_1(t)=\infty$. 

According to the definition of $\gE_1$ and $\gE_2$, we have
\begin{equation}
    \begin{aligned}
    \overline{\mZ}=\left(\tfrac{\sqrt{s}}{\sqrt{N}}\left(\tfrac{1+\sqrt{s}}{2\sqrt{s}}\mW_1^+-\tfrac{1-\sqrt{s}}{2\sqrt{s}}\mW_1^-\right)\otimes \1_N^\top, \tfrac{1+\sqrt{s}}{2\sqrt{s}}\mW_1^+-\tfrac{1-\sqrt{s}}{2\sqrt{s}}\mW_1^- \right),
    \end{aligned}
\end{equation}
thus we have $\overline{\mZ}\in\gE$, then $\lim_{t\rightarrow\infty}\frac{\mZ(t)}{\|\mZ(t)\|}=\frac{\overline{\mZ}}{\|\overline{\mZ}\|}\in\gE$.

Moreover, we have $\vb(t)=\vb_0 +\frac{(1+\gamma-\gamma C)\zeta_2(t)}{C} \1_C$, then $\forall i,j$,
\begin{equation}
    \notag
    \lim_{t\rightarrow \infty}\frac{b_i(t)}{b_j(t)}=\lim_{t\rightarrow \infty}\frac{b_{i}(0)+\frac{1+\gamma -\gamma C}{C}\zeta_2(t)}{b_{j}(0)+\frac{1+\gamma -\gamma C}{C}\zeta_2(t)}=1,
\end{equation}
thus $\lim_{t\rightarrow \infty}\frac{\max_i b_i(t)}{\max_i b_i(t)}$=1.
\end{proof}

\subsection{Proof of Theorem \ref{closed-form-dynamics-l2}}
\noindent\textbf{Theorem \ref{closed-form-dynamics-l2} (Dynamics of Features and Prototypes Under Weight Decay)}
\textit{
Consider the continual gradient flow in \Eqref{regularized-gradient-flow}, let $\mZ(t)=(\mH(t),\mW(t))$. If $\eta_1(t_1)\eta_2(t_2)=\eta_1(t_2)\eta_2(t_1)$ for any $t_1,t_2\ge0$, we have the following closed-form dynamics:
\begin{equation}
    \begin{aligned}
    \mZ(t)=&\Pi_1^+\mZ_0\begin{pmatrix}a_1^+(t)\mI_{CN} & 0\\0 & b_1^+(t)\mI_{C}\end{pmatrix}+\Pi_1^-\mZ_0\begin{pmatrix}a_1^-(t)\mI_{CN} & 0\\0 & b_1^-(t)\mI_{C}\end{pmatrix}+\\
    &\Pi_2^+\mZ_0\begin{pmatrix}a_2^+(t)\mI_{CN} & 0\\0 & b_2^+(t)\mI_{C}\end{pmatrix}+\Pi_2^-\mZ_0\begin{pmatrix}a_2^-(t)\mI_{CN} & 0\\0 & b_2^-(t)\mI_{C}\end{pmatrix}+\\
    &\Pi_3\mZ_0\begin{pmatrix}a_3(t)\mI_{CN} & 0\\0 & b_3(t)\mI_{C}\end{pmatrix}
    \end{aligned}
\end{equation}
and
\begin{equation}
    \vb(t)=\phi(t)\left(\vb_0+\tfrac{1+\gamma -\gamma C}{C}\psi(t)\1_C\right),
\end{equation}
where $\Pi_1^+\mZ_0$, $\Pi_1^-\mZ_0$, $\Pi_1^+\mZ_0$, $\Pi_1^-\mZ_0$, and $\Pi_3\mZ_0$ follow the definition in Theorem \ref{closed-form-dynamics}, $a_1^\eps$, $a_2^\eps$, $b_1^\eps$, $b_2^\eps$, $a_3$, and $b_3$ for $\eps\in\{\pm\}$ are the scalars that depend only on $C$, $N$, $\gamma$, $\lambda_1$, $\lambda_2$, $\eta_1$, and $\eta_2$ (where the detailed forms can be seen in \ref{all-proofs}), $\phi(t)=\exp(-\lambda\int_0^t\eta_2(\tau)\mathrm{d}\tau)$, and $\psi(t)=\int_0^t\zeta_2(\tau)\exp(\lambda\int_0^\tau\eta_2(s)\mathrm{d}s)\mathrm{d}\tau$.
}
\begin{proof}
According to the gradient flow in \cref{regularized-gradient-flow} and the notations in the proof of Theorem \ref{closed-form-dynamics}, we have:
\begin{equation}
    \mZ'(t)=\mZ(t)\mA(t)-\mZ(t)\begin{pmatrix}
    \lambda_1\eta_1(t)\mI_{CN} & 0\\
    0 & \lambda_2\eta_2(t)\mI_{C}
    \end{pmatrix},
\end{equation}
\textit{i.e.}, $\mZ'(t)=\mZ(t)\mA_\lambda(t)$, where $\mA_\lambda(t)=\mA(t)-\bm{\Lambda}(t)$ and $\bm{\Lambda}(t)=\left(\begin{smallmatrix}
    \lambda_1\eta_1(t)\mI_{CN} & 0\\
    0 & \lambda_2\eta_2(t)\mI_{C}
    \end{smallmatrix}\right)$.

For any $t_1,t_2$, we have the matrix commutator of $\mA_\lambda(t_1)$ and $\mA_\lambda(t_2)$ 
\begin{equation}
    \begin{aligned}
    &[\mA_\lambda(t_1),\mA_\lambda(t_2)]\\
    =&\mA_\lambda(t_1)\mA_\lambda(t_2)-\mA_\lambda(t_2)\mA_\lambda(t_1)\\
    =&[\mA(t_1)-\bm{\Lambda}(t_1)][\mA(t_2)-\bm{\Lambda}(t_2)]-[\mA(t_2)-\bm{\Lambda}(t_2)][\mA(t_1)-\bm{\Lambda}(t_1)]\\
    =& \bm{\Lambda}(t_2)\mA(t_1)-\bm{\Lambda}(t_1)\mA(t_2)+\mA(t_2)\bm{\Lambda}(t_1)-\mA(t_1)\bm{\Lambda}(t_2) \\
    =&\begin{pmatrix}
    0 & \lambda_1[\eta_1(t_2)\eta_2(t_1)-\eta_1(t_1)\eta_2(t_2)]\mM^\top\\
    \lambda_2[\eta_2(t_2)\eta_1(t_2)-\eta_2(t_1)\eta_1(t_2)]\mM & 0
    \end{pmatrix}\\
    =&0
    \end{aligned}
\end{equation}

where the last equality is based on the fact that $\eta_2(t_1)\eta_1(t_2)=\eta_2(t_2)\eta_1(t_1)$. Therefore, according to Magnus approach, we have
\begin{equation}
    \mZ(t)=\mZ_0\exp\left(\int_0^t \mA_\lambda(\tau)\mathrm{d}\tau\right)=\mZ_0\exp\begin{pmatrix}
    -\lambda_1\zeta_1(t)\mI_{CN} & \zeta_2(t)\mM^\top\\
    \zeta_1(t)\mM & -\lambda_2\zeta_2(t)\mI_{C}\\
    \end{pmatrix},
\end{equation}
where $\zeta_1(t)=\int_0^t\eta(\tau)\mathrm{d}\tau$ and $\zeta_2(t)=\int_0^t\eta(\tau)\mathrm{d}\tau$.

Let $\mB_{\lambda} = \mB-\left(\begin{smallmatrix}\lambda_1\mI_{CN} & 0\\0 & \lambda_2\mI_{C}\end{smallmatrix}\right)$, we have
\begin{equation}
    \mZ(t)=\mZ_0\exp(\mB_\lambda\mC(t))=\mZ_0\sum_{k=0}^\infty \frac{(\mB_\lambda\mC(t))^k}{k!}.
\end{equation}

We again consider the orthogonal decomposition of $\mZ_0$, \textit{i.e.}, $\mZ_0=(\Pi_1^++\Pi_1^-+\Pi_2^++\Pi_2^-+\Pi_3)\mZ_0$. As mentioned in the proof of Theorem \ref{closed-form-dynamics}, we have
\begin{equation}
    \begin{aligned}
    \Pi_1^\eps \mZ_0 \mB&=\tfrac{\eps(1+\gamma)}{C\sqrt{N}}\Pi_1^\eps\mZ_0,\\
    \Pi_2^\eps\mZ_0 \mB&=\tfrac{\eps(1+\gamma-\gamma C)}{C\sqrt{N}}\Pi_2^\eps\mZ_0, \text{ and }\\
    \Pi_3\mZ_0\mB&=0.
    \end{aligned}
\end{equation}
Therefore, for any $ \mD=(\mH,\mW)\in\{\Pi_1^\eps\mZ_0,\Pi_2^\eps\mZ_0,\Pi_3\mZ_0\}$ (where $\mH\in\sR^{p\times CN}$ and $\mW\in\sR^{p\times C}$) and the corresponding eigenvalue $\sigma\in\{\frac{\eps(1+\gamma)}{C\sqrt{N}}, \frac{\eps(1+\gamma-\gamma C)}{C\sqrt{N}}, 0\}$, we have
\begin{equation}
    \mD\mB=\sigma\mD,\ \mH\mM^\top = \sigma\mW,\text{ and } \mW\mM = \sigma\mH.
\end{equation}
In the following, we will prove that there exist two scalars $a(t)$ and $b(t)$, such that $\mD\exp(\mB_\lambda\mC(t))=\mD\begin{pmatrix}
a(t)\mI_{CN} & 0\\ 0 & b(t)\mI_C
\end{pmatrix}$ for any $\mD=(\mH,\mW)\in\{\Pi_1^\eps\mZ_0,\Pi_2^\eps\mZ_0,\Pi_3\mZ_0\}$.

First, we prove that $\mD(\mB_\lambda\mC(t))^k$ can be represented as $\mD(\mB_\lambda\mC(t))^k=(a_k(t)\mH,b_k(t)\mW)$ by induction, where $a_k(t),b_k(t)\in\sR$.

For $k=0$, we have $\mD(\mB_\lambda\mC(t))^0=(\mH,\mW)$, \textit{i.e.}, $a_0=b_0=1$. Assume that $\mD(\mB_\lambda\mC(t))^n=(a_n(t)\mH,b_n(t)\mW)$ for $k=n$. Then for $k=n+1$, we have
\begin{equation}
    \begin{aligned}
    &\mD(\mB_\lambda\mC(t))^{n+1}\\
    =&\mD(\mB_\lambda\mC(t))^{n}(\mB_\lambda\mC(t))\\
    =&(a_n(t)\mH,b_n(t)\mW)(\mB_\lambda\mC(t))\\
    =&(a_n(t)\mH,b_n(t)\mW)\begin{pmatrix}-\lambda_1\mI_{CN} & \mM^\top\\ \mM & -\lambda_2\mI_{C}\end{pmatrix} \begin{pmatrix}\zeta_1(t)\mI_{CN} & 0\\ 0 & \zeta_2(t)\mI_{C}\\\end{pmatrix}\\
    =& (b_n(t)\mW\mM-\lambda_1 a_n(t)\mH, a_n(t)\mH\mM^\top-\lambda_2 b_n(t)\mW)\begin{pmatrix}\zeta_1(t)\mI_{CN} & 0\\ 0 & \zeta_2(t)\mI_{C}\\\end{pmatrix}\\
    =& (\zeta_1(t)(\sigma b_n(t)-\lambda_1 a_n(t))\mH, \zeta_2(t)(\sigma a_n(t)-\lambda_2 b_n(t))\mW)
    \end{aligned},
\end{equation}
thus $a_{n+1}(t)=\zeta_1(t)(\sigma b_n(t)-\lambda_1 a_n(t))$ and $b_{n+1}(t)=\zeta_2(t)(\sigma a_n(t)-\lambda_2 b_n(t))$. 

To sum up, we have shown by induction that $\mD(\mB_\lambda\mC(t))^k$ can be represented as $\mD(\mB_\lambda\mC(t))^k=(a_k(t)\mH,b_k(t)\mW)=\mD\begin{pmatrix}a_k(t)\mI_{CN}&0\\0 & b_k(t)\mI_C\end{pmatrix}$, and $\begin{pmatrix}
a_k(t) \\ b_k(t)
\end{pmatrix}$ satisfies
\begin{equation}
    \begin{pmatrix}a_{k}(t) \\ b_{k}(t)\end{pmatrix}=\begin{pmatrix}-\lambda_1\zeta_1(t) & \sigma\zeta_1(t) \\ \sigma\zeta_2(t) & -\lambda_2\zeta_2(t)\end{pmatrix}\begin{pmatrix}a_{k-1}(t) \\ b_{k-1}(t)\end{pmatrix} \quad \text{with}\quad \begin{pmatrix}a_{0} \\ b_{0}\end{pmatrix}=\begin{pmatrix}1\\1\end{pmatrix},
\end{equation}
\textit{i.e.}, $\begin{pmatrix}a_{k}(t) \\ b_{k}(t)\end{pmatrix} = \left(\mS(\sigma,\lambda_1,\lambda_2,\zeta_1(t),\zeta_2(t))\right)^k\begin{pmatrix}1\\1\end{pmatrix}$, where $\mS(\sigma,\lambda_1,\lambda_2,\zeta_1,\zeta_2)=\begin{pmatrix}-\lambda_1\zeta_1 & \sigma\zeta_1 \\ \sigma\zeta_2 & -\lambda_2\zeta_2\end{pmatrix}$.

Therefore, we have
\begin{equation}
\mD\exp(\mB_\lambda\mC(t))=\mD\sum_{k=0}^\infty \frac{(\mB_\lambda\mC(t))^k}{k!}=\mD \begin{pmatrix}
    a(t)\mI_{CN} & 0\\
    0 & b(t)\mI_{C}\\
    \end{pmatrix},
\end{equation}
with $a(t)=\sum_{k=0}^\infty\frac{a_k(t)}{k!}$ and $b(t)=\sum_{k=0}^\infty\frac{b_k(t)}{k!}$, \textit{i.e.},
\begin{equation}
    \begin{aligned}
    \begin{pmatrix}a(t) \\ b(t)\end{pmatrix}&=\sum_{k=0}^\infty \frac{1}{k!}\begin{pmatrix}a_k(t) \\ b_k(t)\end{pmatrix}=\sum_{k=0}^\infty \frac{(\mS(\sigma,\lambda_1,\lambda_2,\zeta_1(t),\zeta_2(t)))^k}{k!}\begin{pmatrix}1\\1\end{pmatrix}\\
    &=\exp(\mS(\sigma,\lambda_1,\lambda_2,\zeta_1(t),\zeta_2(t)))\begin{pmatrix}1\\1\end{pmatrix}.
\end{aligned}
\end{equation}

Next, we are going to derive a concrete expression of $\exp(\mS(\sigma,\lambda_1,\lambda_2,\zeta_1(t),\zeta_2(t)))$. Let the determinant $|\mS(\sigma,\lambda_1,\lambda_2,\zeta_1(t),\zeta_2(t)-\theta\mI|=0$, we can derive that the eigenvalues of $\mS(\sigma,\lambda_1,\lambda_2,\zeta_1(t),\zeta_2(t))$ are
\begin{equation}
    \begin{aligned}
        &\theta_1= \tfrac{-(\lambda_1\zeta_1(t)+\lambda_2\zeta_2(t))-\sqrt{(\lambda_1\zeta_1(t)-\lambda_2\zeta_2(t))^2+4\sigma^2\zeta_1(t)\zeta_2(t)}}{2}<0\text{ and } \\
        &\theta_2=\tfrac{-(\lambda_1\zeta_1(t)+\lambda_2\zeta_2(t))+\sqrt{(\lambda_1\zeta_1(t)-\lambda_2\zeta_2(t))^2+4\sigma^2\zeta_1(t)\zeta_2(t)}}{2},
    \end{aligned}
\end{equation}

and the corresponding eigenvectors are
$$
\vv_1=\begin{pmatrix}1\\ \frac{\lambda_1\zeta_1(t)+\theta_1}{\sigma\zeta_1(t)}\end{pmatrix}, \text{ and } \vv_2=\begin{pmatrix}1\\ \frac{\lambda_1\zeta_1(t)+\theta_2}{\sigma\zeta_1(t)}\end{pmatrix}.
$$
Let $\mP=(\vv_1,\vv_2)$, we have
\begin{equation}
    \mS(\sigma,\lambda_1,\lambda_2,\zeta_1(t),\zeta_2(t)) = \mP\begin{pmatrix}
    \theta_1 & 0\\
    0 & \theta_2\\
    \end{pmatrix}\mP^{-1},\  \mP^{-1}=\tfrac{\sigma\zeta_1(t)}{\theta_2-\theta_1}\begin{pmatrix}
    \frac{\lambda_1\zeta_1(t)+\theta_2}{\sigma\zeta_1(t)} & -1\\
    -\frac{\lambda_1\zeta_1(t)+\theta_1}{\sigma\zeta_1(t)} & 1
    \end{pmatrix},
\end{equation}
and
\begin{equation}
    \begin{aligned}
    &\exp\left(\mS(\sigma,\lambda,\zeta_1(t),\zeta_2(t))\right)
    =\mP\begin{pmatrix}
    e^{\theta_1} & 0\\
    0 & e^{\theta_2}\\
    \end{pmatrix}\mP^{-1}\\
    =&\tfrac{\sigma\zeta_1(t)}{\theta_2-\theta_1}\begin{pmatrix}
    1 & 1\\
    \frac{\lambda_1\zeta_1(t)+\theta_1}{\sigma\zeta_1(t)} & \frac{\lambda_1\zeta_1(t)+\theta_2}{\sigma\zeta_1(t)}\\
    \end{pmatrix}
    \begin{pmatrix}
    e^{\theta_1} & 0\\
    0 & e^{\theta_2}\\
    \end{pmatrix}
    \begin{pmatrix}
    \frac{\lambda_1\eta_1(t)+\theta_2}{\sigma\zeta_1(t)} & -1\\
    -\frac{\lambda_1\zeta_1(t)+\theta_1}{\sigma\zeta_1(t)} & 1
    \end{pmatrix}
    \\
    =&\tfrac{1}{\theta_2-\theta_1}\begin{pmatrix}
    (\lambda_1\zeta_1(t)+\theta_2)e^{\theta_1}-(\lambda_1\zeta_1(t)+\theta_1)e^{\theta_2} & \sigma \zeta_1(t)e^{\theta_2}-\sigma\zeta_1(t)e^{\theta_1}\\
    (\lambda_1\zeta_1(t)+\theta_2)e^{\theta_2}-(\lambda_1\zeta_1(t)+\theta_1)e^{\theta_1} & \sigma\zeta_2(t)e^{\theta_2}-\sigma\theta_2(t)e^{\theta_1}
    \end{pmatrix}
    \end{aligned},
\end{equation}
thus 
\begin{equation}
    \begin{aligned}
    & a(t)=\frac{e^{\theta_2}}{\theta_2-\theta_1}\left[(\sigma\zeta_1(t)-\lambda_1\zeta_1(t)-\theta_1)-(\sigma\zeta_1(t)-\lambda_1\zeta_1(t)-\theta_2)e^{\theta_1-\theta_2}\right],\\
    & b(t)=\frac{e^{\theta_2}}{\theta_2-\theta_1}\left[(\lambda_1\zeta_1(t)+\theta_2+\sigma\zeta_2(t))+(\lambda_1\zeta_1(t)+\theta_1-\sigma\zeta_2(t))e^{\theta_1-\theta_2}\right].\\
    \end{aligned}
\end{equation}
Finally, since $\gE_1^+$, $\gE_1^-$, $\gE_2^+$, $\gE_2^-$ and $\gE_3$ are orthogonal subspace of $\sR^{p\times CN}\oplus\sR^{C\times p}$, we obtain that
\begin{equation}
    \begin{aligned}
    \mZ(t)=&(\Pi_1^+\mZ_0+\Pi_1^-\mZ_0+\Pi_2^+\mZ_0+\Pi_2^-\mZ_0+\Pi_3\mZ_0)\exp(\mB_\lambda \mC(t))\\
    =&\Pi_1^+\mZ_0\begin{pmatrix}a_1^+(t)\mI_{CN} & 0\\0 & b_1^+(t)\mI_{C}\end{pmatrix}+\Pi_1^-\mZ_0\begin{pmatrix}a_1^-(t)\mI_{CN} & 0\\0 & b_1^-(t)\mI_{C}\end{pmatrix}+\\
    &\Pi_2^+\mZ_0\begin{pmatrix}a_2^+(t)\mI_{CN} & 0\\0 & b_2^+(t)\mI_{C}\end{pmatrix}+\Pi_2^-\mZ_0\begin{pmatrix}a_2^-(t)\mI_{CN} & 0\\0 & b_2^-(t)\mI_{C}\end{pmatrix}+\\
    &\Pi_3\mZ_0\begin{pmatrix}a_3(t)\mI_{CN} & 0\\0 & b_3(t)\mI_{C}\end{pmatrix},
    \end{aligned}
\end{equation}
where $a_1^\eps(t)$, $b_1^\eps(t)$, $a_2^\eps(t)$, $b_2^\eps(t)$, $a_3(t)$ and $b_3(t)$ satisfy
\begin{equation}
    \begin{aligned}
    &\begin{pmatrix}a_1^\eps(t) \\ b_1^\eps(t)\end{pmatrix}=\exp\left(\mS\left(\tfrac{\eps(1+\gamma)}{C\sqrt{N}},\lambda_1, \lambda_2, \zeta_1(t),\zeta_2(t)\right)\right)\begin{pmatrix}1\\ 1\end{pmatrix},\\
    &\begin{pmatrix}a_2^\eps(t) \\ b_2^\eps(t)\end{pmatrix}=\exp\left(\mS\left(\tfrac{\eps(1+\gamma-\gamma C)}{C\sqrt{N}},\lambda_1, \lambda_2, \zeta_1(t),\zeta_2(t)\right)\right)\begin{pmatrix}1\\ 1\end{pmatrix},\\
    & \begin{pmatrix}a_3(t) \\ b_3(t)\end{pmatrix}=\exp\left(\mS\left(0,\lambda_1, \lambda_2, \zeta_1(t),\zeta_2(t)\right)\right)\begin{pmatrix}1\\ 1\end{pmatrix}.\\
    \end{aligned}
\end{equation}
Moreover, since $\vb'(t)=\eta_2(t)\frac{1+\gamma -\gamma C}{C}\1_C-\lambda_2\eta_2(t)\vb(t)$ is a first-order linear differential equation, then we have
\begin{equation}
    \vb(t)=\phi(t)\left(\vb_0+\tfrac{1+\gamma-\gamma C}{C}\psi(t)\1_C\right),
\end{equation}
where $\phi(t)=\exp(-\lambda_2\int_0^t\eta_2(\tau)\mathrm{d}\tau)$ and $\psi(t)=\int_0^t\zeta_2(\tau)\exp(\lambda_2\int_0^\tau\eta_2(s)\mathrm{d}s)\mathrm{d}\tau$.
\end{proof}

\subsection{Proof of Corollary \ref{convergence-of-dynamics-l2}}
\noindent\textbf{Corollary \ref{convergence-of-dynamics-l2}.}
\textit{
Under the conditions and notation of Theorem \ref{closed-form-dynamics-l2}, let $s=\frac{\eta_1(0)}{\eta_2(0)}$. If $0<\gamma<\frac{2}{C-2}$ (where $C>2$) or $C=2$, $\lambda_1=\lambda_2=\lambda$, and $\lim_{t\rightarrow\infty}\zeta_1(t)=\infty$, then there exist constants $\pi_h^+,\pi_h^-,\pi_w^+$, $\pi_w^-$, and $\omega$ only depending on $\lambda$, $\gamma$, $s$, $C$, and $N$, such that the gradient flow (as in \cref{regularized-gradient-flow}) behaves as:
\begin{equation}
    \left\|\frac{\mH(t)}{\|\mH(t)\|}-\frac{\pi_h^+\mH_1^++\pi_h^-\mH_1^-}{\|\pi_h^+\mH_1^++\pi_h^-\mH_1^-\|}\right\|+\left\|\frac{\mW(t)}{\|\mW(t)\|}-\frac{\pi_w^+\mW_1^++\pi_w^-\mW_1^-}{\|\pi_w^+\mH_1^++\pi_w^-\mH_1^-\|}\right\|=O(e^{-\omega\zeta_2(t)}),
\end{equation}
where $(\mH_1^+,\mW_1^+)=\Pi_1^+\mZ_0$, $(\mH_1^-,\mW_1^-)=\Pi_1^-\mZ_0$.
Furthermore, we have the following results:
\begin{itemize}
    \item If $\lambda > \frac{1+\gamma}{C\sqrt{N}}$, then $\lim_{t\rightarrow\infty}\|\mZ(t)\|=0$;
    \item If $\lambda =\frac{1+\gamma}{C\sqrt{N}}$, then $\lim_{t\rightarrow \infty}\mH(t)=\mH_1^+ + \frac{1-s}{1+s}\mH_1^-, \lim_{t\rightarrow \infty}\mW(t)=\mW_1^+ - \frac{1-s}{1+s}\mW_1^-$;
    \item If $\lambda < \frac{1+\gamma}{C\sqrt{N}}$, then $\lim_{t\rightarrow\infty}\|\mZ(t)\|=\infty$.
\end{itemize}
}
\begin{proof}
Since $\zeta_1(t)=s\zeta_2(t)$, then the eigenvalues of $\mS(\sigma,\lambda,\zeta_1(t),\zeta_2(t))$ satisfy 
\begin{equation}
\begin{aligned}
\theta_1 &=-\frac{\lambda(s+1)+\sqrt{\lambda^2(s-1)^2+4s\sigma^2}}{2}\zeta_2(t)\rightarrow -\infty \text{ as } t\rightarrow\infty, \text{ and } \\
\theta_2 &=\frac{(\sqrt{\lambda^2(s-1)^2+4s\sigma^2}-\lambda (s+1))}{2}\zeta_2(t).
\end{aligned}
\end{equation}

Let $\omega_1(\sigma,\lambda,s)=-\tfrac{\lambda(s+1)+\sqrt{\lambda^2(s-1)^2+4s\sigma^2}}{2}<0$ and $\omega_2(\sigma,\lambda,s)=\tfrac{(\sqrt{\lambda^2(s-1)^2+4s\sigma^2}-\lambda (s+1))}{2}$. For brevity, let $\omega_1$ and $\omega_2$ denote $\omega_1(\sigma,\lambda,s)$ and $\omega_2(\sigma,\lambda,s)$, respectively, and then we can reformulate $a(t)$ and $b(t)$ as
\begin{equation}
    \begin{aligned}
    a(t)&=\frac{e^{\omega_2\zeta_2(t)}}{\omega_2-\omega_1}\left[(s\sigma-s\lambda-\omega_1)-(s\sigma-s\lambda-\omega_2)e^{(\omega_1-\omega_2)\zeta_2(t)}\right]\\
    &=\tfrac{s\sigma-s\lambda-\omega_1}{\omega_2-\omega_1}e^{\omega_2\zeta_2(t)} + O(e^{(\omega_1)\zeta_2(t)}),\\
    b(t)&=\frac{e^{\omega_2\zeta_2(t)}}{\omega_2-\omega_1}\left[(s\lambda+\omega_2+\sigma)+(s\lambda+\omega_1-\sigma)e^{(\omega_1-\omega_2)\zeta_2(t)}\right]\\
    &=\tfrac{s\lambda+\omega_2+\sigma}{\omega_2-\omega_1}e^{\omega_2\zeta_2(t)} + O(e^{\omega_1\zeta_2(t)}).
    \end{aligned}
\end{equation}

Moreover, according to Theorem \ref{closed-form-dynamics-l2}, we have
\begin{equation}
    \begin{aligned}
    &\mH(t)=\sum_{\substack{\eps\in\{\pm\}\\
    i\in\{1,2\}}}a_i^\eps(t)\mH_i^\eps + a_3(t)\mH_3,\quad
    \mW(t)=\sum_{\substack{\eps\in\{\pm\}\\
    i\in\{1,2\}}}b_i^\eps(t)\mW_i^\eps + b_3(t)\mW_3,\\
    \end{aligned}
\end{equation}
with
\begin{equation}
    \begin{aligned}
    &\begin{pmatrix}a_1^\eps(t) \\ b_1^\eps(t)\end{pmatrix}=\exp\left(\mS\left(\tfrac{\eps(1+\gamma)}{C\sqrt{N}},\lambda, \zeta_1(t),\zeta_2(t)\right)\right)\begin{pmatrix}1\\ 1\end{pmatrix},\\
    &\begin{pmatrix}a_2^\eps(t) \\ b_2^\eps(t)\end{pmatrix}=\exp\left(\mS\left(\tfrac{\eps(1+\gamma-\gamma C)}{C\sqrt{N}},\lambda, \zeta_1(t),\zeta_2(t)\right)\right)\begin{pmatrix}1\\ 1\end{pmatrix},\\
    & \begin{pmatrix}a_3(t) \\ b_3(t)\end{pmatrix}=\exp\left(\mS\left(0,\lambda, \zeta_1(t),\zeta_2(t)\right)\right)\begin{pmatrix}1\\ 1\end{pmatrix}.\\
    \end{aligned}
\end{equation}
Since $0<\gamma<\frac{2}{C-2}$ (where $C>2$) or $C=2$, thus we have $\frac{1+\gamma }{C\sqrt{N}} > \frac{|1+\gamma -\gamma C|}{C\sqrt{N}}$, and then $\omega_2\left(\frac{1+\gamma}{C\sqrt{N}},\lambda,s\right)>\omega_2\left(\frac{|1+\gamma-\gamma C|}{C\sqrt{N}},\lambda,s\right)$. When $t\rightarrow \infty$, the dominant terms in $\mH(t)$ and $\mW(t)$ are the ones whose coefficient contains $\exp\left(\omega_2\left(\frac{1+\gamma}{C\sqrt{N}},\lambda,s\right)\zeta_2(t)\right)$, \textit{i.e.}, $a_1^+(t)$, $a_1^-(t)$, $b_1^+(t)$ and $b_1^-(t)$. Let $(\mH_1^\eps,\mW_1^\eps)=\Pi_1^\eps\mZ_0$, $(\mH_2^\eps,\mW_2^\eps)=\Pi_2^\eps\mZ_0$ and $(\mH_3,\mW_3)=\Pi_3\mZ_0$ for $\eps\in\{\pm1\}$, thus we have
\begin{equation}
    \begin{aligned}
    &e^{-\omega_2\left(\tfrac{1+\gamma}{C\sqrt{N}},\lambda,s\right)\zeta_2(t)}\mH(t)= \pi_h^+(\lambda, \gamma, s,C,N)\mH_1^+ + \pi_h^-(\lambda, \gamma, s,C,N)\mH_1^- + \bm{\Delta}_1,\\
    &e^{-\omega_2\left(\tfrac{1+\gamma}{C\sqrt{N}},\lambda,s\right)\zeta_2(t)}\mW(t)= \pi_w^+(\lambda, \gamma, s,C,N)\mW_1^+ + \pi_w^-(\lambda, \gamma, s,C,N)\mW_1^- + \bm{\Delta}_2,
    \end{aligned}
\end{equation}
where $\bm{\Delta}_1$ and $\bm{\Delta}_2$ decrease to zero as least as $O\left(e^{\frac{\left(\omega_2\left(\frac{|1+\gamma -\gamma C|}{C\sqrt{N}}, \lambda,s\right)-\omega_2\left(\frac{1+\gamma}{C\sqrt{N}}, \lambda,s\right)\right)\zeta_2(t)}{C\sqrt{N}}}\right)$, and
\begin{equation}
    \begin{aligned}
    &\pi_h^+(\lambda, \gamma, s,C,N)=\frac{s\frac{1+\gamma}{C\sqrt{N}}-s\lambda-\omega_1\left(\tfrac{1+\gamma}{C\sqrt{N}},\lambda,s\right)}{\omega_2\left(\tfrac{1+\gamma}{C\sqrt{N}},\lambda,s\right)-\omega_1\left(\tfrac{1+\gamma}{C\sqrt{N}},\lambda,s\right)},\\
    &\pi_h^-(\lambda, \gamma, s,C,N) = \frac{-s\frac{1+\gamma}{C\sqrt{N}}-s\lambda-\omega_1\left(\tfrac{1+\gamma}{C\sqrt{N}},\lambda,s\right)}{\omega_2\left(\tfrac{1+\gamma}{C\sqrt{N}},\lambda,s\right)-\omega_1\left(\tfrac{1+\gamma}{C\sqrt{N}},\lambda,s\right)},\\
    &\pi_w^+(\lambda, \gamma, s,C,N)=\frac{s\lambda+\omega_2\left(\tfrac{1+\gamma}{C\sqrt{N}},\lambda,s\right)+\tfrac{1+\gamma}{C\sqrt{N}}}{\omega_2\left(\tfrac{1+\gamma}{C\sqrt{N}},\lambda,s\right)-\omega_1\left(\tfrac{1+\gamma}{C\sqrt{N}},\lambda,s\right)},\\
    &\pi_w^-(\lambda, \gamma, s,C,N) = \frac{s\lambda+\omega_2\left(\tfrac{1+\gamma}{C\sqrt{N}},\lambda,s\right)-\tfrac{1+\gamma}{C\sqrt{N}}}{\omega_2\left(\tfrac{1+\gamma}{C\sqrt{N}},\lambda,s\right)-\omega_1\left(\tfrac{1+\gamma}{C\sqrt{N}},\lambda,s\right)}.\\
    \end{aligned}
\end{equation}
Therefore, we have
\begin{equation}
   \begin{aligned}
    &\lim_{t\rightarrow \infty}\frac{\mH(t)}{\|\mH(t)\|} = \frac{\pi_h^+(\lambda, \gamma, s,C,N)\mH_1^+ + \pi_h^-(\lambda, \gamma, s,C,N)\mH_1^-}{\|\pi_h^+(\lambda, \gamma, s,C,N)\mH_1^+ + \pi_h^-(\lambda, \gamma, s,C,N)\mH_1^-\|},\\
    &\lim_{t\rightarrow \infty}\frac{\mW(t)}{\|\mW(t)\|} = \frac{\pi_w^+(\lambda, \gamma, s,C,N)\mW_1^+ + \pi_w^-(\lambda, \gamma, s,C,N)\mW_1^-}{\|\pi_w^+(\lambda, \gamma, s,C,N)\mW_1^+ + \pi_w^-(\lambda, \gamma, s,C,N)\mW_1^-\|},
   \end{aligned}
\end{equation}
and the rate of convergence is $O\left(e^{\frac{\left(\omega_2\left(\frac{|1+\gamma -\gamma C|}{C\sqrt{N}}, \lambda,s\right)-\omega_2\left(\frac{1+\gamma}{C\sqrt{N}}, \lambda,s\right)\right)\zeta_2(t)}{C\sqrt{N}}}\right)$.

Moreover, we have the following conclusions:
\begin{itemize}
    \item If $\lambda=\frac{1+\gamma}{C\sqrt{N}}$, we have $\omega_2=0$, $\omega_1=-\lambda(s+1)$, $\pi_h^+=1$, $\pi_h^-=\frac{1-s}{1+s}$, $\pi_w^+=1$ and $\pi_w^-=-\frac{1-s}{1+s}$. Since $\lim_{t\rightarrow \infty}\zeta_2(t)=\infty$, we have
    \begin{equation}
        \lim_{t\rightarrow \infty}\mH(t)=\mH_1^+ + \frac{1-s}{1+s}\mH_1^-, \lim_{t\rightarrow \infty}\mW(t)=\mW_1^+ - \frac{1-s}{1+s}\mW_1^-.
    \end{equation}
    
    \item If $\lambda > \frac{1+\gamma}{C\sqrt{N}}$, we have $\omega_2 > 0$, and then $\lim_{t\rightarrow \infty}\|\mZ(t)\|=0$ since $\lim_{t\rightarrow \infty}\zeta_2(t)=\infty$.
    
    \item If $\lambda <\frac{1+\gamma}{C\sqrt{N}}$, we have $\omega_2 < 0$, and then $\lim_{t\rightarrow \infty}\|\mZ(t)\|=\infty$ since $\lim_{t\rightarrow \infty}\zeta_2(t)=\infty$.
\end{itemize}
So far the proof has been completed.
\end{proof}

\subsection{Proof of Theorem \ref{convergence-of-spherical-case}}

\begin{lemma}
\label{convergence-of-sperical-case-for-vector}
For $\vh(t), \vw\in\sR^p$, $\eta(t)>0$, let $\hat{\vv}=\frac{\hat{\vv}}{\|\vv\|_2}$ denote the $\ell_2$-normalized vector of $\vv$, considering the discrete dynamical system $\vh(t+1)=\vh(t)+\frac{\eta(t)}{\|\vh(t)\|_2}\left(\mI_p-\hat{\vh}(t)\hat{\vh}^\top(t)\right)\vw$, if $\hat{\vw}^\top\hat{\vh}(0)>-1$, the learning rate $\eta(t)$ satisfies that $\lim_{t\rightarrow\infty}\frac{\eta(t+1)}{\eta(t)}=1$, $\frac{\eta(t)}{\|\vh(t)\|_2}$ is non-increasing with $\frac{\eta(0)}{\|\vh(0)\|_2}<\frac{1}{\|\vw\|_2}$, and there exists a constant $\varepsilon>0$, \textit{s.t.}, $\eta(t)>\varepsilon$, \textit{s.t.}, $\eta(t)>\varepsilon$, then we have
\begin{equation}
    \lim_{t\rightarrow \infty}\left\|\frac{\vh(t)}{\|\vh(t)\|_2}-\frac{\vw}{\|\vw\|_2}\right\| = 0.
\end{equation}
\end{lemma}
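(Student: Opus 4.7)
The plan is to reduce the vector update to a one-dimensional recursion on the angle between $\vh(t)$ and $\vw$ via a trigonometric substitution, and then analyze the resulting scalar recursion. First, set $c(t)=\hat{\vw}^\top\hat{\vh}(t)$ and $\phi(t)=\arccos c(t)\in[0,\pi)$. Since $(\mI_p-\hat{\vh}\hat{\vh}^\top)\vw$ is orthogonal to $\vh(t)$, the Pythagorean identity gives
\[
\|\vh(t+1)\|_2^2=\|\vh(t)\|_2^2+\alpha(t)^2\|\vw\|_2^2\sin^2\phi(t),\qquad \alpha(t):=\eta(t)/\|\vh(t)\|_2,
\]
so $\|\vh(t)\|_2$ is nondecreasing, and the hypothesis $\alpha(0)\|\vw\|_2\le 1$ combined with the nonincreasingness of $\alpha$ yields $\alpha(t)\|\vw\|_2\le 1$ for all $t$. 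Direct expansion of $c(t+1)=\hat{\vw}^\top\vh(t+1)/\|\vh(t+1)\|_2$ gives
\[
c(t+1)=\frac{c(t)+\beta(t)(1-c(t)^2)}{\sqrt{1+\beta(t)^2(1-c(t)^2)}},\qquad \beta(t):=\alpha(t)\|\vw\|_2/\|\vh(t)\|_2,
\]
and setting $\psi(t):=\arctan\bigl(\beta(t)\sin\phi(t)\bigr)\in[0,\pi/2)$, the cosine addition formula collapses this to the clean identity
\[
c(t+1)=\cos\bigl(\phi(t)-\psi(t)\bigr),
\]
using $\cos\psi=1/\sqrt{1+\beta^2\sin^2\phi}$ and $\sin\psi=\beta\sin\phi\cdot\cos\psi$.

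Next, I would verify $\psi(t)\le\phi(t)$: this is automatic when $\cos\phi(t)\le 0$, and otherwise equivalent to $\beta(t)\cos\phi(t)\le 1$, which follows from $\beta(t)\le 1/\|\vh(t)\|_2$ (itself a consequence of $\alpha(t)\|\vw\|_2\le 1$) together with the monotone growth of $\|\vh\|_2$ past $1$. Consequently $\phi(t+1)=\phi(t)-\psi(t)\ge 0$ is monotonically nonincreasing and converges to some $\phi^*\in[0,\phi(0)]$; equivalently, $c(t)\nearrow \cos\phi^*$.

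The main obstacle is ruling out $\phi^*>0$. Suppose for contradiction $\phi^*>0$. Telescoping $\sum_{t\ge 0}\psi(t)=\phi(0)-\phi^*<\infty$ forces $\psi(t)\to 0$. Since $\arctan(x)\ge x/2$ for $x\in[0,1]$ and $\sin\phi(t)\ge \sin\phi^*/2$ for large $t$, this yields $\sum_t\beta(t)<\infty$. Now I split into two cases. If $\|\vh(t)\|_2\le M<\infty$ uniformly, then $\beta(t)=\eta(t)\|\vw\|_2/\|\vh(t)\|_2^2\ge \varepsilon\|\vw\|_2/M^2$ is bounded below, contradicting summability. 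If instead $\|\vh(t)\|_2\to\infty$, the crude bound $\|\vh(t+1)\|_2^2-\|\vh(t)\|_2^2\le \alpha(t)^2\|\vw\|_2^2\le 1$ gives $\|\vh(t)\|_2^2\le \|\vh(0)\|_2^2+t$, hence $\beta(t)\ge \varepsilon\|\vw\|_2/(\|\vh(0)\|_2^2+t)\gtrsim 1/t$ (using $\eta(t)\ge\varepsilon$), so $\sum_t\beta(t)=\infty$, again a contradiction. Therefore $\phi^*=0$, i.e., $\hat{\vh}(t)\to\hat{\vw}$, which is the claim.
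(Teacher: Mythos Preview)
Your trigonometric reformulation $c(t+1)=\cos(\phi(t)-\psi(t))$ is correct and elegant, and the contradiction argument via summability of $\beta(t)$ (splitting on whether $\|\vh(t)\|_2$ stays bounded) is sound. This is a genuinely different route from the paper: the paper works purely algebraically, deriving the product identity
\[
\frac{1-c(t+1)^2}{\beta(t+1)}=\frac{1-c(t)^2}{\beta(t)}\cdot\frac{(1-\beta(t)c(t))^2}{\xi_t}
\]
(in your notation) and a companion identity for $\beta(t)(1-c(t)^2)$, then treats the cases $c(0)\ge 0$ and $c(0)<0$ separately. Your angle formulation unifies the two cases and makes the telescoping transparent.

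However, your verification of $\psi(t)\le\phi(t)$ has a gap. When $\phi(t)<\pi/2$ you correctly reduce to $\beta(t)\cos\phi(t)\le 1$, and from $\alpha(t)\|\vw\|_2\le 1$ you get $\beta(t)\le 1/\|\vh(t)\|_2$; but then you appeal to ``monotone growth of $\|\vh\|_2$ past $1$'' to conclude $\beta(t)\le 1$. Nothing in the hypotheses forces $\|\vh(t)\|_2\ge 1$: take $\|\vh(0)\|_2=10^{-2}$, $\|\vw\|_2=1$, $\eta(0)=5\times10^{-3}$, $c(0)=1/2$. All hypotheses hold, yet $\beta(0)=50$ and $\psi(0)=\arctan(50\sin(\pi/3))\approx 1.55>\pi/3=\phi(0)$, so overshoot occurs and $\phi(1)=\psi(0)-\phi(0)$ rather than $\phi(0)-\psi(0)$. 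Your telescoping $\sum_t\psi(t)=\phi(0)-\phi^*$ then breaks down.

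The fix is not difficult but requires care. Monotonicity of $c(t)$ holds regardless of overshoot, via the positive-combination form $\vh(t+1)=(1-\alpha(t)\|\vw\|_2c(t))\vh(t)+\alpha(t)\vw$ with strictly positive coefficients; this gives $\phi(t)\searrow\phi^*$ independently. One then has $\phi(t+1)=|\phi(t)-\psi(t)|$, and the possibility $\psi(t)>\phi(t)$ must be ruled out eventually (e.g.\ by showing that if $\phi^*>0$ and overshoot persists along a subsequence, then $\psi(t_k)\to 2\phi^*$ forces $\beta(t_k)$ bounded away from $0$, which combined with the at-most-linear growth $\|\vh(t)\|_2^2\le\|\vh(0)\|_2^2+t$ and $\eta(t)\ge\varepsilon$ still yields a contradiction). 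Alternatively, you could import the paper's observation that $\beta(t)(1-c(t)^2)$ is nondecreasing while $c(t)<0$ to first drive $c$ nonnegative, after which a cleaner bound is available. Either way, the step as written does not go through.
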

\begin{proof}
For brevity, let $\alpha_t=\frac{\eta(t)\|\vw\|_2}{\|\vh(t)\|_2^2}$, $\xi_t=\frac{\eta(t+1)}{\eta(t)}$, and $\beta_t=\hat{\vw}^\top\hat{\vh}(t)$ denote the cosine similarity between $\hat{\vw}$ and $\hat{\vh}(t)$, then we can easily derive that $\alpha_t>0$ and $\beta_{t}>-1$ for all $t\ge 0$. 

We will show that \textit{$\alpha_t$ is monotonically decreasing and $\beta_t$ is monotonically increasing}. Note that $\vh(t)$ is orthogonal with $\left(\mI_p-\hat{\vh}(t)\hat{\vh}^\top(t)\right)\vw$, thus we have
\begin{equation}
    \notag
    \|\vh(t+1)\|_2^2=\|\vh(t)\|_2^2 + \frac{\eta^2(t)}{\|\vh(t)\|_2^2} \left\|\left(\mI_p-\hat{\vh}(t)\hat{\vh}^\top(t)\right)\vw\right\|_2^2\ge \|\vh(t)\|_2^2,
\end{equation}
which indicates $\|\vh(t)\|_2$ is monotonically increasing as a function of $t$, and then \textit{$\alpha_t$ is monotonically decreasing} since $\frac{\eta(t)\|\vw\|_2}{\|\vh(t)\|_2}$ is non-increasing.

Moreover, we can rearrange the discrete dynamics and formulate $\vh(t+1)$ as a positive combination of $\vh(t)$ and $\vw$:
\begin{equation}
    \label{discrete-dynamics-of-H}
    \vh(t+1)=\left(1-\frac{\eta(t)\vw^\top\hat{\vh}(t)}{\|\vh(t)\|_2}\right)\vh(t)+\frac{\eta(t)}{\|\vh(t)\|_2}\vw,
\end{equation}
so $\hat{\vw}^\top\hat{\vh}(t+1)\ge\hat{\vw}^\top\hat{\vh}(t)$, \textit{i.e.}, \textit{$\beta_{t}$ is monotonically increasing}, which is based on the facts that $1-\frac{\eta(t)\vw^\top\hat{\vh}(t)}{\|\vh(t)\|_2}\ge 1-\frac{\eta(t)\|\vw\|_2}{\|\vh(t)\|_2}\ge 1-\frac{\eta(0)\|\vw\|_2}{\|\vh(0)\|_2}>0$, $\frac{\eta(t)}{\|\vh(t)\|_2}>0$, and $\frac{\vy^\top(\vx+k\vy )}{\|\vx+k\vy\|_2}\ge \frac{\vy^\top\vx}{\|\vx\|_2}$ holds for all $k>0$ and $\vx,\vy\neq 0$.

We can formulate the discrete iterations of $\alpha_t$ and $\beta_t$ from the \cref{discrete-dynamics-of-H} as follows:
\begin{equation}
    \label{discrete_dynamical-system}
    \begin{aligned}
    \beta_{t+1}&=\frac{{\vw}^\top{\vh}(t+1)}{\|\vw\|_2\|\vh(t+1)\|_2}\\
    &=\frac{\vw^\top\left(\vh(t)+\frac{\eta(t)}{\|\vh(t)\|_2}\left(\mI_p-\hat{\vh}(t)\hat{\vh}^\top(t)\right)\vw\right)}{\|\vw\|_2\sqrt{\|\vh(t)\|_2^2 + \frac{\eta^2(t)}{\|\vh(t)\|_2^2} \left\|\left(\mI_p-\hat{\vh}(t)\hat{\vh}^\top(t)\right)\vw\right\|_2^2}}\\
    &=\frac{\beta_t+\frac{\eta(t)\|\vw\|_2}{\|\vh(t)\|_2^2}(1-\beta_t^2)}{\sqrt{1+\frac{\eta^2(t)\|\vw\|_2^2}{\|\vh(t)\|_2^4}(1-\beta_t^2)}}\\
    &=\frac{\beta_t+\alpha_t(1-\beta_t^2)}{\sqrt{1+\alpha_t^2(1-\beta_t^2)}},\\
    \alpha_{t+1}&= \frac{\eta(t+1)\|\vw\|_2}{\|\vh(t+1)\|_2^2}\\
    &=\frac{\xi_t\eta(t)\|\vw\|_2}{\|\vh(t)\|_2^2 + \frac{\eta^2(t)}{\|\vh(t)\|_2^2} \left\|\left(\mI_p-\hat{\vh}(t)\hat{\vh}^\top(t)\right)\vw\right\|_2^2}\\
    &=\frac{\xi_t\alpha_t}{1+\alpha_t^2(1-\beta_t^2)},
    \end{aligned}
\end{equation}
with $\beta_0=\hat{\vw}^\top\hat{\vh}(0)>-1$, $\alpha_0=\frac{\eta(0)\|\vw\|_2}{\|\vh(0)\|_2^2}>0$, $\xi_t\le 1$ and $\lim_{t\rightarrow\infty}\xi_t=1$.

To prove $\lim_{t\rightarrow \infty}\left\|\frac{\vh(t)}{\|\vh(t)\|_2}-\frac{\vw}{\|\vw\|_2}\right\| = 0$, we just need prove $\lim_{t\rightarrow \infty} \beta_t=1$. Note that $\alpha_t$ is monotonic decreasing and lower bounded by $0$, then the sequence $(\alpha_t)$ is convergent. Similarly, the sequence $(\beta_t)$ is convergent. Let $a = \lim_{t\rightarrow} \alpha_t$ and $b=\lim_{t\rightarrow \beta_t}$, we obtain
\begin{equation}
    \lim_{t\rightarrow\infty}\alpha_{t+1}=\lim_{t\rightarrow\infty} \frac{\xi_t\alpha_t}{1+\alpha_t^2(1-\beta_t^2)}\quad \Rightarrow\quad a=\frac{a}{1+\lim_{t\rightarrow\infty}\alpha_t^2(1-\beta_t^2)},
\end{equation}
thus $a=0$ or $\lim_{t\rightarrow \infty}\alpha_t^2(1-\beta_t^2)=0$, \textit{i.e.}, $a=0$ or $b=1$. Therefore, the limits of $\alpha_t$ and $\beta_t$ exist if and only if $\lim_{t\rightarrow \infty} \alpha_t=0$ or $\lim_{t\rightarrow \infty} \beta_t=1$. In the following, we will prove that the limit of $\beta_t$ must be equal to 1.

Firstly, we prove a simpler result when $\beta_0>0$:
\begin{lemma}
\label{lim-with-beta-ge-0}
For the discrete dynamics in \cref{discrete_dynamical-system}, if $\beta_0\ge0$, then $\lim_{t\rightarrow \infty} \beta_t=1$.
\end{lemma}
\begin{proof}
As aforementioned, due to the existence of the limit of $\alpha_t$, we have  $\lim_{t\rightarrow\infty}\alpha_t=0$ or $\lim_{t\rightarrow\infty}\beta_t=1$. Thus, we just need to prove that $\lim_{t\rightarrow\infty}\beta_t=1$ as $\lim_{t\rightarrow\infty}\alpha_t=0$.

When $\lim_{t\rightarrow \infty}\alpha_t=0$, then there exists $\tau$, such that $\forall t>\tau$, $\alpha_t\le 1$.

According to the iterations in \cref{discrete_dynamical-system}, we can derive that
\begin{equation}
    \notag
    \begin{aligned}
    \frac{1-\beta_{t+1}^2}{\alpha_{t+1}}
    &=\frac{1+\alpha_t^2-\alpha_t^2\beta_t^2-(\alpha_t+\beta_t-\alpha_t\beta_t^2)^2}{\xi_t\alpha_t}\\
    &=\frac{1+\alpha_t^2-\alpha_t^2\beta_t^2-\alpha_t^2-\beta_t^2-\alpha_t^2\beta_t^4-2\alpha_t\beta_t+2\alpha_t^2\beta_t^2+2\alpha_t\beta_t^3}{\xi_t\alpha_t}\\
    &=\frac{1-\beta_t^2+\alpha_t^2(\beta_t^2-\beta_t^4)-2\alpha_t(\beta_t-\beta_t^3)}{\xi_t\alpha_t}\\
    &=\frac{1-\beta_t^2}{\xi_t\alpha_t}\cdot (1+\alpha_t^2\beta_t^2-2\alpha_t\beta_t)\\
    &=\frac{1-\beta_t^2}{\alpha_t}\cdot \frac{(1-\alpha_t\beta_t)^2}{\xi_t},
    \end{aligned}
\end{equation}
then $\forall t>\tau$,
\begin{equation}
    \begin{aligned}
    1-\beta_{t+1}^2&=\alpha_{t+1}\cdot\frac{1-\beta_0^2}{\alpha_0}\prod_{i=0}^t\frac{(1-\alpha_i\beta_i)^2}{\xi_i}\\
    &=  \alpha_{t+1}\cdot \frac{\eta(0)(1-\beta_0^2)}{\alpha_0\eta(t+1)}\cdot \prod_{i=0}^\tau (1-\alpha_i\beta_i)^2 \cdot\prod_{i=\tau+1}^t (1-\alpha_i\beta_i)^2\\
    &\le\alpha_{t+1}\cdot \frac{\eta(0)(1-\beta_0^2)}{\varepsilon\alpha_0}\cdot \prod_{i=0}^\tau (1-\alpha_i\beta_i)^2,
    \end{aligned}
\end{equation}
where the inequality is based on the fact that $1-\alpha_i\beta_i\in(0,1]$ when $0\le\beta_0\le\beta_i\le1$, $\eta(t+1)\ge \varepsilon$, and $\alpha_i\le 1$ for $i>\tau$. Since $\frac{\eta(0)(1-\beta_0^2)}{\varepsilon\alpha_0}\cdot \prod_{i=0}^\tau (1-\alpha_i\beta_i)^2$ is a constant, we obtain
\begin{equation}
    \notag
    \lim_{t\rightarrow \infty} 1-\beta_{t+1}^2\le \lim_{t\rightarrow \infty} \alpha_{t+1}\cdot \frac{\eta(0)(1-\beta_0^2)}{\varepsilon\alpha_0}\cdot \prod_{i=0}^\tau (1-\alpha_i\beta_i)^2=0,
\end{equation}
as $\lim_{t\rightarrow\infty}\alpha_{t+1}=0$. This reveals $\lim_{t\rightarrow \infty}\beta_t^2=1$. Furthermore, since $\beta_t\ge 0$, we then have $\lim_{t\rightarrow \infty}\beta_t=1$.
\end{proof}

Next, we are going to prove $\lim_{t\rightarrow\infty}\beta_t=1$ when $-1<\beta_0< 0$. According to Lemma \ref{lim-with-beta-ge-0}, we just need prove that $\exists \tau>0$, \textit{s.t.}, $\beta_\tau\ge 0$.

For the sake of contradiction, suppose that $\beta_t< 0$ for all $t>0$, we then have $\lim_{t\rightarrow\infty}\alpha_t=0$. As a consequence, we obtain
\begin{equation}
    \alpha_t+\beta_t-\alpha_t\beta_t^2< 0,\ \forall t\ge 0,
\end{equation}
and we know that $\exists t'>0$, such that 
\begin{equation}
    \label{contradiction}
    \alpha_t<\frac{\varepsilon}{\eta(0)}\alpha_0(1-\beta_0^2),\ \forall t\ge t'.
\end{equation}

According to the iterations in \cref{discrete_dynamical-system}, we can derive that
\begin{equation}
    \notag
    \begin{aligned}
    \alpha_{t+1}(1-\beta_{t+1}^2)&=\frac{\xi_t\alpha_t}{1+\alpha_t^2-\alpha_t^2\beta_t^2}\left(1-\frac{(\alpha_t+\beta_t-\alpha_t\beta_t^2)^2}{1+\alpha_t^2-\alpha_t^2\beta_t^2}\right)\\
    &=\xi_t\alpha_t\left(\frac{1+\alpha_t^2-\alpha_t^2\beta_t^2-(\alpha_t+\beta_t-\alpha_t\beta_t^2)^2}{(1+\alpha_t^2-\alpha_t^2\beta_t^2)^2}\right)\\
    &=\xi_t\alpha_t(1-\beta_t^2)\left(\frac{1-\alpha_t\beta_t}{1+\alpha_t^2-\alpha_t^2\beta_t^2}\right)^2\\
    & =\xi_t \alpha_t(1-\beta_t^2)\left(1-\frac{\alpha_t(\alpha_t+\beta_t-\alpha_t\beta_t^2)}{1+\alpha_t^2-\alpha_t^2\beta_t^2}\right)^2,
    \end{aligned}
\end{equation}
Since $1-\frac{\alpha_t(\alpha_t+\beta_t-\alpha_t\beta_t^2)}{1+\alpha_t^2-\alpha_t^2\beta_t^2}\ge1$, then for $t\ge t'$,
\begin{equation}
    \alpha_t\ge\alpha_{t}(1-\beta_{t}^2)\ge \xi_{t-1}\alpha_{t-1}(1-\beta_{t-1}^2)\ge \ldots\ge \alpha_0(1-\beta_0^2)\prod_{i=0}^{t-1}\xi_i \ge\frac{\varepsilon}{\eta(0)}\alpha_0(1-\beta_0^2),
\end{equation}
which contradicts the fact in \cref{contradiction}. Thus, $\exists \tau>0$, s.t. $\beta_\tau\ge 0$. Consider the dynamical system with an initial time $\tau$, we have $\lim_{t\rightarrow\infty} \beta_{t}=1$ according to Lemma \ref{lim-with-beta-ge-0}.

To sum up, we have proven that $\lim_{t\rightarrow\infty}\beta_t=1$ when $\beta_0>-1$ and $\alpha_0>0$.
\end{proof}

\noindent\textbf{Theorem \ref{convergence-of-spherical-case}} (\textbf{Convergence in the Spherical Constrained Case})
\textit{
Considering the discrete dynamics in \cref{normalized-discrete-dynamics}, if $\forall i\in[N],c\in[C]$, $\hat{\vw}_c^\top\hat{\vh}_{i,c}(0)>-1$, the learning rate $\eta(t)$ satisfies that $\frac{\eta(t)}{\|\vh_{i,c}(t)\|_2}$ is non-increasing,  $\frac{\eta(0)(1+\gamma)}{CN\|\vh_{i,c}(0)\|_2}\le \frac{1}{\|\vw_c\|_2}$,  $\lim_{t\rightarrow\infty}\frac{\eta(t+1)}{\eta(t)}=1$, and there exists a constant $\varepsilon>0$, \textit{s.t.}, $\eta(t)>\varepsilon$, then we have 
\begin{equation}
    \lim_{t\rightarrow\infty}\left\|\hat{\mH}(t)-\hat{\mW}(\mI_C\otimes \1_N^\top)\right\|=0,
\end{equation} 
and further if $\lim_{t\rightarrow\infty}\|\mH(t)\|<\infty$, then there exists a constant $\mu>0$, such that the error above shows exponential convergence:
\begin{equation}
     \left\|\hat{\mH}(t)-\hat{\mW}(\mI_C\otimes \1_N^\top)\right\|\le O(e^{-\mu t}).
\end{equation}
Moreover, if $\hat{\vw}_c^\top\hat{\vh}_{i,c}(0)=-1$, then $\vh_{i,c}(t)=\vh_{i,c}(0)$.
}
\begin{proof}
Since $\mH(t+1)=\mH(t)+\frac{(1+\gamma)\eta(t)}{CN}\left(\frac{\partial \hat{\mH}}{\partial \mH}\big|_{\mH=\mH(t)}\right)^\top\mW(\mI_C\otimes \1_N^\top)$, then for $i\in[N], c\in[C]$,
\begin{equation}
    \vh_{i,c}(t+1)=\vh_{i,c}(t)+\frac{(1+\gamma)\eta(t)}{CN\|\vh_{i,c}(t)\|_2}\left(\mI_p-\hat{\vh}_{i,c}(t)\hat{\vh}_{i,c}^\top(t)\right)\vw_{c}.
\end{equation}
According to Lemma \ref{convergence-of-sperical-case-for-vector}, when $\hat{\vw}_c^\top\hat{\vh}_{i,c}(0)>-1$ and $\frac{\eta(0)(1+\gamma)}{CN}<\frac{\|\vh_{i,c}(0)\|_2^2}{\|\vw_c\|_2}$, we have $\lim_{t\rightarrow \infty}\|\hat{\vh}_{i,c}-\hat{\vw}_c\|=0$, then $\lim_{t\rightarrow\infty}\left\|\hat{\mH}(t)-\hat{\mW}(\mI_C\otimes \1_N^\top)\right\|=0$.

If further $\lim_{t\rightarrow\infty}\|\mH(t)\|_2<\infty$, let $L=\sup_{i,c,t}\|\vh_{i,c}(t)\|$. According to the proof of Lemma \ref{convergence-of-sperical-case-for-vector}, $\forall i,c$, we have $\lim_{t\rightarrow\infty}\hat{\vw}_c^\top\hat{\vh}_{i,c}(t)=1$, then for a given constant $\delta>0$, $\exists \tau>0$, \textit{s.t.}, $\forall t> \tau$, $\hat{\vw}_c^\top\hat{\vh}_{i,c}(\tau)\ge \delta$. Consider $t>\tau$, we have
\begin{equation}
    \notag
    \begin{aligned}
    &1-\left(\hat{\vw}_c^\top\hat{\vh}_{i,c}(t+1)\right)^2\\
    =&\tfrac{\|\vh_{i,c}(t)\|_2^2}{\|\vh_{i,c}(t+1)\|_2^2}\left(1-\left(\hat{\vw}_c^\top\hat{\vh}_{i,c}(t)\right)^2\right)\left(1-\tfrac{(1+\gamma)\eta(t)\|\vw_c\|_2}{CN\|\vh_{i,c}(t)\|_2^2}\cdot \hat{\vw}_c^\top\hat{\vh}_{i,c}(t)\right)^2\\
    \le &\left(1-\left(\hat{\vw}_c^\top\hat{\vh}_{i,c}(0)\right)^2\right)\prod_{j=0}^t \left(1-\tfrac{(1+\gamma)\eta(j)\|\vw_c\|_2}{CN\|\vh_{i,c}(j)\|_2^2}\cdot \hat{\vw}_c^\top\hat{\vh}_{i,c}(j)\right)^2\\
    \le &\left(1-\left(\hat{\vw}_c^\top\hat{\vh}_{i,c}(0)\right)^2\right)\prod_{j=0}^\tau \left(1-\tfrac{(1+\gamma)\eta(j)\|\vw_c\|_2}{CN\|\vh_{i,c}(j)\|_2^2}\cdot \hat{\vw}_c^\top\hat{\vh}_{i,c}(j)\right)^2\prod_{j=\tau+1}^t \left(1-\tfrac{(1+\gamma)\varepsilon\delta\|\vw_c\|_2}{CN\|\vh_{i,c}(j)\|_2^2}\right)^2\\
    \le & \left(1-\left(\hat{\vw}_c^\top\hat{\vh}_{i,c}(0)\right)^2\right)\prod_{j=0}^\tau \left(1-\tfrac{(1+\gamma)\eta(j)\|\vw_c\|_2}{CN\|\vh_{i,c}(j)\|_2^2}\cdot \hat{\vw}_c^\top\hat{\vh}_{i,c}(j)\right)^2 \left(1-\tfrac{(1+\gamma)\varepsilon\delta\|\vw_c\|_2}{CNL^2}\right)^{2(t-\tau)}.
    \end{aligned}
\end{equation}
where the first, the second, and the third inequalities are based on the facts that $\frac{\|\vh_{i,c}(t)\|_2^2}{\|\vh_{i,c}(t+1)\|_2^2}\le 1$, $1-\frac{(1+\gamma)\eta(j+1)\|\vw_c\|_2}{CN\|\vh_{i,c}(j)\|_2^2}\cdot \hat{\vw}_c^\top\hat{\vh}_{i,c}(j)\le 1-\frac{(1+\gamma)\varepsilon\delta\|\vw_c\|_2}{CN\|\vh_{i,c}(j)\|_2^2}$ for $t>\tau$, and $\|\vh_{i,c}(j)\|_2\le L$, respectively.

Let $c_1=\max_{i,c}(1-(\hat{\vw}_c^\top\hat{\vh}_{i,c}(0))^2)\prod_{j=0}^\tau \left(1-\frac{(1+\gamma)\eta(j)\|\vw_c\|_2}{CN\|\vh_{i,c}(j)\|_2^2}\cdot \hat{\vw}_c^\top\hat{\vh}_{i,c}(j)\right)^2 \left(1-\frac{(1+\gamma)\varepsilon\delta\|\vw_c\|_2}{CNL^2}\right)^{-2\tau}$, and $\mu=\min_c -2\log \left(1-\frac{(1+\gamma)\varepsilon\delta\|\vw_c\|_2}{CNL^2}\right)$, then $1-\hat{\vw}_c^\top\hat{\vh}_{i,c}(t+1)\le \frac{c_1e^{-\mu t}}{1+\delta}$.

Therefore, we have
\begin{equation}
    \left\|\hat{\mH}(t)-\hat{\mW}(\mI_C\otimes \1_N^\top)\right\|_2^2=2\sum_{i,c}\big(1-\hat{\vw}_c^\top\hat{\vh}_{i,c}(t+1)\big)\le \frac{2c_1CNe^{-\mu t}}{(1+\delta)},
\end{equation}
\textit{i.e.}, $\left\|\hat{\mH}(t)-\hat{\mW}(\mI_C\otimes \1_N^\top)\right\|=O(e^{-\mu t})$.

Moreover, if $\hat{\vw}_c^\top\hat{\vh}_{i,c}(0)=-1$, we have
\begin{equation}
    \vh_{i,c}(t+1)=\vh_{i,c}(t)+\frac{(1+\gamma)\eta(t)}{CN\|\vh_{i,c}(t)\|_2}\left(\mI_p-\hat{\vh}_{i,c}(t)\hat{\vh}_{i,c}^\top(t)\right)\vw_{c}=\vh_{i,c}(t),
\end{equation}
thus $\vh_{i,c}(t)=\vh_{i,c}(0)$.
\end{proof}
\subsection{Proof of Theorem \ref{dynamics-under-NTK}}
\begin{theorem}
Let $\vz(t)$ denote the row-first vectorization of $\begin{pmatrix}\boldsymbol{H}(t) & 0\\0 & \boldsymbol{W}(t)\end{pmatrix}$, $\boldsymbol B=\begin{pmatrix}0 & \boldsymbol M^\top\\\boldsymbol M & 0\end{pmatrix}$, and $\boldsymbol A=\begin{pmatrix}0 & \nabla_{\boldsymbol \Theta}\boldsymbol H ^\top\nabla_{\boldsymbol \Theta}\boldsymbol H\\\mathbf I_C & 0\end{pmatrix}$. Considering the eigendecomposition $\boldsymbol A=\boldsymbol U_{\boldsymbol A}\boldsymbol \Lambda_{\boldsymbol A} \boldsymbol U_{\boldsymbol A}^{-1}$ and $\boldsymbol B=\boldsymbol U_{\boldsymbol B}\boldsymbol \Lambda_{\boldsymbol B} \boldsymbol U_{\boldsymbol B}^{-1}$, we have
\begin{equation}
\mC\boldsymbol z(t)=\exp[(\boldsymbol \Lambda_{\boldsymbol A}\otimes \boldsymbol \Lambda_{\boldsymbol B})t]\mC\boldsymbol z(0),
\end{equation}
where $\mC=\boldsymbol U_{\boldsymbol B}^{-1}\boldsymbol U_{\boldsymbol A}^{-1}\otimes \mathbf I$.
\end{theorem}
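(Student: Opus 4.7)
The plan is to reduce the joint dynamics of $\boldsymbol H$ and $\boldsymbol W$ under the unhinged loss to a constant-coefficient linear matrix ODE and solve it by Kronecker diagonalization. I would begin by deriving the coupled gradient flow. Dropping the bias, $\nabla_{\boldsymbol W}\mathcal L = \boldsymbol H\boldsymbol Y^\top = -\boldsymbol H\boldsymbol M^\top$, so (absorbing the learning rate into time) the prototype dynamics is $\boldsymbol W'(t) = \boldsymbol H(t)\boldsymbol M^\top$. For the features, combining the Taylor approximation $\boldsymbol H'(t) = \nabla_{\boldsymbol\Theta}\boldsymbol H^\top\boldsymbol\Theta'(t)$ with $\boldsymbol\Theta'(t) = -\nabla_{\boldsymbol\Theta}\mathcal L = \nabla_{\boldsymbol\Theta}\boldsymbol H\,\boldsymbol W\boldsymbol M$ yields $\boldsymbol H'(t) = \nabla_{\boldsymbol\Theta}\boldsymbol H^\top\nabla_{\boldsymbol\Theta}\boldsymbol H\,\boldsymbol W(t)\boldsymbol M$, in exactly the form written for MSE immediately below the statement. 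The NTK-invariance hypothesis makes the operator $\nabla_{\boldsymbol\Theta}\boldsymbol H^\top\nabla_{\boldsymbol\Theta}\boldsymbol H$ a constant, so the system is a pair of autonomous linear matrix ODEs coupling $\boldsymbol H$ and $\boldsymbol W$.

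Next, I would package the pair into a single matrix equation. With $\boldsymbol Z(t) = \begin{pmatrix}\boldsymbol H(t) & 0\\ 0 & \boldsymbol W(t)\end{pmatrix}$, a direct $2\times2$ block multiplication gives
\begin{equation}
\boldsymbol A\,\boldsymbol Z(t)\,\boldsymbol B = \begin{pmatrix}\nabla_{\boldsymbol\Theta}\boldsymbol H^\top\nabla_{\boldsymbol\Theta}\boldsymbol H\,\boldsymbol W(t)\boldsymbol M & 0\\ 0 & \boldsymbol H(t)\boldsymbol M^\top\end{pmatrix},
\end{equation}
so the coupled dynamics collapses into the constant-coefficient linear matrix ODE $\boldsymbol Z'(t) = \boldsymbol A\,\boldsymbol Z(t)\,\boldsymbol B$, with the off-diagonal zero blocks of $\boldsymbol Z$ preserved by the flow.

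For the solution step, I would vectorize and then diagonalize. Applying the row-first identity $\mathrm{rvec}(\boldsymbol A\boldsymbol X\boldsymbol B) = (\boldsymbol A\otimes\boldsymbol B^\top)\,\mathrm{rvec}(\boldsymbol X)$ together with the symmetry $\boldsymbol B^\top = \boldsymbol B$ converts the equation to $\boldsymbol z'(t) = (\boldsymbol A\otimes\boldsymbol B)\,\boldsymbol z(t)$. Substituting the eigendecompositions $\boldsymbol A = \boldsymbol U_{\boldsymbol A}\boldsymbol\Lambda_{\boldsymbol A}\boldsymbol U_{\boldsymbol A}^{-1}$, $\boldsymbol B = \boldsymbol U_{\boldsymbol B}\boldsymbol\Lambda_{\boldsymbol B}\boldsymbol U_{\boldsymbol B}^{-1}$ and using the mixed-product rule $(\boldsymbol P\otimes\boldsymbol Q)(\boldsymbol R\otimes\boldsymbol S) = (\boldsymbol{PR})\otimes(\boldsymbol{QS})$ factors $\boldsymbol A\otimes\boldsymbol B = (\boldsymbol U_{\boldsymbol A}\otimes\boldsymbol U_{\boldsymbol B})(\boldsymbol\Lambda_{\boldsymbol A}\otimes\boldsymbol\Lambda_{\boldsymbol B})(\boldsymbol U_{\boldsymbol A}^{-1}\otimes\boldsymbol U_{\boldsymbol B}^{-1})$. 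Left-multiplying the ODE by $\boldsymbol C := \boldsymbol U_{\boldsymbol A}^{-1}\otimes\boldsymbol U_{\boldsymbol B}^{-1}$ decouples it into independent scalar equations with rates given by the entries of the diagonal matrix $\boldsymbol\Lambda_{\boldsymbol A}\otimes\boldsymbol\Lambda_{\boldsymbol B}$, and exponentiating produces $\boldsymbol C\,\boldsymbol z(t) = \exp[(\boldsymbol\Lambda_{\boldsymbol A}\otimes\boldsymbol\Lambda_{\boldsymbol B})t]\,\boldsymbol C\,\boldsymbol z(0)$.

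The main obstacle is bookkeeping rather than conceptual: one has to match the vectorization convention to the intertwiner $\boldsymbol C$ stated in the theorem (the printed form $\boldsymbol U_{\boldsymbol B}^{-1}\boldsymbol U_{\boldsymbol A}^{-1}\otimes\mathbf I$ looks like a typo for $\boldsymbol U_{\boldsymbol A}^{-1}\otimes\boldsymbol U_{\boldsymbol B}^{-1}$), and to reconcile the block dimensions inside $\boldsymbol A$ with those of $\boldsymbol H$ and $\boldsymbol W$ so that the left/right multiplications in the block product are well defined. Once this notational alignment is fixed, every remaining step is a routine Kronecker-product manipulation, with diagonalizability of $\boldsymbol A$ and $\boldsymbol B$ needed only to write the solution in the stated closed diagonal form.
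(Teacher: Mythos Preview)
Your proposal is correct and follows essentially the same route as the paper: derive the coupled linear ODEs for $\boldsymbol H$ and $\boldsymbol W$, package them as $\boldsymbol Z'=\boldsymbol A\boldsymbol Z\boldsymbol B$, row-vectorize to $\boldsymbol z'=(\boldsymbol A\otimes\boldsymbol B)\boldsymbol z$ using $\boldsymbol B^\top=\boldsymbol B$, and diagonalize via the Kronecker product of the two eigendecompositions. The paper's proof does exactly this, just carrying out the diagonalization in two nested steps $(\boldsymbol U_{\boldsymbol A}\otimes\mathbf I)$ then $(\mathbf I\otimes\boldsymbol U_{\boldsymbol B})$ rather than your single step $(\boldsymbol U_{\boldsymbol A}\otimes\boldsymbol U_{\boldsymbol B})$; your observation that the printed intertwiner $\boldsymbol C=\boldsymbol U_{\boldsymbol B}^{-1}\boldsymbol U_{\boldsymbol A}^{-1}\otimes\mathbf I$ should read $\boldsymbol U_{\boldsymbol A}^{-1}\otimes\boldsymbol U_{\boldsymbol B}^{-1}$ is well taken---the paper's own two-step computation, done correctly, yields $(\boldsymbol U_{\boldsymbol A}\otimes\mathbf I)(\mathbf I\otimes\boldsymbol U_{\boldsymbol B})=\boldsymbol U_{\boldsymbol A}\otimes\boldsymbol U_{\boldsymbol B}$, not $\boldsymbol U_{\boldsymbol A}\boldsymbol U_{\boldsymbol B}\otimes\mathbf I$.
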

\begin{proof}
Since $\boldsymbol z(t)$ denote the row-first vectorization of $\begin{pmatrix}\boldsymbol{H}(t) & 0\\0 & \boldsymbol{W}(t)\end{pmatrix}$, $\boldsymbol A=\begin{pmatrix}0 & \nabla_{\boldsymbol \Theta}\boldsymbol H ^\top\nabla_{\boldsymbol \Theta}\boldsymbol H\\\mathbf I & 0\end{pmatrix}$, and $\boldsymbol B=\begin{pmatrix}0 & \boldsymbol M^\top\\\boldsymbol M & 0\end{pmatrix}$, we have
$$
\boldsymbol z'(t)=(\boldsymbol A\otimes \boldsymbol{B}^\top) \boldsymbol z(t)=(\boldsymbol A\otimes \boldsymbol{B}) \boldsymbol z(t),
$$
and then $\boldsymbol z(t)=e^{(\boldsymbol A\otimes \boldsymbol{B})t}\boldsymbol z(0)$. Considering the eigendecomposition $\boldsymbol A=\boldsymbol U_{\boldsymbol A}\boldsymbol \Lambda_{\boldsymbol A} \boldsymbol U_{\boldsymbol A}^{-1}$ and $\boldsymbol B=\boldsymbol U_{\boldsymbol B}\boldsymbol \Lambda_{\boldsymbol B} \boldsymbol U_{\boldsymbol B}^{-1}$, where $\boldsymbol \Lambda_{\boldsymbol A}=\text{diag}(\lambda_1^{\boldsymbol A},...,\lambda_n^{\boldsymbol A})$ and $\boldsymbol \Lambda_{\boldsymbol B}=\text{diag}(\lambda_1^{\boldsymbol B},...,\lambda_n^{\boldsymbol B})$, thus 
$$
\boldsymbol A\otimes \boldsymbol B= (\boldsymbol U_{\boldsymbol A}\boldsymbol \Lambda_{\boldsymbol A} \boldsymbol U_{\boldsymbol A}^{-1})\otimes \boldsymbol B=(\boldsymbol U_{\boldsymbol A}\otimes \mathbf I)(\boldsymbol \Lambda_{\boldsymbol A} \otimes \boldsymbol B)(\boldsymbol U_{\boldsymbol A}^{-1}\otimes \mathbf I),
$$
where $\otimes$ denotes Kronecker product. Moreover, we have
$$
\begin{aligned}
\exp[(\boldsymbol A\otimes \boldsymbol{B})t]&=(\boldsymbol U_{\boldsymbol A}\otimes \mathbf I)\exp[(\boldsymbol \Lambda_{\boldsymbol A}\otimes \boldsymbol{B})t](\boldsymbol U_{\boldsymbol A}^{-1}\otimes \mathbf I)\\
&=(\boldsymbol U_{\boldsymbol A}\boldsymbol U_{\boldsymbol B}\otimes \mathbf I)\exp[(\boldsymbol \Lambda_{\boldsymbol A}\otimes \boldsymbol \Lambda_{\boldsymbol B})t](\boldsymbol U_{\boldsymbol B}^{-1}\boldsymbol U_{\boldsymbol A}^{-1}\otimes \mathbf I).\\
\end{aligned}
$$
We know that $\exp[(\boldsymbol \Lambda_{\boldsymbol A}\otimes \boldsymbol \Lambda_{\boldsymbol B})t]$ is a diagonal matrix. Therefore, we have $(\boldsymbol U_{\boldsymbol B}^{-1}\boldsymbol U_{\boldsymbol A}^{-1}\otimes \mathbf I)\boldsymbol z(t)=\exp[(\boldsymbol \Lambda_{\boldsymbol A}\otimes \boldsymbol \Lambda_{\boldsymbol B})t](\boldsymbol U_{\boldsymbol B}^{-1}\boldsymbol U_{\boldsymbol A}^{-1}\otimes \mathbf I)\boldsymbol z(0)$, which leads to a concise closed-form dynamics on $(\boldsymbol U_{\boldsymbol B}^{-1}\boldsymbol U_{\boldsymbol A}^{-1}\otimes \mathbf I)\boldsymbol z(t)$.
\end{proof}
\subsection{Proof of Theorem \ref{PAL-dynamics}}

\noindent\textbf{Theorem \ref{PAL-dynamics}}\textit{
Consider the continual gradient flow (\Cref{eq:pal}) in which the prototypes $\mW$ is fixed, we have the closed-form dynamics:
\begin{equation}
    \mH(t)=e^{-\lambda\int_0^t\eta(\tau)\mathrm{d}\tau}\mH(0)+\frac{1-e^{-\lambda\int_0^t\eta(\tau)\mathrm{d}\tau}}{\lambda} \mW\mM,
\end{equation}
which further indicates that $\left\|\mH(t)-\frac{1}{\lambda}\mW\mM\right\|=O\left(e^{-\lambda\int_0^t\eta(\tau)\mathrm{d}\tau}\right)$.
}
\begin{proof}
For the first order non-homogeneous linear difference equation in \cref{eq:pal}, the solution is
\begin{equation}
    \begin{aligned}
    \mH(t)=&e^{-\lambda\int_0^t\eta(\tau)\mathrm{d}\tau}\left(\mH(0)+\int_0^t\eta(s)e^{\lambda\int_0^s\eta(\tau)\mathrm{d}\tau}\mathrm{d}s \mW\mM\right)\\
    =&e^{-\lambda\int_0^t\eta(\tau)\mathrm{d}\tau}\mH(0) + e^{-\lambda\int_0^t\eta(\tau)\mathrm{d}\tau}\int_0^t\frac{1}{\lambda}\mathrm{d}e^{\lambda\int_0^s\eta(\tau)\mathrm{d}\tau} \mW\mM\\
    =&e^{-\lambda\int_0^t\eta(\tau)\mathrm{d}\tau}\mH(0) + e^{-\lambda\int_0^t\eta(\tau)\mathrm{d}\tau}\frac{e^{\lambda\int_0^s\eta(\tau)\mathrm{d}\tau}}{\lambda}\bigg|_0^t \mW\mM\\
    =&e^{-\lambda\int_0^t\eta(\tau)\mathrm{d}\tau}\mH(0)+\frac{1-e^{-\lambda\int_0^t\eta(\tau)\mathrm{d}\tau}}{\lambda} \mW\mM,
    \end{aligned}
\end{equation}
and then $\|\mH(t)-\frac{1}{\lambda}\mW\mM\|=\|e^{-\lambda\int_0^t\eta(\tau)\mathrm{d}\tau}(\mH(0)-\frac{\mW\mM}{\lambda})\|=O(e^{-\lambda\int_0^t\eta(\tau)\mathrm{d}\tau})$.
\end{proof}

\subsection{The Projections onto $\gE_1^+$, $\gE_1^-$, $\gE_2^+$, $\gE_2^-$ and $\gE_3$}
\label{projection-of-subspaces}
\begin{lemma}
\label{projection-of-zero-centroid-protoytpes}
Let $\gS$ denote the subspace $\gS=\{\mW: \mW\1_n=0, \mW\in\sR^{m\times n}\}$, then the projection of a point $\mA\in\sR^{m\times n}$ onto $\gS$ can be denoted as $\Pi_{\gS}\mA=\mA\left(\mI_n-\frac{1}{n}\1_n\1_n^\top\right)$.
\end{lemma}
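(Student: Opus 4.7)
The plan is to verify that $\mA\bigl(\mI_n - \tfrac{1}{n}\1_n\1_n^\top\bigr)$ satisfies the two defining properties of the orthogonal projection of $\mA$ onto $\gS$: it lies in $\gS$, and the residual $\mA - \mA\bigl(\mI_n - \tfrac{1}{n}\1_n\1_n^\top\bigr)$ is Frobenius-orthogonal to $\gS$.

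First I would introduce $\mP := \mI_n - \tfrac{1}{n}\1_n\1_n^\top$ and observe the two elementary facts $\mP^\top = \mP$ and $\mP^2 = \mP$ (the latter via $\1_n^\top\1_n = n$, so that the cross term equals $\tfrac{2}{n}\1_n\1_n^\top$, which cancels with $\tfrac{1}{n^2}\1_n(\1_n^\top\1_n)\1_n^\top = \tfrac{1}{n}\1_n\1_n^\top$). In particular, $\mP\1_n = \1_n - \tfrac{1}{n}\1_n(\1_n^\top\1_n) = 0$, so
\begin{equation}
(\mA\mP)\1_n = \mA(\mP\1_n) = 0,
\end{equation}
which shows $\mA\mP \in \gS$.

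Next I would check the orthogonality condition. The residual is $\mR := \mA - \mA\mP = \tfrac{1}{n}\mA\1_n\1_n^\top$. For any $\mW \in \gS$,
\begin{equation}
\langle \mR, \mW\rangle_F = \Tr(\mR^\top \mW) = \tfrac{1}{n}\Tr\bigl(\1_n\1_n^\top \mA^\top \mW\bigr) = \tfrac{1}{n}\1_n^\top \mA^\top \mW \1_n = 0,
\end{equation}
because $\mW\1_n = 0$ by the definition of $\gS$. By uniqueness of orthogonal projection onto a closed subspace of the Hilbert space $(\sR^{m\times n}, \langle\cdot,\cdot\rangle_F)$, this gives $\Pi_{\gS}\mA = \mA\mP$, as claimed.

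There is no substantive obstacle here; the only point requiring minor care is the matrix-trace manipulation confirming orthogonality, and the observation that $\gS$ is indeed a linear subspace (hence closed and convex), so that the orthogonal projection is well-defined and characterized by these two properties.
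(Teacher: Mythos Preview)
Your proof is correct. You use the standard characterization of orthogonal projection onto a subspace---membership plus Frobenius-orthogonality of the residual---and verify both directly via the idempotent $\mP = \mI_n - \tfrac{1}{n}\1_n\1_n^\top$.

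The paper takes a different route: it sets up the minimization $\min_{\mW\in\gS}\|\mW-\mA\|_F^2$, expands column-wise as $\sum_i\|\vw_i-\va_i\|_2^2$, and applies the Cauchy--Schwarz (power-mean) inequality to get the lower bound $\tfrac{1}{n}\|\mA\1_n\|_2^2$; equality forces all $\vw_i-\va_i$ to coincide, which together with $\sum_i\vw_i=0$ identifies the minimizer as $\mA(\mI_n-\tfrac{1}{n}\1_n\1_n^\top)$. Your approach is cleaner and more structural, invoking only the Hilbert-space projection characterization, while the paper's argument is a hands-on variational computation. Both are equally valid; yours avoids the inequality step entirely, whereas the paper's method makes the optimality gap $\tfrac{1}{n}\|\mA\1_n\|_2^2$ explicit and mirrors the pattern used in the subsequent lemma for the block constraint $\mW(\mI_c\otimes\1_n)=0$.
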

\begin{proof}
Let $\mW=(\vw_1,\vw_2,...,\vw_n)\in \gS$ and $\mA=(\va_1,\va_2,...,\va_n)\in\sR^{m\times n}$, we have
\begin{equation}
    \notag
    \begin{aligned}
    \|\mW-\mA\|_F^2&=\sum_{i=1}^n \|\vw_i-\va_i\|_2^2\ge \frac{1}{n}\left\|\sum_{i=1}^n\vw_i-\sum_{i=1}^n\va_i\right\|_2^2\\
    &=\frac{1}{n}\|\mW\1_n-\mA\1_n\|_2^2=\frac{1}{n}\|\mA\1_n\|_2^2
    \end{aligned}
\end{equation}
where we used the Cauchy-Schwarz inequality, and the equality holds if and only if $\vw_i-\va_i=\vw_n-\va_n$, $\forall i\in [n]$, and $\sum_{i=1}^n \vw_i=0$, \textit{i.e.}, $\mW=\mA\left(\mI_n-\frac{1}{n}\1_n\1_n^\top\right)$. Therefore, the projection of $\mA$ onto $\gS$ is $\Pi_{\gS}\mA=\argmin_{\mW\in \gS}\|\mW - \mA\|_F^2=\mA\left(\mI_n-\frac{1}{n}\1_n\1_n^\top\right).
$
\end{proof}

\begin{lemma}
\label{projection-of-zero-centroid-features}
Let $\gS$ denote the subspace $\{\mW: \mW(\mI_c\otimes \1_n)=0, \mW\in\sR^{m\times cn}\}$, then the projection of a point $\mA\in\sR^{m\times cn}$ onto $\gS$ can be denoted as $\Pi_{\gS}\mA=\mA\left(\mI_{cn}-\frac{1}{n}\mI_c\otimes \1_n\1_n^\top\right)$.
\end{lemma}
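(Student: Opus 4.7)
The plan is to reduce this to the previous lemma (Lemma~\ref{projection-of-zero-centroid-protoytpes}) by exploiting the block structure that the Kronecker product imposes on the constraint. First I would partition $\mA$ (and any candidate $\mW\in\gS$) into $c$ blocks of width $n$, writing $\mA=(\mA_1,\ldots,\mA_c)$ and $\mW=(\mW_1,\ldots,\mW_c)$ with $\mA_i,\mW_i\in\sR^{m\times n}$. Because $\mI_c\otimes \1_n$ is block-diagonal, the constraint $\mW(\mI_c\otimes \1_n)=0$ decouples into $c$ independent constraints $\mW_i\1_n=0$ for $i\in[c]$.

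Next I would use the orthogonal block decomposition of the Frobenius norm,
\begin{equation*}
\|\mW-\mA\|_F^2=\sum_{i=1}^c \|\mW_i-\mA_i\|_F^2,
\end{equation*}
which separates the minimization: the optimal $\mW_i$ is obtained by projecting $\mA_i$ onto $\{\mW_i:\mW_i\1_n=0\}$ independently for each $i$. Applying Lemma~\ref{projection-of-zero-centroid-protoytpes} to each block yields $\mW_i^\star=\mA_i(\mI_n-\tfrac{1}{n}\1_n\1_n^\top)$.

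Finally I would reassemble the blocks and recognize the result as a right-multiplication by a Kronecker product. Since $\mI_{cn}=\mI_c\otimes\mI_n$ and $\mI_c\otimes(\1_n\1_n^\top)$ is precisely the block-diagonal operator that applies $\1_n\1_n^\top$ to each width-$n$ block on the right, stacking the $\mW_i^\star$ gives
\begin{equation*}
\Pi_\gS\mA=\mA\bigl(\mI_c\otimes(\mI_n-\tfrac{1}{n}\1_n\1_n^\top)\bigr)=\mA\bigl(\mI_{cn}-\tfrac{1}{n}\mI_c\otimes \1_n\1_n^\top\bigr),
\end{equation*}
as claimed. There is no real obstacle here beyond carefully identifying that $\mI_c\otimes \1_n$ encodes exactly ``sum columns within each of the $c$ blocks'' and that the mixed-product property of the Kronecker product converts the block-wise projection into the single matrix factor appearing in the statement.
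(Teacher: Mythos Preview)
Your proposal is correct and follows essentially the same approach as the paper: both exploit the block structure induced by $\mI_c\otimes\1_n$ to reduce to the previous lemma. Your version is slightly cleaner in that you partition into $c$ blocks of width $n$ (which matches the Kronecker structure directly) and invoke Lemma~\ref{projection-of-zero-centroid-protoytpes} per block, whereas the paper re-runs the Cauchy--Schwarz argument; this is a presentational difference rather than a genuinely different route.
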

\begin{proof}
The proof is similar to Lemma \ref{projection-of-zero-centroid-protoytpes}. We can simply let $\mW=(\mW_1,...,\mW_n)\in\gS$ and $\mA=(\mA_1,...,\mA_n)\in\sR^{m\times cn}$, where $\mW_i,\mA_i\in\sR^{m\times c}$, then we have
\begin{equation}
    \notag
    \begin{aligned}
     \|\mW-\mA\|_F^2&=\sum_{i=1}^n \|\mW_i-\mA_i\|_F^2\ge \frac{1}{n}\left\|\sum_{i=1}^n\mW_i-\sum_{i=1}^n\mA_i\right\|_2^2=\frac{1}{n}\|\mA(\mI_c\otimes \1_n)\|_2^2
    \end{aligned}
\end{equation}
where the equality holds if and only if $\mW_i-\mA_i=\mW_n-\mA_n$, $\forall i\in [n]$, and $\sum_{i=1}^n \mW_i=0$, \textit{i.e.}, $\mW=\mA(\mI_{cn}-\tfrac{1}{n}\mI_c\otimes \1_n\1_n^\top)$. Therefore, the projection of $\mA$ onto $\gS$ is $\Pi_\gS\mA=\mA\left(\mI_{cn}-\frac{1}{n}\mI_c\otimes \1_n\1_n^\top\right)$.
\end{proof}

\begin{lemma}
\label{projection-of-subspace1}
For $\mH\in\sR^{p\times CN}$, $\mW\in\sR^{p\times C}$, the projection of $(\mH,\mW)$ onto $\gE_1^\eps$ is 
\begin{equation}
    \Pi_1^\eps (\mH,\mW)=(\tfrac{\eps}{\sqrt{N}}(\mP\otimes \1_N^\top), \mP),
\end{equation}
where $\mP=\tfrac{1}{2}(\tfrac{\eps}{\sqrt{N}}\mH(\mI_C\otimes \1_N)+\mW)(\mI_C-\tfrac{1}{C}\1_C\1_C^\top)$.
\end{lemma}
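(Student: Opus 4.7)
The plan is to reduce the projection problem to minimizing a single Frobenius-norm expression over one matrix variable, then handle the zero-centroid constraint using Lemma \ref{projection-of-zero-centroid-protoytpes}.

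First I would parameterize $\gE_1^\eps$ by a single free variable $\mP \in \sR^{p\times C}$ subject to $\mP\1_C = 0$, so that every element of $\gE_1^\eps$ has the form $(\tfrac{\eps}{\sqrt{N}}(\mP\otimes\1_N^\top), \mP)$. The orthogonal projection of $(\mH,\mW)$ onto $\gE_1^\eps$ is then the minimizer over such $\mP$ of
\begin{equation}
    F(\mP) = \left\|\mH - \tfrac{\eps}{\sqrt{N}}(\mP\otimes\1_N^\top)\right\|_F^2 + \|\mW - \mP\|_F^2.
\end{equation}

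Next I would expand $F(\mP)$ using the Kronecker identity $\mP\otimes\1_N^\top = \mP(\mI_C\otimes\1_N^\top)$, which gives $\|\mP\otimes\1_N^\top\|_F^2 = N\|\mP\|_F^2$ and $\langle \mH, \mP\otimes\1_N^\top\rangle = \Tr(\mP^\top \mH(\mI_C\otimes\1_N))$. Dropping terms independent of $\mP$, this yields
\begin{equation}
    F(\mP) = 2\|\mP\|_F^2 - 2\Tr\!\left(\mP^\top\!\left(\tfrac{\eps}{\sqrt{N}}\mH(\mI_C\otimes\1_N) + \mW\right)\right) + \text{const},
\end{equation}
which I would then rewrite by completing the square as $2\|\mP - \mQ\|_F^2 + \text{const}$, where $\mQ = \tfrac{1}{2}\bigl(\tfrac{\eps}{\sqrt{N}}\mH(\mI_C\otimes\1_N) + \mW\bigr)$.

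Finally, minimizing $\|\mP - \mQ\|_F^2$ over $\{\mP : \mP\1_C = 0\}$ is exactly the projection addressed by Lemma \ref{projection-of-zero-centroid-protoytpes} (with $n = C$), whose solution is $\mP = \mQ(\mI_C - \tfrac{1}{C}\1_C\1_C^\top)$. Substituting $\mQ$ gives the claimed formula for $\mP$, and plugging this $\mP$ back into the parameterization yields $\Pi_1^\eps(\mH,\mW)$. The only mildly delicate point is the bookkeeping of the Kronecker identities in step two; the constraint step is immediate from the preceding lemma, so I do not anticipate any real obstacle.
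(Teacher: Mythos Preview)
Your proposal is correct and follows essentially the same approach as the paper: parameterize $\gE_1^\eps$ by $\mP$ with $\mP\1_C=0$, expand and complete the square to identify the unconstrained minimizer $\mQ=\tfrac{1}{2}(\tfrac{\eps}{\sqrt{N}}\mH(\mI_C\otimes\1_N)+\mW)$, then project onto the zero-centroid subspace via Lemma~\ref{projection-of-zero-centroid-protoytpes}. The only cosmetic difference is that the paper expands the first norm by writing $\mH$ as $N$ blocks of size $p\times C$, whereas you use the Kronecker identity $\mP\otimes\1_N^\top=\mP(\mI_C\otimes\1_N^\top)$; both routes yield the same quadratic and the same conclusion.
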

\begin{proof}
Let $\gS=\{\mZ:\mZ\1_C=0, \mZ\in\sR^{p\times C}\}$ and $\mH=\{\mH_1,...,\mH_N\}$ (where $\mH_i\in\sR^{p\times C}$), the minimizer of $\mZ\in\gS$ is
\begin{equation}
    \begin{aligned}
    &\argmin_{\mZ\in \gS}\|\tfrac{\eps}{\sqrt{N}}(\mZ\otimes \1_N^\top)-\mH\|_F^2+\|\mZ-\mW\|_F^2\\
    =&\argmin_{\mZ\in \gS}\sum_{i=1}^N\|\tfrac{1}{\sqrt{N}}\mZ-\eps\mH_i\|_F^2+\|\mZ-\mW\|_F^2\\
    =&\argmin_{\mZ\in \gS}\|\mZ\|_F^2-\tfrac{2\eps}{\sqrt{N}}\sum_{i=1}^N\langle\mZ,\mH_i\rangle+\|\mH\|_F^2+\|\mZ-\mW\|_F^2\\
    =&\argmin_{\mZ\in \gS}\|\mZ-\tfrac{\eps}{\sqrt{N}}\sum_{i=1}^N\mH_i\|_F^2-\|\frac{1}{\sqrt{N}}\sum_{i=1}^N\mH_i\|_F^2+\|\mH\|_F^2+\|\mZ-\mW\|_F^2\\
    =&\argmin_{\mZ\in \gS}\|\mZ-\tfrac{\eps}{\sqrt{N}}\mH(\mI_C\otimes \1_N)\|_F^2+\|\mZ-\mW\|_F^2+\|\mH\|_F^2-\|\tfrac{\eps}{\sqrt{N}}\mH(\mI_C\otimes \1_N)\|_F^2\\
    =&\argmin_{\mZ\in \gS}\|\mZ-\tfrac{1}{2}(\tfrac{\eps}{\sqrt{N}}\mH(\mI_C\otimes \1_N)+\mW)\|_F^2\\
    =&\tfrac{1}{2}(\tfrac{\eps}{\sqrt{N}}\mH(\mI_C\otimes \1_N)+\mW)(\mI_C-\tfrac{1}{C}\1_C\1_C^\top).
    \end{aligned}
\end{equation}
Thus, $\Pi_1^\eps (\mH,\mW)=(\tfrac{\eps}{\sqrt{N}}(\mP\otimes \1_N^\top), \mP)$ with $\mP=\tfrac{1}{2}(\tfrac{\eps}{\sqrt{N}}\mH(\mI_C\otimes \1_N)+\mW)(\mI_C-\tfrac{1}{C}\1_C\1_C^\top)$.
\end{proof}

\begin{lemma}
For $\mH\in\sR^{p\times CN}$, $\mW\in\sR^{p\times C}$, the projection of $(\mH,\mW)$ onto $\gE_2^\eps$ is 
\begin{equation}
    \Pi_2^\eps (\mH,\mW)=(\tfrac{\eps}{\sqrt{N}}\vh\1_{CN}^\top, \vh\1_C^\top),
\end{equation}
where $\vh=\tfrac{1}{2C}(\tfrac{\eps}{\sqrt{N}}\mH\1_{CN}+\mW\1_C) $.
\end{lemma}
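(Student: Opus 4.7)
The plan is to characterize the projection onto $\gE_2^\eps$ as the solution of a one-variable minimization problem over $\vh\in\mathbb{R}^p$, since every element of $\gE_2^\eps$ is parametrized by a single vector $\vh$. Concretely, I would write
\[
\Pi_2^\eps(\mH,\mW)=\argmin_{\vh\in\mathbb{R}^p}\Bigl\{\bigl\|\tfrac{\eps}{\sqrt{N}}\vh\1_{CN}^\top-\mH\bigr\|_F^2+\bigl\|\vh\1_C^\top-\mW\bigr\|_F^2\Bigr\},
\]
and then minimize over $\vh$. This reduces the problem to an unconstrained quadratic optimization on $\mathbb{R}^p$, which is the natural analogue of the computation carried out for $\Pi_1^\eps$ in Lemma~\ref{projection-of-subspace1}.

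Next I would expand each Frobenius term using $\|\vh\1_{CN}^\top\|_F^2=CN\|\vh\|_2^2$, $\langle\vh\1_{CN}^\top,\mH\rangle_F=\vh^\top\mH\1_{CN}$, $\|\vh\1_C^\top\|_F^2=C\|\vh\|_2^2$, and $\langle\vh\1_C^\top,\mW\rangle_F=\vh^\top\mW\1_C$. Summing gives a quadratic in $\vh$ of the form
\[
2C\|\vh\|_2^2-2\vh^\top\!\Bigl(\tfrac{\eps}{\sqrt{N}}\mH\1_{CN}+\mW\1_C\Bigr)+\text{const},
\]
so completing the square (or, equivalently, setting the gradient in $\vh$ to zero) yields the unique minimizer
\[
\vh=\tfrac{1}{2C}\Bigl(\tfrac{\eps}{\sqrt{N}}\mH\1_{CN}+\mW\1_C\Bigr).
\]
Substituting back into the parametrization of $\gE_2^\eps$ recovers the claimed formula $\Pi_2^\eps(\mH,\mW)=(\tfrac{\eps}{\sqrt{N}}\vh\1_{CN}^\top,\vh\1_C^\top)$.

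There is no serious obstacle: the only thing that needs care is bookkeeping the $\eps/\sqrt{N}$ scaling and the different lengths of the all-ones vectors ($\1_{CN}$ versus $\1_C$) so that the two positive quadratic contributions combine into $2C\|\vh\|_2^2$ with the ``cross term'' featuring $\mH\1_{CN}$ rather than $\mH(\mI_C\otimes\1_N)$ (as happened in the $\gE_1^\eps$ case). As a sanity check one can verify that the resulting pair lies in $\gE_2^\eps$ and that the residual $(\mH,\mW)-\Pi_2^\eps(\mH,\mW)$ is orthogonal to every element of $\gE_2^\eps$, which is automatic from the first-order optimality condition and confirms that this is the orthogonal projection rather than merely a minimizer.
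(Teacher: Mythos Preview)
Your proposal is correct and essentially identical to the paper's own proof: the paper also minimizes $\|\tfrac{\eps}{\sqrt{N}}\vh\1_{CN}^\top-\mH\|_F^2+\|\vh\1_C^\top-\mW\|_F^2$ over $\vh\in\mathbb{R}^p$, completes the square to obtain $\argmin_{\vh}\|\vh-\tfrac{1}{2C}(\tfrac{\eps}{\sqrt{N}}\mH\1_{CN}+\mW\1_C)\|_2^2$, and reads off the same minimizer. The only cosmetic difference is that the paper rewrites each Frobenius term as a scaled $\ell_2$ distance before combining, whereas you expand and collect coefficients directly; both routes are equivalent.
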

\begin{proof}
We have
\begin{equation}
    \begin{aligned}
    &\argmin_{\vh\in\sR^p}\|\tfrac{\eps}{\sqrt{N}}\vh\1_{CN}^\top-\mH\|_F^2+\|\vh\1_C^\top-\mW\|_F^2\\
    =&\argmin_{\vh\in\sR^p}CN\|\tfrac{1}{\sqrt{N}}\vh-\tfrac{\eps}{CN}\mH\1_{CN}\|_2^2+C\|\vh-\tfrac{1}{C}\mW\1_C\|_2^2\\
    =&\argmin_{\vh\in\sR^p}\|\vh-\tfrac{\eps}{C\sqrt{N}}\mH\1_{CN}\|_2^2+\|\vh-\tfrac{1}{C}\mW\1_C\|_2^2\\
    =&\argmin_{\vh\in\sR^p}\|\vh-\tfrac{1}{2C}(\tfrac{\eps}{\sqrt{N}}\mH\1_{CN}+\mW\1_C)\|_2^2\\
    =&\tfrac{1}{2C}(\tfrac{\eps}{\sqrt{N}}\mH\1_{CN}+\mW\1_C)
    \end{aligned}
\end{equation}
\end{proof}

\begin{lemma}
For $\mH\in\sR^{p\times CN}$, $\mW\in\sR^{p\times C}$, the projection of $(\mH,\mW)$ onto $\gE_3$ is 
\begin{equation}
    \Pi_3(\mH,\mW)=(\mH(\mI_{CN}-\tfrac{1}{N}\mI_C\otimes \1_N\1_N^\top), 0).
\end{equation}
\end{lemma}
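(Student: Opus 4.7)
The plan is to reduce this to Lemma \ref{projection-of-zero-centroid-features}, which has already been proved. Observe that the defining conditions of $\gE_3$ are separable across the two coordinates: the constraint on the first coordinate $\mH'$ is $\mH'(\mI_C\otimes \1_N)=0$, and the constraint on the second coordinate is simply $\mW'=0$. Since the squared distance from $(\mH,\mW)$ to a candidate point $(\mH',\mW')\in\gE_3$ splits as $\|\mH'-\mH\|_F^2+\|\mW'-\mW\|_F^2$, I can minimize over each coordinate independently.

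For the second coordinate, the only admissible choice is $\mW'=0$, contributing the fixed cost $\|\mW\|_F^2$. For the first coordinate, the problem
\[
\min_{\mH'\in\sR^{p\times CN}}\|\mH'-\mH\|_F^2\quad \text{subject to}\quad \mH'(\mI_C\otimes \1_N)=0
\]
is exactly the situation addressed by Lemma \ref{projection-of-zero-centroid-features} with $m=p$, $c=C$ and $n=N$. That lemma gives the minimizer $\mH'=\mH(\mI_{CN}-\tfrac{1}{N}\mI_C\otimes \1_N\1_N^\top)$. Combining the two coordinatewise optima yields $\Pi_3(\mH,\mW)=(\mH(\mI_{CN}-\tfrac{1}{N}\mI_C\otimes \1_N\1_N^\top),0)$, as claimed.

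There is essentially no obstacle here; the only thing to verify carefully is that the resulting pair indeed lies in $\gE_3$, which amounts to checking that $(\mI_{CN}-\tfrac{1}{N}\mI_C\otimes \1_N\1_N^\top)(\mI_C\otimes \1_N)=\mI_C\otimes \1_N-\tfrac{1}{N}\mI_C\otimes \1_N\1_N^\top\1_N = \mI_C\otimes \1_N-\mI_C\otimes \1_N=0$, using $\1_N^\top\1_N=N$. Everything else is a direct appeal to the separability of the distance across orthogonal coordinates together with the previously established lemma.
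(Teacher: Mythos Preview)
Your proposal is correct and follows essentially the same approach as the paper, which simply states that the result can be easily derived from Lemma~\ref{projection-of-zero-centroid-features}. You have merely spelled out the details of that derivation --- the coordinatewise separability, the forced choice $\mW'=0$, and the final membership check in $\gE_3$ --- all of which are implicit in the paper's one-line appeal to the lemma.
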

\begin{proof}
This can be easily derived by Lemma \ref{projection-of-zero-centroid-features}.
\end{proof}

\begin{figure}[t]
    \centering
    \subfigure[$\log\|\mW(t)\|_2$]{
        \includegraphics[scale=0.48]{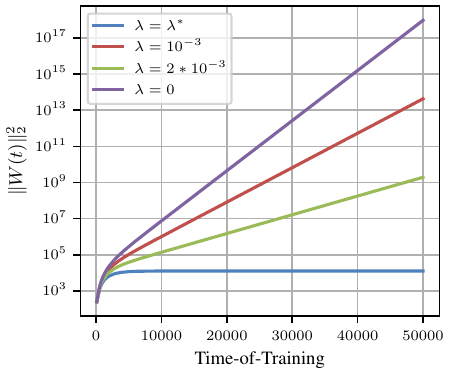}
        \label{fig:unconstrained-log-W}
    }
    \subfigure[$\log\|\mW(t)-\mP\|_2$]{
        \includegraphics[scale=0.48]{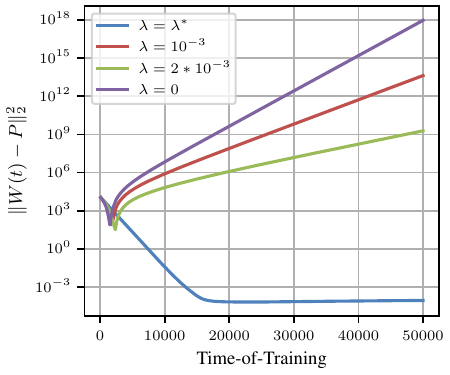}
        \label{fig:unconstrained-log-W-minus-P}
    }
    \subfigure[$\log\|\mH(t)\|_2-\log\|\mW(t)\|_2$]{
        \includegraphics[scale=0.48]{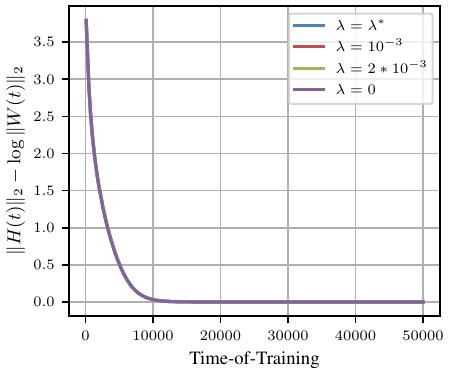}
        \label{fig:unconstrained-log-H-W}
    }
    \caption{Behavior of gradient descent iterates under the unhinged loss with different weight decay coefficients and $\eta_1(t)=\eta_2(t)=0.1$ (\textit{i.e.}, $s=1$), where $\mP$ denotes the component $\mW_1^+$ in the projection $\Pi_1^+\mZ_0$ calculated according to Lemma \ref{projection-of-subspace1}. (a) The logarithm of the norm of $\mW(t)$. As expected, the norm increases exponentially when $\lambda<\lambda^*$; (b) The difference between $\mW(t)$ and $\mP$. As expected (Corollary \ref{convergence-of-dynamics-l2}), $\mW(t)$ converges to $\mP$ when $\lambda=\lambda^*$, while other differences are dominated by $\|\mW(t)\|_2$; (c) The difference in $\ell_2$ norm between $\mH(t)$ and $\mW(t)$. The convergence is the same even if the weight decay is different.}
    \label{fig:unconstrained-behaviors}
    \vskip-10pt
\end{figure}

\begin{figure}[t]
    \centering
    \subfigure[Train Accuracy]{
     \includegraphics[scale=0.5]{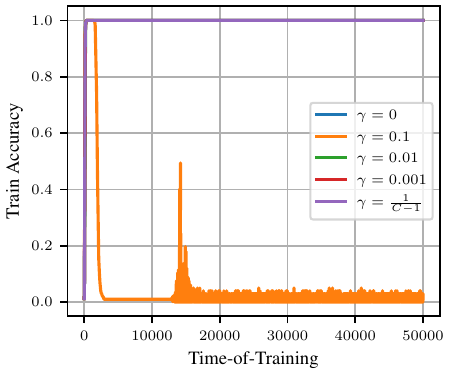}
     \label{fig:regularized-gamma-accuracy}
    }
    \subfigure[$\|\mH(t)-\overline{\mH}\|_2$]{
     \includegraphics[scale=0.5]{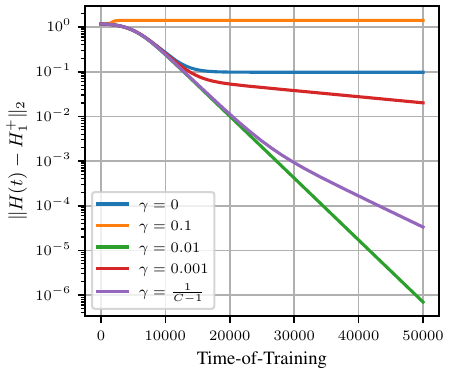}
     \label{fig:regularized-gamma-error-h}
    }
    \subfigure[$\|\mW(t)-\overline{\mW}\|_2$]{
    \includegraphics[scale=0.5]{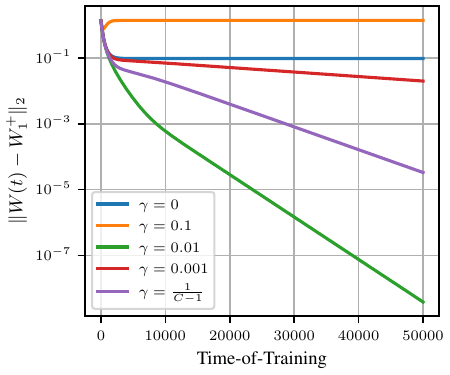}
    \label{fig:regularized-gamma-error-w}
    }
    \subfigure[$\|\mH(t)\|_2^2$]{
    \includegraphics[scale=0.5]{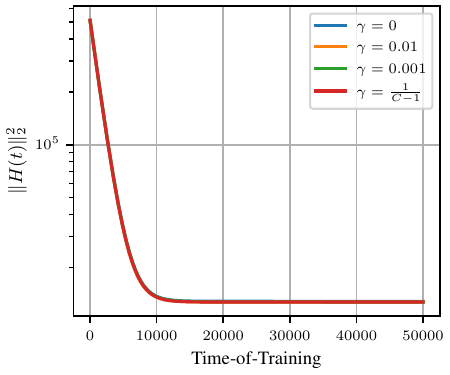}
    \label{fig:regularized-gamma-norm-h}
    }
    \subfigure[$\|\mW(t)\|_2^2$]{
    \includegraphics[scale=0.5]{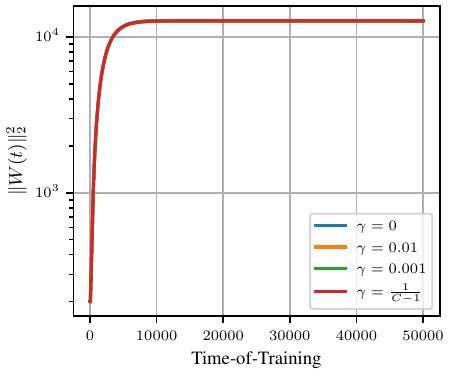}
    \label{fig:regularized-gamma-norm-w}
    }
    \caption{Verification of the behavior of regularized gradient descent iterates in \Cref{regularized-gradient-flow} with $\gamma\in\{0, 0.1, 0.01, 0.001, \frac{1}{C-1}\}$. We set $p=512$, $C=100$, $N=10$, $\lambda=\frac{(1+\gamma)}{C\sqrt{N}}$, $\eta_1(t)=\eta_2(t)=0.1$ (\textit{i.e.}, $s=1$), thus we have $\lim_{t\rightarrow\infty}\mZ(t)=\Pi_1^+\mZ_0$, according to Corollary \ref{convergence-of-dynamics-l2}, and then randomly initialize $\mH_0$ and $\mW_0$. (a) The training accuracy with the prediction rule $\argmax_c \vw_c^\top\vh$. As expected, the features align to their corresponding prototypes when $\gamma<\frac{2}{C-2}$. (b) The $\ell_2$ distance between $\mH(t)$ and $\mH_1^+$. As expected \cref{convergence-of-dynamics-l2}, the distance will decrease as exponential rate when $0<\gamma<\frac{2}{C-2}$.   (c) The $\ell_2$ distance between $\mW(t)$ and $\mW_1^+$.  (d) and (e) denote the norm of features and prototypes, respectively. As can be seen, $\|\mH\|_2$ and $\|\mW\|_2$ do not grow exponentially as in the unconstrained case, which confirms that weight decay can avoid excessive growth of feature norm and prototype norm.}
    \label{regularized-gamma}
\end{figure}

\begin{figure}[t]
    \centering
    \subfigure[Train Accuracy]{
        \includegraphics[scale=0.5]{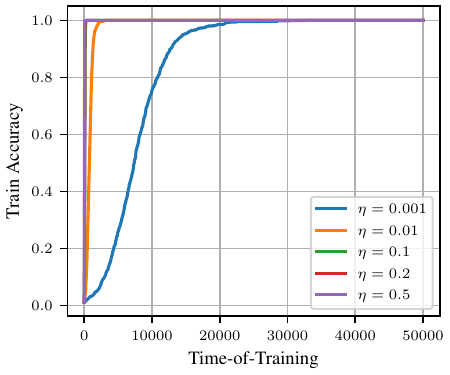}
        \label{fig:regularized-lr-accuracy}
    }
    \subfigure[$\|\mH(t)-\overline{\mH}\|_2$]{
        \includegraphics[scale=0.5]{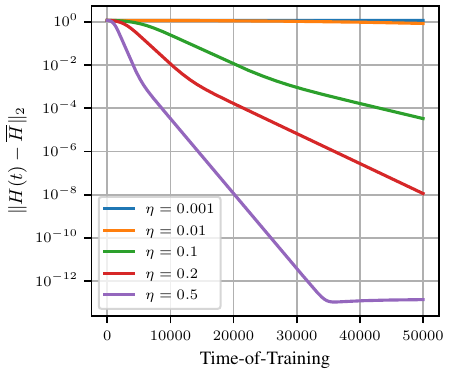}
        \label{fig:regularized-lr-error_h}
    }
    \subfigure[$\|\mW(t)-\overline{\mW}\|_2$]{
        \includegraphics[scale=0.5]{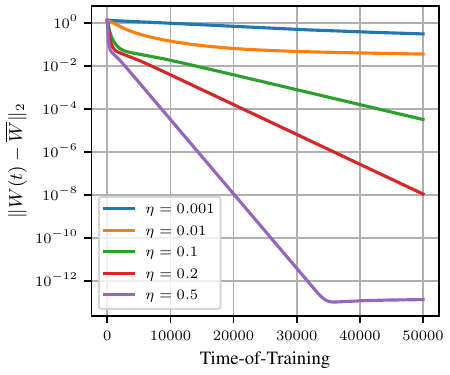}
        \label{fig:regularized-lr-error_w}
    }
    \subfigure[$\|\mH(t)\|_2$]{
        \includegraphics[scale=0.5]{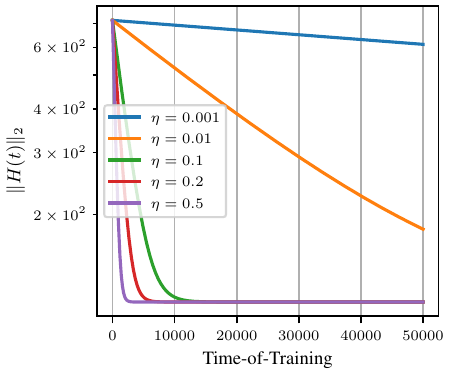}
        \label{fig:regularized-lr-norm_h}
    }
    \subfigure[$\|\mW(t)\|_2$]{
        \includegraphics[scale=0.5]{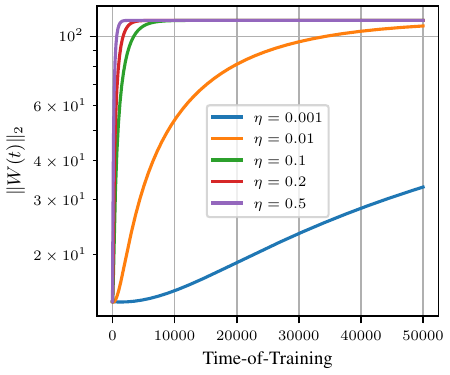}
        \label{fig:regularized-lr-norm_w}
    }
    \caption{Verification of the behavior of regularized gradient descent iterates in \Cref{regularized-gradient-flow} with different learning rates ($\eta\in \{0.001, 0.01, 0.1, 0.2, 0.5\}$). We set $p=512$, $C=100$, $N=10$, $\eta_1=\eta_2=\eta$ ($s=1$), $\gamma=\frac{1}{C-1}$, and $\lambda=\frac{1+\gamma}{C\sqrt{N}}$. As can be seen, features and prototypes converge to $(\overline{\mH}, \overline{\mW})$ exponentially, and larger learning rates can accelerate convergence.}
    \label{regularized-lr}
\end{figure}

\begin{figure}[t]
    \centering
    \subfigure[Train accuracy]{
        \includegraphics[scale=0.5]{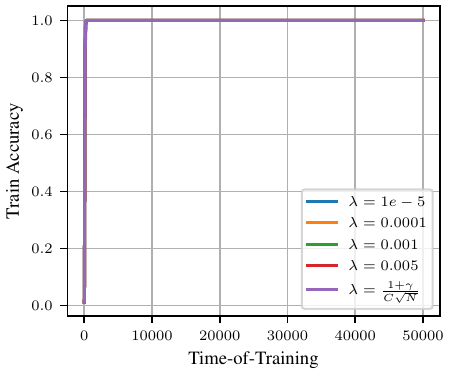}
        \label{fig:regularized-lambda-accuracy}
    }
    \subfigure[$\|\hat{\mH}(t)-\hat{\mH}_1^+\|_2$]{
        \includegraphics[scale=0.5]{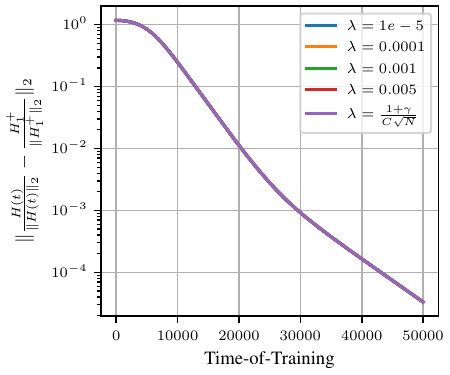}
        \label{fig:regularized-lambda-error-h}
    }
    \subfigure[$\|\hat{\mW}(t)-\hat{\mW}_1^+\|_2$]{
        \includegraphics[scale=0.5]{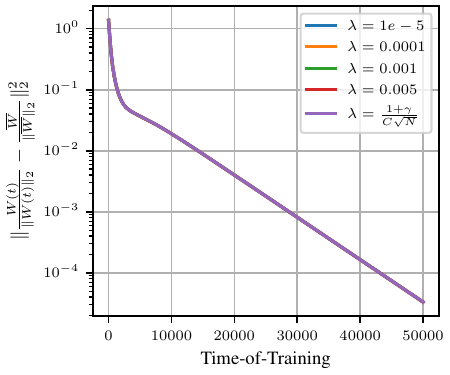}
        \label{fig:regularized-lambda-error-w}
    }
    \subfigure[$\|\mH(t)\|_2$]{
        \includegraphics[scale=0.5]{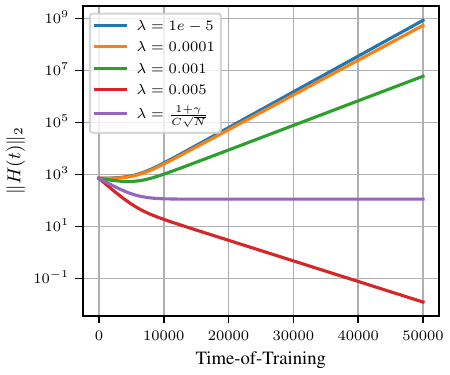}
        \label{fig:regularized-lambda-norm-h}
    }
    \subfigure[$\|\mW(t)\|_2$]{
        \includegraphics[scale=0.5]{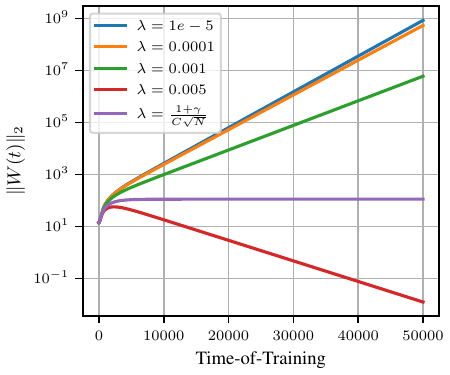}
        \label{fig:regularized-lambda-norm-w}
    }
    \caption{Verification of the behavior of regularized gradient descent iterates in \Cref{convergence-of-Z-l2} with different weight decay coefficients ($\lambda=\{1e-5, 1e-4, 1e-3,5e-3,\frac{1+\gamma}{C\sqrt{N}}\}$).  We set $p=512$, $C=100$, $N=10$, $\eta_1(t)=\eta_2(t)=0.1$ (\textit{i.e.}, $s=1$), $\gamma=\frac{1}{C-1}$, where $\overline{\mH}=\pi_h^+\mH_1^++\pi_h^-\mH_1^-$ and $\overline{\mW}=\pi_w^+\mW_1^++\pi_w^-\mW_1^-$ in Corollary \ref{convergence-of-dynamics-l2}. (a) The logarithm of the norm of $\mW(t)$. As expected, the norm increases exponentially when $\lambda<\lambda^*=\frac{1+\gamma}{C\sqrt{N}}$; (b) The difference between $\mW(t)$ and $\mP$. As expected in Corollary \ref{convergence-of-dynamics-l2}, $\mW(t)$ converges to $\mP$ when $\lambda=\lambda^*$, while other differences are dominated by $\|\mW(t)\|_2$; (c) The difference in $\ell_2$ norm between $\mH(t)$ and $\mW(t)$. The convergence is the same even if the weight decay is different.}
    \label{regularized-lambda}
\end{figure}

\begin{figure}[t]
    \centering
    \subfigure[Train accuracy]{
        \includegraphics[scale=0.5]{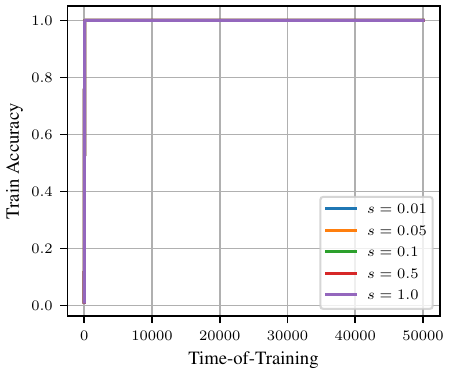}
        \label{fig:regularized-scale-accuracy}
    }
    \subfigure[$\|\hat{\mH}(t)-\hat{\mH}_1^+\|_2$]{
        \includegraphics[scale=0.5]{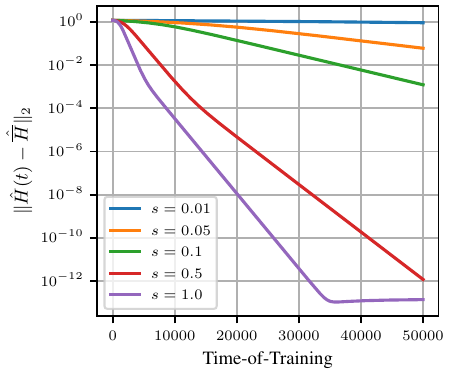}
        \label{fig:regularized-scale-error-h}
    }
    \subfigure[$\|\hat{\mW}(t)-\hat{\mW}_1^+\|_2$]{
        \includegraphics[scale=0.5]{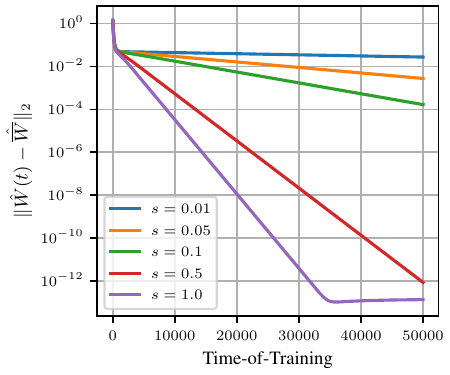}
        \label{fig:regularized-scale-error-w}
    }
    \subfigure[$\|\mH(t)\|_2$]{
        \includegraphics[scale=0.5]{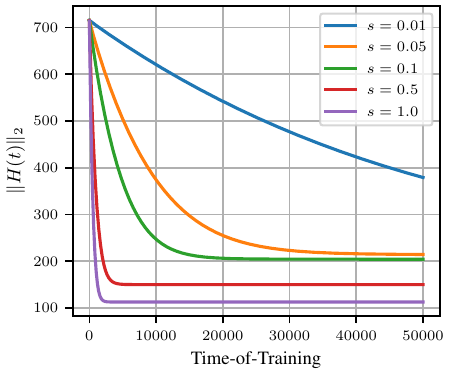}
        \label{fig:regularized-scale-norm-h}
    }
    \subfigure[$\|\mW(t)\|_2$]{
        \includegraphics[scale=0.5]{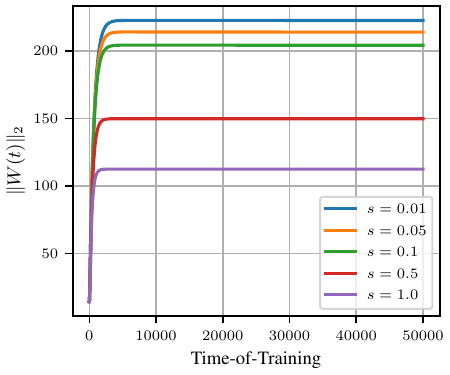}
        \label{fig:regularized-scale-norm-w}
    }
    \caption{Verification of the behavior of regularized gradient descent iterates in \Cref{convergence-of-Z-l2} with different scale parameters $s=\frac{\eta_1(0)}{\eta_2(0)} \in \{0.01, 0.05, 0.1, 0.5, 1.0\}$.  We set $p=512$, $C=100$, $N=10$, $\eta_2=0.5$, $\gamma=\frac{1}{C-1}$, where $\overline{\mH}=\pi_h^+\mH_1^++\pi_h^-\mH_1^-$ and $\overline{\mW}=\pi_w^+\mW_1^++\pi_w^-\mW_1^-$ in Corollary \ref{convergence-of-dynamics-l2}. As can be seen, larger scale parameter $s$ can achieve faster convergence speed.}
    \label{regularized-scale}
\end{figure}

\begin{figure}[t]
    \centering
    \subfigure[Train Accuracy]{
        \includegraphics[scale=0.48]{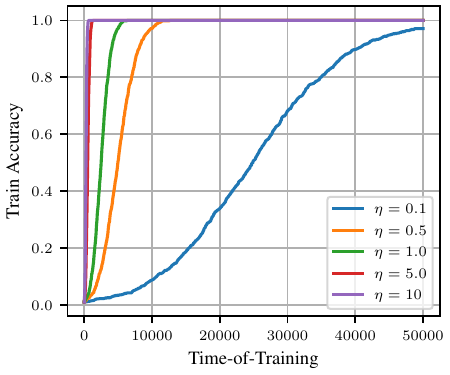}
    \label{fig:spherical-lr-accuracy}
    }
    \subfigure[$\|\hat{\mH}(t)-\hat{\mW}(\mI_C\otimes\1_N)\|_2$]{
        \includegraphics[scale=0.48]{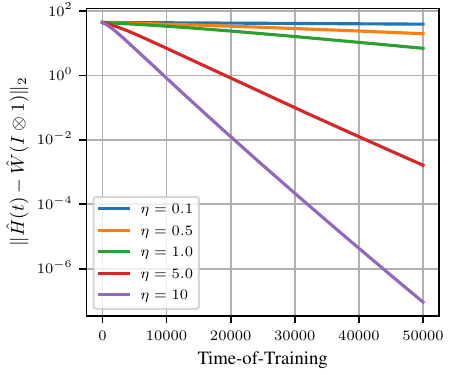}
        \label{fig:spherical-lr-error}
    }
    \subfigure[$\|\mH(t)\|_2$]{
        \includegraphics[scale=0.48]{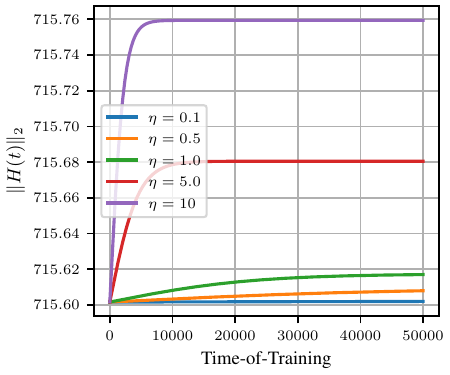}
        \label{fig:spherical-lr-faet-norm}
    }
    \subfigure[Train Accuracy]{
        \includegraphics[scale=0.48]{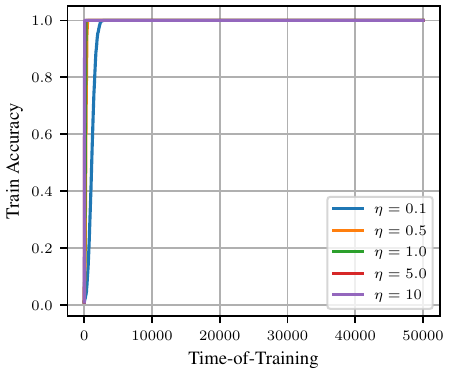}
        \label{fig:spherical-rescale-accuracy}
    }
    \subfigure[$\|\hat{\mH}(t)-\hat{\mW}(\mI_C\otimes\1_N)\|_2$]{
        \includegraphics[scale=0.48]{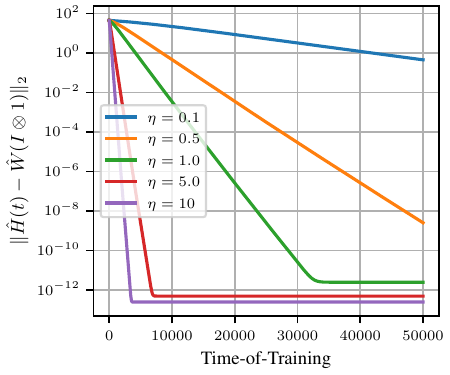}
        \label{fig:spherical-rescale-error}
    }
    \subfigure[$\|\mH(t)\|_2$]{
        \includegraphics[scale=0.48]{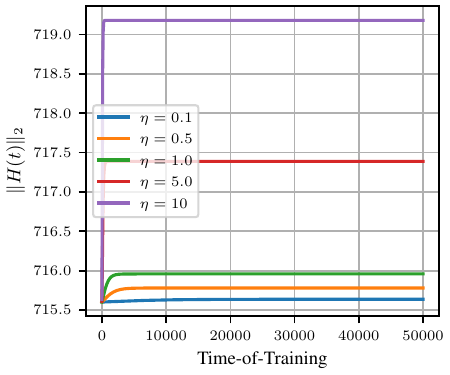}
        \label{fig:spherical-rescale-feat-norm}
    }
    \caption{Verification of the behavior of discrete gradient descent iterates in \Cref{spherical-objective} under anchored prototypes with different learning rates $\eta\in\{0.1, 0.5, 1.0, 5.0, 10\}$ and without (a-c) or with (d-f) rescaled learning rates. We set $p=512$, $C=100$, and $N=10$. As expected in Theorem \ref{convergence-of-spherical-case}, the feature norm $\|\mH(t)\|_2$ is non-decreasing, and the error $\|\hat{\mH}(t)-\hat{\mW}(\mI_C\otimes \1_N^\top)\|_2$ shows exponential decrease.}
    \label{spherical-lr-behaviors}
\end{figure}

\begin{table}[htbp]
    \scriptsize
    \centering
    \caption{Test accuracies on imbalanced CIFAR-10 under different explicit feature regularization.}
    \label{tab:imbalanced-cifar-10}
    \begin{tabular}{c|cccccccc}
    \toprule
     \textbf{Dataset} & \multicolumn{8}{c}{Imbalanced CIFAR-10}\\
     \midrule
     \textbf{Imbalance Type} &  \multicolumn{4}{c|}{long-tailed} & \multicolumn{4}{c}{step}\\
     \midrule
     \textbf{Imbalance Ratio} &  \multicolumn{1}{c|}{100} & \multicolumn{1}{c|}{50} & \multicolumn{1}{c|}{20} & \multicolumn{1}{c|}{10} &  \multicolumn{1}{c|}{100} & \multicolumn{1}{c|}{50} & \multicolumn{1}{c|}{20} & \multicolumn{1}{c}{10} \\
     \midrule
     baseline & 67.81 & 72.93 & 83.97 & 88.37 & 61.24 & 68.10 & 78.73 & 85.49 \\
     $\lambda=5e-6$ & \textbf{67.84} & 72.85 & 83.17 & \textbf{89.06} & 60.79 & \textbf{68.41} & \textbf{80.20} & \textbf{86.69} \\
     $\lambda=1e-5$ & 67.74 & \textbf{76.14} & \textbf{84.17} & \textbf{89.19} & \textbf{61.50} & 67.71 & \textbf{80.97} & \textbf{87.18} \\
     $\lambda=5e-5$ & \textbf{69.74} & \textbf{77.29} & \textbf{84.92} & \textbf{88.64} & 60.69 & \textbf{70.27} & \textbf{81.27} & \textbf{87.17}\\
     \bottomrule
    \end{tabular}
\end{table}

\begin{table}[htbp]
    \scriptsize
    \centering
    \caption{Test accuracies on imbalanced CIFAR-100 under different explicit feature regularization.}
    \label{tab:imbalanced-cifar-100}
    \begin{tabular}{c|cccccccc}
    \toprule
     \textbf{Dataset} & \multicolumn{8}{c}{Imbalanced CIFAR-100}\\
     \midrule
     \textbf{Imbalance Type} &  \multicolumn{4}{c|}{long-tailed} & \multicolumn{4}{c}{step}\\
     \midrule
     \textbf{Imbalance Ratio} &  \multicolumn{1}{c|}{100} & \multicolumn{1}{c|}{50} & \multicolumn{1}{c|}{20} & \multicolumn{1}{c|}{10} &  \multicolumn{1}{c|}{100} & \multicolumn{1}{c|}{50} & \multicolumn{1}{c|}{20} & \multicolumn{1}{c}{10}\\
     \midrule
     baseline  & 33.37 & 39.40 & 42.96 & 56.38 & 40.89 & 42.69 & 51.92 & 57.52 \\
     $\lambda=5e-6$ & \textbf{36.00} & \textbf{41.92} & \textbf{50.75} & \textbf{60.13} & \textbf{41.90} & \textbf{43.85} & 47.80 & 56.74 \\
     $\lambda=1e-5$ & \textbf{36.61} & \textbf{42.36} & \textbf{49.21} & \textbf{58.91} & \textbf{41.48} & \textbf{43.77} & 49.64 & 56.49 \\
     $\lambda=5e-5$ & \textbf{34.88} & \textbf{42.74} & \textbf{54.72} & \textbf{60.84} & \textbf{40.97} & \textbf{43.20} & 48.96 & \textbf{57.97}\\
     \bottomrule
    \end{tabular}
\end{table}

\begin{figure}
    \centering
    \subfigure[CE ($\lambda=0$)]{
        \includegraphics[scale=0.2]{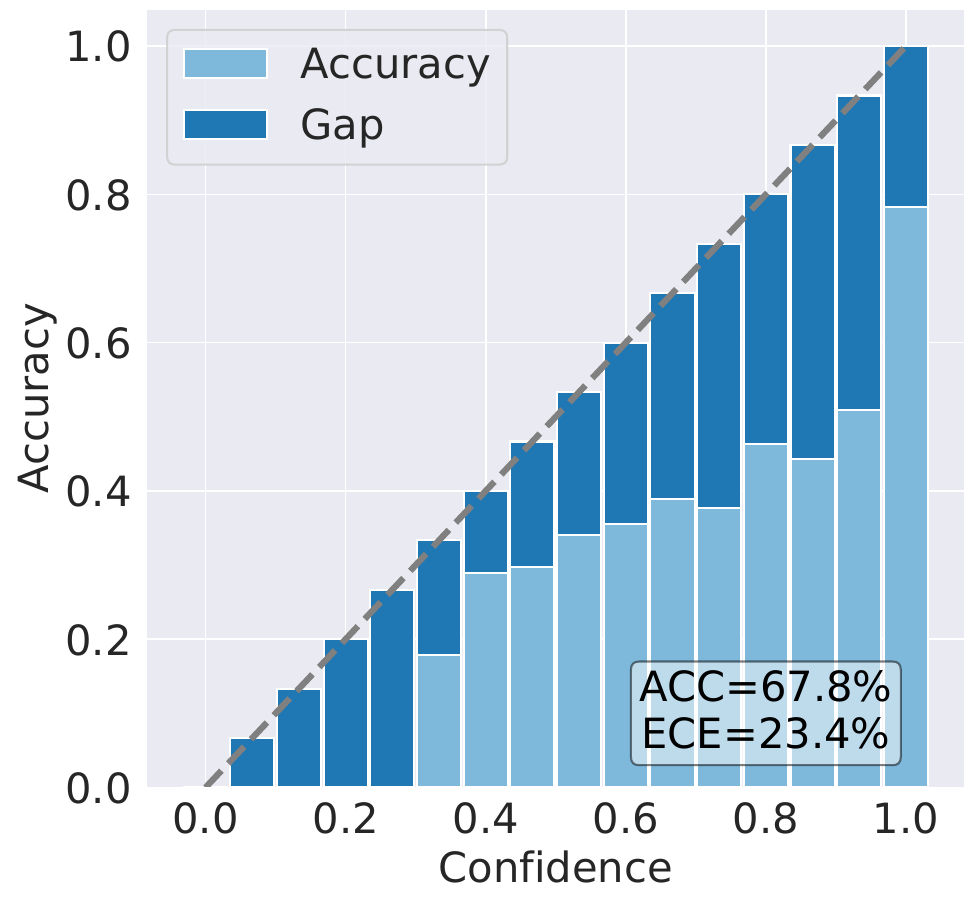}
        \label{fig:ece-lt-cifar10-100-0}
    }
    \subfigure[CE ($\lambda=5e-6$)]{
        \includegraphics[scale=0.2]{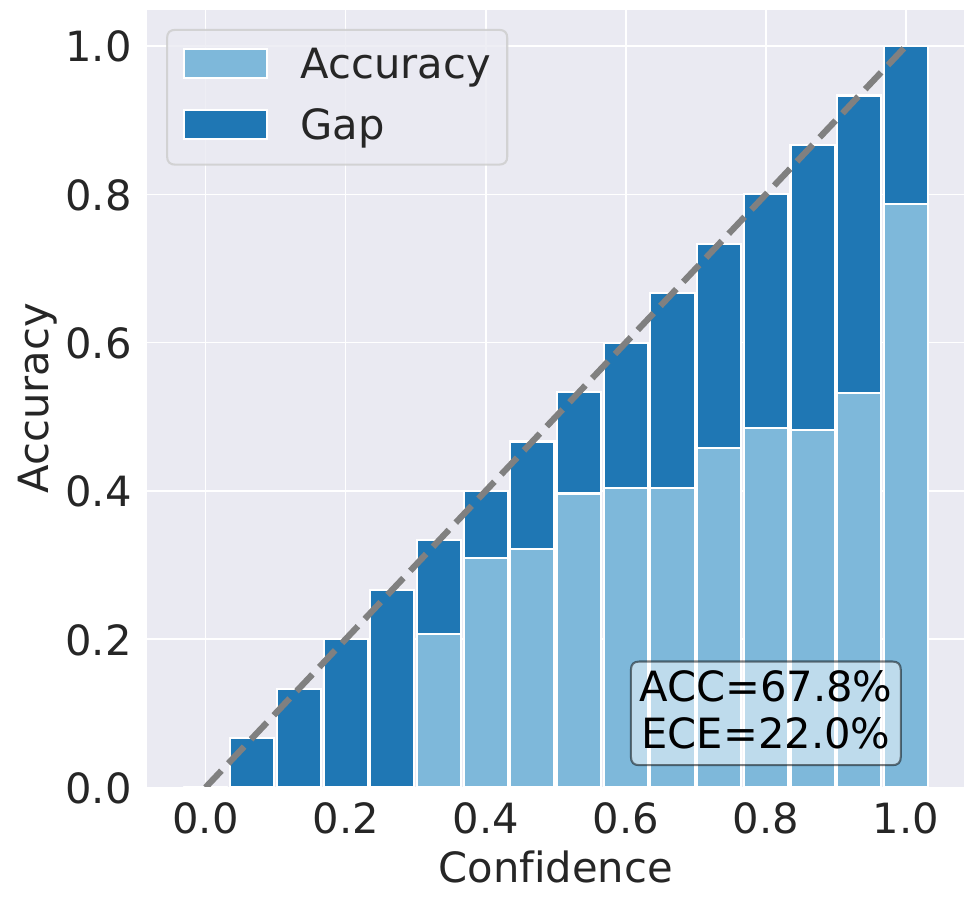}
        \label{fig:ece-lt-cifar10-100-5e-6}
    }
    \subfigure[CE ($\lambda=1e-5$)]{
        \includegraphics[scale=0.2]{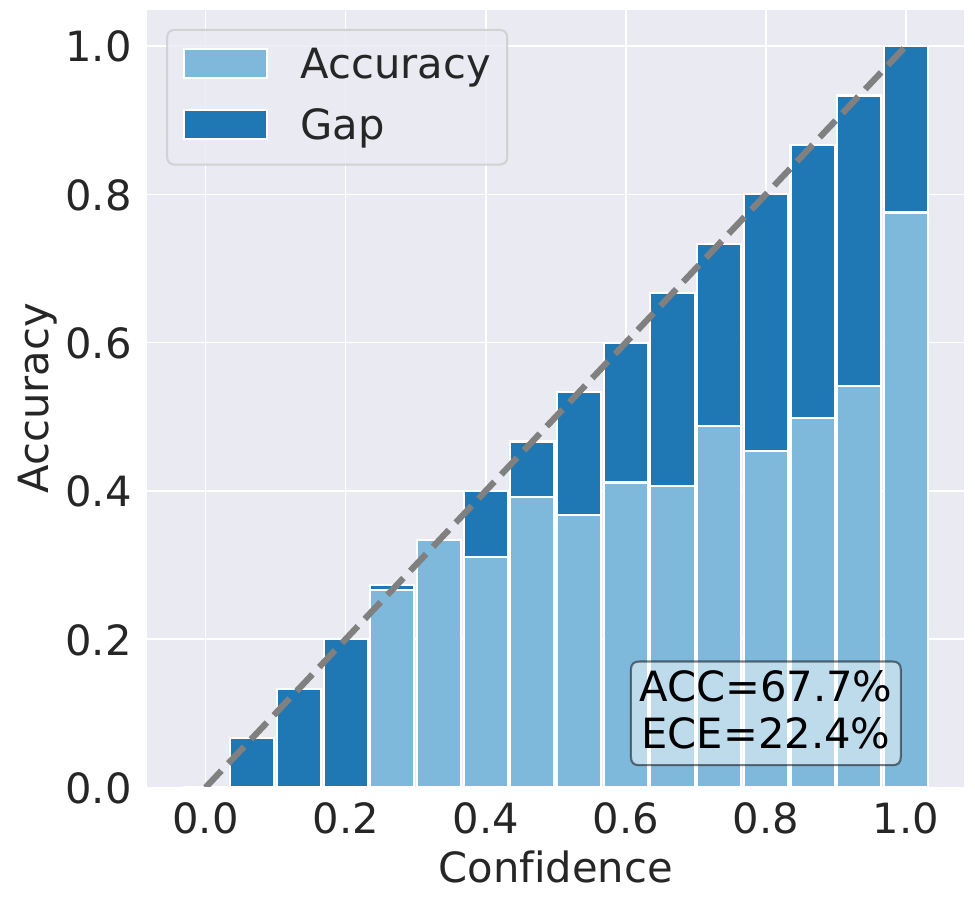}
        \label{fig:ece-lt-cifar10-100-1e-5}
    }
    \subfigure[CE ($\lambda=5e-5$)]{
        \includegraphics[scale=0.2]{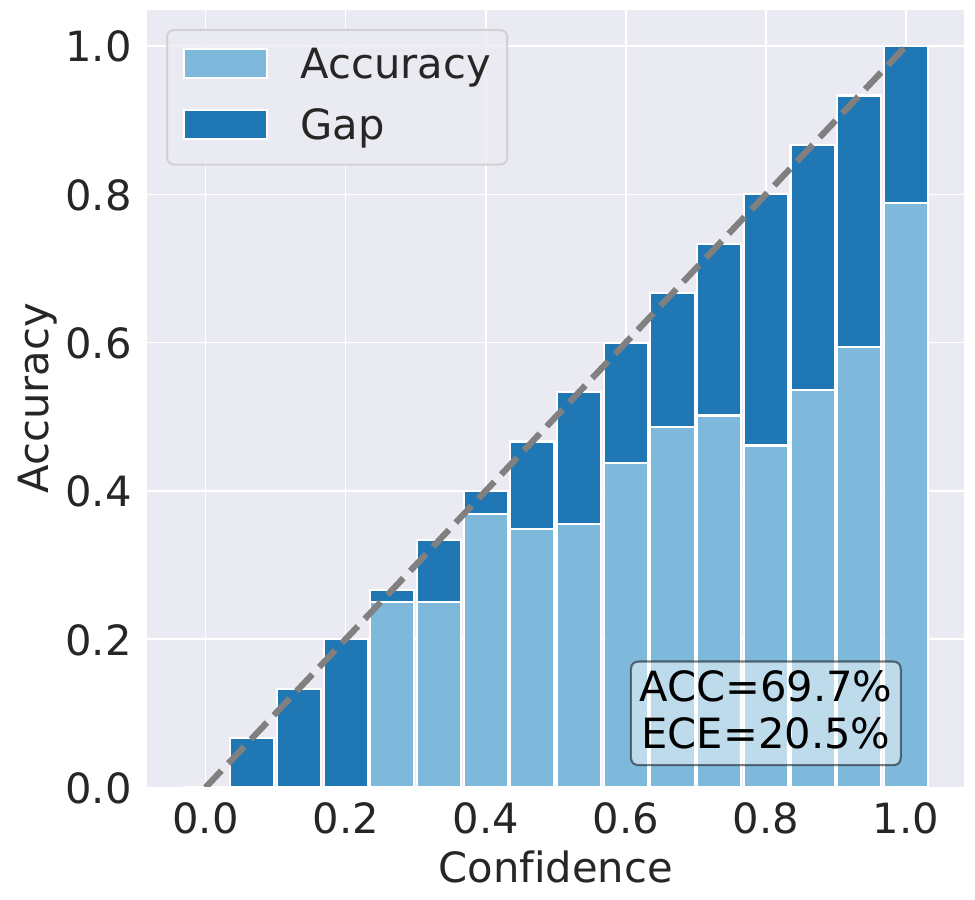}
        \label{fig:ece-lt-cifar10-100-5e-5}
    }
    \subfigure[CE ($\lambda=0$)]{
        \includegraphics[scale=0.2]{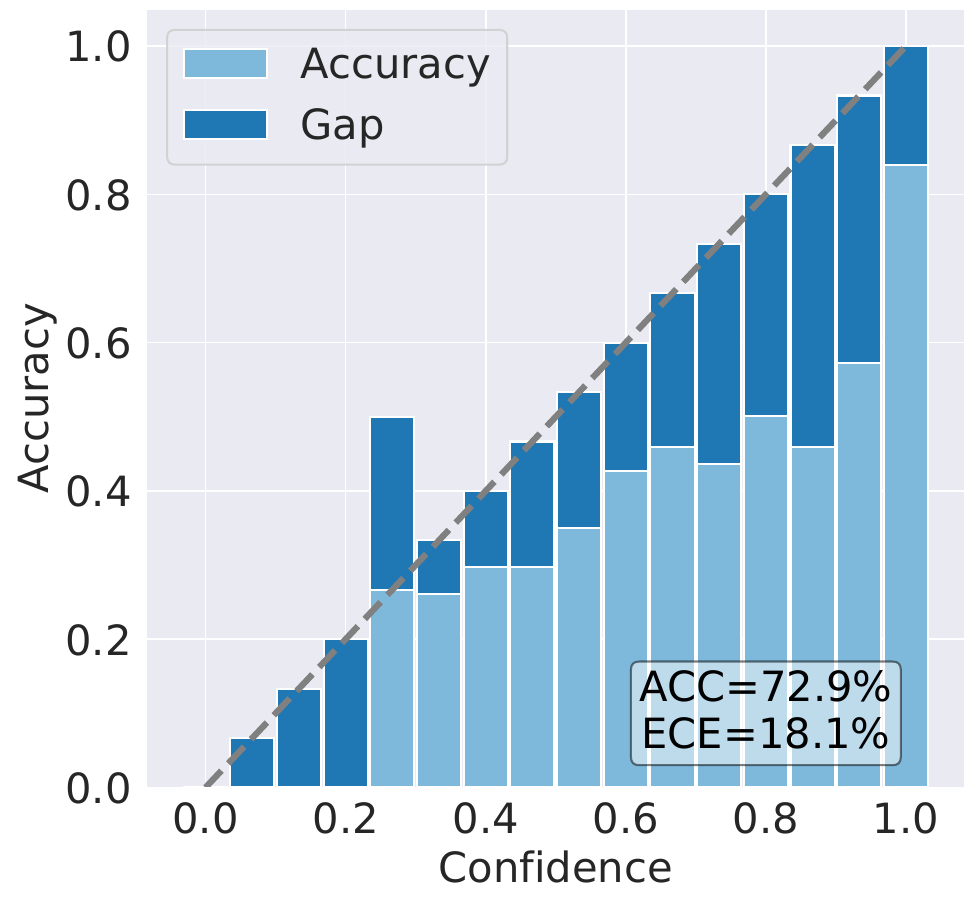}
        \label{fig:ece-lt-cifar10-50-0}
    }
   \subfigure[CE ($\lambda=5e-6$)]{
        \includegraphics[scale=0.2]{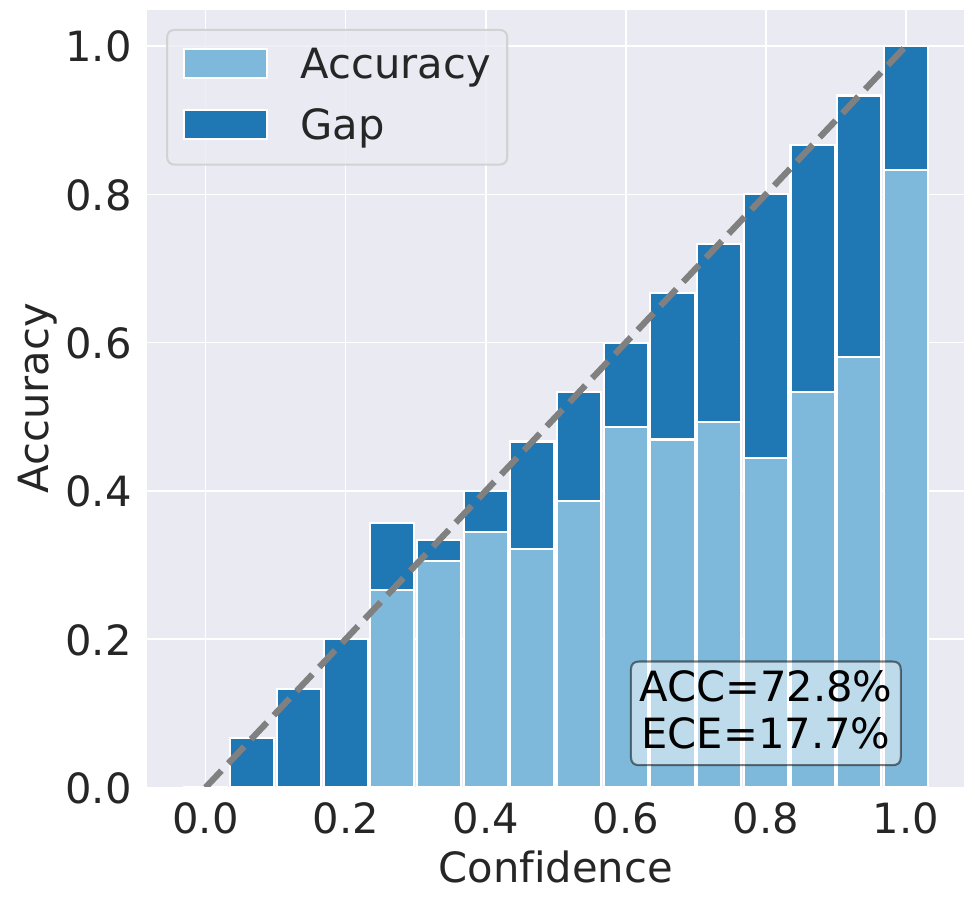}
        \label{fig:ece-lt-cifar10-50-5e-6}
    }
    \subfigure[CE ($\lambda=1e-5$)]{
        \includegraphics[scale=0.2]{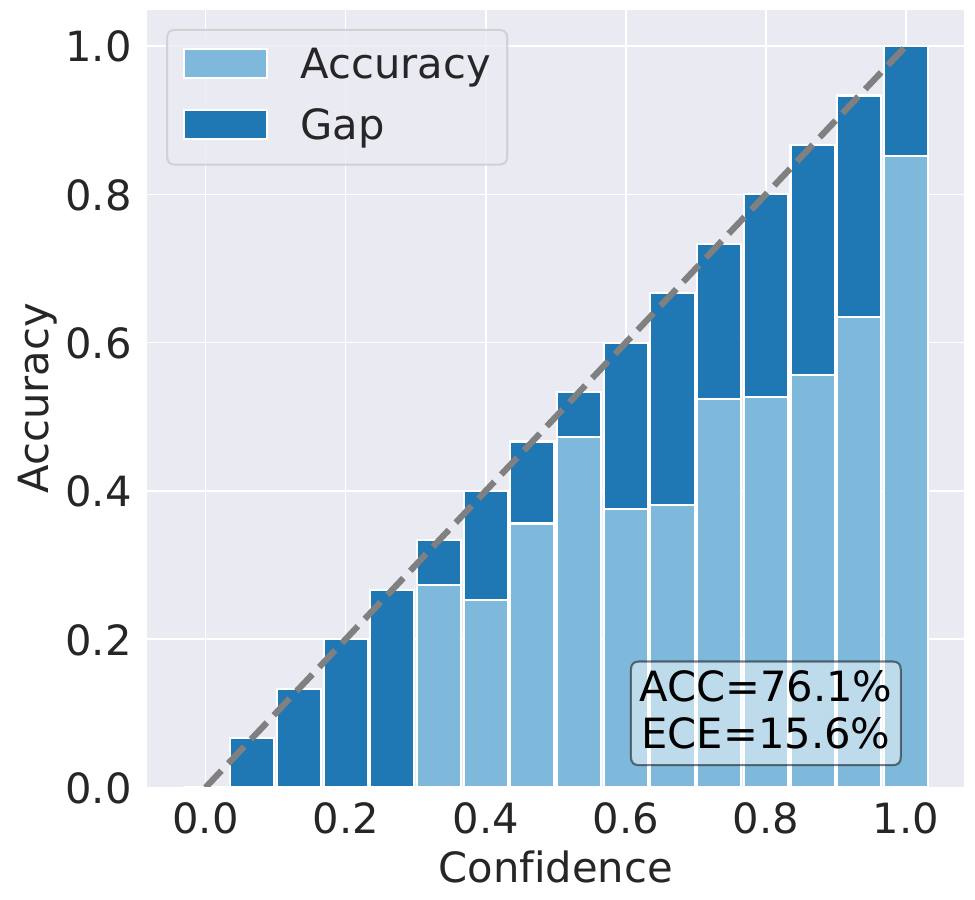}
        \label{fig:ece-lt-cifar10-50-1e-5}
    }
    \subfigure[CE ($\lambda=5e-5$)]{
        \includegraphics[scale=0.2]{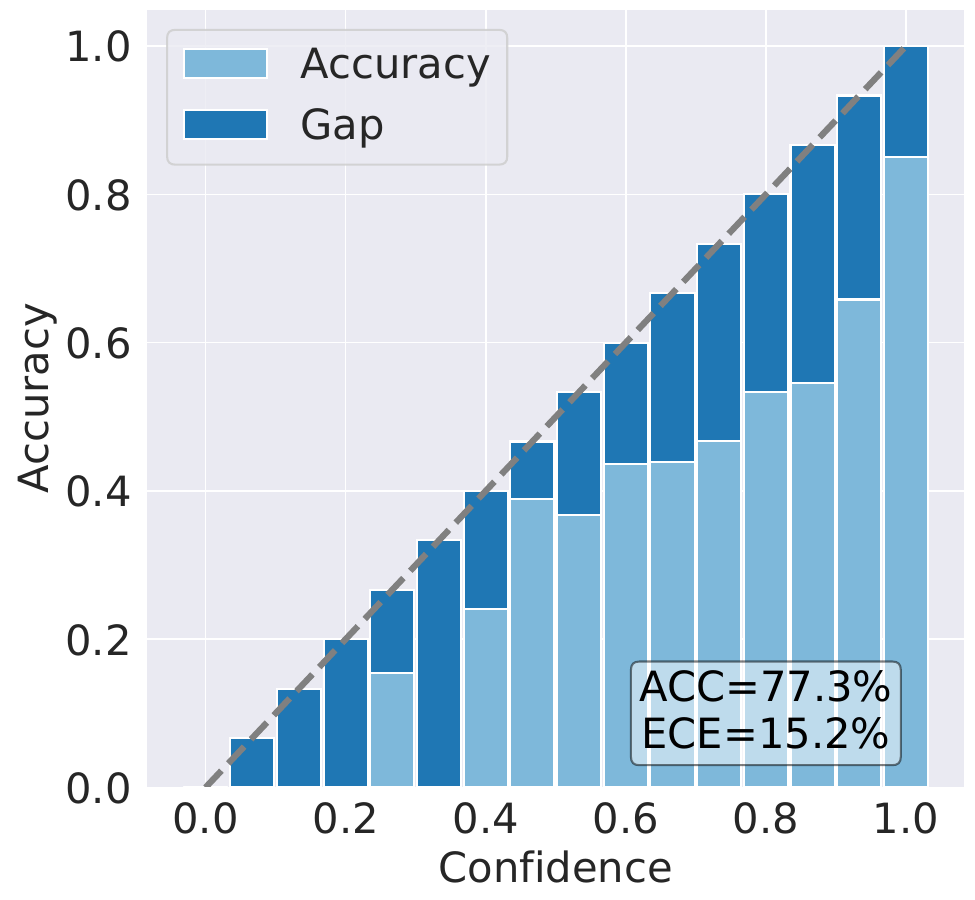}
        \label{fig:ece-lt-cifar10-50-5e-5}
    }
    \subfigure[CE ($\lambda=0$)]{
        \includegraphics[scale=0.2]{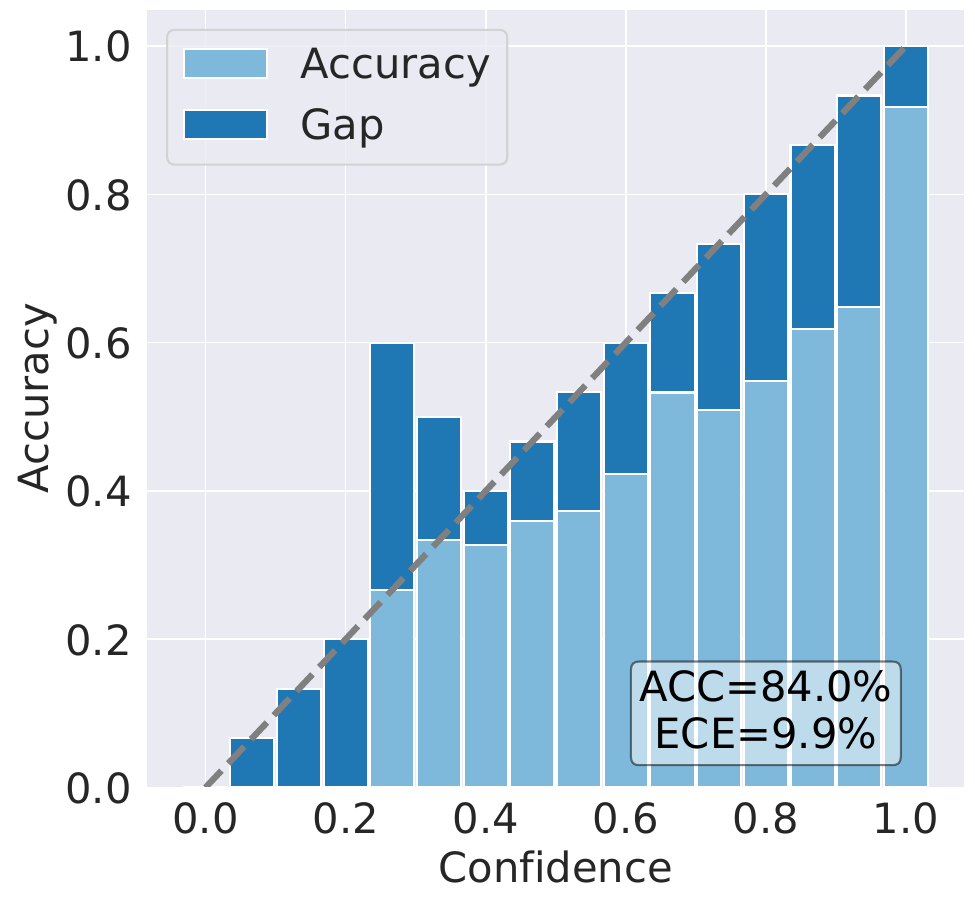}
        \label{fig:ece-lt-cifar10-20-0}
    }
    \subfigure[CE ($\lambda=5e-6$)]{
        \includegraphics[scale=0.2]{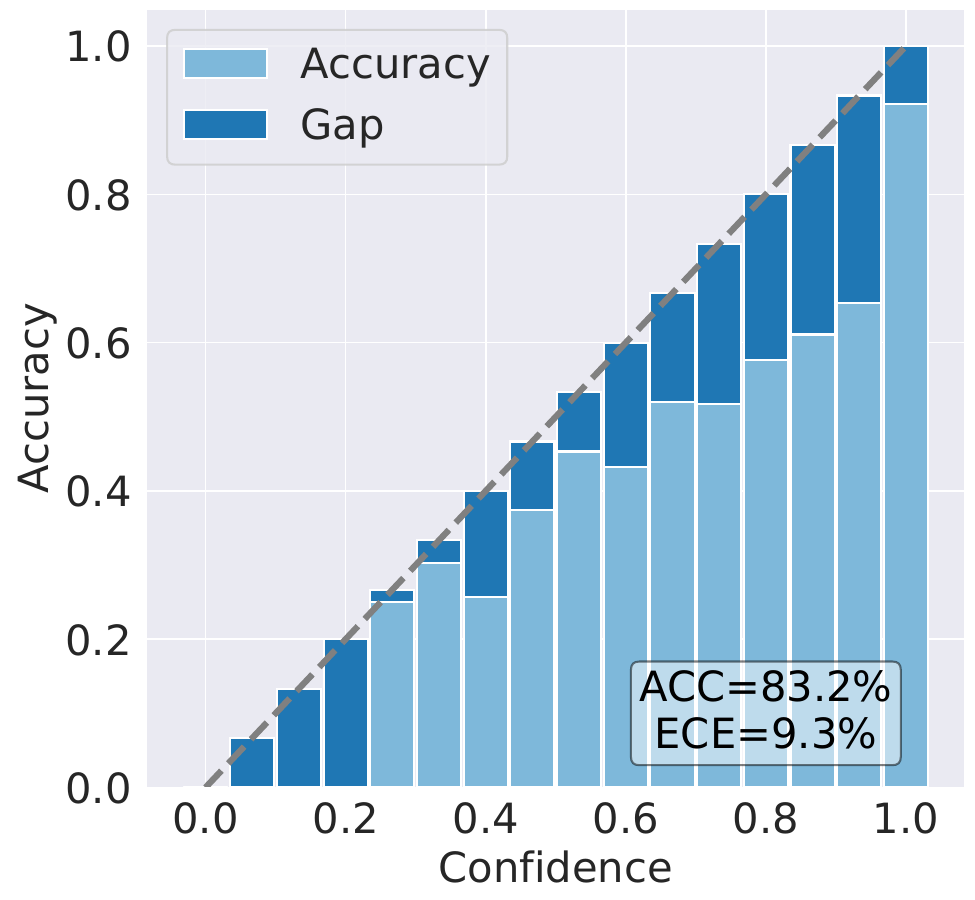}
        \label{fig:ece-lt-cifar10-20-5e-6}
    }
    \subfigure[CE ($\lambda=1e-5$)]{
        \includegraphics[scale=0.2]{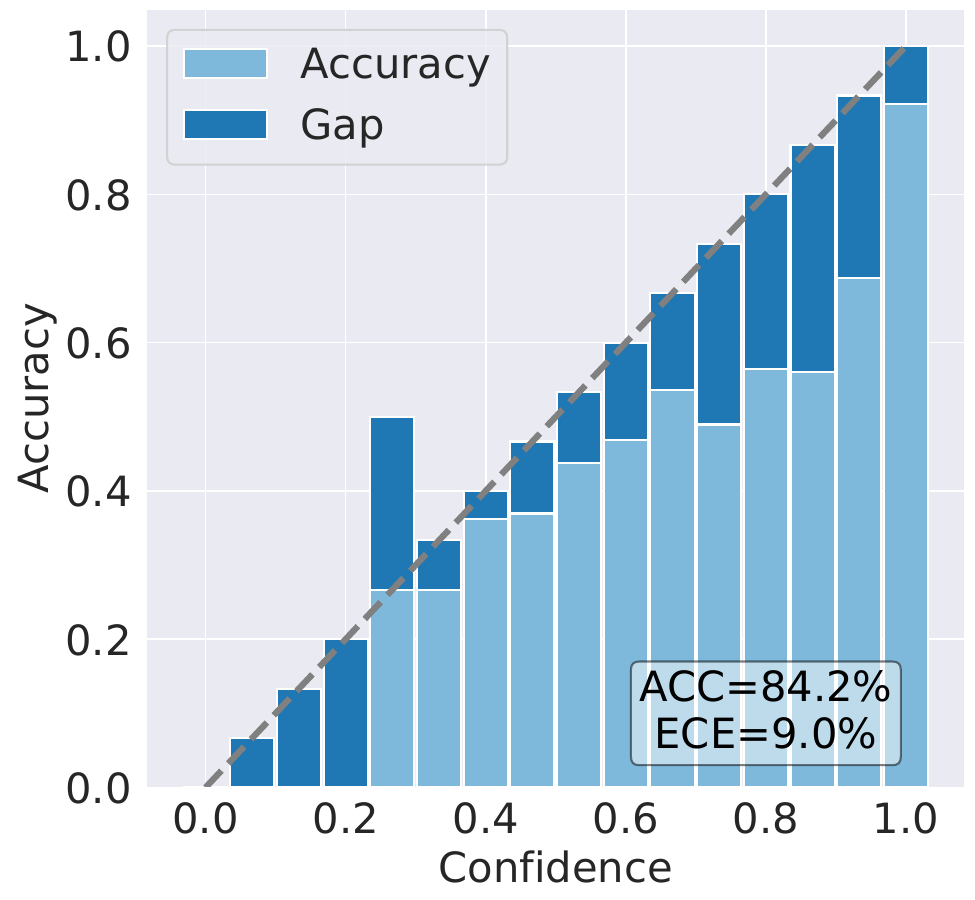}
        \label{fig:ece-lt-cifar10-20-1e-5}
    }
    \subfigure[CE ($\lambda=5e-5$)]{
        \includegraphics[scale=0.2]{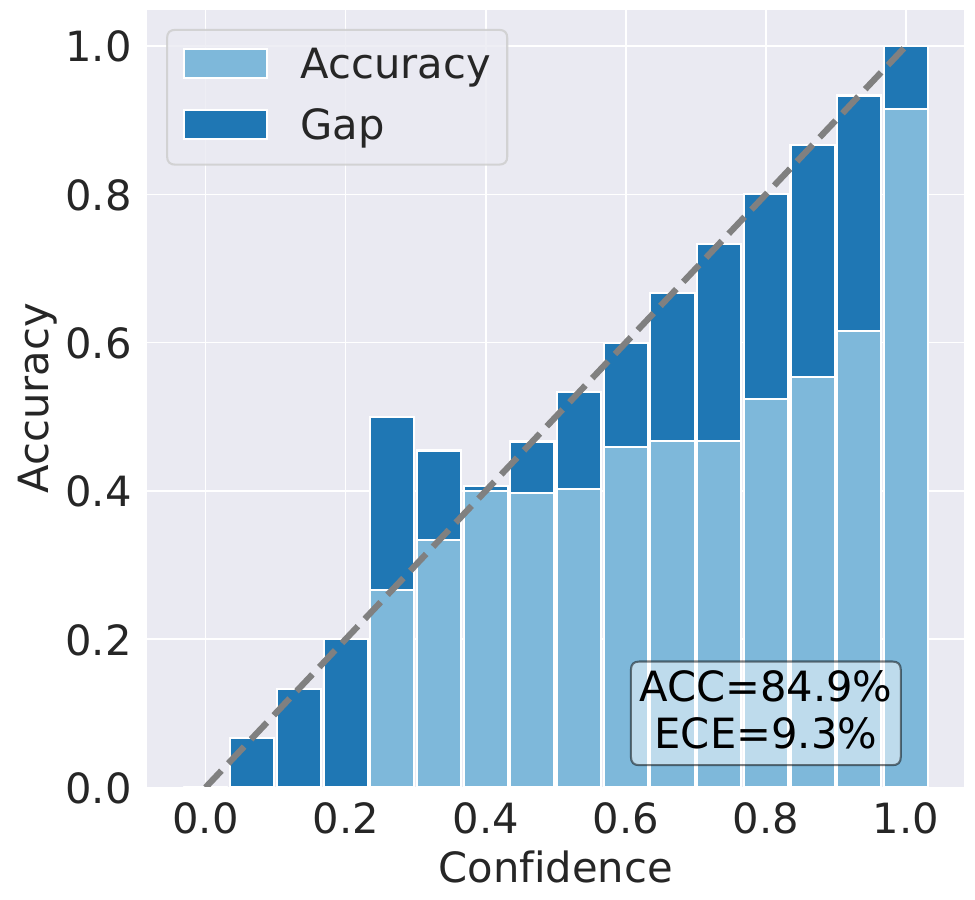}
        \label{fig:ece-lt-cifar10-20-5e-5}
    }
    \subfigure[CE ($\lambda=0$)]{
        \includegraphics[scale=0.2]{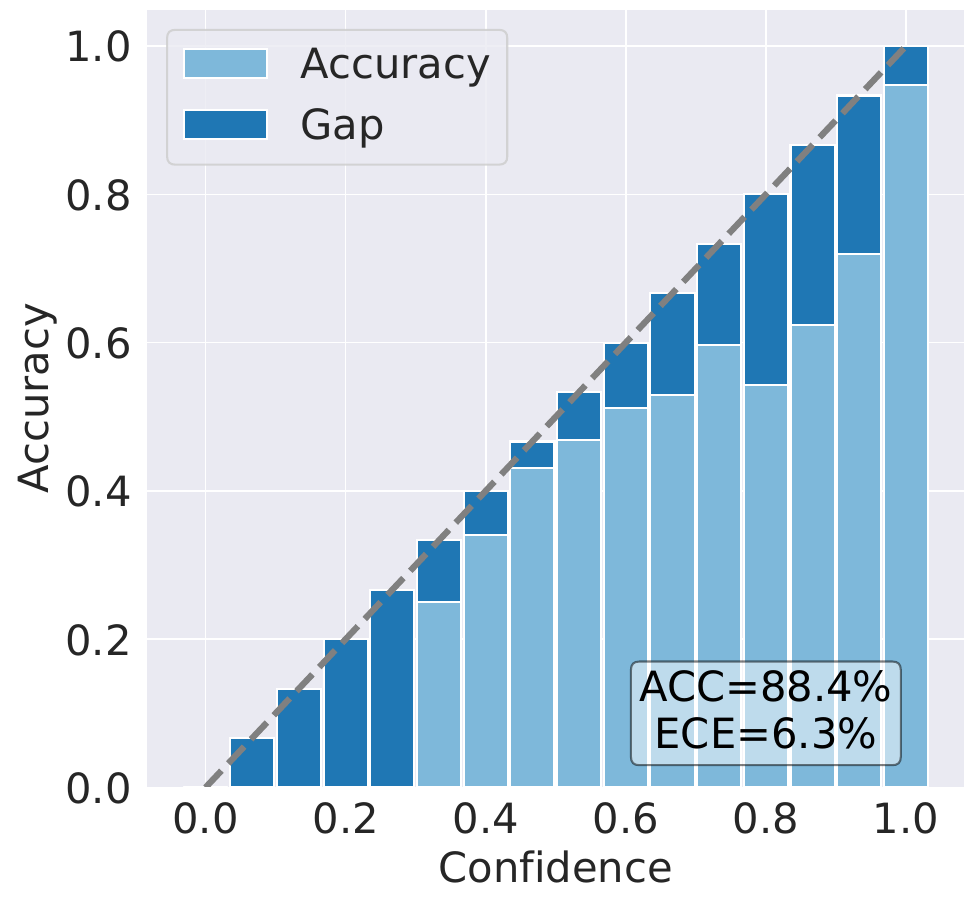}
        \label{fig:ece-lt-cifar10-10-0}
    }
    \subfigure[CE ($\lambda=5e-6$)]{
        \includegraphics[scale=0.2]{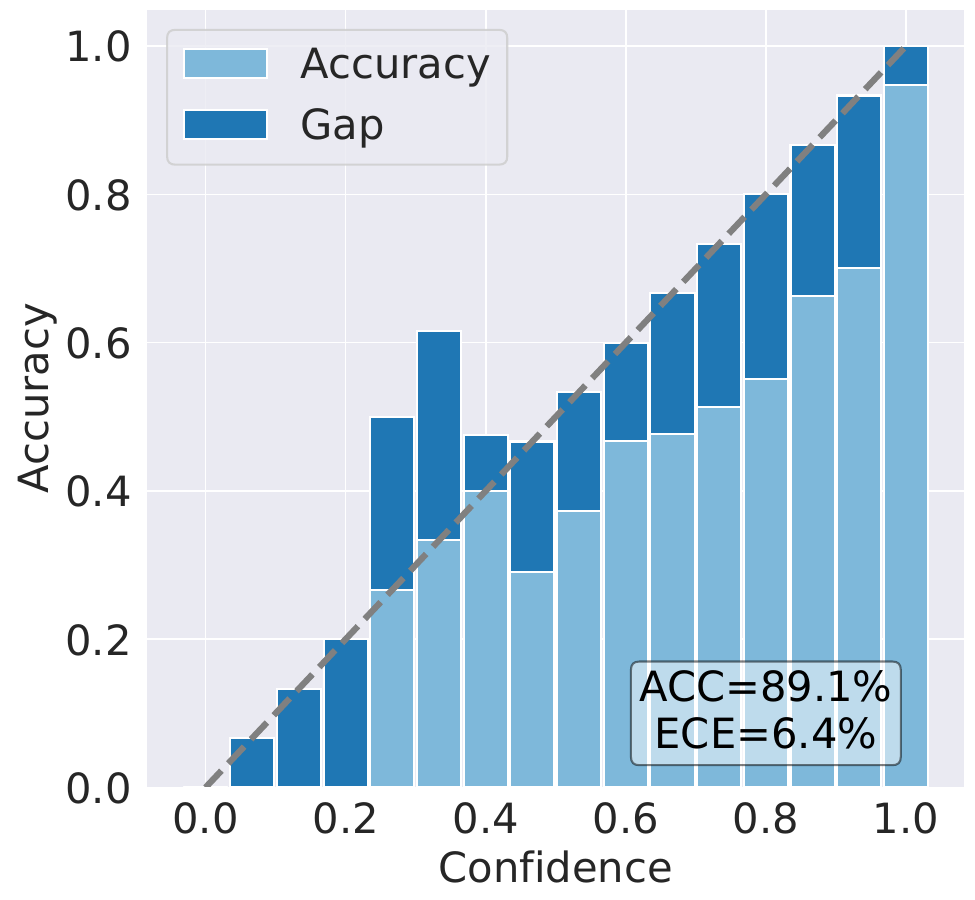}
        \label{fig:ece-lt-cifar10-10-5e-6}
    }
    \subfigure[CE ($\lambda=1e-5$)]{
        \includegraphics[scale=0.2]{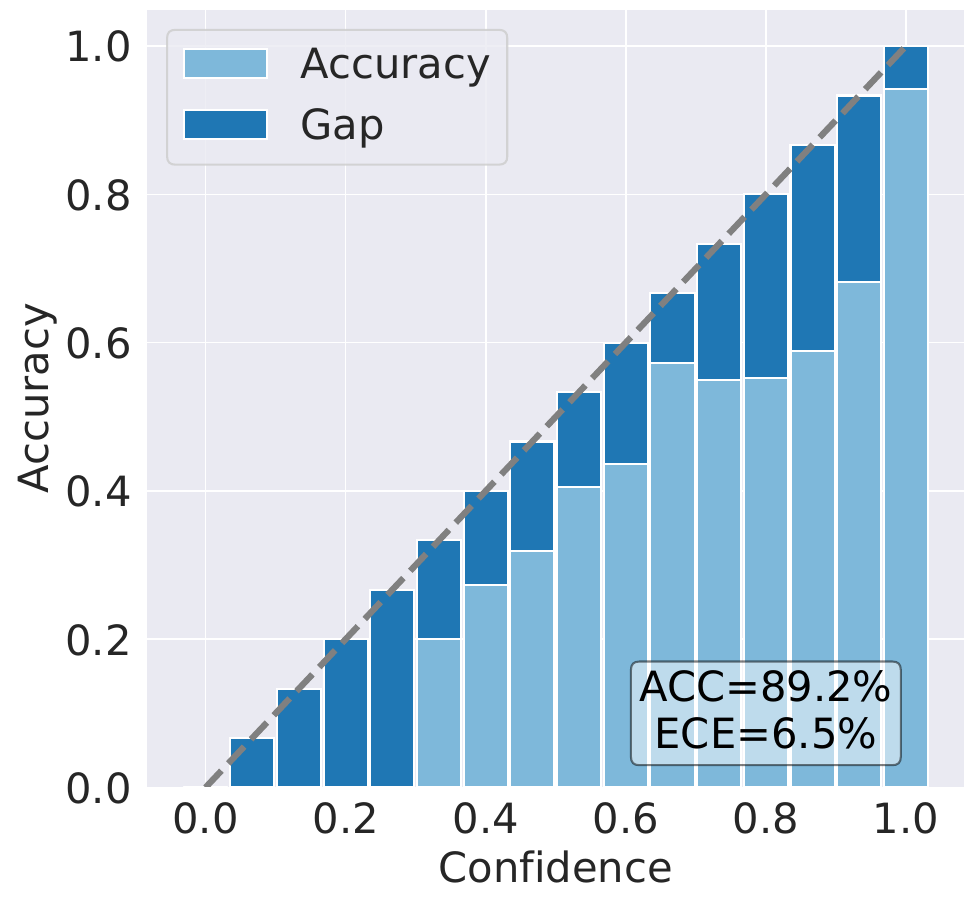}
        \label{fig:ece-lt-cifar10-10-1e-5}
    }
    \subfigure[CE ($\lambda=5e-5$)]{
        \includegraphics[scale=0.2]{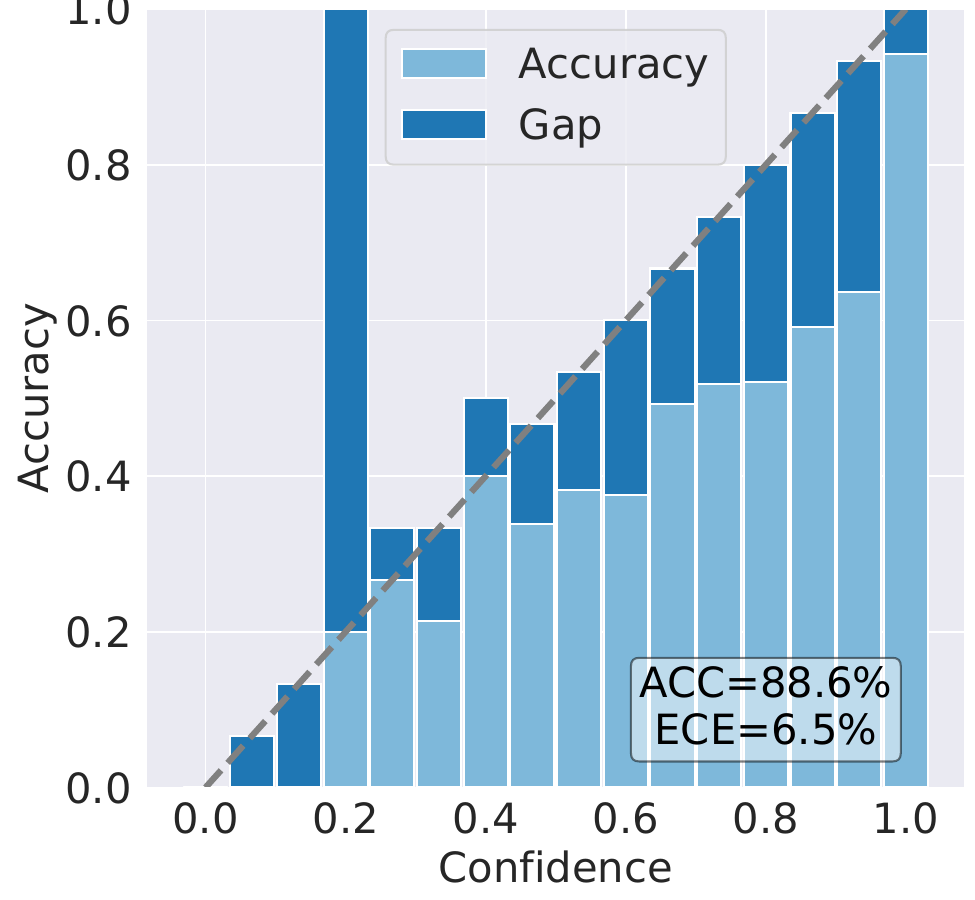}
        \label{fig:ece-lt-cifar10-10-5e-5}
    }
    \caption{Reliability diagrams of ResNet-34 \citep{he2016deep} trained by CE on CIFAR-10-LT with imbalance ratio $\rho\in\{100, 50, 20, 10\}$ under different explicit feature regularization ($\lambda\in\{0, 5e-6, 1e-5, 5e-5\}$). As can be seen, an appropriate larger weight decay can improve both accuracy and confidence}
    \label{fig:ece-lt-cifar10}
\end{figure}

\begin{figure}
    \centering
    \subfigure[CE ($\lambda=0$)]{
    \includegraphics[scale=0.2]{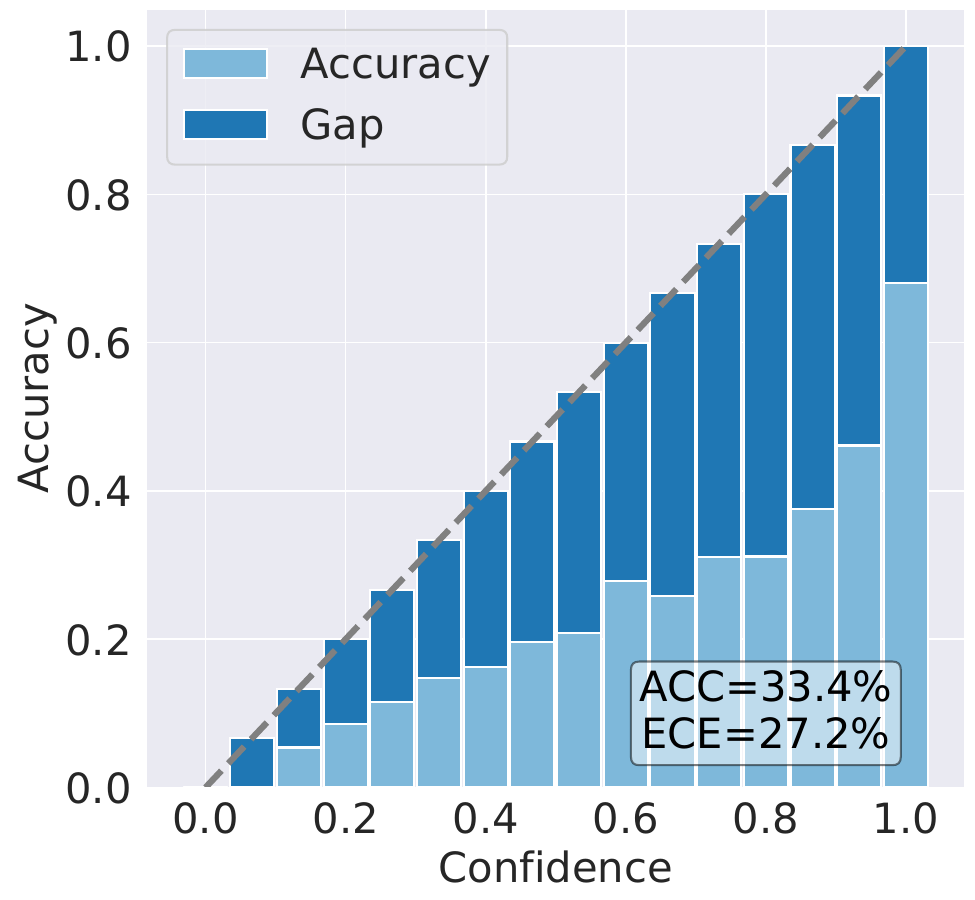}
    \label{fig:ece-lt-cifar100-100-0}
    }
    \subfigure[CE ($\lambda=5e-6$)]{
    \includegraphics[scale=0.2]{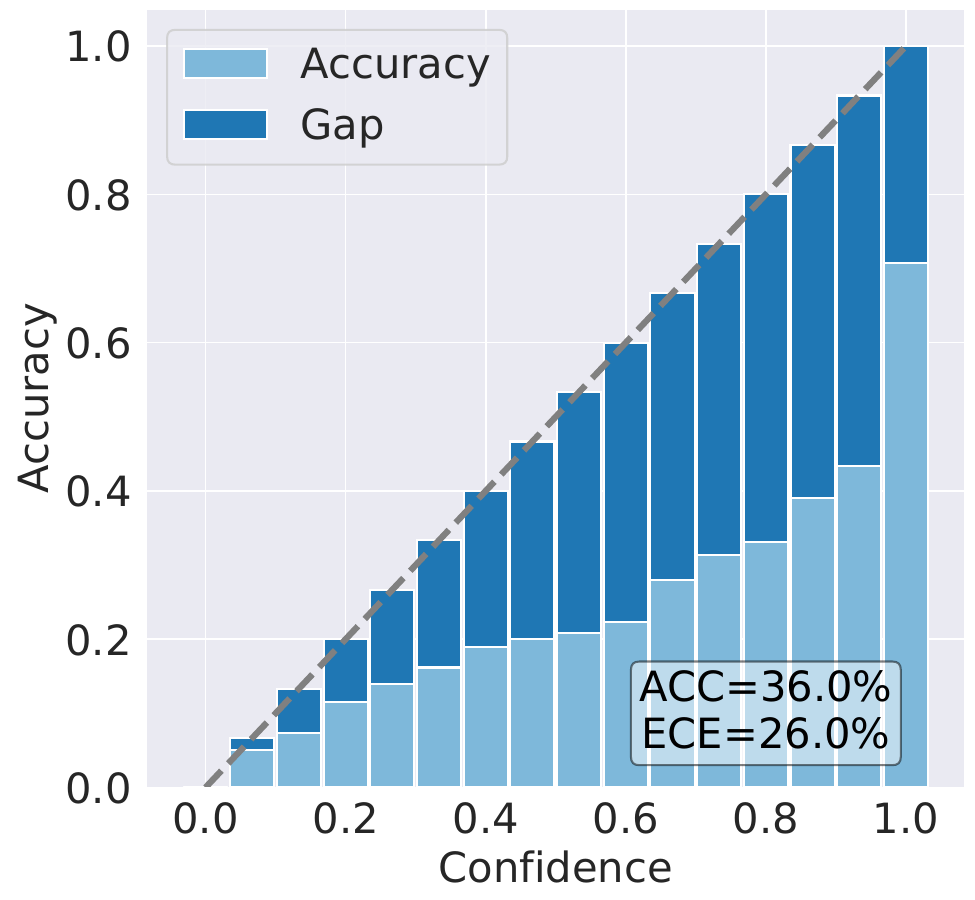}
    \label{fig:ece-lt-cifar100-100-5e-6}
    }
    \subfigure[CE ($\lambda=1e-5$)]{
    \includegraphics[scale=0.2]{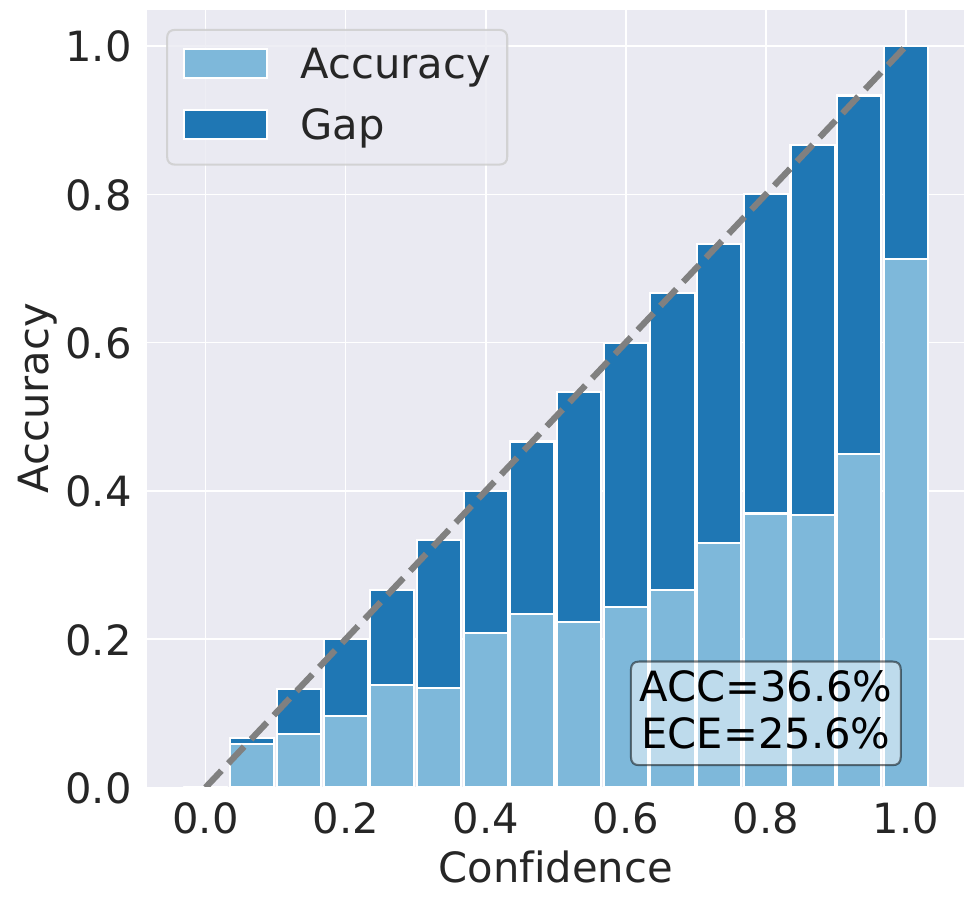}
    \label{fig:ece-lt-cifar100-100-1e-5}
    }
    \subfigure[CE ($\lambda=5e-5$)]{
    \includegraphics[scale=0.2]{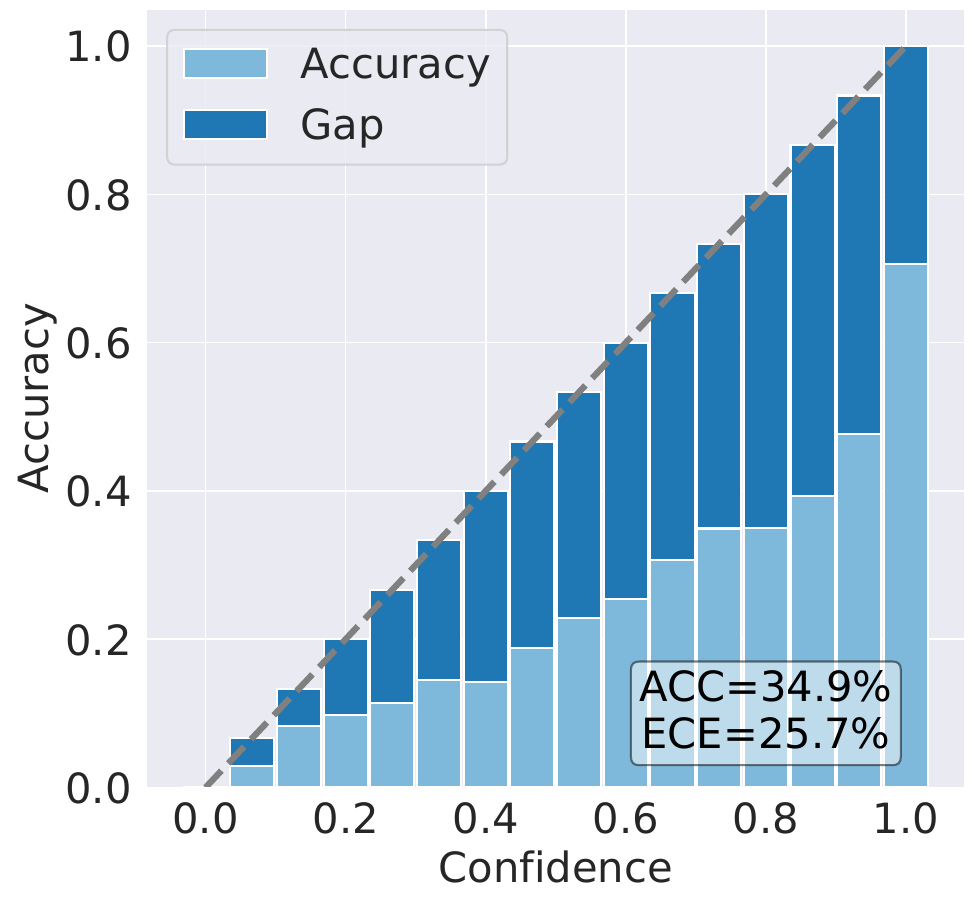}
    \label{fig:ece-lt-cifar100-100-5e-5}
    }
    \\
    \subfigure[CE ($\lambda=0$)]{
    \includegraphics[scale=0.2]{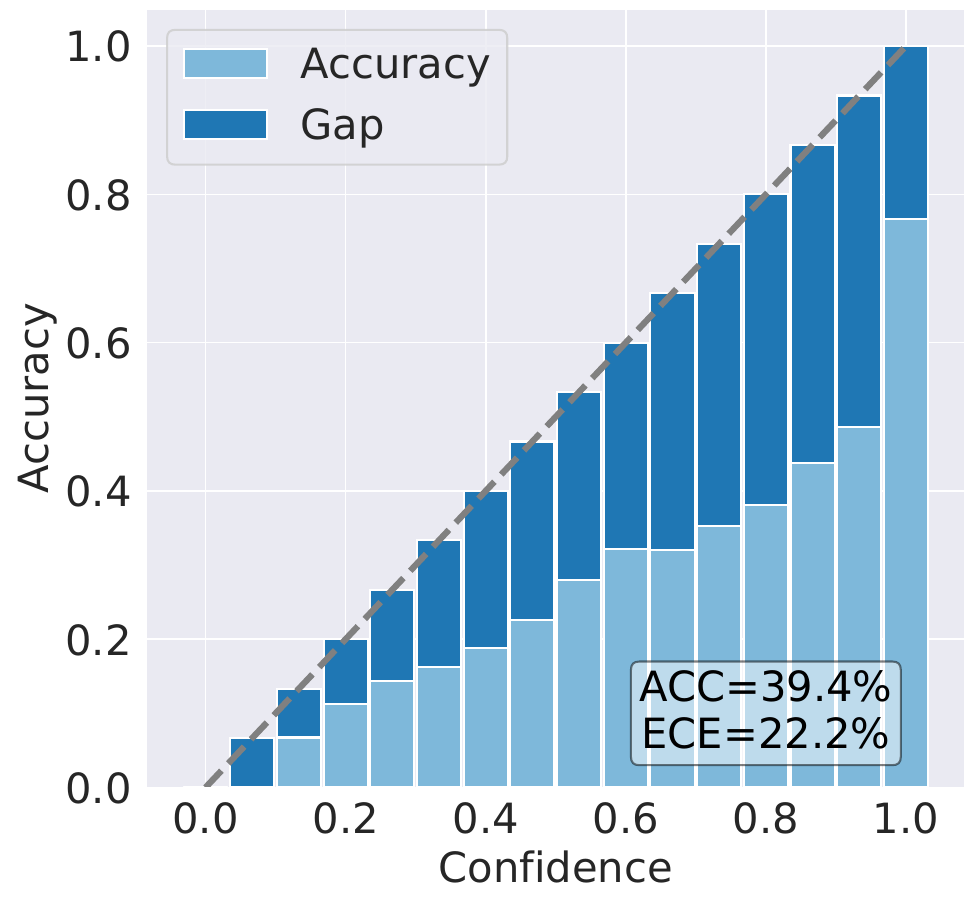}
    \label{fig:ece-lt-cifar100-50-0}
    }
    \subfigure[CE ($\lambda=5e-6$)]{
    \includegraphics[scale=0.2]{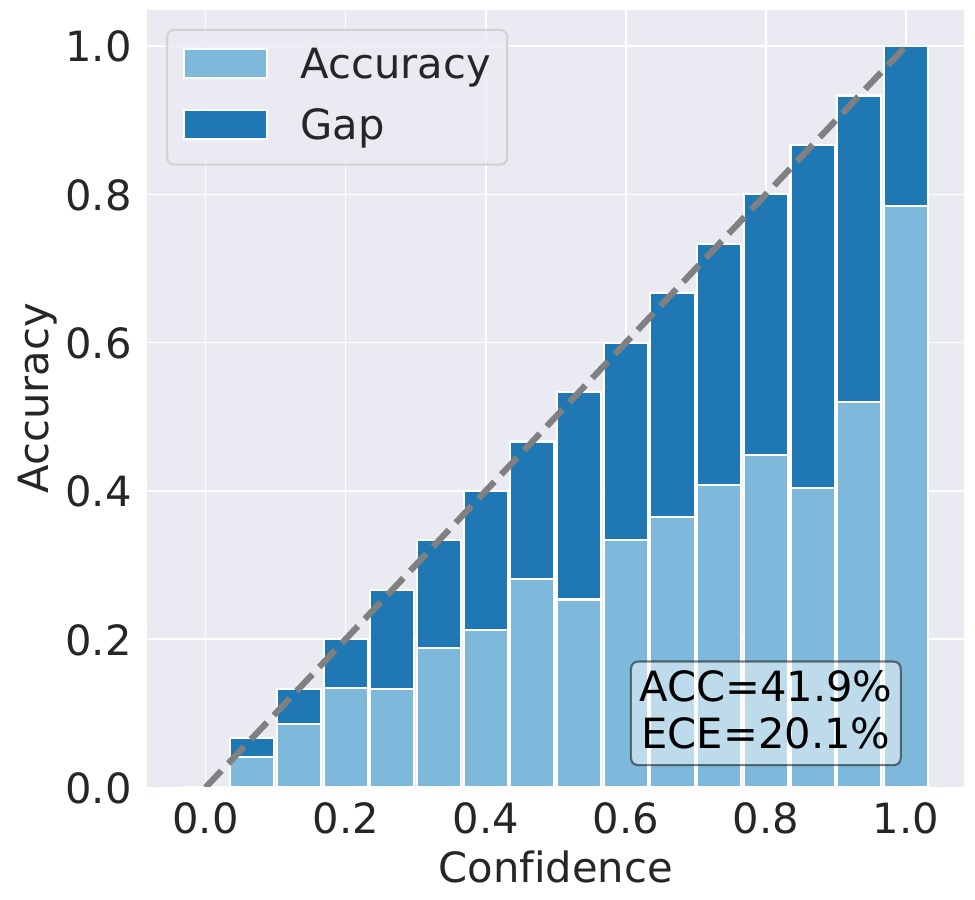}
    \label{fig:ece-lt-cifar100-50-5e-6}
    }
    \subfigure[CE ($\lambda=1e-5$)]{
    \includegraphics[scale=0.2]{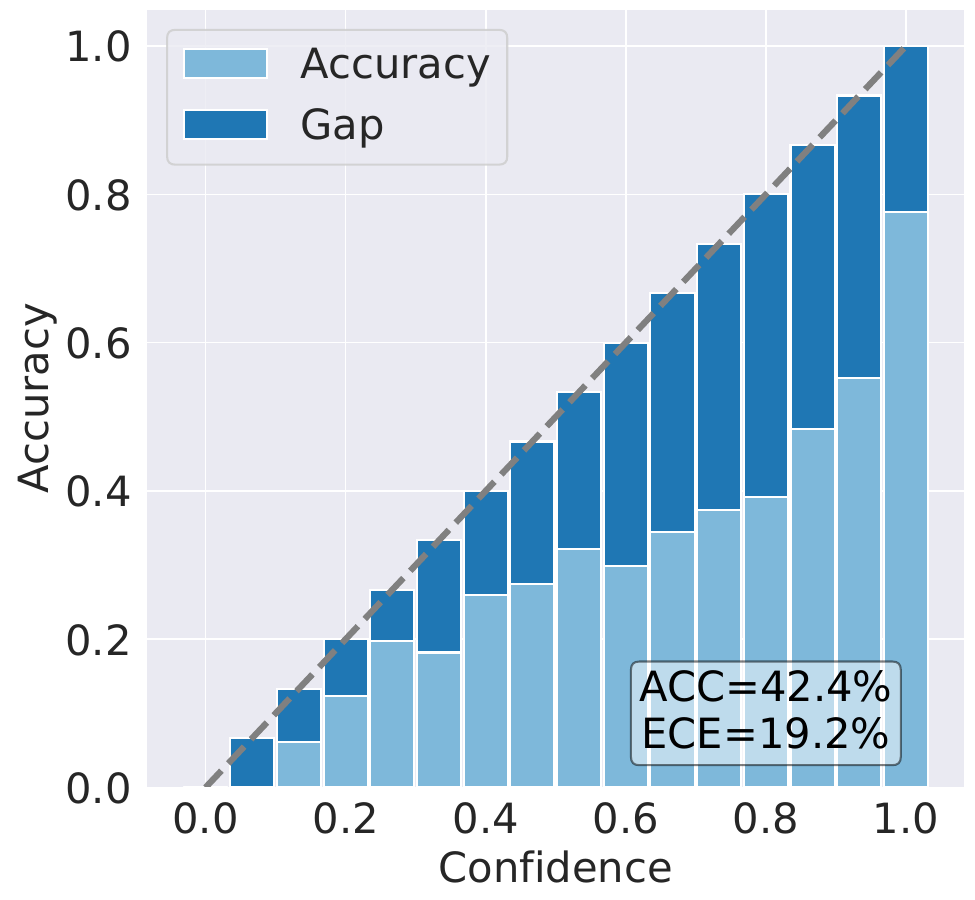}
    \label{fig:ece-lt-cifar100-50-1e-5}
    }
    \subfigure[CE ($\lambda=5e-5$)]{
    \includegraphics[scale=0.2]{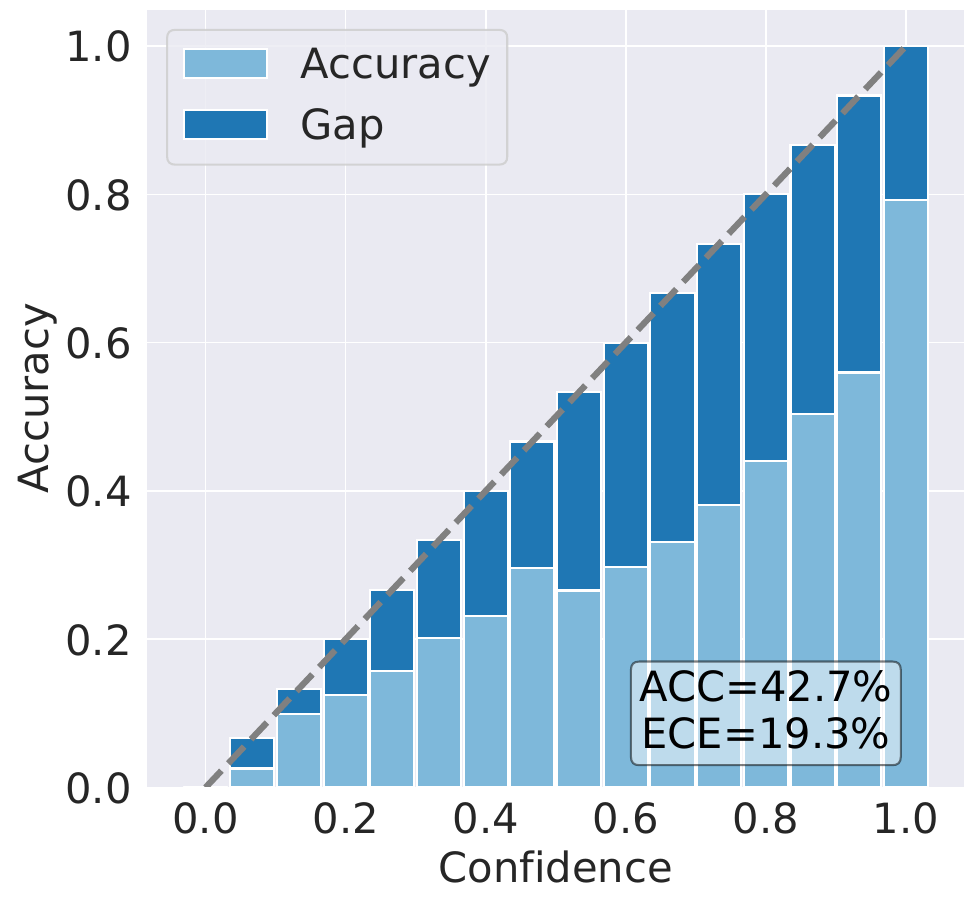}
    \label{fig:ece-lt-cifar100-50-5e-5}
    }
    \\
    \subfigure[CE ($\lambda=0$)]{
    \includegraphics[scale=0.2]{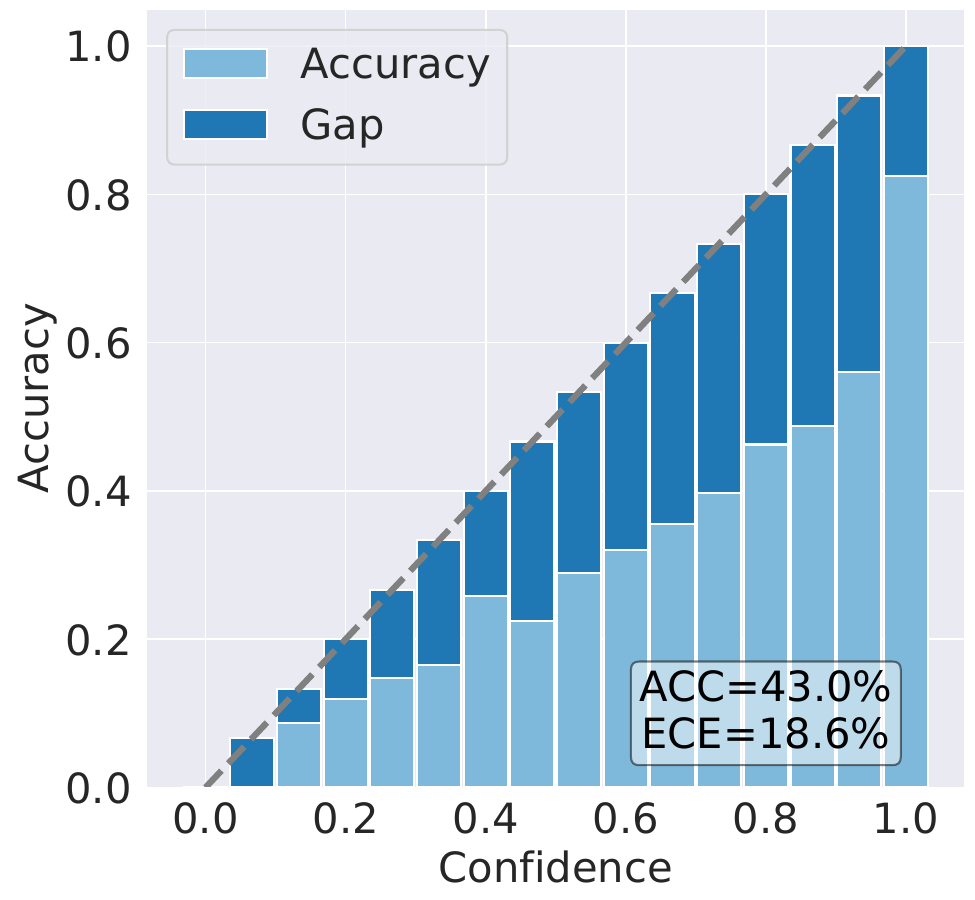}
    \label{fig:ece-lt-cifar100-20-0}
    }
    \subfigure[CE ($\lambda=5e-6$)]{
    \includegraphics[scale=0.2]{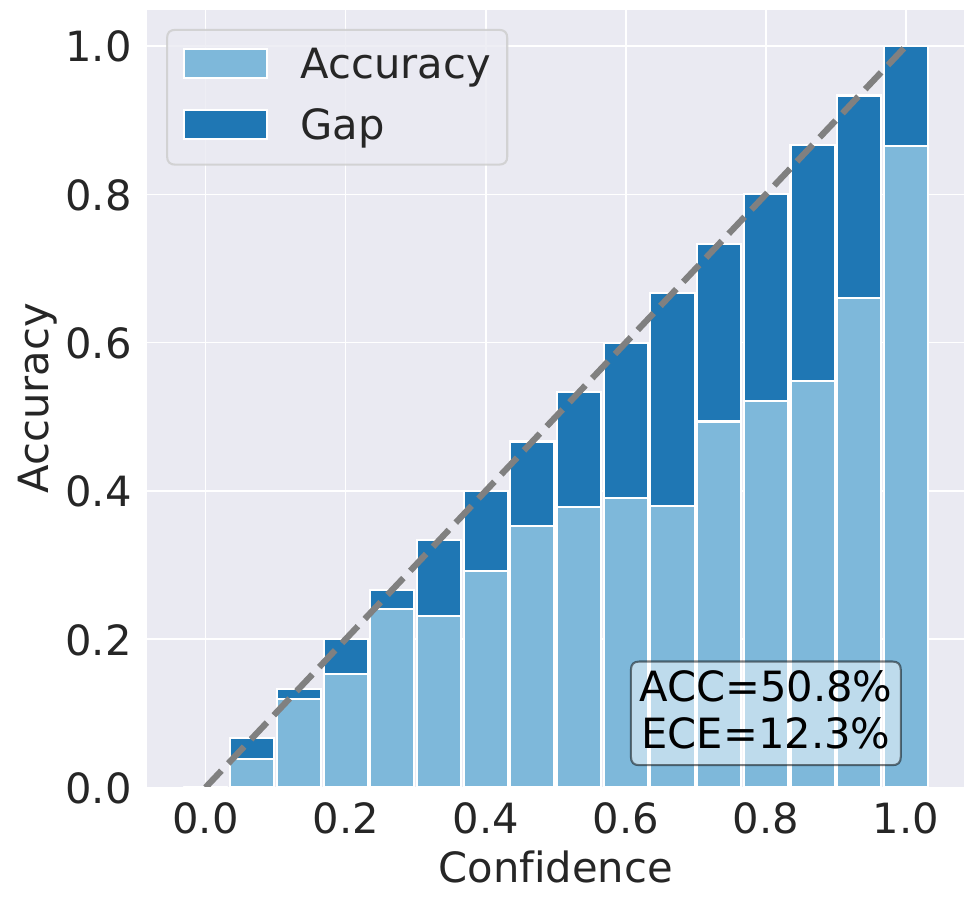}
    \label{fig:ece-lt-cifar100-20-5e-6}
    }
    \subfigure[CE ($\lambda=1e-5$)]{
    \includegraphics[scale=0.2]{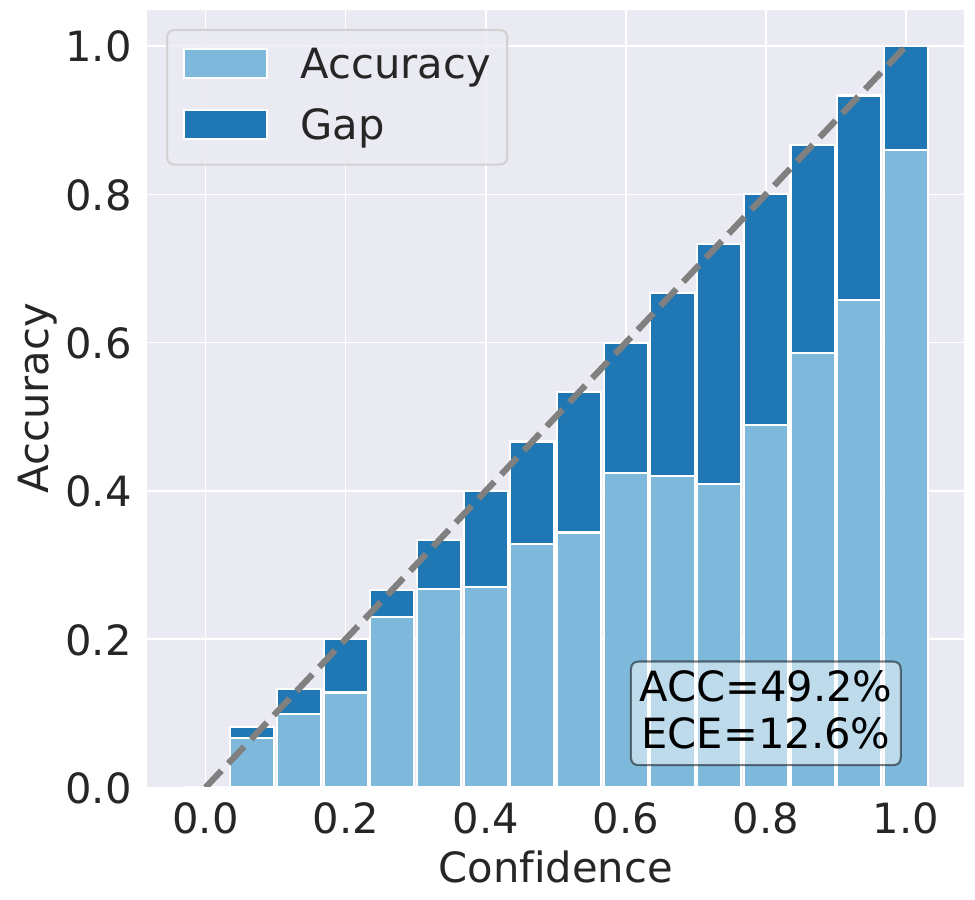}
    \label{fig:ece-lt-cifar100-20-1e-5}
    }
    \subfigure[CE ($\lambda=5e-5$)]{
    \includegraphics[scale=0.2]{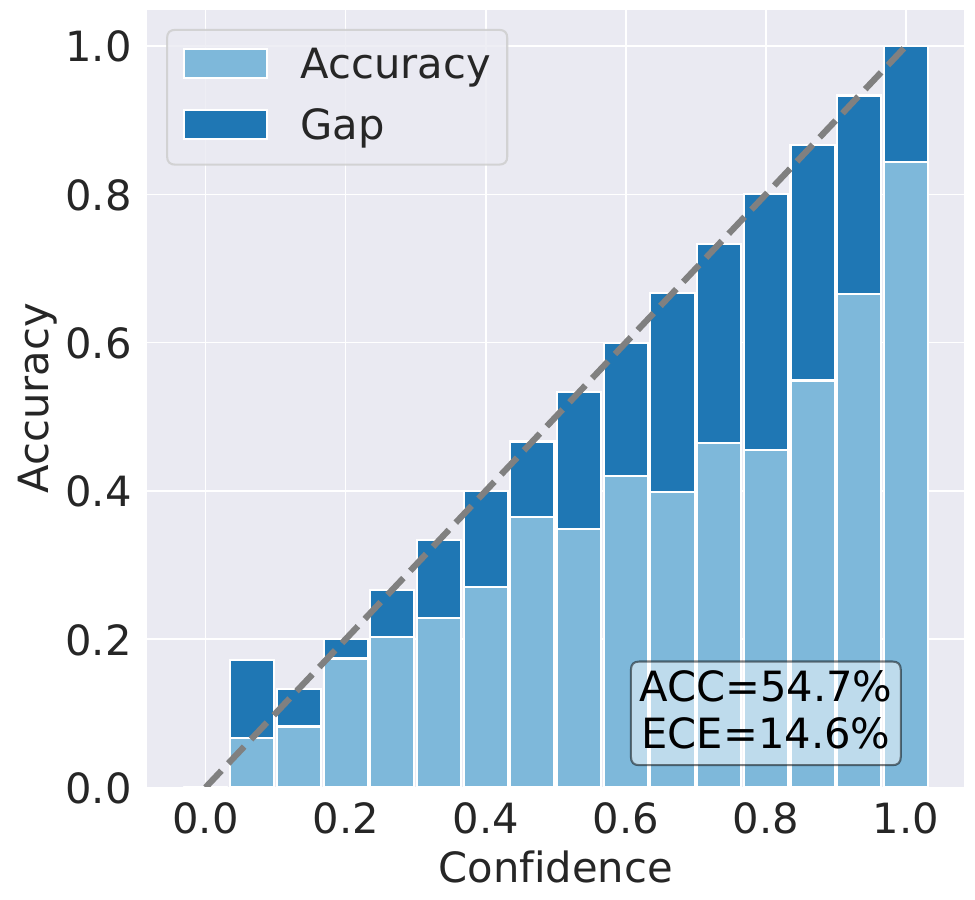}
    \label{fig:ece-lt-cifar100-20-5e-5}
    }
    \\
    \subfigure[CE ($\lambda=0$)]{
    \includegraphics[scale=0.2]{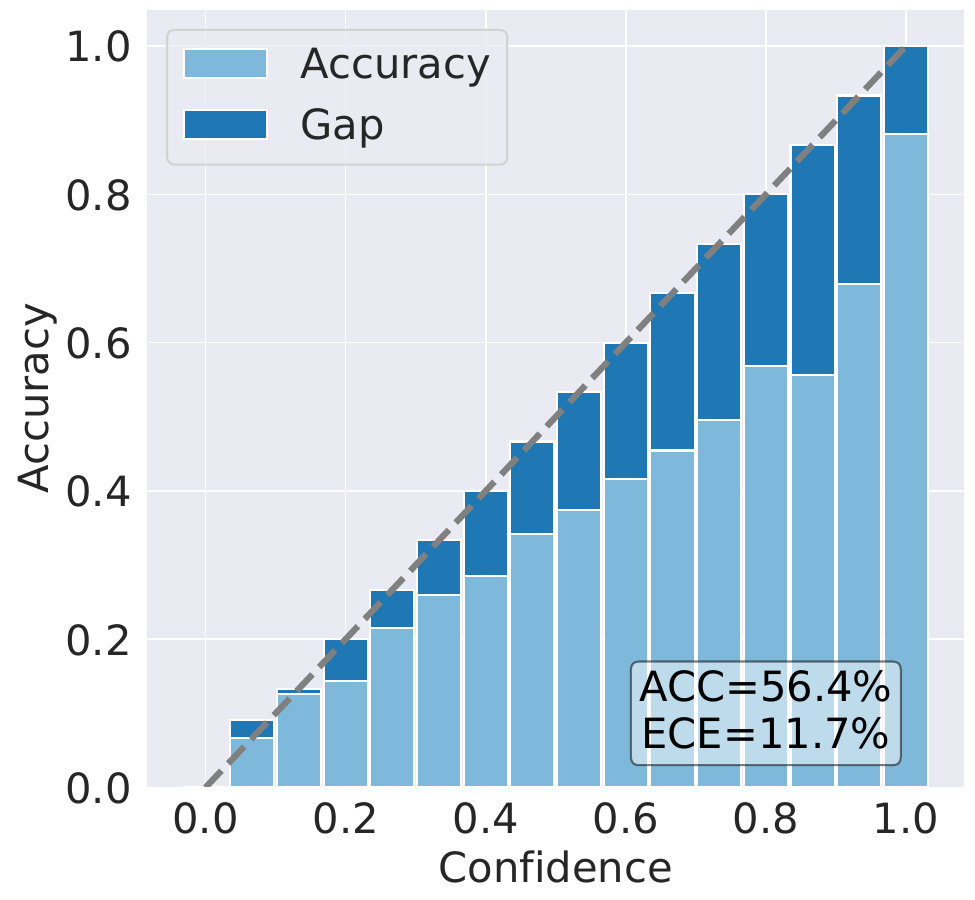}
    \label{fig:ece-lt-cifar100-10-0}
    }
    \subfigure[CE ($\lambda=5e-6$)]{
    \includegraphics[scale=0.2]{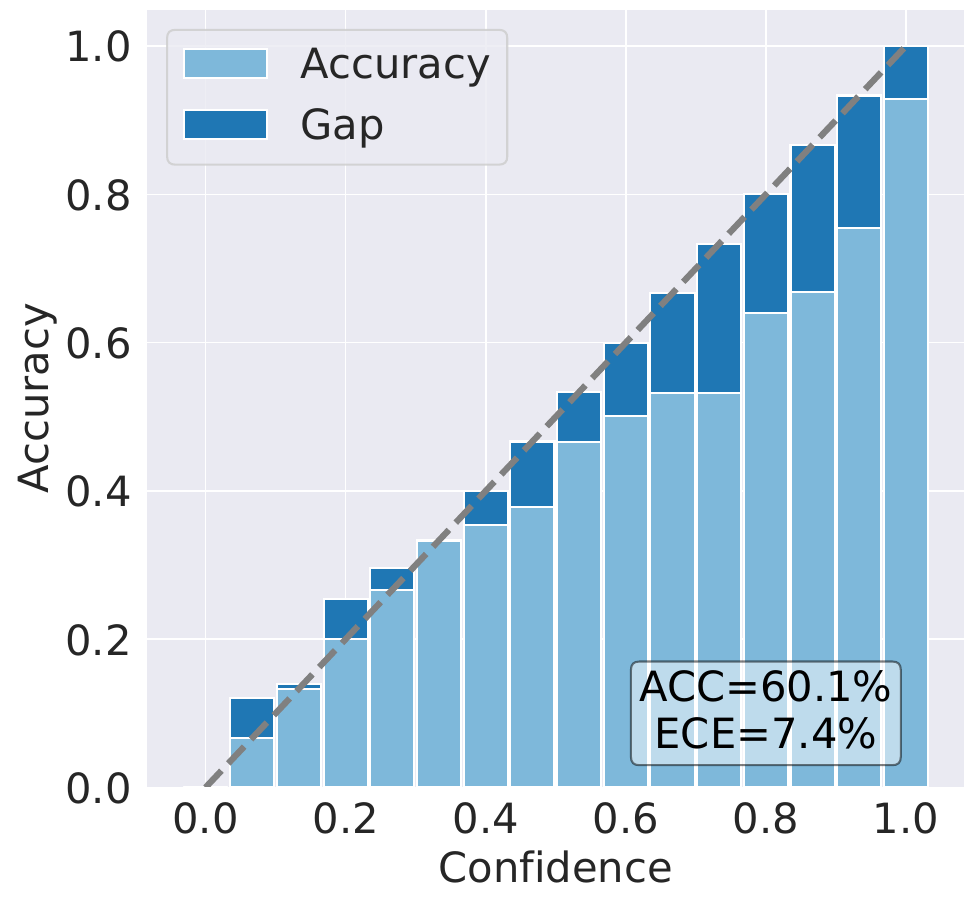}
    \label{fig:ece-lt-cifar100-10-5e-6}
    }
    \subfigure[CE ($\lambda=1e-5$)]{
    \includegraphics[scale=0.2]{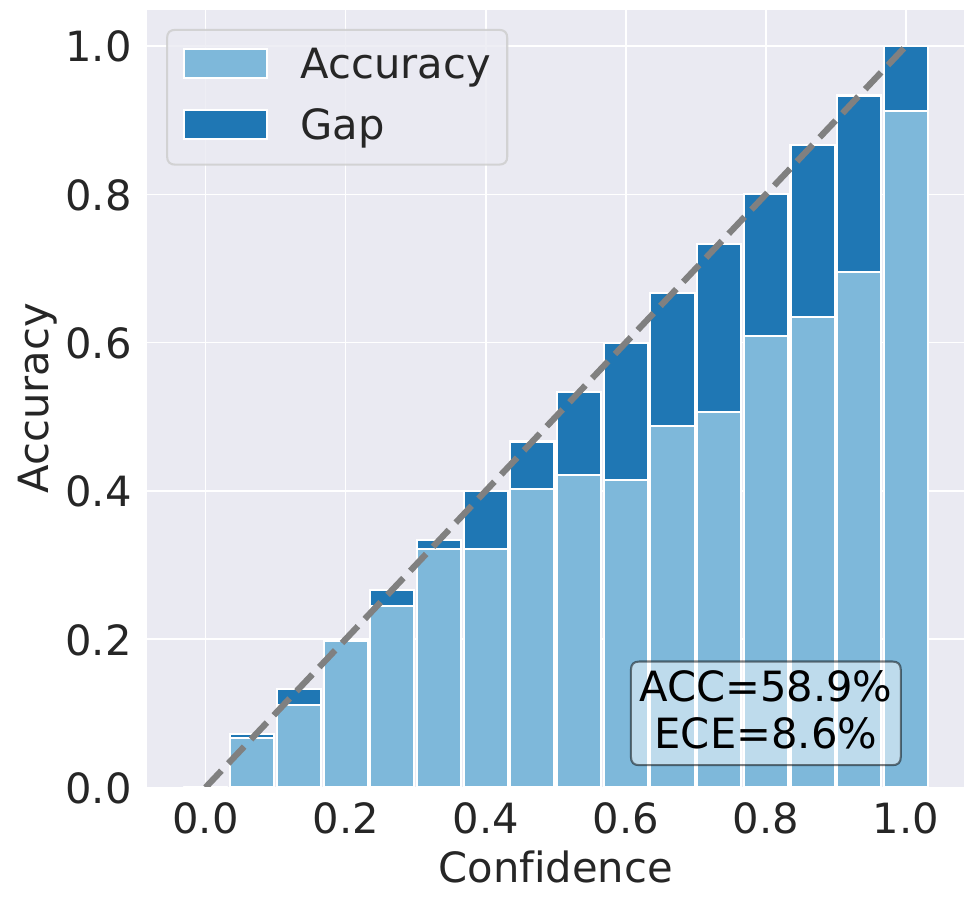}
    \label{fig:ece-lt-cifar100-10-1e-5}
    }
    \subfigure[CE ($\lambda=5e-5$)]{
    \includegraphics[scale=0.2]{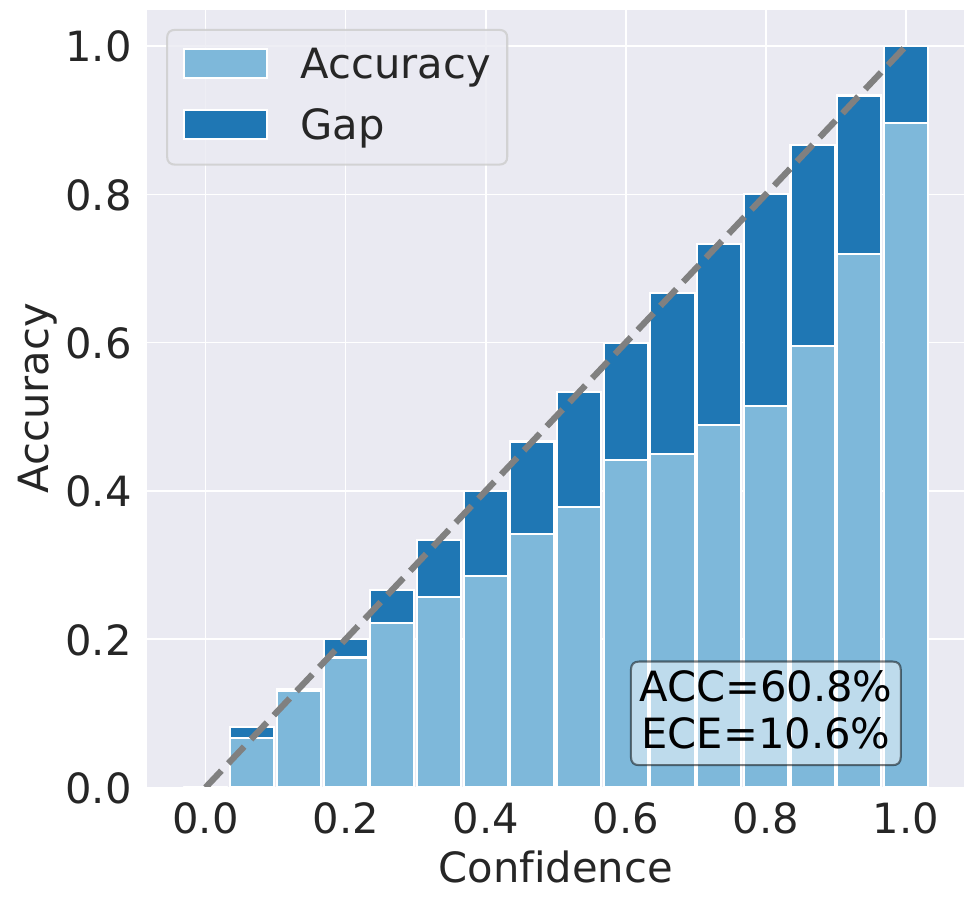}
    \label{fig:ece-lt-cifar100-10-5e-5}
    }
    \caption{Reliability diagrams of ResNet-34 \citep{he2016deep} trained by CE on CIFAR-100-LT with imbalance ratio $\rho\in\{100, 50, 20, 10\}$ under different explicit feature regularization ($\lambda\in\{0.0, 5e-6, 1e-5, 5e-5\}$), where ECE denotes the expected calibration error \citep{zhong2021improving}.  As can be seen, an appropriate larger weight decay can improve both accuracy and confidence calibration. }
    \label{fig:ece-lt-cifar100}
\end{figure}

\begin{figure}
    \centering
    \subfigure[Feature Norm]{
        \includegraphics[scale=0.48]{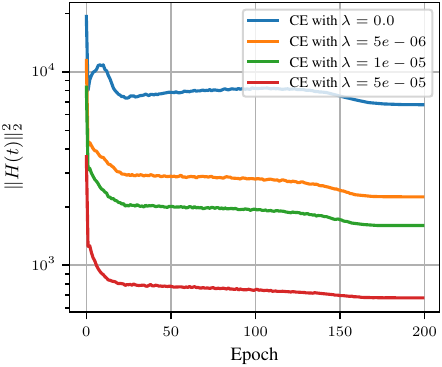}
        \label{fig:ce-feat-cifar10}
    }
    \subfigure[Train Loss]{
        \includegraphics[scale=0.48]{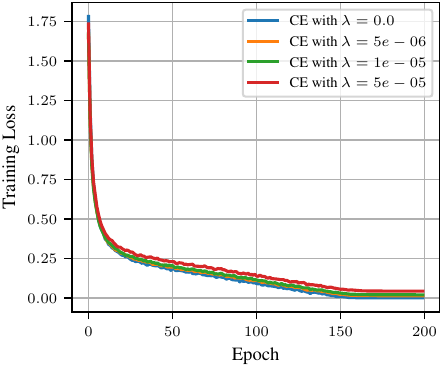}
        \label{fig:ce-loss-cifar10}
    }
    \subfigure[Validation Accuracy]{
        \includegraphics[scale=0.48]{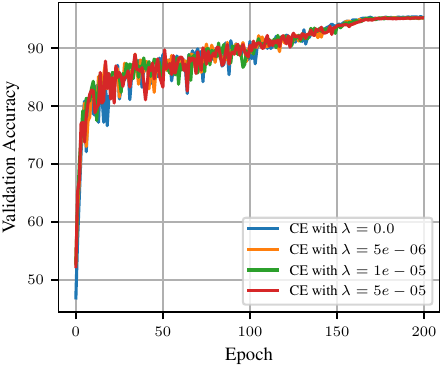}
        \label{fig:ce-top1-cifar10}
    }
    \subfigure[Feature Norm]{
        \includegraphics[scale=0.48]{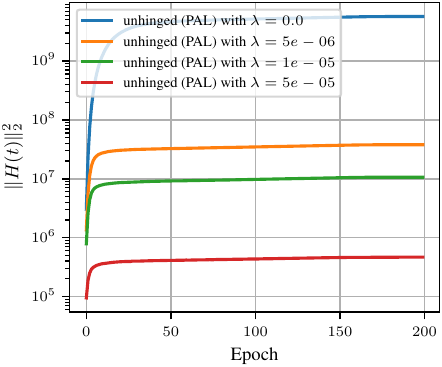}
        \label{fig:fsm-feat-cifar10}
    }
    \subfigure[Train Loss]{
        \includegraphics[scale=0.48]{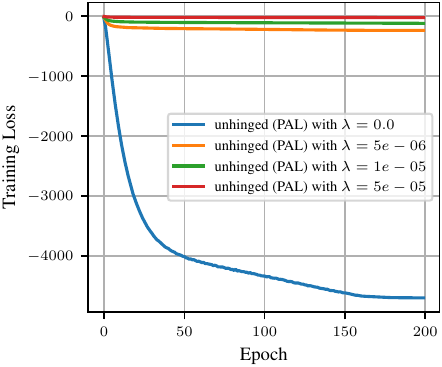}
        \label{fig:fsm-loss-cifar10}
    }
    \subfigure[Validation Accuracy]{
        \includegraphics[scale=0.48]{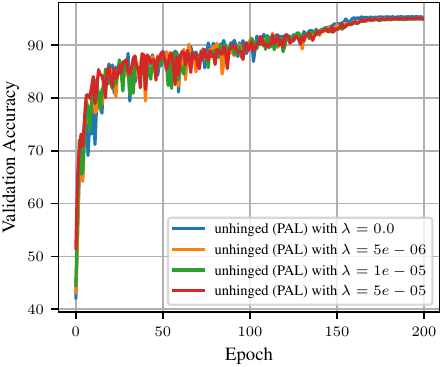}
        \label{fig:fsm-top1-cifar10}
    }
    \subfigure[Feature Norm]{
        \includegraphics[scale=0.48]{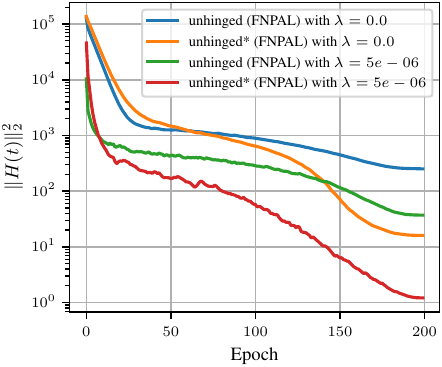}
        \label{fig:asm-feat-cifar10}
    }
    \subfigure[Train Loss]{
        \includegraphics[scale=0.48]{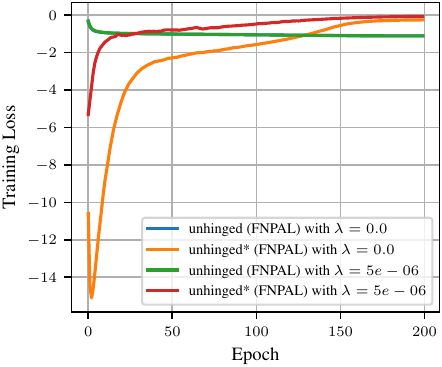}
        \label{fig:asm-loss-cifar10}
    }
    \subfigure[Validation Accuracy]{
        \includegraphics[scale=0.48]{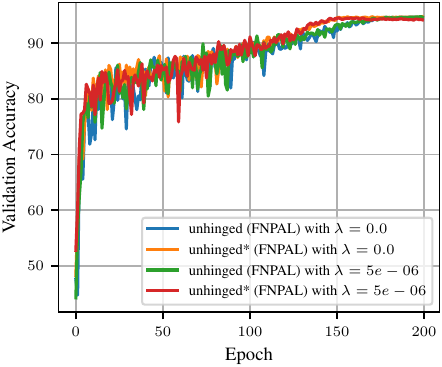}
        \label{fig:asm-top1-cifar10}
    }
    \caption{Behavior of visual classification on CIFAR-10 with CE, the unhinged loss (PAL) and the unhinged loss (FNPAL) under different weight decay coefficients.}
    \label{ce-fsm-asm-cifar10}
\end{figure}

\begin{figure}
    \centering
    \subfigure[Feature Norm]{
        \includegraphics[scale=0.48]{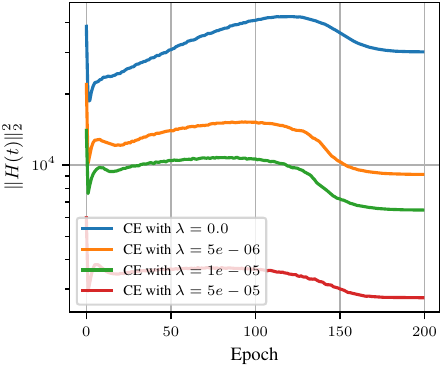}
        \label{fig:ce-feat-cifar100}
    }
    \subfigure[Train Loss]{
        \includegraphics[scale=0.48]{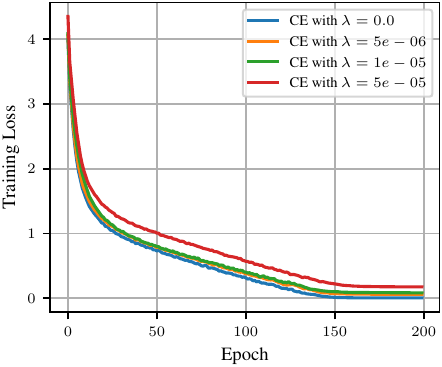}
        \label{fig:ce-loss-cifar100}
    }
    \subfigure[Validation Accuracy]{
        \includegraphics[scale=0.48]{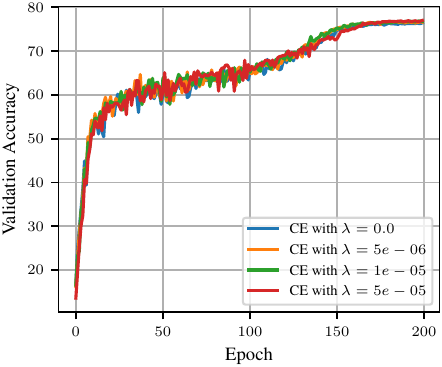}
        \label{fig:ce-top1-cifar100}
    }
   \subfigure[Feature Norm]{
        \includegraphics[scale=0.48]{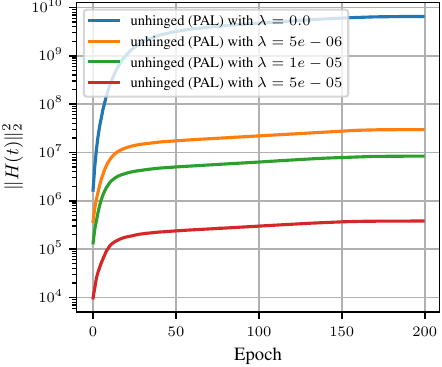}
        \label{fig:fsm-feat-cifar100}
    }
    \subfigure[Train Loss]{
        \includegraphics[scale=0.48]{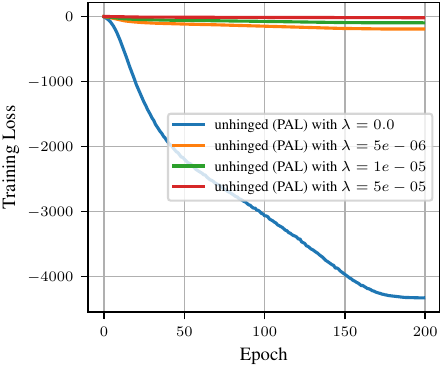}
        \label{fig:fsm-loss-cifar100}
    }
    \subfigure[Validation Accuracy]{
        \includegraphics[scale=0.48]{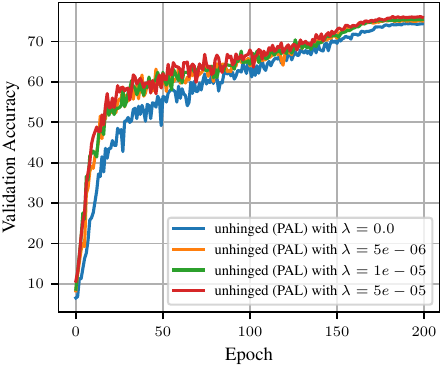}
        \label{fig:fsm-top1-cifar100}
    }
    \subfigure[Feature Norm]{
        \includegraphics[scale=0.48]{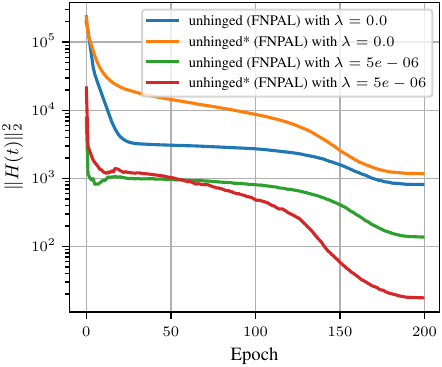}
        \label{fig:unhinged-feat-cifar100}
    }
    \subfigure[Train Loss]{
        \includegraphics[scale=0.48]{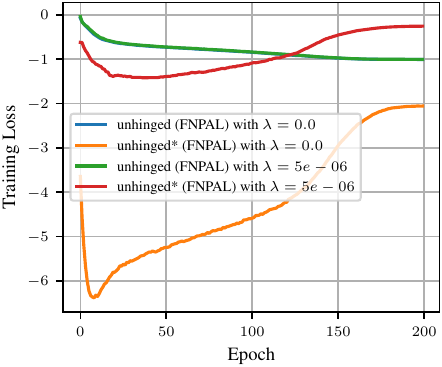}
        \label{fig:unhinged-loss-cifar100}
    }
    \subfigure[Validation Accuracy]{
        \includegraphics[scale=0.48]{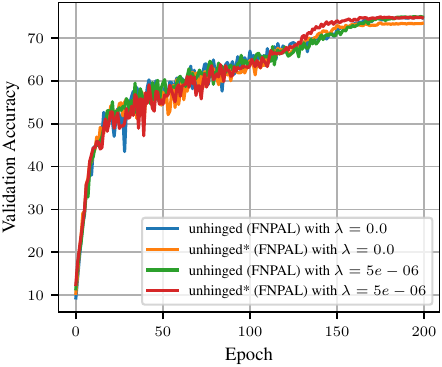}
        \label{fig:unhinged-top1-cifar100}
    }
    \caption{Behavior of visual classification on CIFAR-100 with CE, the unhinged loss (PAL) and the unhinged loss (FNPAL) under different weight decay coefficients.}
    \label{ce-fsm-asm-cifar100}
\end{figure}

\begin{figure}
    \centering
    \subfigure[Train Loss]{
        \includegraphics[scale=0.42]{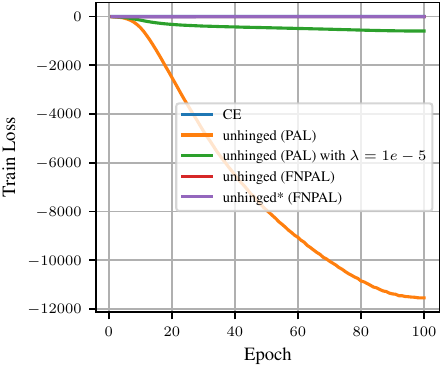}
        \label{fig:loss-imagenet100}
    }
    \subfigure[Feature Norm]{
        \includegraphics[scale=0.42]{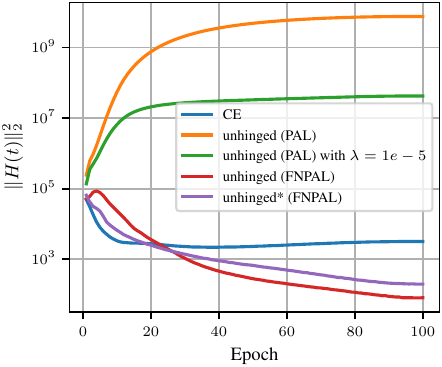}
        \label{fig:feat-imagenet100}
    }
    \subfigure[Train Accuracy]{
        \includegraphics[scale=0.42]{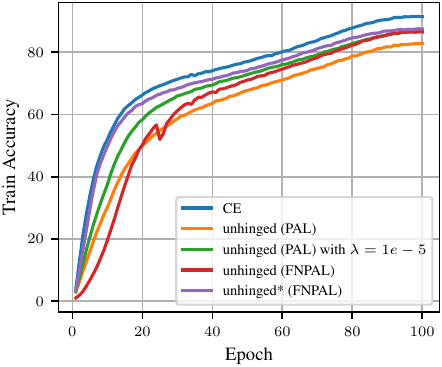}
        \label{fig:train-top1-imagenet100}
    }
    \subfigure[Validation Accuracy]{
        \includegraphics[scale=0.42]{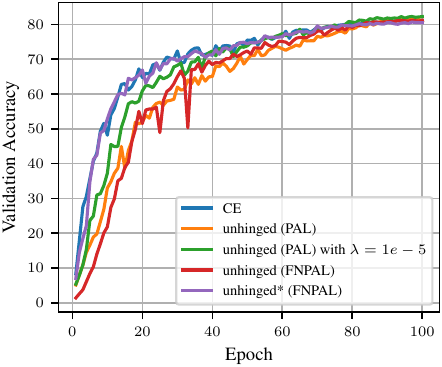}
        \label{fig:eval-top1-imagenet100}
    }
    \caption{Behavior of visual classification on ImageNet-100 with CE, the unhinged loss (PAL) and the unhinged loss (FNPAL) under different weight decay coefficients.}
    \label{imagenet100}
\end{figure}

\begin{figure}
    \centering
    \subfigure[Train Loss]{
        \includegraphics[scale=0.42]{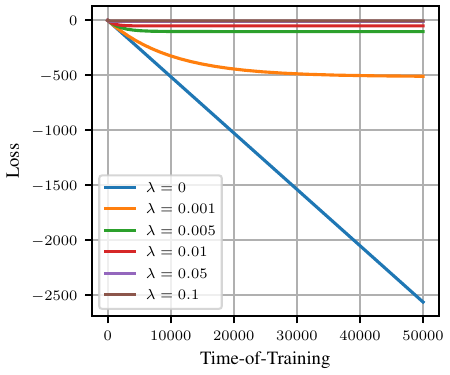}
        \label{fig:pal-loss}
    }
    \subfigure[Train Accuracy]{
        \includegraphics[scale=0.42]{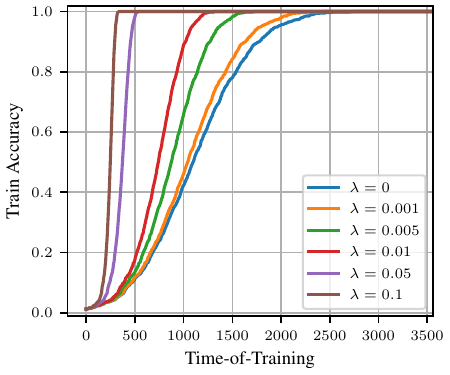}
        \label{fig:pal-acc}
    }
    \subfigure[$\|\hat{\mH}(t)-\hat{\mW\mM}\|_2$]{
        \includegraphics[scale=0.42]{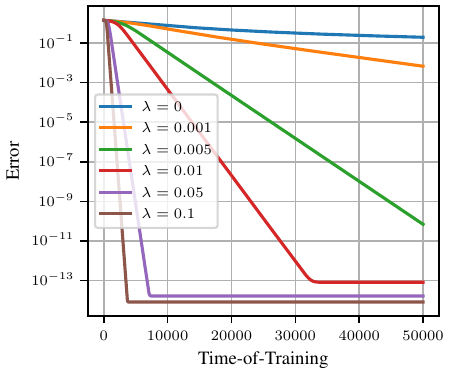}
        \label{fig:pal-error}
    }
    \subfigure[Feature Norm]{
        \includegraphics[scale=0.42]{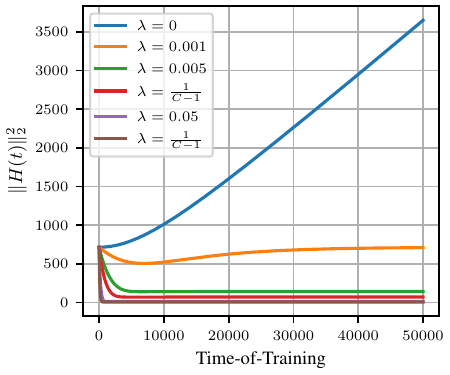}
        \label{fig:pal-feat}
    }
    \caption{Behavior of gradient descent iterates of the unhinged (PAL) loss in \cref{PAL-dynamics} with different explicit feature regularization ($\lambda\in\{0, 0.001, 0.005, 0.01, 0.05, 0.1\}$). We set $p=512$, $C=100$, $N=10$, and $\eta=0.1$. We randomly initialize $\mH_0$ and $\mW$, and then anchor prototypes $\mW$ during training. As expected in \cref{PAL-dynamics}, the error $\|\hat{\mH(t)}-\hat{\mW\mM}\|_2$ decreases as an exponential rate $O(e^{-\lambda \eta t})$, and a larger $\lambda$ can accelerate the convergence.}
    \label{pal-toy}
\end{figure}

\begin{figure}
    \centering
    \subfigure[$\lambda=0$]{
        \includegraphics[scale=0.42]{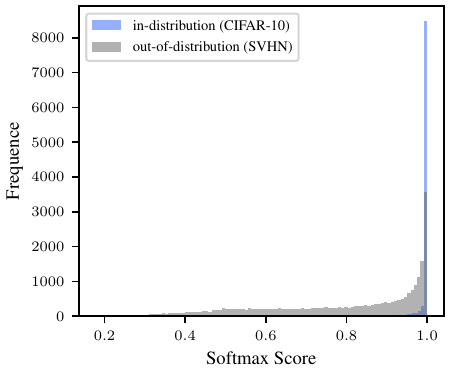}
        \label{fig:ood-cifar-10-softmax-0.0}
    }
    \subfigure[$\lambda=1e-6$]{
        \includegraphics[scale=0.42]{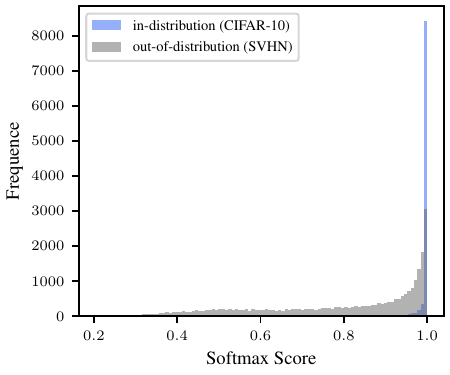}
       \label{fig:ood-cifar-10-softmax-1e-6}
    }
    \subfigure[$\lambda=5e-6$]{
        \includegraphics[scale=0.42]{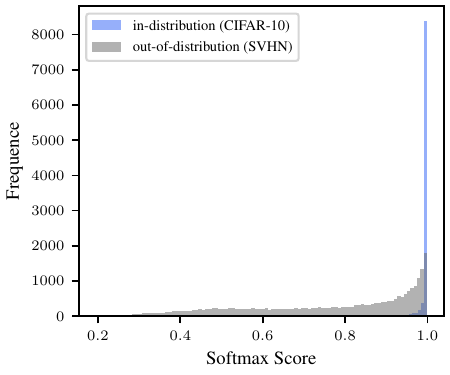}
       \label{fig:ood-cifar-10-softmax-5e-6}
    }
    \subfigure[$\lambda=1e-5$]{
        \includegraphics[scale=0.42]{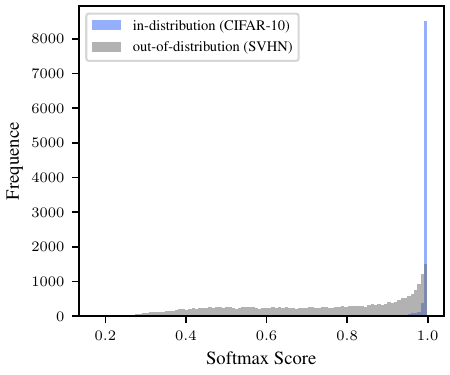}
       \label{fig:ood-cifar-10-softmax-1e-5}
    }
    
    \subfigure[$\lambda=0.0$]{
        \includegraphics[scale=0.42]{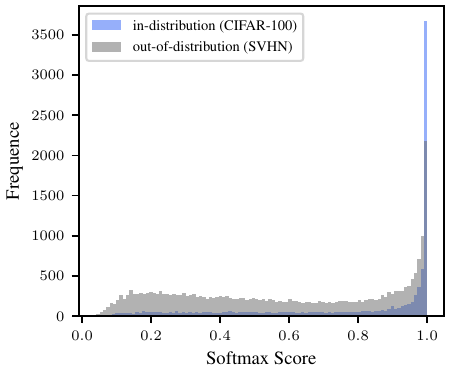}
       \label{fig:ood-cifar-100-softmax-0.0}
    }
    \subfigure[$\lambda=1e-6$]{
        \includegraphics[scale=0.42]{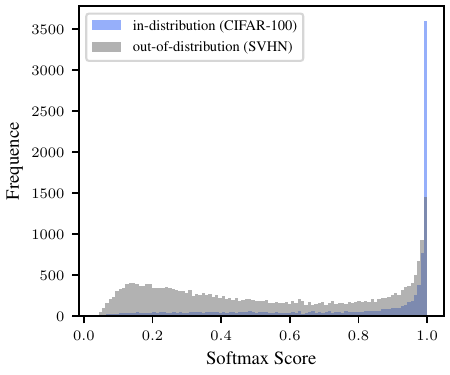}
       \label{fig:ood-cifar-100-softmax-1e-6}
    }
    \subfigure[$\lambda=5e-6$]{
        \includegraphics[scale=0.42]{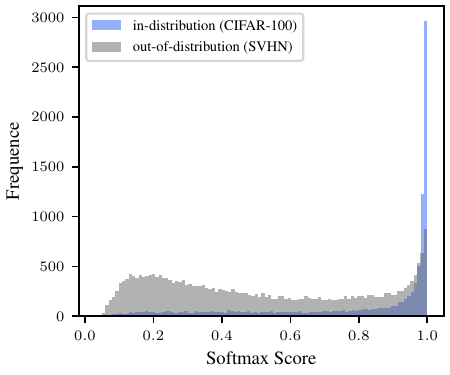}
       \label{fig:ood-cifar-100-softmax-5e-6}
    }
    \subfigure[$\lambda=1e-5$]{
        \includegraphics[scale=0.42]{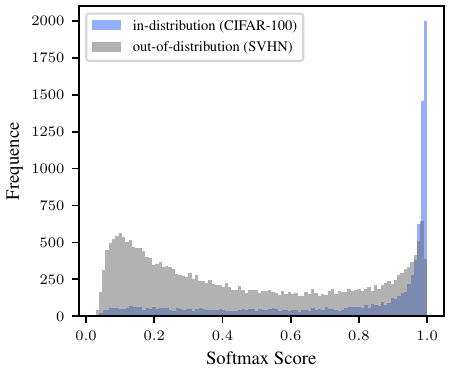}
       \label{fig:ood-cifar-100-softmax-1e-5}
    }
    
    \caption{Distribution of softmax scores \citep{hendrycks2016baseline} from models trained with different explicit feature regularization, where CE is the loss function.}
   \label{fig:ood-softmax-score}
   \vskip-15pt
\end{figure}

\begin{figure}
    \centering
    \subfigure[$\lambda=0$]{
        \includegraphics[scale=0.42]{figures/OOD/cifar10/svhn/0_energy.pdf}
        \label{fig:ood-cifar-10-energy-0.0}
    }
    \subfigure[$\lambda=1e-6$]{
        \includegraphics[scale=0.42]{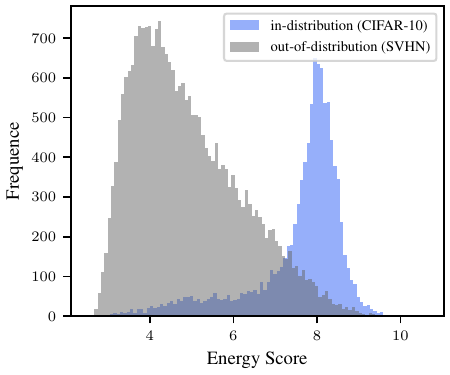}
        \label{fig:ood-cifar-10-energy-1e-6}
    }
    \subfigure[$\lambda=5e-6$]{
        \includegraphics[scale=0.42]{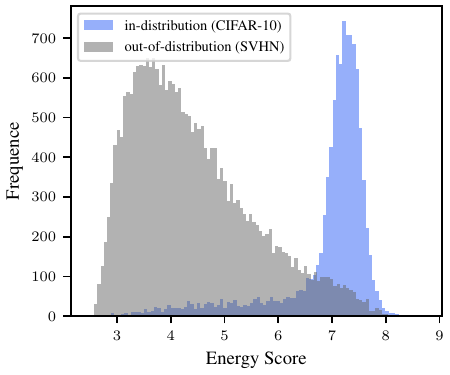}
        \label{fig:ood-cifar-10-energy-5e-6}
    }
    \subfigure[$\lambda=1e-5$]{
        \includegraphics[scale=0.42]{figures/OOD/cifar10/svhn/1e-05_energy.pdf}
        \label{fig:ood-cifar-10-energy-1e-5}
    }
    \subfigure[$\lambda=0$]{
        \includegraphics[scale=0.42]{figures/OOD/cifar100/svhn/0_energy.pdf}
        \label{fig:ood-cifar-100-energy-0.0}
    }
    \subfigure[$\lambda=1e-6$]{
        \includegraphics[scale=0.42]{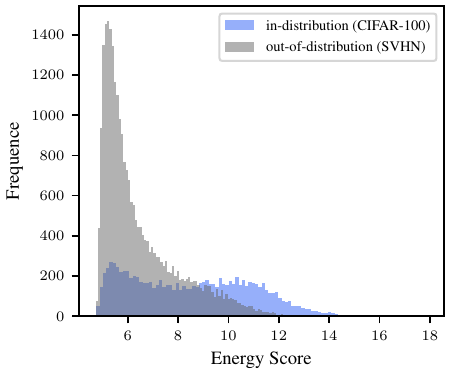}
        \label{fig:ood-cifar-100-energy-1e-6}
    }
    \subfigure[$\lambda=5e-6$]{
        \includegraphics[scale=0.42]{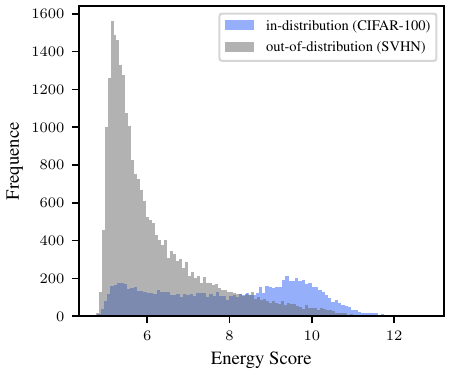}
        \label{fig:ood-cifar-100-energy-5e-6}
    }
    \subfigure[$\lambda=1e-5$]{
        \includegraphics[scale=0.42]{figures/OOD/cifar100/svhn/1e-05_energy.pdf}
        \label{fig:ood-cifar-100-energy-1e-5}
    }
    \caption{Distribution of energy scores \citep{liu2020energy} from models trained with different explicit feature regularization, where CE is the loss function.}
   \label{fig:ood-energy-score}
   \vskip-15pt
\end{figure}

\begin{figure}
    \centering
    \subfigure[$\lambda=0$]{
        \includegraphics[scale=0.42]{figures/OOD/cifar10/svhn/0_norm.pdf}
        \label{fig:ood-cifar-10-feat-0.0}
    }
    \subfigure[$\lambda=1e-6$]{
        \includegraphics[scale=0.42]{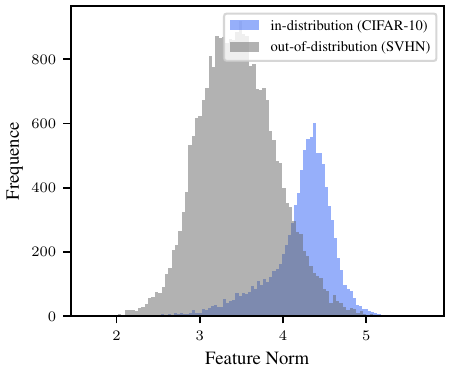}
        \label{fig:ood-cifar-10-feat-1e-6}
    }
    \subfigure[$\lambda=5e-6$]{
        \includegraphics[scale=0.42]{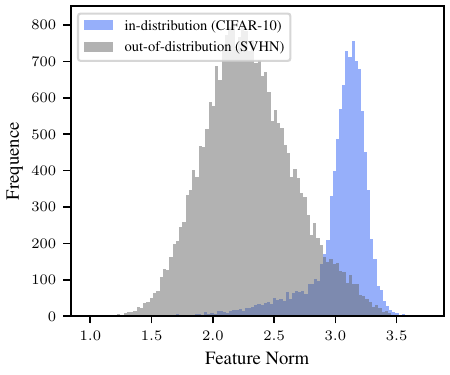}
        \label{fig:ood-cifar-10-feat-5e-6}
    }
    \subfigure[$\lambda=1e-5$]{
        \includegraphics[scale=0.42]{figures/OOD/cifar10/svhn/1e-05_norm.pdf}
        \label{fig:ood-cifar-10-feat-1e-5}
    }
    \subfigure[$\lambda=0$]{
        \includegraphics[scale=0.42]{figures/OOD/cifar100/svhn/0_norm.pdf}
        \label{fig:ood-cifar-100-feat-0.0}
    }
    \subfigure[$\lambda=1e-6$]{
        \includegraphics[scale=0.42]{figures/OOD/cifar100/svhn/1e-06_energy.pdf}
        \label{fig:ood-cifar-100-feat-1e-6}
    }
    \subfigure[$\lambda=5e-6$]{
        \includegraphics[scale=0.42]{figures/OOD/cifar100/svhn/5e-06_energy.pdf}
        \label{fig:ood-cifar-100-feat-5e-6}
    }
    \subfigure[$\lambda=1e-5$]{
        \includegraphics[scale=0.42]{figures/OOD/cifar100/svhn/1e-05_norm.pdf}
        \label{fig:ood-cifar-100-feat-1e-5}
    }
    \caption{Distribution of feature norms from models trained with different explicit feature regularization, where CE is the loss function.}
   \label{fig:ood-feat-norm}
   \vskip-15pt
\end{figure}

\begin{figure}
    \centering
    \subfigure[]{
        \includegraphics[scale=0.33]{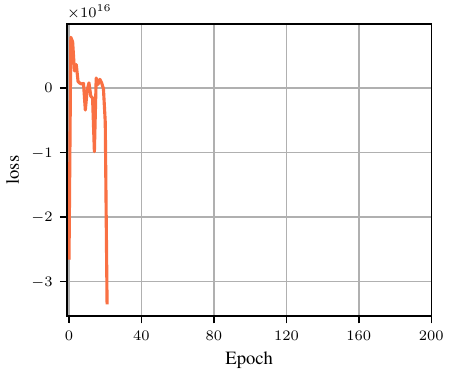}
        \label{fig:collapse-loss}
    }
    \subfigure[]{
        \includegraphics[scale=0.32]{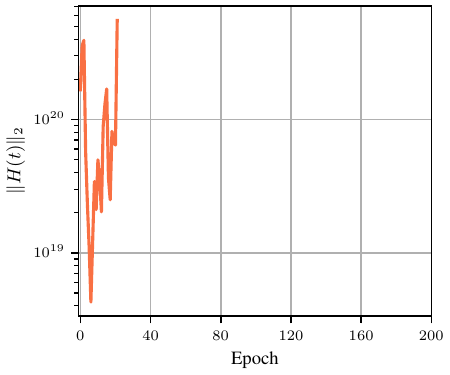}
        \label{fig:collapse-feature}
    }
    \subfigure[]{
        \includegraphics[scale=0.33]{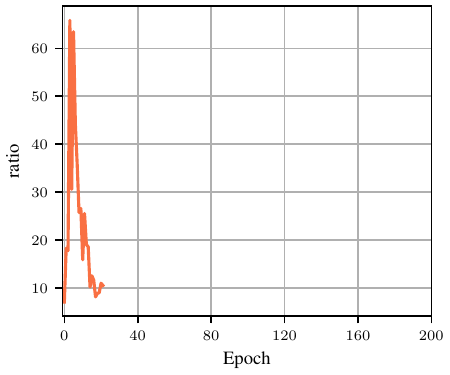}
        \label{fig:collapse-ratio}
    }
    \subfigure[]{
        \includegraphics[scale=0.33]{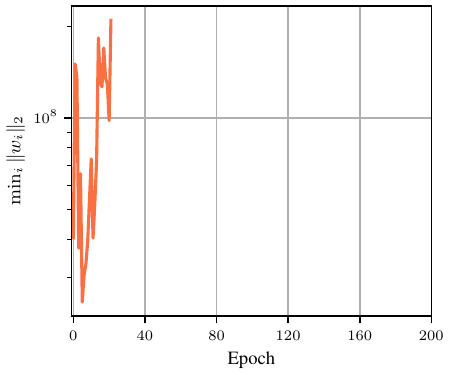}
        \label{fig:collapse-norm}
    }
    \subfigure[]{
        \includegraphics[scale=0.33]{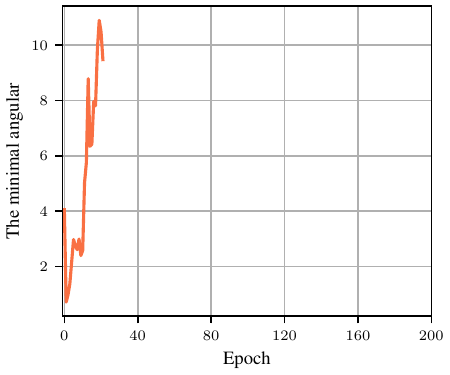}
        \label{fig:collapse-angular}
    }
    \caption{The behavior of features and prototypes when directly training ResNet-18 with the unhinged loss in \cref{the-unhinged-loss} on CIFAR-10. We set the weight decay coefficient as $5e-4$. (a) The train accuracy. (b) The feature norm. (c) the ratio $\frac{\max_i\|\vw_i\|_2}{\min_i\|\vw_i\|_2}$. (d) $\min_i\|\vw_i\|_2$. (e) The minimal angular between prototypes:  $\arccos\max_{i\neq j}\hat{\vw}_i^\top\hat{\vw}_j$. In these figures, we only show the curves for the first 21 epochs, since  ``NaN'' appears at the 22-th epoch. We can find that implicit penalization attached by other components (e.g., network architectures and weight decays) does not limit the rapid growth of the feature norm and prototype norm, indicating implicit penalization is fragile. Moreover, the ratio $\frac{\max_i\|\vw_i\|_2}{\min_i\|\vw_i\|_2}$ starts our very large and the minimal angular is very small, which indicates that there are two prototypes that are particular imbalanced. }
    \label{fig:collapse-of-unhinged}
\end{figure}

\section{Experiments}
\label{appendix-experiments}
In this section, we provide experimental details, including datasets, network architectures, optimization methods, hyperparameter settings, and more results.

\subsection{Numerical Experiments}
For numerical experiments in Figures \ref{unconstrained-illustration}, \ref{fig:unconstrained-behaviors}, \ref{regularized-gamma}, \ref{regularized-lr}, \ref{regularized-lambda}, \ref{regularized-scale}, and \ref{spherical-lr-behaviors}, we set $p=512$, $C=100$, $N=10$, and then randomly initialize $\mH_0$ and $\mW_0$. We use the SGD optimizer to optimize these free variables. 

\subsection{Visual Classification}
For classification experiments in \cref{classification-results}, \cref{ce-fsm-asm-cifar10}, \cref{ce-fsm-asm-cifar100}, \cref{imagenet100}, we experiment with ResNet-18, ResNet-34, and ResNet-50 \citep{he2016deep} trained on CIFAR-10, CIFAR-100 \citep{CIFAR}, and ImageNet-100 that takes the first 100 classes of ImageNet \citep{deng2009imagenet} , respectively. The networks are trained for 200 epochs and 100 epochs for CIFAR-10/-100 and ImageNet-100, respectively. For all training, we use SGD optimizer with momentum 0.9 and cosine learning rate annealing \cite{loshchilov2017sgdr} with $T_{\max}$ being the corresponding epochs. The initial learning rate is set to $0.1$, weight decay is set to $5\times 10^{-4}$, and batch size is set to 256. Typical data augmentations including random width/height shift and horizontal flip are applied. Moreover, to use the PAL and FNPAL \citep{pmlr-v162-zhou22f} that anchors prototypes with a neural collapse solution, we remove the ReLU layer before the linear classifier in the last layer.

\subsection{Imbalanced Classification} 
For the experiments of imbalanced learning in \cref{tab:lt-cifar}, \cref{tab:imbalanced-cifar-10}, \cref{tab:imbalanced-cifar-100}, \cref{fig:ece-lt-cifar10}, and \cref{fig:ece-lt-cifar100}, we utilize the same network architectures, and optimization settings as visual classification. We only use the imbalanced versions of CIFAR-10 and CIFAR-100 by following the setting in \citep{zhou2022learning}. The number of training examples is reduced for per class, and the test set keeps unchanged, where we use the imbalance ratio $\rho=\frac{\max_i n_i}{\min_i n_i}$ to denote the ratio between sample sizes of the most frequent and least frequent class. Moreover, long-tailed imbalance \citep{cui2019class} that utilizes an exponential decay in samples sizes and step imbalance \citep{buda2018systematic}(that sets all minority classes to have the same number of samples, as do all majority classes) are considered.

For imbalanced learning, we utilize expected calibration error (ECE) to measure calibration of the models \citep{zhong2021improving}, where all predictions are grouped into several interval bins of equal size and then calculate the error between the accuracy and confidence for each interval bin, \textit{i.e},  
\begin{equation}
    \mathop{\text{ECE}}=\sum_{b=1}^B\frac{|\gS_b|}{N}\left|\mathop{\text{acc}}(\gS_b)-\mathop{\text{conf}}(\gS_b)\right| \times 100\%,
\end{equation}
where $N$ denotes the number of predictions, $B$ is the number of interval bins, $\gS_b$ is the set of samples whose prediction scores fall into Bin-$b$,  $\mathop{\text{acc}}(\cdot)$ and $\mathop{\text{conf}}(\cdot)$ denote the accuracy and predicted confidence of $\gS_b$, respectively.

As shown in \cref{tab:imbalanced-cifar-10} and \cref{tab:imbalanced-cifar-100}, explicit feature regularization can improve imbalanced learning on CIFAR-10/-100 in most cases. 

\subsection{Out-of-Distribution Detection}
For the experiments of OOD detection in \cref{fig:cifar10-ood-energy-score}, \cref{tab:ood-cifar}, \cref{fig:ood-softmax-score}, \cref{fig:ood-energy-score}, and \cref{fig:ood-feat-norm}, we use a ResNet-18 on CIFAR-10 and a ResNet-34 on CIFAR-100 to train the classification models, and use their test dataset as the in-distribution data $\mathcal{D}_{\text{in}}^{\text{test}}$. For the OOD test dataset $\mathcal{D}_{\text{out}}^{\text{test}}$, we simply use a common benchmark: SVHN \citep{Netzer2011reading}. We measure the performance with the following metrics: (1) the false positive rate (FPR95) of OOD examples when true positive rate of in-distribution examples is at $95\%$; (2) the area under the receiver operating characteristic curve (AUROC); and (3) the area under the precision-recall curve (AUPR). We then consider the softmax-based score \citep{hendrycks2016baseline}, energy-based score \citep{liu2020energy}, and our proposed feature norm-based score to assessing the improvement of explicit feature regularization over the normal training.

\begin{table}[htbp]
    \scriptsize
    \centering
    \caption{OOD detection performance using softmax-based \citep{hendrycks2016baseline}, energy-based \citep{liu2020energy}, and feature norm-based approaches while model training with feature regularization ($\lambda=\{0,1e-6,5e-6,1e-5\}$). We use ResNet-18 and ResNet-34 to train on the in-distribution datasets CIFAR-10 and CIFAR-100, respectively. We then use SVHN \citep{Netzer2011reading} as the OOD dataset to evaluate the performance of OOD detection. All values are percentages. $\uparrow$ indicates large values are better, and $\downarrow$ indicates smaller values are better. The best results are \underline{underlined}.}
    \label{tab:ood-cifar}
    \begin{tabular}{c|c|c|c|c}
    \toprule
    \textbf{Dataset} $\gD_{\text{in}}^{\text{test}}$ & $\lambda$ & \textbf{FPR95} $\downarrow$  & \textbf{AUROC} $\uparrow$ & \textbf{AUPR} $\uparrow$ \\
    \midrule
    \multirow{5}*{CIFAR-10} & \multicolumn{4}{c}{Softmax-based / Energy-based / Feature Norm-based}\\
    ~ & 0    & 52.09 / 43.04 / 52.10 & 91.67 / 91.94 / 89.54 & 84.11 / 82.80 / 77.06 \\
    ~ & 1e-6 & {54.00} / {43.72} / \textbf{51.45} & {91.44} / \textbf{92.12} / {89.08} & {82.31} / {81.77} / {74.16} \\
    ~ & 5e-6 & \textbf{45.37} / \textbf{33.92} / \textbf{26.93} & \textbf{93.08} / \textbf{93.78} / \textbf{94.03} & \textbf{84.31} / \textbf{83.73} / \textbf{82.79} \\
    ~ & 1e-5 & \textbf{37.39} / \textbf{27.87} / \underline{\textbf{24.94}} & \textbf{93.90} / \underline{\textbf{94.60}} / \textbf{94.17} & \underline{\textbf{85.48}} / \textbf{85.34} / \textbf{83.15} \\
    \midrule
     \multirow{5}*{CIFAR-100} & \multicolumn{4}{c}{Softmax-based / Energy-based / Feature Norm-based}\\
    ~ & 0    & 87.75 / 89.84 / 95.54 & 71.01 / 71.94 / 59.54 & 55.42 / 56.69 / 43.21 \\
    ~ & 1e-6 & \textbf{82.08} / \textbf{82.57} / \textbf{88.77} & \textbf{75.36} / \textbf{76.28} / \textbf{68.83} & \textbf{61.40} / \textbf{61.90} / \textbf{51.58} \\
    ~ & 5e-6 & \textbf{79.01} / \textbf{78.68} / \textbf{85.94} & \textbf{78.70} / \underline{\textbf{79.15}} / \textbf{70.32} & \textbf{62.58} / \textbf{62.39} / \textbf{48.39} \\
    ~ & 1e-5 & \textbf{81.48} / \textbf{81.41} / \textbf{87.83} & \textbf{77.02} / \textbf{78.03} / \textbf{73.91} & \textbf{62.92} / \underline{\textbf{63.66}} / \textbf{58.81} \\
    \bottomrule
    \end{tabular}
\end{table}

\section{Other Potential Insights}
\label{potential-insights}

\subsection{A Good Initialization of Prototypes}
As depicted in \cref{main-results} and \ref{projection-of-subspaces}, the dynamics under the unhinged loss is dependent on the initialization of both features and prototypes, such as $\Pi_1^+\mZ_0=(\frac{1}{\sqrt{N}}(\mP\otimes \1_N^\top),\mP)$, where $\mP=\frac{1}{2}\left(\frac{1}{\sqrt{N}}\mH_0(\mI_C\otimes \1_N)+\mW_0\right)(\mI_C-\frac{1}{C}\1_C\1_C^\top)$. However, these features $\mH_0$ extracted from a dataset by some nonlinear layers and parameterized layers are practically intractable, but we can elaborately initialize $\mW_0$ and highlight its role in the whole. To do this, we consider two ways: (1) Initializing the structure of $\mW_0$. Inspired by the neural collapse solution that maximizes class separation, we can initialize $\mW_0$ as this structure, \textit{i.e.}, $\hat{\vw}_i^\top\hat{\vw}_j=\frac{-1}{C-1}$, $\forall i\neq j$; (2) Increasing the importance of $\mW_0$. A simple strategy is scaling up $\mW_0$, thereby implicitly weakening the importance of $\mH_0$.
However, it is difficult to handle the initialization of features because they are obtained by a complex processing a large dataset, thus we seek to initialize the prototypes in the last layer of the network.

\subsection{Refined Decision-makings}
Recalling the rule—$\argmax_{c'}\langle\vw_{c'},\vh\rangle+b_{c'}$ that makes decision by selecting the class with the largest logit (where the inner product $\langle \vw_{c'},\vh\rangle$ is dominant), which may not be good to directly use the learned features and prototypes, since learning with the unhinged loss within limited iterations (that means $\zeta_1(t)<\infty$) will introduce some residual $\bm{\Delta}(t)$ caused by gradient descent regardless of the unconstrained case or regularized case. 

\begin{example}
If we add a perturbation $\Delta$ for all features while adding $s\Delta$ for all prototypes, then the perturbed decision-making will be $\arg\max_{c'}\langle \vw_{c'}+s\Delta,
\vh+\Delta\rangle+b_{c'}$, which may not be equivalent to $\argmax_{c'}\langle\vw_{c'},\vh\rangle+b_{c'}$.
\end{example}

\subsection{Adjusted Sample Margin Loss}
As aforementioned in \cref{sec:spherical-case} and the proof of Theorem \ref{convergence-of-spherical-case}, we will encounter zero gradients when the cosine similarity $\hat{\vw}_y^\top\hat{\vh}$ is $-1$ or $1$, so we can adjust the loss to avoid the issue by to $\vw_y$ and accelerate convergence:
\begin{equation}
    L_{\gamma}'(\mW\hat{\vh},y)=
    \begin{cases}
    L_{\gamma}(\mW\hat{\vh},y) &  \text{if } \hat{\vw}_y^\top\hat{\vh}\ge -1+\varepsilon,\\
    -(1+\gamma)(\vw_y+\bm{\delta})^\top\hat{\vh} &  \text{if } \hat{\vw}_y^\top\hat{\vh}<-1+\varepsilon,
    \end{cases}
\end{equation}
where $\varepsilon\in(0,1)$ is a hyperparameter and $\bm{\delta}=-\left(1+\frac{\hat{\vw}_y^\top\hat{\vh}\sqrt{1-(1-\varepsilon)^2}}{(1-\varepsilon)\sqrt{1-(\hat{\vw}_y^\top\hat{\vh})^2}}\right)(\vw_y+\hat{\vh}\hat{\vh}^\top \vw_y)$ (performed with a stop-gradient) satisfying $\frac{(\vw_y+\bm{\delta})^\top\hat{\vh}}{\|\vw_y+\bm{\delta}\|_2}=-1+\varepsilon$.

\vskip 0.2in

\end{document}